\newcolumntype{P}[1]{>{\centering\arraybackslash}p{#1}}
\newcolumntype{C}{>{\centering\arraybackslash}X}
\title{\LARGE \bf
Neural Lyapunov Model Predictive Control:
\\ Learning Safe Global Controllers from Sub-optimal Examples}
\author{Mayank Mittal~$^*$, Marco Gallieri~$^*$, Alessio Quaglino, Seyed Sina Mirrazavi Salehian and Jan Koutn\'{i}k % <-this % stops a space
\thanks{M. Mittal and M. Gallieri provided equal contributions.}% <-this % stops a space
\thanks{M. Mittal is with ETH Z\"{u}rich, Switzerland. 
        {\tt\small mittalma@ethz.ch}} %
\thanks{All authors are associated with NAISENSE SA, Lugano, Switzerland. This work was done by M. Mittal during his internship.}
}
\newtheorem{theorem}{Theorem}
\DeclareMathOperator*{\argmin}{arg\,min}
\newcommand{\secref}[1]{Section~\ref{#1}}
\newcommand{\eqtref}[1]{Equation~\ref{#1}}
\newcommand{\figref}[1]{Figure~\ref{#1}}
\newcommand{\tabref}[1]{Table~\ref{#1}}
\newcommand{\algref}[1]{Algorithm~\ref{#1}}
\newcommand{\addition}[1]{{#1}}
\begin{document}

\maketitle
\thispagestyle{empty}
\pagestyle{empty}

%%%%%%%%%%%%%%%%%%%%%%%%%%%%%%%%%%%%%%%%%%%%%%%%%%%%%%%%%%%%%%%%%%%%%%%%%%%%%%%%
\begin{abstract}

With a growing interest in data-driven control techniques, Model Predictive Control (MPC) provides an opportunity to exploit the surplus of data reliably, particularly while taking safety and stability into account. In many real-world and industrial applications, it is typical to have an existing control strategy, for instance, execution from a human operator. The objective of this work is to improve upon this unknown, safe but suboptimal policy by learning a new controller that retains safety and stability. Learning how to be safe is achieved directly from data and from a knowledge of the system constraints. The proposed algorithm alternatively learns the terminal cost and updates the MPC parameters according to a stability metric. The terminal cost is constructed as a Lyapunov function neural network with the aim of recovering or extending the stable region of the initial demonstrator using a short prediction horizon. Theorems that characterize the stability and performance of the learned MPC in the bearing of model uncertainties and sub-optimality due to function approximation are presented. The efficacy of the proposed algorithm is demonstrated on non-linear continuous control tasks with soft constraints. The proposed approach can improve upon the initial demonstrator also in practice and achieve better stability than popular reinforcement learning baselines. 

\end{abstract}

%%%%%%%%%%%%%%%%%%%%%%%%%%%%%%%%%%%%%%%%%%%%%%%%%%%%%%%%%%%%%%%%%%%%%%%%%%%%%%%%
\section{Introduction}

% Should probably rewrite to emphasize why learning + MPC is a more practical approach. The current first paragraph is too generic and we can live without it being there.

Control systems comprise of safety requirements that need to be considered during the controller design process. In most applications, these are in the form of state/input constraints and convergence to an equilibrium point, a specific set or a trajectory. Typically, a control strategy that violates these specifications can
% adversely affect the performance of the overall system and 
lead to \emph{unsafe} behavior. While learning-based methods are promising for solving challenging non-linear control problems, the lack of interpretability and provable safety guarantees impede their use in practical control  settings~\cite{amodi2016aisafety}. Model-based reinforcement learning (RL) with planning uses a surrogate model to minimize the sum of future costs plus a learned value function terminal cost \cite{moerland2020modelbased, lowrey_plan_2018}. Approximated value functions, however, do not offer safety guarantees. In contrast, control theory focuses on these guarantees but it is limited by its assumptions. Thus, there is a gap between theory and practice. % In this paper, we aim to reduce this gap by providing a control theoretical viewpoint to model-based offline learning by means of a Model Predictive Control (MPC). 

% A feedback controller stabilizes a system if a local Control Lyapunov Function (CLF) function exists for the pair. This requires that the closed-loop response from any initial state results in a smaller value of the CLF at the next state. 
% % Then, it can be shown that the resulting trajectory eventually converges to the equilibrium point. 
% The existence of such a function is a necessary and sufficient condition for showing stability and convergence~\cite{Khalil_book}. 

A Control Lyapunov Function (CLF) is necessary and sufficient for stabilization~\cite{Khalil_book}. By exploiting the expressiveness of neural networks (NNs), Lyapunov NNs have been demonstrated as a general tool to produce non-conservative stability (and safety) certificates~\cite{bobiti_samplingdriven_nodate, bobiti_sampling-based_2016} and also improve an existing controller~\cite{berkenkamp_safe_2017, gallieri_safe_2019, chang_neural_2019}. In most of these settings, the controller is parameterized through a NN as well. The flexibility provided by this choice comes at the cost of increased sample complexity, which is often expensive in real-world~\textit{safety-critical} systems. In this work, we aim to overcome this limitation by leveraging an initial set of one-step transitions from an \textit{unknown} expert demonstrator (which may be sub-optimal) and by using a learned Lyapunov function and surrogate model within a Model Predictive Control (MPC) formulation. % an MPC. 

Our key contribution is an algorithmic framework, \emph{Neural Lyapunov MPC} (NLMPC), that obtains a single-step horizon MPC for Lyapunov-based control of non-linear deterministic systems with constraints. %We justify the importance of a unitary horizon by using an imperfect dynamics model for predictions. 
 We extend standard MPC stability results to a discounted setup and provide a theoretical bound for its performance using an imperfect forward model and a verified Lyapunov NN. This complements \cite{lowrey_plan_2018}, which considers a perfect model. We draw links between control and RL by proving that a positive advantage function can be sufficient for stability. The proposed approach relates to Advantage Weighted Regression (AWR) \cite{peng_advantage-weighted_2019} which favours positive advantage during the policy update. In this paper, however, the same is also done for the critic by detecting and favouring stability (advantage) over Bellman optimality. 
 
 We use alternate learning to train the Lyapunov NN and a scaling factor that make the MPC terminal cost. During training, we approximate the expensive formal verification using validation and the introduction of new points from near the region boundary across iterations. 
% \paragraph{Summary of experiments}
The approach is demonstrated on a constrained torque-limited inverted pendulum and non-holonomic vehicle kinematics. Experiments show that validation can be used as a faster and effective proxy for formal verification during Lyapunov NN learning while leading to stability in practice. The stable region can in fact be larger than that from an MPC demonstrator with a longer prediction horizon. NLMPC can transfer between using an inaccurate surrogate and a nominal forward model and outperform several RL baselines in terms of stability and constraints satisfaction. 

% Additional experiment detail and all proofs are in the online \emph{extended version} Appendix \cite{mittal2020neural}. 
% For additional experiment details and proofs, please check: \url{https://arxiv.org/abs/2002.10451}.

%%%%%%%%%%%%%%%%%%%%%%%%%%%%%%%%%%%%%%%%%%%%%%%%%%%%%%%%%%%%%%%%%%%%%%%%%%%%%%%%
\section{Preliminaries and Assumptions}

\paragraph{Controlled Dynamical System} Consider a discrete-time, time-invariant, deterministic system:
\begin{equation} 
    \label{eq:ode}
    x(t+1) = f(x(t), u(t)),\quad y(t) = x(t), \quad f(0,0)=0, 
\end{equation}
where $t\in \mathbb{N}$ is the timestep index, $x(t) \in \mathbb{R}^{n_x}$, $u(t) \in \mathbb{R}^{n_u}$ and $y(t) \in \mathbb{R}^{n_y}$ are, respectively, the state, control input, and measurement at timestep $t$. We assume that the states and measurements are equivalent and the origin is the equilibrium point. Further, the system (\ref{eq:ode}) is subjected to closed and bounded, convex constraints over the state and input spaces:
\begin{eqnarray}
x(t)\in\mathbb{X}\subseteq \mathbb{R}^{n_x}, \quad 
u(t)\in\mathbb{U}\subset \mathbb{R}^{n_u}, \quad \forall t>0. \label{eq:constraints}
\end{eqnarray}
The system is to be controlled by a feedback policy, $K: \mathbb{R}^{n_x} \rightarrow \mathbb{R}^{n_u}$. The policy $K$ is considered \emph{safe} if there exists an invariant set, $\mathbb{X}_s \subseteq \mathbb{X}$, for the closed-loop dynamics, inside the constraints. The set $\mathbb{X}_s$ is also referred to as the \emph{safe-set} under $K$. Namely, every trajectory for the closed-loop system that starts at some $x \in \mathbb{X}_s$ remains inside this set. If $x$ asymptotically reaches the target , $\bar{x}_T \in \mathbb{X}_s$, then $\mathbb{X}_s$ is a Region of Attraction (ROA). In practice, convergence often occurs to a small set, $\mathbb{X}_T$.

\paragraph{Lyapunov Conditions and Safety} 
We formally assess the {safety} of the closed-loop system in terms of the existence of the positively invariant-set, $\mathbb{X}_s$, inside the state constraints. This is done by means of a learned CLF, $V(x)$, given data generated under a (initially unknown) policy, $K(x)$.  

The candidate CLF needs to satisfy certain properties. First, it needs to be upper and lower bounded by strictly increasing, unbounded, positive ($\mathcal{K}_\infty$) functions~\cite{Khalil_book}. We focus on optimal control with a quadratic stage cost and assume the origin as the target state: 
 \begin{equation}
    \label{eq:stage_cost}
    \ell(x,u)= x^T Q x + u^T R u, \quad Q\succ 0,\ R\succ 0. 
\end{equation}
For above, a possible choice for $\mathcal{K}_\infty$-function is the scaled sum-of-squares of the states:
\begin{equation}
    \label{eq:lyap1}
    l_\ell \|x\|_2^2  \leq  V(x) \leq L_V \|x \|_2^2, 
  \end{equation}
where $l_\ell$ and $L_V$ are the minimum eigenvalue of $Q$ and a Lipschitz constant for $V$ respectively.   

Further for safety, the convergence to a set, $\mathbb{X}_T\subset\mathbb{X}_s$, can be verified by means of the condition, $\forall x\in \mathbb{X}_s\backslash\mathbb{X}_T$:
% \begin{eqnarray}
%     \label{eq:lyap2_lqr}
%      V(x(t+1))-V(x(t)) &\leq -\ell(x(t),u(t)).
% \end{eqnarray} 
\begin{eqnarray}
    \label{eq:lyap2_lqr}
% ,\  \nonumber \\\quad 
u =  K\left(x\right)  \Rightarrow   V\left(f\left(x, u\right)\right)-\lambda V(x)\leq 0, \    \lambda\in[0,1).
\end{eqnarray} 
This means that to have stability $V(x)$ must decrease along the closed-loop trajectory in the annulus.   

The sets $\mathbb{X}_s$, $\mathbb{X}_T$, satisfying (\ref{eq:lyap2_lqr}), are (positively) \emph{invariant}. If they are also inside constraints, i.e. $\mathbb{X}_s\subseteq \mathbb{X}$, then they are \emph{safe}.
For a valid Lyapunov function $V$, the outer safe-set can be defined as a level-set: 
\begin{equation}
    \mathbb{X}_s = \{x \in \mathbb{X}: V(x)\leq l_s\}.
\end{equation}
For further definitions, we refer the reader to ~\cite{Blanchini,Kerrigan:2000}. If condition (\ref{eq:lyap2_lqr}), holds everywhere in $\mathbb{X}_s$, then the origin is a stable equilibrium ($\mathbb{X}_T=\{0\}$). If (most likely) this holds only outside a non-empty inner set,
$\mathbb{X}_T=\{x \in \mathbb{X}: V(x)\leq l_T\}\subset\mathbb{X}_s$, with $\mathbb{X}_T\supset\{0\}$, then the system converges to a neighborhood of the origin and remains there in the future. 

\paragraph{Approach Rationale} We aim to match or enlarge the stable region of an unknown controller, $K_i(x)$. For a perfect model, $f$, and a safe set $\mathbb{X}_s^{(i)}$, there exists an  $\alpha\gg1$, such that the one-step MPC:
\begin{equation}\label{eq:ROAext}
    K(x)= \argmin_{u\in\mathbb{U},\ f(x,u)\in\mathbb{X}_s^{(i)}} \alpha V(f(x,u))+\ell(x,u), 
\end{equation}
results in a new safe set, $\mathbb{X}_s^{(i+1)}=\mathcal{C}(\mathbb{X}_s^{(i)})$, the one-step controllable set of $\mathbb{X}_s^{(i)}$ and the feasible region of (\ref{eq:ROAext}), $\mathbb{X}_s^{(i+1)}\supseteq \mathbb{X}_s^{(i)}$. 
We soften the state constraints in (\ref{eq:ROAext}) and use it recursively to estimate $\mathbb{X}_s^{(j)},\ j>i$. 
\addition{We formulate an algorithm that learns the parameter $\alpha$ as well as the safe set. We train a neural network via SGD to approximate $V$, hence the ROA estimate will not always increase through iterations. To aim for maximum ROA and minimum MPC horizon, we use cross-validation. We motivate our work by extending theoretical results on MPC stability and a sub-optimality bound for approximate $f$ and $V$. Finally, we provide an error bound on the learned ${f}$ for having stability.} 
% use it to formally verify safety through (\ref{eq:lyap2_lqr}).   

\paragraph{Learning and Safety Verification} We wish to learn $V(x)$ and $\mathbb{X}_s$ from one-step on-policy rollouts, as well as a forward model $\hat{f}(x,u)$. 
% After the learning, the level $l_s$ defining set $\mathbb{X}_s$ will be refined and its safety formally verified a posteriori. This is done by evaluating (\ref{eq:lyap2_lqr}) using the model, starting from a large set of states sampled uniformly within $\mathbb{X}_s\backslash\mathbb{X}_T$. We progressively decrease the level $l_s$ starting from the learned one, and also increase the inner level $l_T$, starting from zero, such that condition (\ref{eq:lyap2_lqr}) holds for $n\geq n_s$ samples in $\mathbb{X}_s\backslash\mathbb{X}_T$. The number of verifying samples, $n_s$, provides a probability lower bound on safety, namely $\mathcal{P}_{safe}(\mathbb{X}_s\backslash \mathbb{X}_T)\geq p(n_s)$, as detailed in \cite{bobiti_sampling-based_2016}. 
For MPC stability, we must assume that the Lyapunov network has been formally verified, namely, that   (\ref{eq:lyap2_lqr}) is valid within the learned set. Recent work on formal Lyapunov verification \cite{bobiti_sampling-based_2016, Abate_2021, chang_neural_2019, kong2015}, respectively, grid-based and symbolic methods, can be used for systems of limited dimensionality. For large-scale systems, however, this is an open area of research. Sampling-based methods \cite{bobiti_sampling-based_2016} provide a high probability certificate which isn't enough for MPC. In this paper, we demonstrate how validation and counterexamples can be used to help increasing the ROA in practise even without the use of  verification during learning.   

\paragraph{NN-based dynamics model} In some MPC applications, it might not be possible to gather sufficient data from demonstrations in order to be able to learn a model that predicts over long sequences. One-step or few-steps dynamics learning based on NNs can suffer when the model is unrolled for longer time. For instance, errors can accumulate through the horizon due to small instabilities either from the physical system or as artifacts from short sequence learning. Although some mitigations are possible for specific architectures or longer sequences \cite{Armenio2019, Doan2019, Pathak2017, ciccone_nais-net:_2018},    
we formulate our MPC to allow for a very short horizon and unstable dynamics. Since we learn a surrogate NN forward model,  $\hat{f}(x,u)$, from one-step trajectories, we will assume this to have a locally bounded one-step-ahead prediction error, $w(t)$, where: 
 \begin{eqnarray}\label{eq:pred_error}
 w = f(x,u)-\hat{f}(x,u), & 
 \|w\|_2\leq \mu,\ \forall (x,u)\in \tilde{\mathbb{X}}\times \mathbb{U} , 
 \end{eqnarray}
for some compact set of states, $\tilde{\mathbb{X}}\supseteq\mathbb{X}$. We also assume that both $f$ and $\hat{f}$ are locally Lipschitz in this set, with constants $L_{{f}x},\  L_{{f}u}$, and $L_{\hat{f}x},\  L_{\hat{f}u}$ respectively. A conservative value of $\mu$ can be inferred from these constants as the input and state sets are bounded or it can be estimated from a test set. 

% SHOULD BE MENTIONED LATER (NOT AS PART OF PRELIMINARIES): In this work,  we want to learn a Lyapunov function with a safe set larger than that for a suboptimal demonstrator. In order to so, we can remove the dependency on the stage cost, $\ell$, by choosing $\lambda \in[0,1)$ and $\upsilon=0$, in (\ref{eq:lyap2_lqr}). If  the demonstrations are instead optimal, then setting $\upsilon>0$ can trade-off volume for optimality.  
 
%%%%%%%%%%%%%%%%%%%%%%%%%%%%%%%%%%%%%%%%%%%%%%%%%%%%%%%%%%%%%%%%%%%%%%%%%%%%%%%
\section{Neural Lyapunov MPC}

% In recent work, \cite{gallieri_safe_2019} proposed an alternating descent method for training a Lyapunov NN along with a control policy. They designed the control policy as a multi-layer perceptron (MLP) with $\tanh$ activation functions and initialized the network with a known stabilizing policy $K_0$. This design choice is a bottleneck of their approach since having an explicit $K_0$ is often not possible. To overcome this limitation, we present a novel approach where the control policy $K(x)$ is a non-linear Model Predictive Controller (MPC)~\cite{Maciejowski_book,rawlingsMPC,Cannon_book,Rakovic2019}. Our method uses an expert policy, $K_0$, to only generate an initial dataset of single-step demonstrations. This is then used to learn a stabilizing controller.

In the context of MPC, a function $V$, which satisfies the Lyapunov property (\ref{eq:lyap2_lqr}) for some local controller $K_0$, is instrumental to formally guarantee stability \cite{rawlings_mayne_paper,Limon2003StableCM}.  We use this insight and build a general Lyapunov function terminal cost for our MPC, based on neural networks. We discuss the formulation of the Lyapunov network and the MPC in ~\secref{subsec:lyap_func_learning} and~\secref{subsec:lyap_mpc} respectively. In order to extend the controller's ROA while maintaining a short prediction horizon, an alternate optimization scheme is proposed to tune the MPC and re-train the Lyapunov NN. We describe this procedure in~\secref{subsec:alt_opt_mpc} and provide a pseudocode in~\algref{alg:alternateDescent}.

\subsection{Lyapunov Network and Advantage Learning}
\label{subsec:lyap_func_learning}

We use the Lyapunov function network introduced by \cite{gallieri_safe_2019}: 
\begin{equation}\label{eq:lyap_definition}
  V(x) = x^T\left(l_\ell I + V_{net}(x)^T V_{net}(x)\right)x,
\end{equation}
where $V_{net}(x)$ is a (Lipschitz) feedforward network that produces a  $n_V\times n_x$ matrix. The scalars $n_V$ and $l_\ell>0$ are hyper-parameters. It is easy to verify that (\ref{eq:lyap_definition}) satisfies the condition mentioned in (\ref{eq:lyap1}). In our algorithm, we learn the parameters of the network, $V_{net}(x)$, and a safe level, $l_s$. 
Note that equation (\ref{eq:lyap2_lqr}) allows to learn $V$ from demonstrations without explicitly knowing the current  policy.

\paragraph{Loss function} Suppose $\mathcal{D}_K$ denotes a set of state-action-transition tuples of the form $\mathcal{T}=(x,\ u,\ x^{+})$, where $x^{+}$ is the next state obtained applying the policy $u=K(x)$. The Lyapunov network is trained using the following loss:
% \begin{equation}\label{eq:lyapunov_loss}
%     \min_{V_{net},\ l_s} \mathbf{E}_{(x,\ u,\ x^{+})\in\mathcal{D}_K} \big[J\left(x,\ u,\ x^{+}\right)\big], 
% \end{equation}
\begin{equation}
    \min_{V_{net},\ l_s} \mathbf{E}_{\mathcal{T}\in\mathcal{D}_K} \left[\frac{\mathcal{I}_{\mathbb{X}_s}(\mathcal{T})}{\rho} J_{s}(\mathcal{T})+ J_\text{vol}(\mathcal{T})\right], 
    \label{eq:lyapunov_loss}
\end{equation}
% where:
% \begin{eqnarray}\label{eq:lyapunov_loss}
% J(x,\ u,\ x^{+}) &= \frac{\mathcal{I}_{\mathbb{X}_s}(x)}{\rho} J_{s}(x,\ u,\ x^{+})+ J_{vol}(x,\ u,\ x^{+}), 
% \end{eqnarray}
where,
\vspace{-0.2cm}
\begin{eqnarray}
& \mathcal{I}_{\mathbb{X}_s}(\mathcal{T}) = 0.5 \left(\text{sign}\left[l_s - V(x) \right]+1\right),  & \text{Set indicator} \nonumber\\ 
 & J_{s}(\mathcal{T}) = \frac{\max\left[ \Delta V(x),\ 0 \right]}{ V(x)+\epsilon_V}, & \text{Instability} \nonumber\\
 &\Delta V(x) = V\left({x}^{+}\right)-\lambda V(x), & \nonumber \text{Disadvantage} \\
&J_\text{vol}(\mathcal{T})  = \text{sign}\big[-\Delta V(x)\big]\ \left[l_s - V(x) \right]. & \text{Discriminator} \nonumber
\end{eqnarray} 

In (\ref{eq:lyapunov_loss}), $\mathcal{I}_{\mathbb{X}_s}$ is the indicator function for the safe ROA $\mathbb{X}_s$, which is multiplied to $J_s$, a function that penalises the instability. The term $J_\text{vol}$ is a classification loss that tries to compute the correct boundary between the stable and unstable points. It is also instrumental in increasing the ROA volume. The scalars $\epsilon_V>0$, $\lambda\in[0,1)$, $0<\rho\ll1$, are hyper-parameters, where the latter trades off volume for stability (we take $\rho=10^{-3}$ as in \cite{richards_lyapunov_2018,gallieri_safe_2019}). To make sure that  $\mathbb{X}_s\subseteq\mathbb{X}$, we scale-down the learned  $l_s$ a-posteriori. 
 The loss (\ref{eq:lyapunov_loss}) extends the one proposed by \cite{richards_lyapunov_2018} in the sense that we only use one-step transitions, and safe trajectories are not explicitly labeled before training. 
 
\paragraph{RL advantage and stability} The loss (\ref{eq:lyapunov_loss}) is used as an alternative to the more common one-step value error from Reinforcement Learning (RL) \cite{sutton1998a} (maximising reward  $r$):
 \begin{eqnarray}\label{eq:td_error}
     J_\text{TD(0)}(\mathcal{T})= \mathbf{E}[(\underbrace{r(x,u)+\gamma \mathcal{V}(x^{+})}_{\text{Bootstrap target}}-\mathcal{V}(x))]^2 
 \end{eqnarray}
In RL, (\ref{eq:td_error}) is used with accumulated targets to learn the value $\mathcal{V}^\star$ given transitions $\mathcal{T}\in\mathcal{D}$ and rewards   $r(x,u)$. For a $\gamma$-discounted Markov Decision Process (MDP) with  $r(x,u)=-\ell(x,u)$, and candidate value $\mathcal{V}(x)=-V(x)$, then our loss $J_{s}(\mathcal{T})$ is a one-sided version of (\ref{eq:td_error}), and (\ref{eq:lyapunov_loss}) aims to learn an \emph{on-policy advantage} proxy, $-\Delta V(x)$, and a region $\mathbb{X}_s$ where this is \emph{positive}. Given a policy, $K_0$, the \emph{true} advantage: 
  \begin{eqnarray}\label{eq:adv}
     \mathcal{A}^\star(x,u) = \underbrace{\mathbf{E}[r(x,u)+\gamma \mathcal{V}^\star(x^{+})]}_{\mathcal{Q}^\star(x,u)}\underbrace{-\mathcal{V}^\star(x)}_{V^\star(x)} \approx -\Delta V^\star (x)
 \end{eqnarray}
 where $\mathcal{Q}$ is the action-value function. In a deterministic setting, given the assumptions on the reward being negative definite and zero at the desired target, the advantage function can be connected to stability by the following result which, to the best of our knowledge, is a novel connection to RL: 
\begin{theorem}\label{lemma:adv}
There exists $\epsilon\in(0,1),\ \bar{\gamma}\in\left(1-\epsilon,\ 1\right)$ such that: $1>\gamma\geq\bar{\gamma}$ and  $u=K(x)\Rightarrow\mathcal{A}^\star(x,u)\geq0$,  $\forall x\in\mathbb{X}_s\subseteq\mathbb{X}$ yields that  $\mathbb{X}_s$ is a \emph{safe} ROA: the state converges to  $\mathbb{X}_{\bar{\gamma}}\subseteq\mathbb{X}_s$.
\end{theorem}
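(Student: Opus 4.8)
The plan is to show that the candidate value $V^\star(x)=-\mathcal{V}^\star(x)$ serves as a Lyapunov function for the closed loop $x^+=f(x,K(x))$ on $\mathbb{X}_s$, and to read off the convergence set $\mathbb{X}_{\bar\gamma}$ from the slack introduced by discounting. First I would unpack the hypothesis $\mathcal{A}^\star(x,K(x))\ge 0$. In the deterministic setting $\mathcal{Q}^\star(x,u)=r(x,u)+\gamma\mathcal{V}^\star(x^+)$, so substituting $r=-\ell$ and $\mathcal{V}^\star=-V^\star$ turns the advantage inequality into
\[
V^\star(x)-\gamma V^\star(x^+)\ge \ell(x,K(x)),\qquad \forall x\in\mathbb{X}_s.
\]
Together with the stage cost being lower bounded as $\ell(x,K(x))\ge l_\ell\|x\|_2^2$ and $\ell(0,0)=0$, this already gives $V^\star(x)\ge \ell(x,K(x))\ge l_\ell\|x\|_2^2$ and $V^\star(0)=0$, so $V^\star$ is positive definite and admits the lower $\mathcal{K}_\infty$ bound required by~(\ref{eq:lyap1}); continuity on the compact $\mathbb{X}_s$ supplies a matching upper bound and a level $l_s$ with $\mathbb{X}_s=\{x:V^\star(x)\le l_s\}$.

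Next I would rewrite this as a one-step decrease. Dividing out the slack gives
\[
V^\star(x^+)-V^\star(x)\le \tfrac{1}{\gamma}\big[(1-\gamma)V^\star(x)-\ell(x,K(x))\big],
\]
which is strictly negative exactly when $\ell(x,K(x))>(1-\gamma)V^\star(x)$. This is the heart of the argument: the discount contributes only a vanishing nuisance term $(1-\gamma)V^\star$. Using $\ell(x,K(x))\ge l_\ell\|x\|_2^2$ and $V^\star\le l_s$ on $\mathbb{X}_s$, the set where the decrease can fail is contained in the ball $\{\|x\|_2^2\le (1-\gamma)l_s/l_\ell\}$, whose radius shrinks to $0$ as $\gamma\uparrow 1$. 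I would then define $\mathbb{X}_{\bar\gamma}$ as the smallest $V^\star$-sublevel set containing this ball (inflated by a one-step margin, see below), and choose $\epsilon$ so that for every $\gamma\ge\bar\gamma>1-\epsilon$ this sublevel set lies strictly inside $\mathbb{X}_s$; concretely $\epsilon$ is controlled by $\min_{\partial\mathbb{X}_s}\ell(x,K(x))/l_s>0$, which guarantees strict decrease on the outer boundary.

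The remaining steps are the standard practical-stability conclusions. Forward invariance of $\mathbb{X}_s$ follows because any $x\in\mathbb{X}_s\setminus\mathbb{X}_{\bar\gamma}$ maps to $x^+$ with $V^\star(x^+)<V^\star(x)\le l_s$; since $\mathbb{X}_s\subseteq\mathbb{X}$, this makes $\mathbb{X}_s$ a safe set. On the compact annulus $\mathbb{X}_s\setminus\mathbb{X}_{\bar\gamma}$ the per-step decrease $\tfrac{1}{\gamma}[\ell(x,K(x))-(1-\gamma)V^\star(x)]$ is the minimum of a positive continuous function over a compact set, hence bounded below by some $\delta>0$; therefore $V^\star$ drops by at least $\delta$ each step and the trajectory enters $\mathbb{X}_{\bar\gamma}$ in finitely many steps, after which invariance of $\mathbb{X}_{\bar\gamma}$ keeps it there, establishing convergence to $\mathbb{X}_{\bar\gamma}\subseteq\mathbb{X}_s$.

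I expect the delicate point to be the invariance of the inner set $\mathbb{X}_{\bar\gamma}$ rather than the decrease outside it. A single step starting near $\partial\mathbb{X}_{\bar\gamma}$ can leave the raw non-decrease ball, so I would enlarge $\mathbb{X}_{\bar\gamma}$ to the smallest sublevel set that also absorbs the one-step images of that ball, using the local Lipschitz bounds on $f$ and boundedness of $K$ on the compact set from~(\ref{eq:pred_error}). Getting this margin to close while keeping $\mathbb{X}_{\bar\gamma}\subsetneq\mathbb{X}_s$ is exactly what pins down the admissible range of $\gamma$, and hence $\epsilon$. A secondary subtlety worth a remark is that, since $\mathcal{A}^\star$ is the \emph{optimal} advantage and is therefore nonpositive under any policy, the hypothesis in fact forces $\mathcal{A}^\star(x,K(x))=0$ on $\mathbb{X}_s$ (so $K$ is optimal there and Bellman holds with equality); the proof above, however, only ever uses the inequality, so it remains valid for any value function satisfying the stated relation.
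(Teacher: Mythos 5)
Your proof follows essentially the same route as the paper's: unpacking the advantage condition into the discounted decrease $\gamma V^\star(x^{+})-\gamma V^\star(x)\leq -\ell(x,u)+(1-\gamma)V^\star(x)$, bounding the region where the decrease can fail via the $\mathcal{K}_\infty$ bounds on $\ell$ and $V^\star$, and concluding convergence to a sublevel set $\mathbb{X}_{\bar{\gamma}}$ that shrinks as $\gamma\uparrow1$ --- and your extra care about the one-step overshoot when establishing invariance of the inner set is a genuine refinement that the paper's proof glosses over. One small correction to your closing remark: the paper explicitly defines $\mathcal{A}^\star$ as the advantage of the action with respect to the initial demonstrator policy $K_0$ (i.e.\ $\mathcal{A}^{K_0}$), not the optimal advantage, so the hypothesis does not force $\mathcal{A}^\star(x,K(x))=0$ --- strict positivity is possible for an improving policy $K$ --- though, as you note, your argument only uses the inequality and is unaffected.
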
 
A trivial choice is $\epsilon={\ell_\ell}/{L_V}$. If $K_0$ is stabilizing, the result holds also for $\gamma=1$. From Theorem \ref{lemma:adv}, using $J_\text{TD(0)}$ for RL could provide safety, if $\mathcal{A}^\star(x,K(x)) \rightarrow 0$. In practice, our loss (\ref{eq:lyapunov_loss}) can encourage roll-out stability \emph{earlier} than Bellman error convergence.  
 Replacing $\ell$ with an equivalent contraction $\lambda$ on $V$ allows us to \emph{learn offline} from \emph{stabilizing}, \emph{sub-optimal} examples. A combination of our approach with value/advantage learning is of interest for future work. 
 
The remainder of the paper focuses on minimising losses, $\sum_{t=0}^{\infty}\gamma^t\ell(x(t),u(t))$, as this is more standard in control.

\subsection{Neural Lyapunov MPC} 
\label{subsec:lyap_mpc}
 The loss (\ref{eq:lyapunov_loss}) is used to both learn $V$ and to tune its scaling factor in the MPC loss,  $\alpha\geq1$, to provide stability. We aim to improve the ROA of the initial controller, used to collect data, by replacing it with an MPC solving the following input-limited, state soft-constrained, optimal control problem:
\begin{eqnarray}
	 \min_{\underbar{u}} \quad  & \gamma^N\alpha V(\hat{x}(N))+\sum_{i=0}^{N-1} \gamma^i \ell(\hat{x}(i), \hat{u}(i)) + \ell_{\mathbb{X}}(s(i))\label{problem} \nonumber\\
	\mathrm{s.t.} \quad &\hat{x}(i+1) = \hat{f}(\hat{x}(i), \hat{u}(i)), \\
	& \hat{x}(i)+s(i)\in\mathbb{X},\ \forall i\in[0, N], \nonumber\\
	& \ell_{\mathbb{X}}(s)=\eta_1 s^T s + \eta_2 \|s\|_1,  \ \eta_1>0,\ \eta_2 \gg 0, \nonumber\\
	& \hat{u}(i)\in\mathbb{U},\ \forall i\in[0, N-1], \nonumber\\
	&\hat{x}(0) = x(t), \nonumber 
\end{eqnarray}
where $\hat{x}(i)$ and $\hat{u}(i)$ are the predicted state and the input at $i$-steps in the future, ${s}(i)$ are slack variables, $\underbar{u} = \{ u(i) \}_{i=0}^{N-1}$, the stage cost $\ell$ is given by (\ref{eq:stage_cost}), $\gamma\in(0,1]$ is a {discount factor}, 
% the function $\hat{f}$ is a forward model, 
the function $V$ is
% the terminal cost, in our case
the Lyapunov NN from (\ref{eq:lyap_definition}), scaled by a factor $\alpha\geq1$ to provide stability, and $x(t)$ is the measured system state at the current time. The penalty  $\ell_\mathbb{X}$ is used for state constraint violation, see \cite{Kerrigan00softconstraints}. The optimal cost is denoted as $J_{\text{MPC}}^\star(x(t))$. 

Problem (\ref{problem}) is solved online given the current state $x(t)$; then, the first element of the optimal control sequence, $u^\star(0)$, provides the action for the physical system. Then, a new state is measured, and (\ref{problem}) is again solved, in a \emph{receding horizon}. 
% We formulate the problem as a Sequential Quadratic Program (SQP).
The implementation details are given in Appendix.  %~\appref{appendix:mpc_sqp}

\paragraph{Stability and safety} 

We extend standard results from \cite{Limon2003StableCM,Limon2009} to the discounted case and to the $\lambda$-contractive $V$ from  (\ref{eq:lyap2_lqr}). In order to prove them, we make use of the uniform continuity of the model, the SQP solution and the terminal cost, $V$, as done by \cite{Limon2009}. 
Consider the candidate MPC ROA:
 \begin{eqnarray}\label{eq:ROA}
\nonumber     \Upsilon_{N,\gamma,\alpha} = \left\{x\in\mathbb{R}^{n_x}: J_{\text{MPC}}^\star(x)\leq \frac{1-\gamma^{N}}{1-\gamma}\ d +\gamma^N \alpha l_s\right\},  \\ \nonumber d = \inf_{x\not\in\mathbb{X}_s}\ell(x,0).
 \end{eqnarray}
%  where:
%  \begin{equation}
%      d = \inf_{x\not\in\mathbb{X}_s}\ell(x,0). 
%  \end{equation}
The following are obtained for system (\ref{eq:ode}) in closed loop with the MPC defined by problem (\ref{problem}). Results are stated for $\mathbb{X}_T=\{0\}$. For $\mathbb{X}_T\not=\{0\}$, convergence would occur to a set instead of $0$. We assume $V$ being formally verified \cite{Abate_2021}.

\begin{theorem}\label{Theorem:stability}{\bf Stability and robustness} 
Assume that $V(x)$ satisfies (\ref{eq:lyap2_lqr}), with  $\lambda\in[0,1)$, $\mathbb{X}_T=\{0\}$. Then, given $N\geq1$, for the MPC (\ref{problem}) there exist a constant $\bar{\alpha}\geq0$, a discount factor  $\bar{\gamma}\in(0,1]$, and a model error bound $\bar{\mu}$ such that, if $\alpha\geq\bar{\alpha}$, $\mu\leq\bar{\mu}$ and $\gamma\geq\bar{\gamma}$, then, $\forall x(0)\in\mathcal{C}(\mathbb{X}_s)$:
% \footnote{$\mathcal{C}(\mathbb{X}_s)$ is the set of states from which the system can reach $\mathbb{X}_s$ in one step, for some control action.}
 \vspace{-0.cm}
 \begin{enumerate}
    \item If $N=1$, $\mu=0$, then the system is asymptotically stable for any $\gamma>0$, $\forall x(0)\in\Upsilon_{N,\gamma,\alpha}\supseteq\mathbb{X}_s$.
    \item If $N>1$, $\mu=0$, then the system reaches a set $\mathbb{B}_{\gamma}$ that is included in $\mathbb{X}_s$. This set increases with decreasing discount, $\gamma$, $\forall x(0)\in\Upsilon_{N,\gamma,\alpha}$. $\gamma=1\Rightarrow \mathbb{B}_{\gamma}=\{0\}$. 
    
    \item If $\alpha V(x)$ is the optimal value function in $\mathbb{X}_s$ for the problem, $\mu=0$, and if $\mathcal{C}(\mathbb{X}_s)\neq\mathbb{X}_s$, then the system is asymptotically stable, $\forall x(0)\in\Upsilon_{N,\gamma,\alpha}\supset\mathbb{X}_s$.
    
    \item If $\mu=0$, then $\alpha\geq\bar{\alpha}$ implies that $\alpha V(x)\geq V^\star(x), \forall x\in\mathbb{X}_s$, where $V^\star$ is the optimal value for the infinite horizon problem with cost (\ref{eq:stage_cost}) subject to (\ref{eq:constraints}). 
    
    \item \addition{The MPC has a stability margin. If the MPC uses a surrogate model satisfying (\ref{eq:pred_error}), with one-step error bound $\|w\|_2^2<\bar{\mu}^2=\frac{1-\lambda}{L_V L_{\hat{f}x}^{2N}}l_s$, then the system is Input-to-State (practically) Stable (ISpS) and there exists a set $\mathbb{B}_{N, {\gamma},{\mu}}:\ x(t)\rightarrow \mathbb{B}_{N, {\gamma}, {\mu}}$, $\forall x(0)\in\beta\Upsilon_{N,\gamma,\alpha}$, $\beta\leq1$. }
\end{enumerate}
\end{theorem}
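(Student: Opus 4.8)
The plan is to use the optimal MPC cost $J_{\text{MPC}}^\star$ as a Lyapunov function for the closed loop, and to obtain all five claims from a single descent inequality built by the standard shifted-sequence argument. First I would establish a \emph{terminal cost condition}: using the $\lambda$-contraction (\ref{eq:lyap2_lqr}) together with the quadratic bounds (\ref{eq:lyap1}) and a Lipschitz bound on the local controller $K$, I would show that there is a finite $\bar{\alpha}$ such that, for all $x\in\mathbb{X}_s$, $\ell(x,K(x))+\gamma\,\alpha\,V(\hat{f}(x,K(x)))\le\alpha V(x)$ whenever $\alpha\ge\bar{\alpha}$; indeed $V(\hat{f})\le\lambda V$ reduces this to $\ell(x,K(x))\le\alpha(1-\gamma\lambda)V(x)$, which holds for $\alpha\ge\bar{\alpha}$ because $\ell(x,K(x))\le c\|x\|_2^2$ and $V(x)\ge l_\ell\|x\|_2^2$. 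Since $\mathbb{X}_s$ is a level set of $V$, the contraction also makes it invariant under $K$, so shifting the optimal sequence and appending $K(\hat{x}(N))$ yields a feasible candidate at the successor state; this gives recursive feasibility on $\mathcal{C}(\mathbb{X}_s)$ and the backbone descent used throughout.

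For items 1--3 (with $\mu=0$) I would plug this feasible candidate into $J_{\text{MPC}}^\star$ at $x^+=f(x,u^\star(0))$ and compare with $J_{\text{MPC}}^\star(x)$. Tracking the discount weights, the shift reweights the stage costs by $1/\gamma$, so the terminal condition cancels the appended tail and leaves
\[
\gamma\,J_{\text{MPC}}^\star(x^+)\le J_{\text{MPC}}^\star(x)-\ell(x,u^\star(0)).
\]
For $N=1$ the tail \emph{is} the terminal cost, so the cancellation is exact and the cleaner bound $J_{\text{MPC}}^\star(x^+)\le J_{\text{MPC}}^\star(x)-\ell(x,u^\star(0))$ holds for any $\gamma>0$, giving asymptotic stability (item 1). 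For $N>1$ the factor $1/\gamma>1$ prevents strict descent near the origin, so a comparison argument yields only convergence to the residual set $\mathbb{B}_\gamma$ where $\ell$ balances the discount gap; this set shrinks as $\gamma\to1$ and collapses to $\{0\}$ at $\gamma=1$ (item 2). Item 3 follows from the principle of optimality: if $\alpha V$ equals the value function on $\mathbb{X}_s$, the finite-horizon value coincides with the infinite-horizon one, and $\mathcal{C}(\mathbb{X}_s)\supsetneq\mathbb{X}_s$ yields a strictly larger ROA with exact descent.

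Item 4 is an over-bounding argument: rolling out $K$ from $x\in\mathbb{X}_s$ and using $V(x_i)\le\lambda^iV(x)$ together with $\ell(x_i,K(x_i))\le c\|x_i\|_2^2\le(c/l_\ell)V(x_i)$, the discounted cost-to-go under $K$ is bounded by $\tfrac{c}{l_\ell(1-\gamma\lambda)}V(x)$. Since $V^\star$ is dominated by the cost of any admissible policy, $V^\star(x)\le\alpha V(x)$ for $\alpha\ge\bar{\alpha}=\tfrac{c}{l_\ell(1-\gamma\lambda)}$, which also justifies the $\bar{\alpha}$ used above.

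The hard part is item 5, the ISpS claim, which I would treat as inherent robustness of MPC with the mismatch $w$ acting as a bounded disturbance. Using $\|w\|_2\le\mu$ and the Lipschitz constant $L_{\hat{f}x}$, I would propagate the gap between the real state (under $f$) and the predicted state (under $\hat{f}$) through the horizon, obtaining a terminal mismatch of order $\mu\,L_{\hat{f}x}^{N-1}$, and then bound the resulting perturbation of the terminal cost via $L_V$. The stated threshold $\bar{\mu}^2=\tfrac{1-\lambda}{L_V L_{\hat{f}x}^{2N}}l_s$ is exactly what guarantees that this error-induced increase of $V$ at the terminal step is dominated by the contraction margin $(1-\lambda)$ at level $l_s$, so the real terminal state stays inside a slightly shrunk safe set and the terminal condition survives. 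The descent then degrades to an ISS-Lyapunov inequality $J_{\text{MPC}}^\star(x^+)\le J_{\text{MPC}}^\star(x)-\ell(x,u^\star(0))+\sigma(\mu)$ with a class-$\mathcal{K}$ gain $\sigma$, which by the standard ISpS characterization yields convergence to a residual set $\mathbb{B}_{N,\gamma,\mu}$ that grows with $\mu$, over a shrunk region $\beta\,\Upsilon_{N,\gamma,\alpha}$ with $\beta\le1$ to retain recursive feasibility under the disturbance. I expect the main obstacle to be making this perturbation step rigorous: since the MPC optimizer need not be continuous, the bound on $|J_{\text{MPC}}^\star(x^+)-J_{\text{MPC}}^\star(\hat{x}(1))|$ must rely on uniform continuity of the model, the SQP solution and $V$ (as in \cite{Limon2009}), and extracting the sharp constant in $\bar{\mu}$ from the geometric error accumulation is the most delicate bookkeeping.
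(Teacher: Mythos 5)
Your overall route is the paper's: a terminal-cost inequality derived from the $\lambda$-contraction (\ref{eq:lyap2_lqr}) and the bounds (\ref{eq:lyap1}), a shifted-sequence descent argument on $J^\star_{\text{MPC}}$, a policy-rollout bound for item 4, and inherent robustness via uniform continuity of the optimal cost (as in Lim\'on et al.) for item 5, including the correct threshold $\bar{\mu}^2=\frac{1-\lambda}{L_V L_{\hat{f}x}^{2N}}l_s$. Two steps, however, do not go through as written. First, problem (\ref{problem}) has \emph{no terminal constraint}, so your claim that ``shifting the optimal sequence and appending $K(\hat{x}(N))$ yields a feasible candidate'' presupposes that the \emph{optimal} terminal prediction satisfies $V(\hat{x}^\star(N))\le l_s$; nothing in your argument forces the optimizer's terminal state into $\mathbb{X}_s$, and outside $\mathbb{X}_s$ the contraction (\ref{eq:lyap2_lqr}) gives you nothing. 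The paper closes this hole with a second lower bound on the scaling ($\bar{\alpha}_2$, and later $\bar{\alpha}_3$, following Lim\'on et al.): $\alpha$ must be large enough that any optimal solution implicitly satisfies the terminal set condition on the region $\Upsilon_{N,\gamma,\alpha}$. This is also where the specific form of $\Upsilon_{N,\gamma,\alpha}$ and the constant $d=\inf_{x\notin\mathbb{X}_s}\ell(x,0)$ come from; your proposal quantifies over a ROA it never derives.

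Second, your $N=1$ claim rests on a miscalculation. From your terminal condition $\ell(x,K(x))+\gamma\alpha V(\hat{f}(x,K(x)))\le\alpha V(x)$ the shifted-sequence argument yields $\gamma J^\star_{\text{MPC}}(x^{+})\le J^\star_{\text{MPC}}(x)-\ell(x,u^\star(0))$ for every $N\ge1$, including $N=1$; the ``exact cancellation'' giving $J^\star_{\text{MPC}}(x^{+})\le J^\star_{\text{MPC}}(x)-\ell(x,u^\star(0))$ requires the stronger inequality with $\gamma\alpha V(x)$ on the right-hand side, i.e.\ $\gamma\alpha\ge\bar{\alpha}_1=\frac{L_\ell}{l_\ell(1-\lambda)}$, which is what the paper imposes (so ``any $\gamma>0$'' is bought by letting the required $\alpha$ grow like $1/\gamma$). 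With only your weaker condition, the $N=1$ case degrades to convergence to a residual set where $\ell(x,u^\star(0))\le(1-\gamma)\alpha V(x^\star(1))$, not asymptotic stability. The remainder --- item 2's residual set shrinking as $\gamma\to1$, item 4's bound $V^\star(x)\le\frac{L_\ell}{l_\ell(1-\gamma\lambda)}V(x)\le\alpha V(x)$ (a slightly cleaner telescoping than the paper's), and the ISpS sketch with the shrunk terminal level set --- is sound and matches the paper's proof in substance.
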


% \addition{Theorem} \ref{Theorem:stability} states that for a given horizon length $N$ and contraction factor $\lambda$, one can find a minimum scaling of the Lyapunov function $V$ and a lower bound on the discount factor such that the system under the MPC is stable. Hence, if the model is perfect, then the state would converge to the origin as time progresses. If the model is not perfect, then the safety of the system depends on the size of the model error. If this error is less than the maximum tolerable error, $\mu\leq\bar{\mu}$, then the system is \emph{safe}: the state converges to a bound, the size of which increases with the size of the model error, the prediction horizon $N$, and is inversely proportional to $\alpha$. In other words, the longer the predictions with an incorrect model, the worse the outcome. Note that the ROA also improves with larger $\alpha$ and $\gamma$. The proof of the theorem follows standard arguments and is in Appendix.  

\addition{Theorem} \ref{Theorem:stability} states that for a given horizon $N$ and contraction $\lambda$, one can find a minimum scaling of the Lyapunov function $V$ and a lower bound on the discount factor $\gamma$ such that the system under the MPC has ROA $\Upsilon_{N,\gamma,\alpha}\supseteq\mathbb{X}_s$. If the model is not perfect, its error being less than  $\mu\leq\bar{\mu}$, then the ROA size decreases but the system is still \emph{safe}. In this case, a shorter horizon can be beneficial. 
% Note that the ROA also improves with larger $\alpha$ and $\gamma$. 
The proof of the theorem follows standard MPC arguments and is in Appendix.  

% \vspace{-0.2cm}
\paragraph{Performance with surrogate models} \addition{In order to further motivate for the search of a $V$ giving the largest $\mathbb{X}_s$, notice that a larger $\mathbb{X}_s$ can allow for shortening the MPC horizon, possibly yielding the same ROA. Contrary to \cite{lowrey_plan_2018}, we demonstrate how model mismatch and longer horizons can decrease performance with respect to an infinite-horizon oracle with same cost and perfect model. This links RL to arguments used in nominal MPC robustness  \cite{Limon2009, gallieri_book}.}
 
Let $\mathbf{E}_\mathcal{D}[J_{V^\star}(K^\star)]$ define the expected infinite-horizon performance of the optimal policy $K^\star$, evaluated by using the expected infinite-horizon performance (value function), $V^\star$, for the stage cost (\ref{eq:stage_cost}) and subject to (\ref{eq:constraints}). Similarly, let $\mathbf{E}_{x\in\mathcal{D}}[J^\star_{\text{MPC}}(x)]$ define the MPC's expected performance with the learned $V$, when a surrogate model is used and $\mathbf{E}_{x\in\mathcal{D}}[J^\star_{\text{MPC}}(x;f)]$ when $f$ is known. % We obtain the following Theorem: 
\begin{theorem}\label{Theorem:performance}{\bf Performance}
Assume that the value function error is bounded for all $x$, namely, $\|V^\star(x)-\alpha V(x)\|_2^2\leq\epsilon$, and that the model error satisfies (\ref{eq:pred_error}), for some $\mu>0$. 
Then, for any $\delta>0$:
\begin{eqnarray}
\mathbf{E}_{x\in\mathcal{D}}[J^\star_{\text{MPC}}(x)] - \mathbf{E}_{x\in\mathcal{D}}[J^\star_{V^\star}(x)] 
\leq \\ \frac{2 \gamma^N\epsilon}{1-\gamma^N} + \left(1+\frac{1}{\delta}\right)\|Q\|_2\sum_{i=0}^{N-1}\gamma^i \left( \sum_{j=0}^{i-1} \bar{L}_{f}^j\right)^2 \mu^2 \nonumber \\
\quad + \left(1+\frac{1}{\delta}\right)\gamma^N \alpha L_V \left(\sum_{i=0}^{N-1}\bar{L}_{f}^i\right)^2 \mu^2 +\bar{\psi}(\mu) \nonumber \\
\quad +  \delta\ \mathbf{E}_{x\in\mathcal{D}}\left[J_{\text{MPC}}^\star(x; f)\right], \nonumber
\end{eqnarray}
where $\bar{L}_{f}=\min(L_{\hat{f}x},L_{{f}x})$ and $\bar{\psi}$ is a $\mathcal{K}_\infty$-function representing the constraint penalty terms. 
\end{theorem}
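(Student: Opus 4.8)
The plan is to bound the total suboptimality gap by splitting it into a pure terminal-cost-approximation term (evaluated with the perfect model) and a pure model-mismatch term, and to attack each with a different tool before taking the expectation over $x\in\mathcal{D}$ at the very end. Concretely, I would start from the identity
\[
J^\star_{\text{MPC}}(x) - V^\star(x) = \underbrace{\big(J^\star_{\text{MPC}}(x) - J^\star_{\text{MPC}}(x;f)\big)}_{\text{model mismatch}} + \underbrace{\big(J^\star_{\text{MPC}}(x;f) - V^\star(x)\big)}_{\text{approximate terminal cost, perfect model}},
\]
and bound the two groups independently, reading off the $\mu$-dependent terms and the $\delta$-coupling from the first group and the $\tfrac{2\gamma^N\epsilon}{1-\gamma^N}$ term from the second.

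For the perfect-model group I would follow the approximate-dynamic-programming argument of \cite{lowrey_plan_2018}. Writing the $N$-step planner as $N$ compositions of the (soft-constrained) Bellman operator with terminal cost $\alpha V$, and using that this operator is a $\gamma$-contraction whose fixed point is $V^\star$, the open-loop error contracts as $\gamma^N$ while the receding-horizon reapplication accumulates the residual as a geometric series; this is what produces the factor $\tfrac{2\gamma^N}{1-\gamma^N}$ multiplying the value-function error and hence the term $\tfrac{2\gamma^N\epsilon}{1-\gamma^N}$ under the hypothesis $\|V^\star-\alpha V\|_2^2\le\epsilon$. The care here is carrying the squared-norm definition of $\epsilon$ through the contraction and the expectation without losing the geometric constant.

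For the model-mismatch group I would exploit optimality: since the surrogate-optimal sequence minimises the $\hat f$-cost, $J^\star_{\text{MPC}}(x)\le J_{\hat f}(x,\underline u^\star_f)$, where $\underline u^\star_f$ is optimal under $f$, so the gap is at most the cost of one common control sequence evaluated under $\hat f$ versus $f$. I would then propagate the one-step error \eqref{eq:pred_error} through the Lipschitz dynamics to obtain $\|\hat x(i)-x(i)\|_2\le\mu\sum_{j=0}^{i-1}\bar L_f^{\,j}$, expand each quadratic stage cost as $\hat x^TQ\hat x - x^TQx = 2x^TQe + e^TQe$, and split the cross term with Young's inequality $2x^TQe\le\delta\,x^TQx+\tfrac1\delta e^TQe$. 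Summing over the horizon with the discount yields the $\|Q\|_2$ term, the analogous split of the terminal cost via the bound \eqref{eq:lyap1} yields the $\gamma^N\alpha L_V$ term, and the $\delta$-weighted nominal pieces reassemble cleanly into $\delta\,\mathbf{E}_{x\in\mathcal{D}}[J^\star_{\text{MPC}}(x;f)]$ because the trajectory $x(i)$ appearing in the split is exactly the $f$-optimal trajectory and the slack penalty is nonnegative.

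The hardest and least mechanical step will be the constraint-penalty term $\bar\psi(\mu)$: the slack variables that restore feasibility differ between the two predicted trajectories, and one must show this penalty difference is dominated by a $\mathcal{K}_\infty$ function of $\mu$ that vanishes as $\mu\to0$, which needs a continuity/feasibility argument relating the two slack sequences to $\|\hat x(i)-x(i)\|_2$ rather than the quadratic bookkeeping used elsewhere. Closely related, I would first pin down whether $J^\star_{\text{MPC}}$ denotes the open-loop optimal cost or the closed-loop performance and keep that reading consistent across both groups: the single-horizon structure of the $\mu$-terms, $\sum_{i=0}^{N-1}\gamma^i\big(\sum_{j=0}^{i-1}\bar L_f^{\,j}\big)^2$, points to a per-solve open-loop comparison, whereas the $\tfrac{1}{1-\gamma^N}$ in the $\epsilon$-term points to a closed-loop reapplication, and reconciling these two scales is where I expect the main conceptual difficulty to lie.
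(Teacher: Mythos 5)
Your proposal follows essentially the same route as the paper's proof: the same two-term decomposition (model mismatch plus perfect-model suboptimality), the Lowrey-style bound for the $\tfrac{2\gamma^N\epsilon}{1-\gamma^N}$ term, the principle-of-optimality upper bound $J^\star_{\text{MPC}}(x)\le J_{\hat f}(x,\underline u^\star_f)$, Lipschitz propagation of the one-step error, and a $\delta$-split of the quadratic terms (your Young's inequality on the cross term is algebraically identical to the paper's $\|a+b\|_2^2\le(1+\delta)\|a\|_2^2+(1+\tfrac1\delta)\|b\|_2^2$ identity), with the slack penalty absorbed into a $\mathcal{K}_\infty$ function exactly as the paper does via convexity and compactness of the constraints. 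The proposal is correct and matches the paper's argument.
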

 %\ref{appendix:Theorem2}.

 Theorem \ref{Theorem:performance} is related to \cite{asadi_lipschitz_2018} for value-based RL. However, here we do not constrain the system and model to be stable, nor assume the MPC optimal cost to be Lipschitz.  Theorem \ref{Theorem:performance} shows that a discount $\gamma$ or a shorter horizon $N$ can mitigate model errors. Since $\gamma\ll 1$ can limit stability (Theorem \ref{Theorem:stability}) we opt for the shortest horizon, hence $N=1$, $\gamma=1$. Proof of Theorem \ref{Theorem:performance} is in Appendix.  
 
\paragraph{MPC auto-tuning} The stability bounds discussed in Theorem \ref{Theorem:stability} can be conservative and their computation is non-trivial. Theoretically, the bigger the $\alpha$ the larger is the ROA (the safe region) for the MPC, up to its maximum extent. Practically, for a very high $\alpha$, the MPC solver may not converge due to ill-conditioning. 
Initially, by using the tool from \cite{agrawal_differentiable_2019} within an SQP scheme, we tried to tune the parameters through gradient-based optimization of the loss~(\ref{eq:lyapunov_loss}). These attempts were not successful, as expected from the considerations in \cite{amos_differentiable_2018}. Therefore, for practical reasons, in this work we perform a grid search over the MPC parameter $\alpha$. Note that the discount factor $\gamma$ is mainly introduced for Theorem \ref{Theorem:performance} and analysed in Theorem \ref{Theorem:stability} to allow for future combination of stable MPC with value iteration.

\subsection{Learning algorithm} 
\label{subsec:alt_opt_mpc}

% \begin{wrapfigure}{r}{0.5\textwidth}
% \vspace{-0.8cm}
%     \begin{minipage}{.5\textwidth}
        \begin{algorithm}[b]
            \SetAlgoLined
            \DontPrintSemicolon
            \scriptsize
        %  \begin{algorithmic}
            \textbf{Inputs: }{$\mathcal{D}_{\text{demo}}$, $\hat{f}$,  $\lambda\in[0,1)$, $\{l_\ell,\epsilon_\text{ext}\}>0$, $\gamma\in(0,1]$, $N\geq1$, $\alpha_\text{list}$, $N_{ext}$, $N_V$, $\epsilon_V$, $V_{init}$, $\ell(x,u)$} 
            %\KwOutput{$(V^*_{net},l^*_{net},K^*)$}
            \\ \textbf{Outputs: }{$V_{net},\ l_s,\ {\alpha^\star}$}
            \small 
            \scriptsize
            \caption{Neural Lyapunov MPC learning}
            \vspace{-0.1cm} \hrulefill \\ \vspace{0.1cm}
            {$\mathcal{D} \gets \mathcal{D}_{\text{demo}}$ \\ 
             $V_{net} \gets V_{init}$ \\} \;
            \For{$j=0...N_V$}{ 
                    $(V_{net},\ l_s,\ \mathbb{X}_s)\gets \text{Adam step on (\ref{eq:lyapunov_loss})}$   %\tcp*{ \smaller Train Lyapunov net (critic) on demonstrations }
            }\;
            \For{$i=0...N_{\text{ext}}$}{ 
                % $l_s  \gets (1+\epsilon_\text{ext})\ l_s$,\ 
                 $\mathcal{D} \gets \mathcal{D}_{\text{demo}}\cap  (1+\epsilon_\text{ext})\mathbb{X}_s$ %\tcp*{\smaller  Sample states from $\epsilon_\text{ext}$-extended safe set} 
                 \\
                 \;
                \For{$\alpha \in \mathcal{\alpha}_\text{list}$}{
                    \; 
                   $\mathcal{U}_1^\star \gets \texttt{MPC}(V_{net},\hat{f}, \mathcal{D}; \alpha)$, from (\ref{problem})\\ 
                   $\mathcal{D}_{\text{MPC}}(\alpha) \gets  \texttt{one\_step\_sim}(\hat{f},\mathcal{D},\mathcal{U}_1^\star)$ %\tcp*{\smaller Evaluate MPC  using one-step model}
                   \\ 
                   $\mathcal{L}(\alpha) \gets$  \text{Evaluate  (\ref{eq:lyapunov_loss}) on $\mathcal{D}_{\text{MPC}}(\alpha)$}  
                } 
                {${\alpha^\star} \gets \argmin(\mathcal{L}(\alpha))$ %\tcp*{\smaller  Update MPC terminal cost weight}
                 \\ 
                $\mathcal{D} \gets \mathcal{D}_{\text{MPC}}(\alpha^\star)$ \ \\ 
                $V_{net} \gets V_{init}$ 
                %\tcp*{\smaller Re-initialize Lyapunov net (optional)}
                \\
                }
                \For{$j=0...N_V$}{ 
                    $(V_{net},\ l_s,\ \mathbb{X}_s)\leftarrow$ Adam step on (\ref{eq:lyapunov_loss})   %\tcp*{\smaller Train  Lyapunov net using MPC one-step rollouts}
                } 
            }
            Perform formal verification \cite{bobiti_sampling-based_2016} of $V,\ \mathbb{X}_s$.  
        % \end{algorithmic}
        \label{alg:alternateDescent}
        \end{algorithm}
%         \vspace{-.9cm}   
%     \end{minipage}
%  \end{wrapfigure}
 
Our alternate optimization of the Lyapunov NN, $V(x)$, and the controller is similar to \cite{gallieri_safe_2019}. However, instead of training a NN policy, we tune the scaling $\alpha$ and learn $V(x)$ used by the MPC (\ref{problem}). Further, we extend their approach by using a dataset of demonstrations, $\mathcal{D}_{\text{demo}}$, instead of an explicitly defined initial policy.  These are one-step transition tuples, $(x(0),u(0),x(1))_m,\ m=1,\dots,M$, generated by a (possibly sub-optimal) stabilizing policy, $K_0$. Unlike in \cite{richards_lyapunov_2018}, our $V$ is a piece-wise quadratic, and it is learned without labels. We in fact produce our own psuedo-labels using the sign of $\Delta V(x)$ in (\ref{eq:lyapunov_loss}) in order to estimate $l_s$. The latter means that we don't require episode-terminating (long) rollouts, which aren't always available from data nor accurate when using  a surrogate. Also, there is no ambiguity on how to label.

Once the initial $V$, $\mathbb{X}_s$ are learned from the original demonstrations, we use $V$ and a learned model, $\hat{f}$, within the MPC. We propose \algref{alg:alternateDescent}, which runs multiple iterations where after each of them the tuned MPC and the surrogate model are used to generate new rollouts for training the next $V$ and $\mathbb{X}_s$. We search for the MPC parameter $\alpha$ by minimizing the loss~(\ref{eq:lyapunov_loss}), using $(1+\epsilon_\text{ext})\mathbb{X}_s$ as a new \emph{enlarged} target safe set instead of $\mathbb{X}_s$. Introducing possible \emph{counterexamples}, the use of $\epsilon_\text{ext}$ can push the safe set to extend and, together with \emph{validation}, is used as a scalable proxy for  verification. More specifically, we select $V$ and $\alpha$ using the criteria that the percentage of stable points ($\Delta V < 0$) increases and that of unstable points decreases while iterating over $j$ and $i$ when evaluated on a validation set. The best iteration is picked\footnote{Verification is needed for stability guarantees. 
The presented examples perform well even without the use of formal methods, however, we recommend the use of  verification (e.g. \cite{bobiti_sampling-based_2016})  for real-world applications.}. 

% The ROA is formally verified \emph{a-posteriori}, e.g. using \cite{bobiti_sampling-based_2016}

In \algref{alg:alternateDescent}, \texttt{MPC} denotes the proposed Neural Lyapunov MPC, while \texttt{one\_step\_sim} is a one-step transition of the MPC and the surrogate. To train the parameters of $V$ and the level $l_s$, Adam optimizer is used~\cite{kingma_adam:_2014}. A grid search over the MPC parameter $\alpha$ is performed. A thorough tuning of all MPC parameters is also possible, for instance, by using black-box optimisation methods. This is left for future work. 

%_________________________________________________________________________________________
% \vspace{-0.2cm}
\section{Numerical experiments}
% \vspace{-0.2cm}

Through our experiments, we show the following: 1) increase in the safe set for the learned controller by using our proposed alternate learning algorithm, 2) robustness of the one-step NLMPC compared to a longer horizon MPC (used as demonstrator) when surrogate model is used for predictions, and 3) effectiveness of our proposed NLMPC against the demonstrator and various RL baselines. 

%  \vspace{-.5cm}
%   \captionsetup[subfigure]{justification=centering}
\begin{table}[t!]
    \centering
    \begin{small}
    \begin{sc}
    \captionof{table}{\textbf{Inverted Pendulum learning on nominal model.} With iterations, the number of verified points increases.}
    \label{tab:lyap_results_pendulum}
    \begin{tabular}{P{0.5cm} P{1.4cm} P{1.12cm} P{1.8cm}}
        \toprule
        Iter. & Loss & Verif. & Not Verif. \\
        & $\log(1+x)$ & \% & \% \\ 
        \midrule
        1 & 3.21 & 13.25 & 0.00 \\
        2 & 1.08 & 13.54 & 0.00 \\
        \bottomrule
    \end{tabular}
    \end{sc}
    \end{small}
    % \vspace{-0.5cm}
\end{table}

\begin{figure}[t!]
    \centering
    % Instead of zero, make it 45 for removing title
    \begin{tabular}{P{2.2cm} P{2.2cm} P{2.2cm}}
         \includegraphics[trim={52 35 105 40}, clip, scale=0.22]{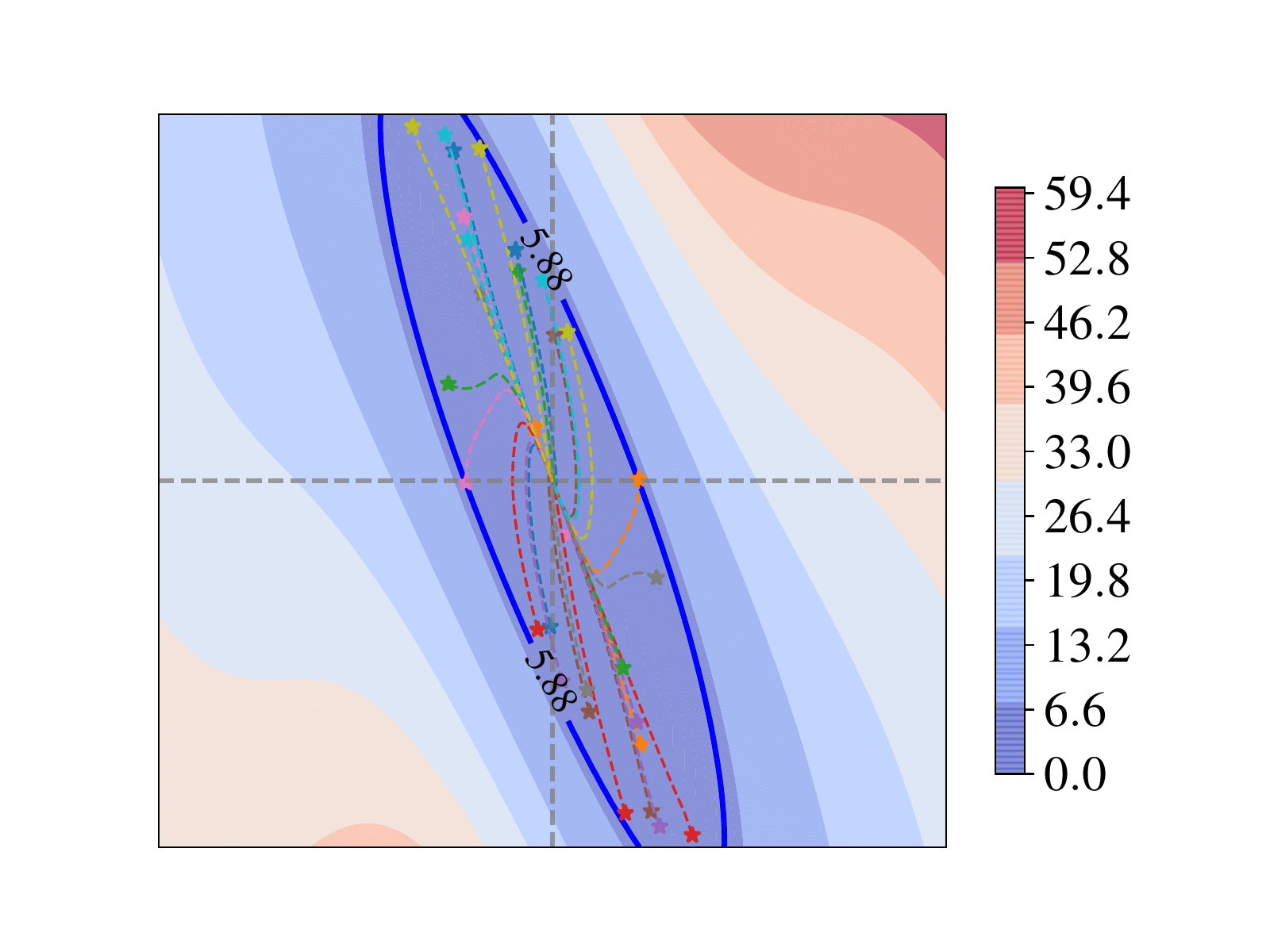} &  \includegraphics[trim={52 35 105 40}, clip, scale=0.22]{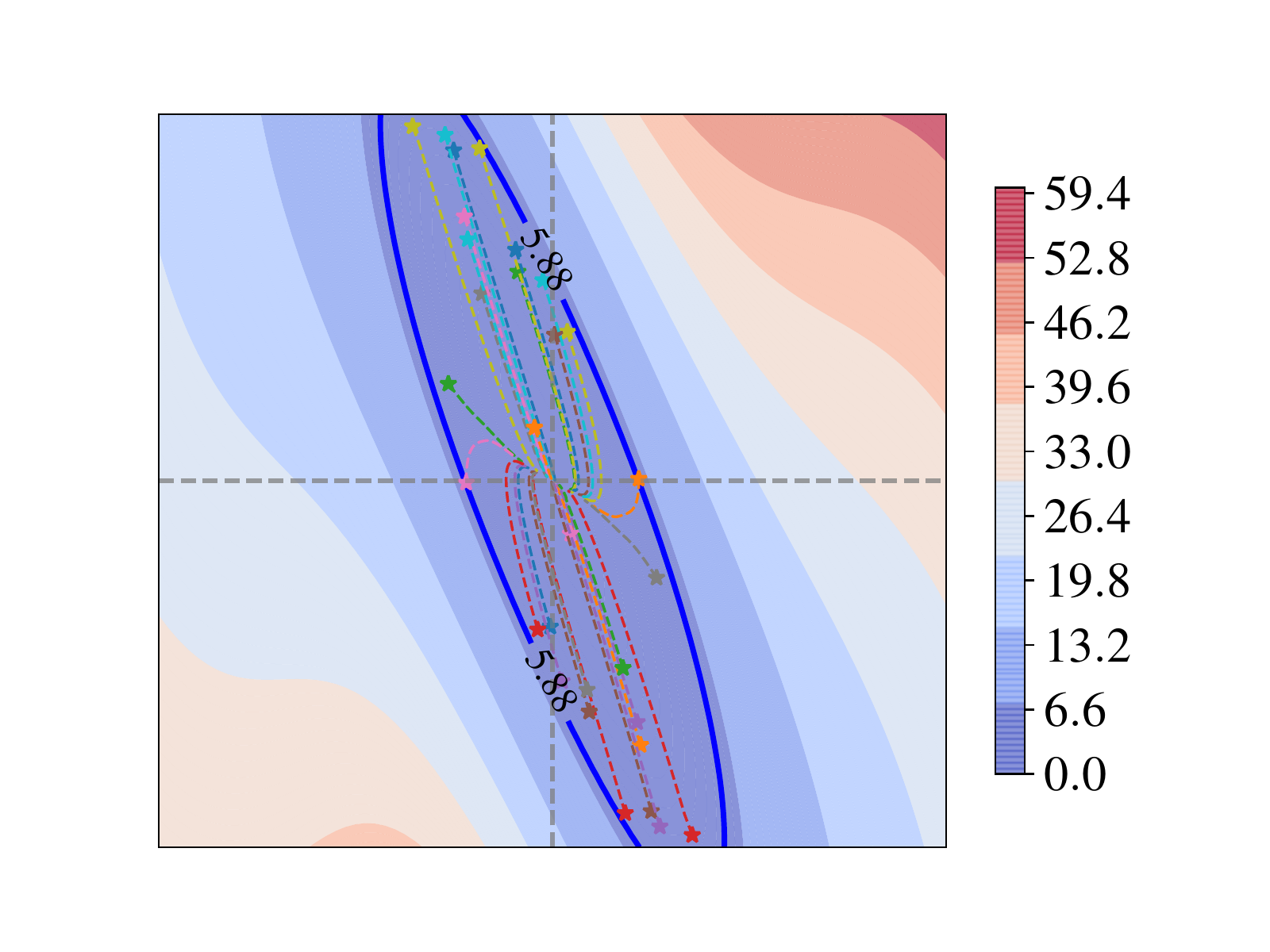} &   \includegraphics[trim={52 35 50 40}, clip, scale=0.22]{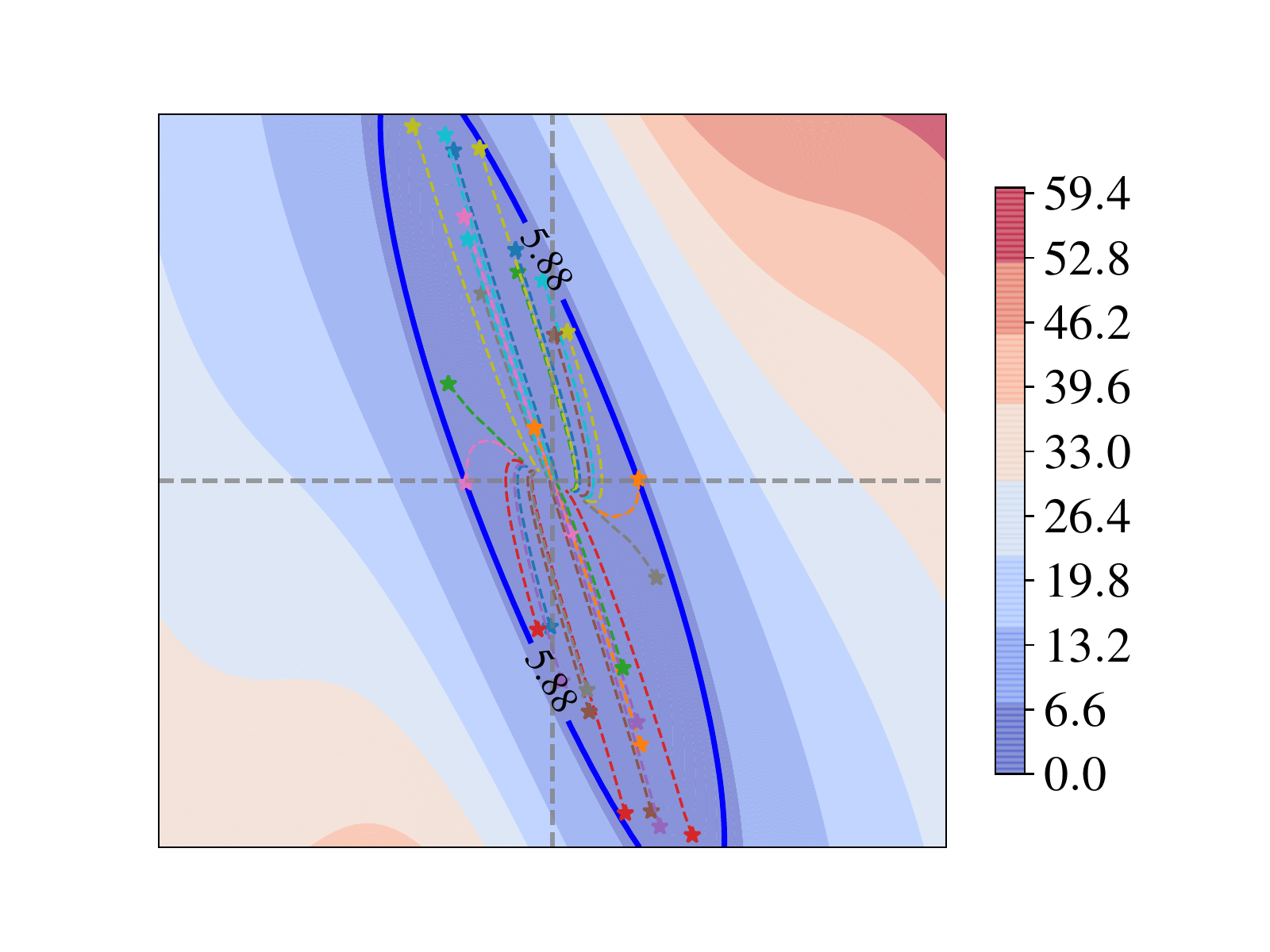}  \\
         (a) Demonstrator & (b) NLMPC (nominal) & (c) NLMPC (surrogate)
    \end{tabular}
     \caption{{\bf Inverted Pendulum: Testing learned controller on nominal system.} Lyapunov function with safe trajectories. NLMPC learns and  transfers successfully to surrogate model.}
    \label{fig:pendulum_lyap}
    \vspace{-10pt}
\end{figure}

% \begin{table}[t]
% % \vspace{-0.5cm}
%  \caption{\textbf{Car: Learning on nominal model.}}
%     \label{tab:vehicle_model_metric}
%     % \vspace{-0.35cm}
%     % \vskip 0.15in
%     \begin{center}
%     \begin{small}
%     \begin{sc}
%     \begin{tabular}{P{0.5cm} P{1.4cm} P{1.12cm} P{1.8cm}}
%         \toprule
%         Iter. & Loss & Verif. & Not Verif. \\
%     \midrule
%     1 & 1.55 & 92.20 & 4.42 \\
%     2 & 0.87 & 93.17 & 4.89 \\
%     3 & 0.48 & 94.87 & 3.89 \\
%     \bottomrule
%     \end{tabular}
%     \end{sc}
%     \end{small}
%     \end{center}
% \end{table}

% \begin{table}[t]
% \caption{\textbf{Car: Learning on surrogate model.}}
%     \label{tab:vehicle_surrogate_result}
%     % \vskip 0.15in
%     % \vspace{-0.35cm}
%     \begin{center}
%     \begin{small}
%     \begin{sc}
%     \begin{tabular}{P{0.5cm} P{1.4cm} P{1.12cm} P{1.8cm}}
%         \toprule
%         Iter. & Loss & Verif. & Not Verif. \\
%     \midrule
%     1 & 1.84 & 91.74 & 8.26 \\
%     2 & 1.43 & 92.26 & 7.74 \\
%     3 & 1.65 & 91.61 & 8.39 \\
%     \bottomrule
%     \end{tabular}
%     \end{sc}
%     \end{small}
%     \end{center}
% % \vskip -0.1in
% \end{table}

\begin{table}[b]
    \centering
    \caption{\textbf{Constrained vehicle kinematics learning.}}
    \begin{subtable}{1\linewidth}
       \begin{center}
            \begin{small}
            \caption{Learning on nominal model}\label{tab:vehicle_model_metric}
            \begin{sc}
            \begin{tabular}{P{0.5cm} P{1.4cm} P{1.12cm} P{1.8cm}}
                \toprule
                Iter. & Loss & Verif. & Not Verif. \\
                \midrule
                1 & 1.55 & 92.20 & 4.42 \\
                2 & 0.87 & 93.17 & 4.89 \\
                3 & 0.48 & 94.87 & 3.89 \\
                \bottomrule
            \end{tabular}
            \end{sc}
            \end{small}
        \end{center}
    \end{subtable}
    
    \bigskip
    \begin{subtable}{1\linewidth}
       \begin{center}
            \begin{small}
            \caption{Learning on surrogate model}\label{tab:vehicle_surrogate_result}
            \begin{sc}
            \begin{tabular}{P{0.5cm} P{1.4cm} P{1.12cm} P{1.8cm}}
                \toprule
                Iter. & Loss & Verif. & Not Verif. \\
                \midrule
                1 & 1.84 & 91.74 & 8.26 \\
                2 & 1.43 & 92.26 & 7.74 \\
                3 & 1.65 & 91.61 & 8.39 \\
                \bottomrule
            \end{tabular}
            \end{sc}
            \end{small}
        \end{center}
    \end{subtable}
    \vskip -0.1in
\end{table}

\begin{figure*}[t]
    \captionsetup[subfigure]{justification=centering}
    \centering
    % Instead of zero, make it 45 for removing title
    \begin{subfigure}[b]{0.19\textwidth}
        \centering
        \includegraphics[trim={52 25 105 30}, clip, scale=0.24]{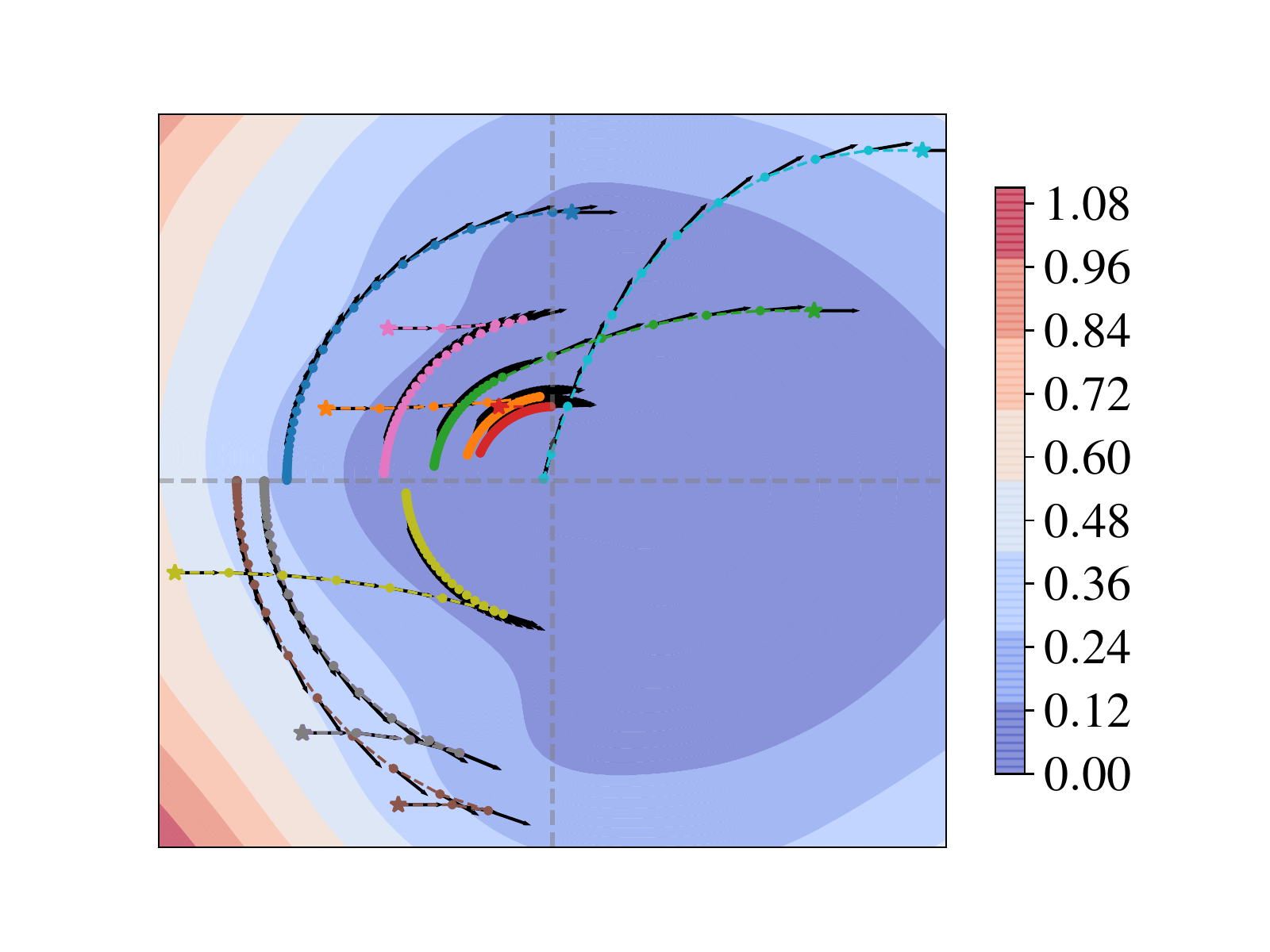}\vspace{-0.1cm}
    \end{subfigure}
    \begin{subfigure}[b]{0.19\textwidth}
        \centering
        \includegraphics[trim={52 25 105 30}, clip, scale=0.24]{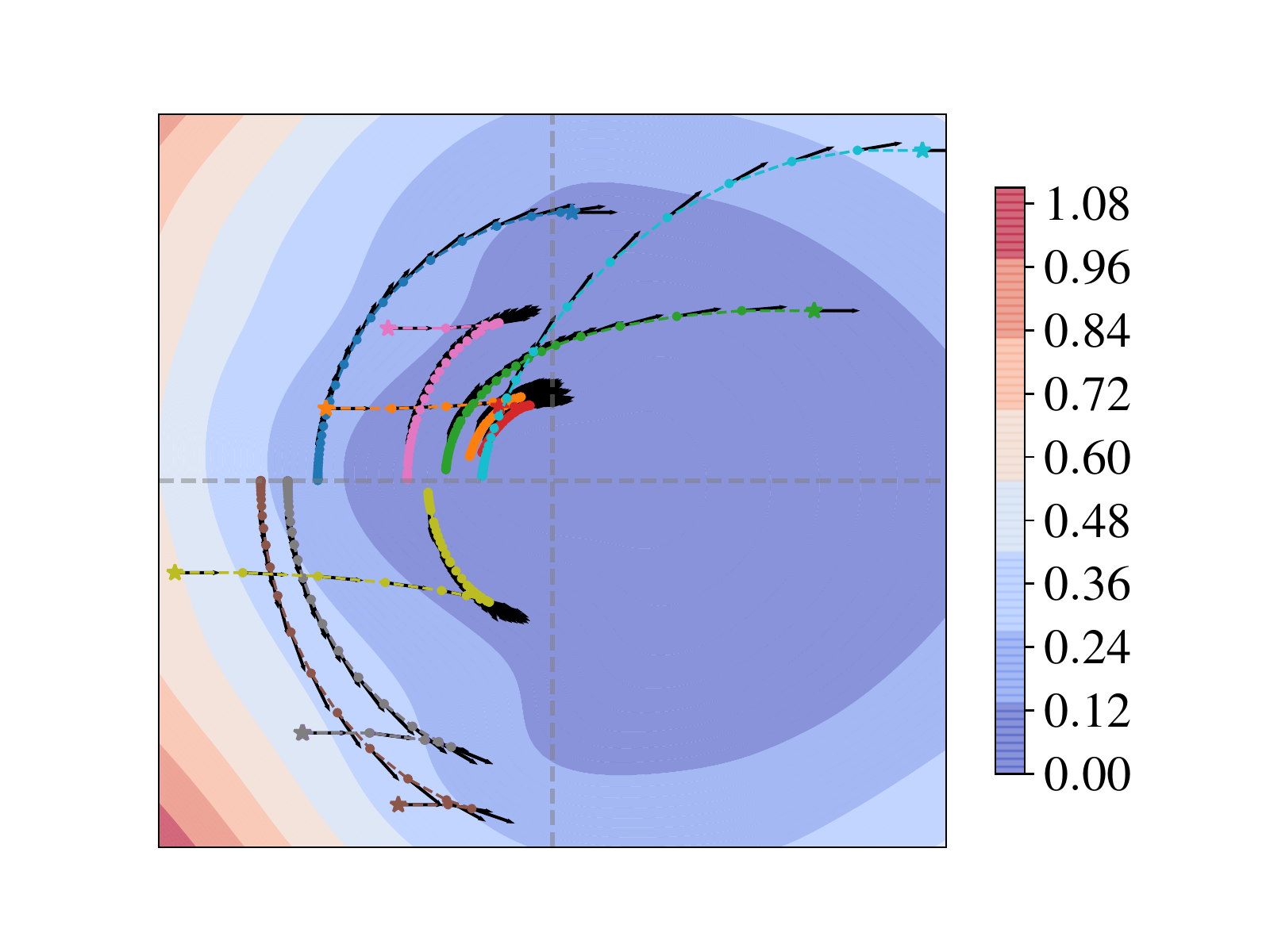}\vspace{-0.1cm}
    \end{subfigure}
    \begin{subfigure}[b]{0.19\textwidth}
        \centering
        \includegraphics[trim={52 25 105 30}, clip, scale=0.24]{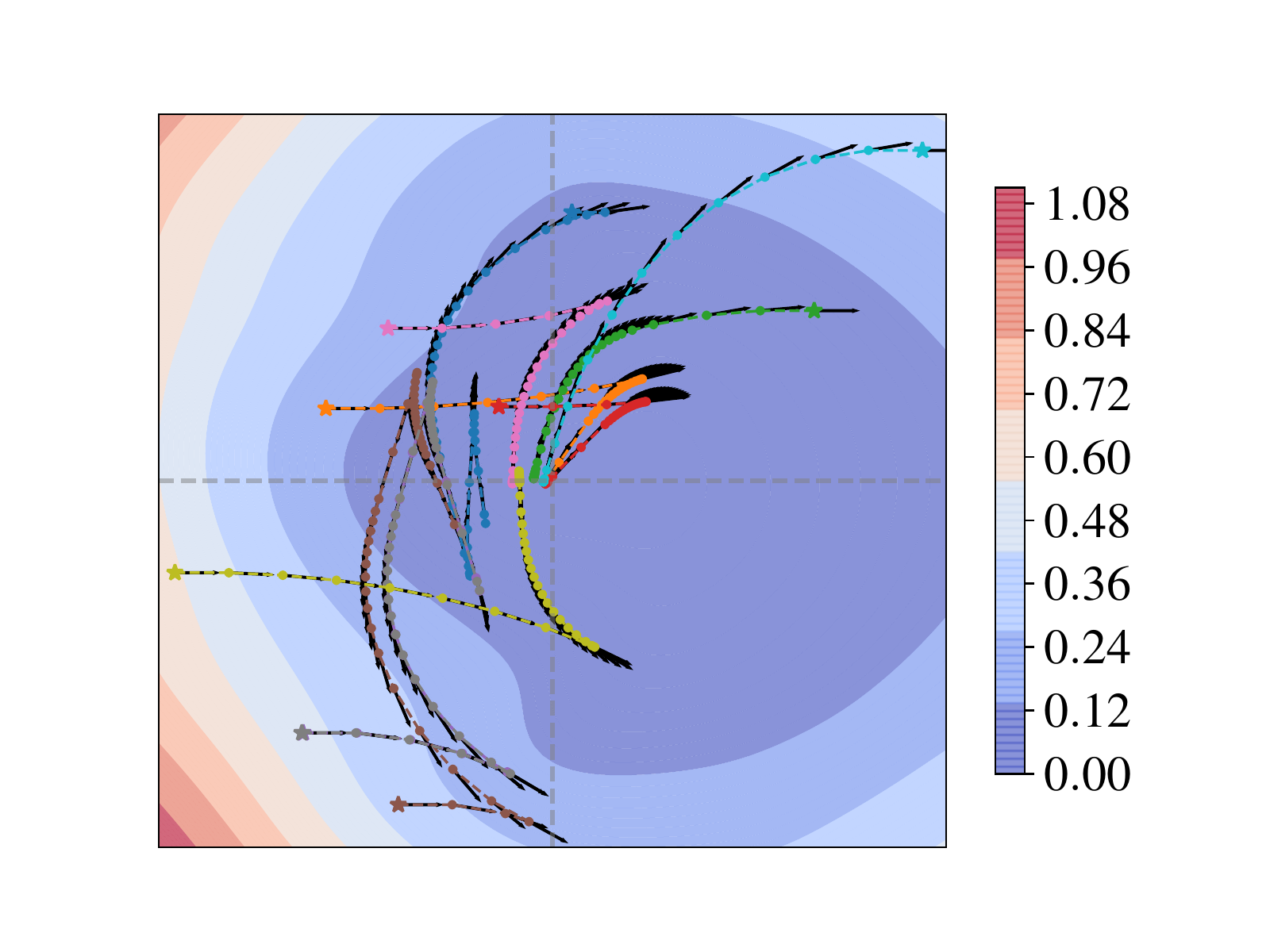}\vspace{-0.1cm}
    \end{subfigure}
    \begin{subfigure}[b]{0.19\textwidth}
        \centering
        \includegraphics[trim={52 25 105 30}, clip, scale=0.24]{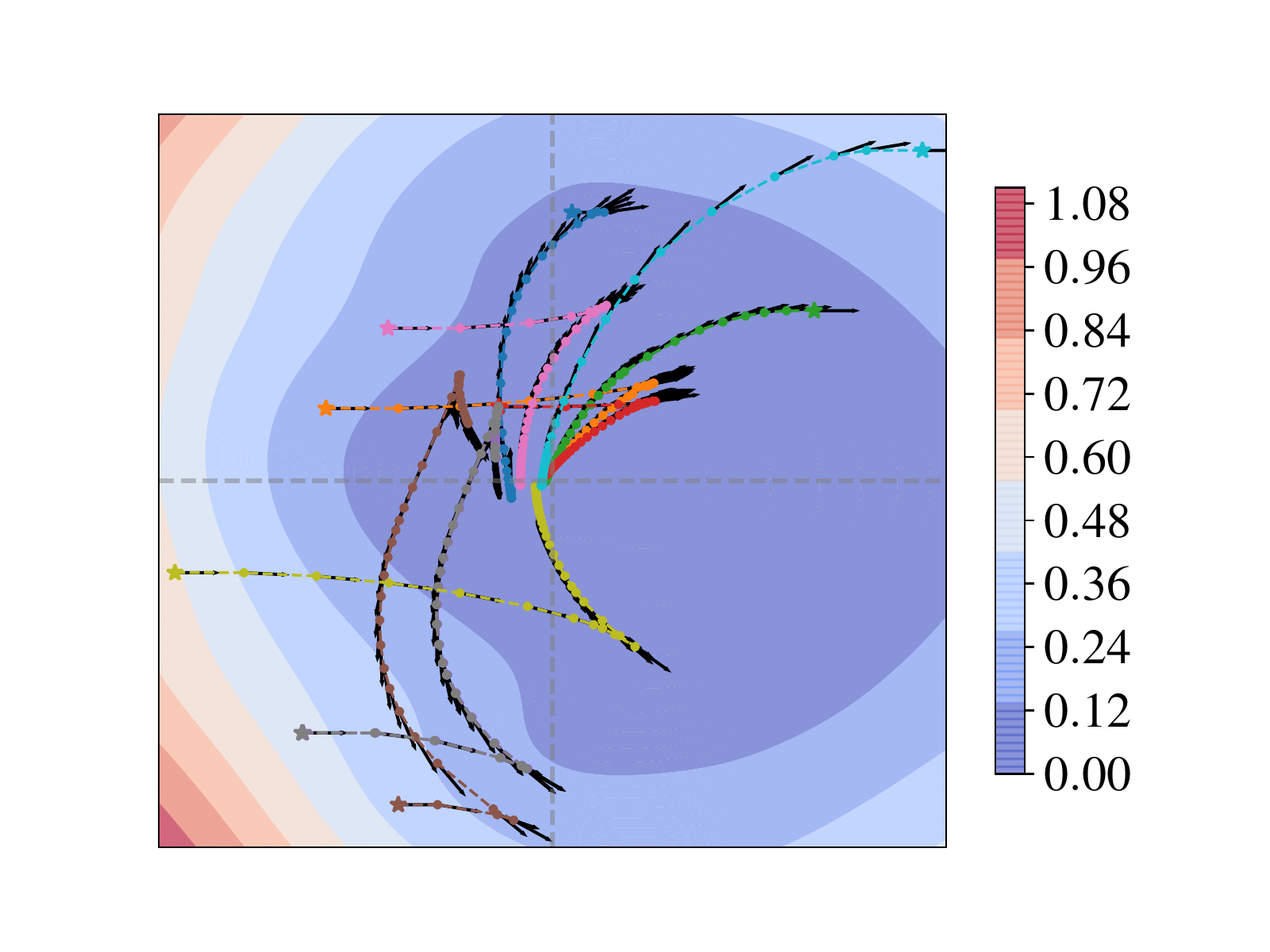}\vspace{-0.1cm}
    \end{subfigure}
    \begin{subfigure}[b]{0.19\textwidth}
        \centering
        \includegraphics[trim={52 25 50 40}, clip, scale=0.24]{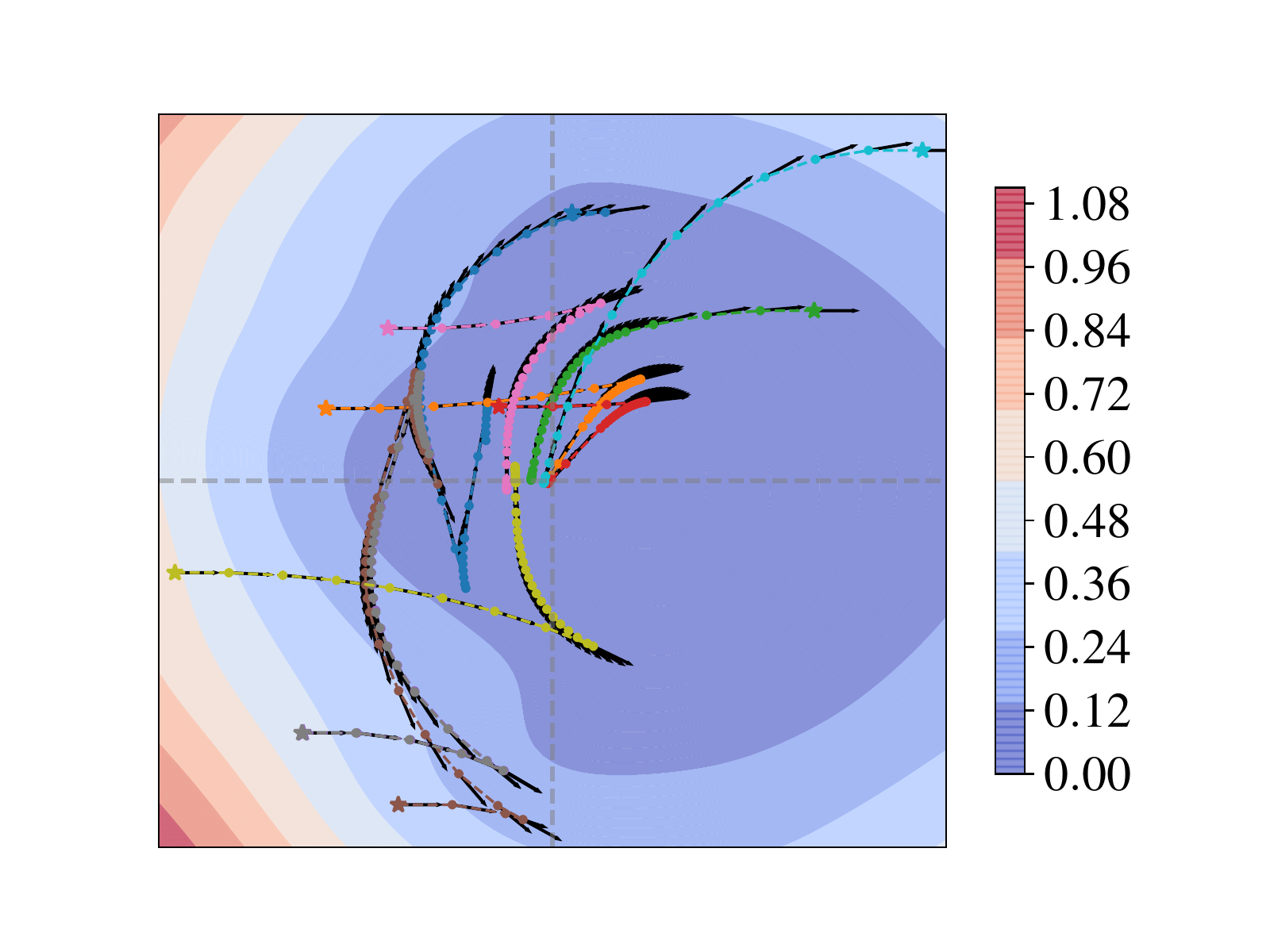}\vspace{-0.1cm}
    \end{subfigure}
    
    \medskip
    \begin{subfigure}[b]{0.19\textwidth}
    \centering
        \includegraphics[trim={15 25 30 30}, clip, scale=0.19]{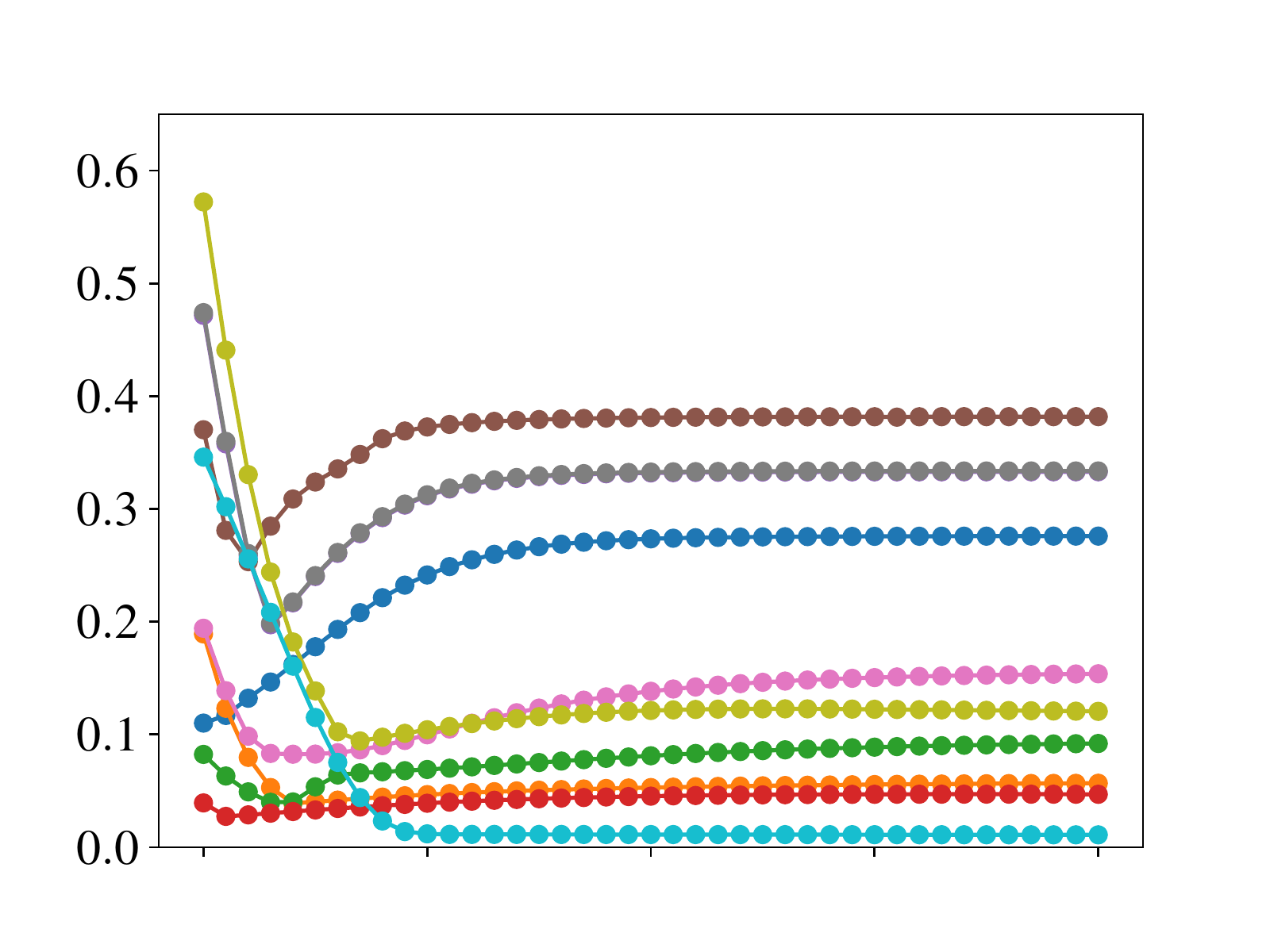}\vspace{-0.1cm}\caption{Short-horizon MPC using surrogate}
    \end{subfigure}
    \begin{subfigure}[b]{0.19\textwidth}
    \centering
        \includegraphics[trim={52 25 30 30}, clip, scale=0.19]{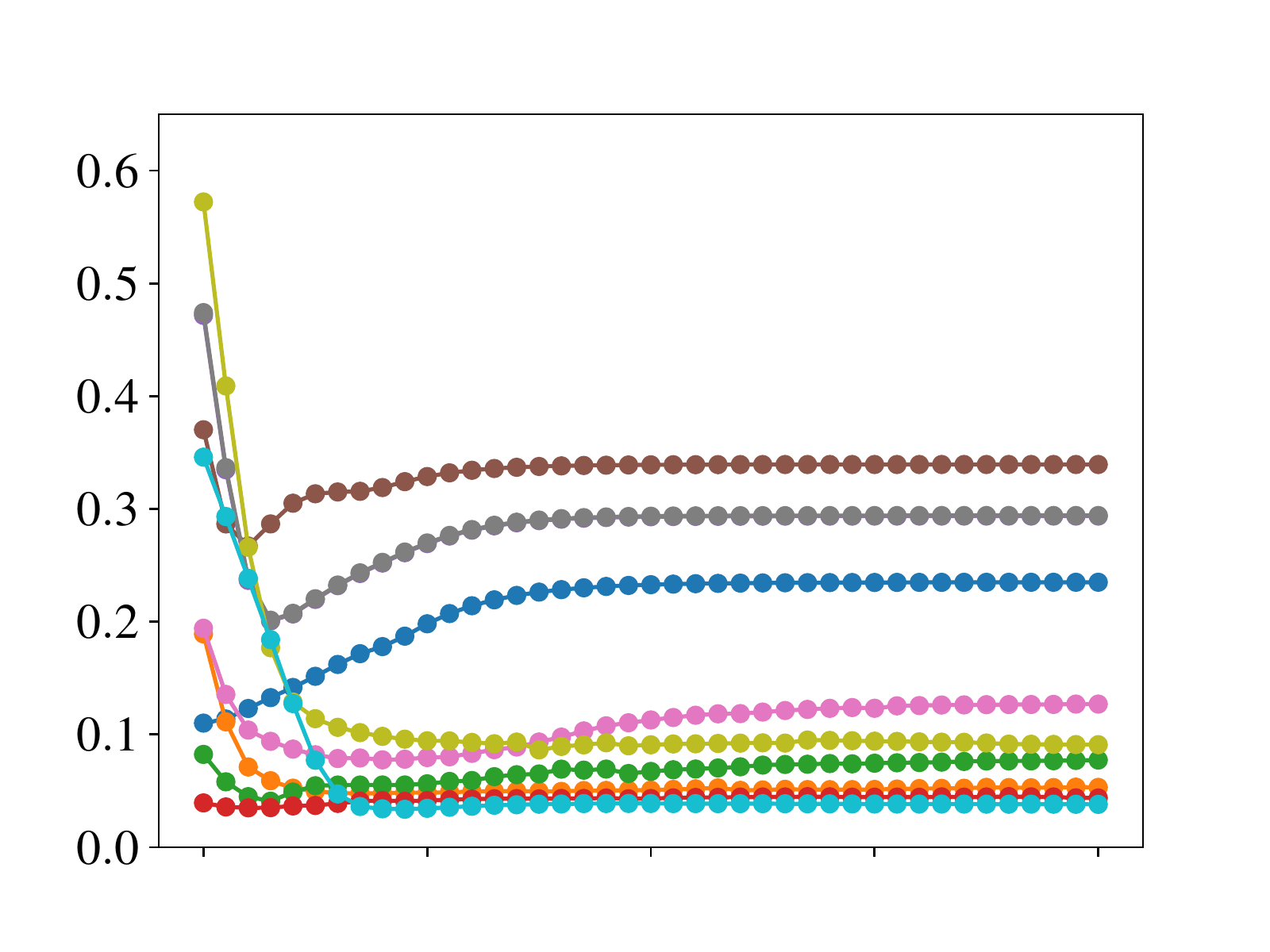}\vspace{-0.1cm}
        \caption{MPC demonstrator using surrogate}
    \end{subfigure}
        \begin{subfigure}[b]{0.19\textwidth}
    \centering
        \includegraphics[trim={52 25 30 30}, clip, scale=0.19]{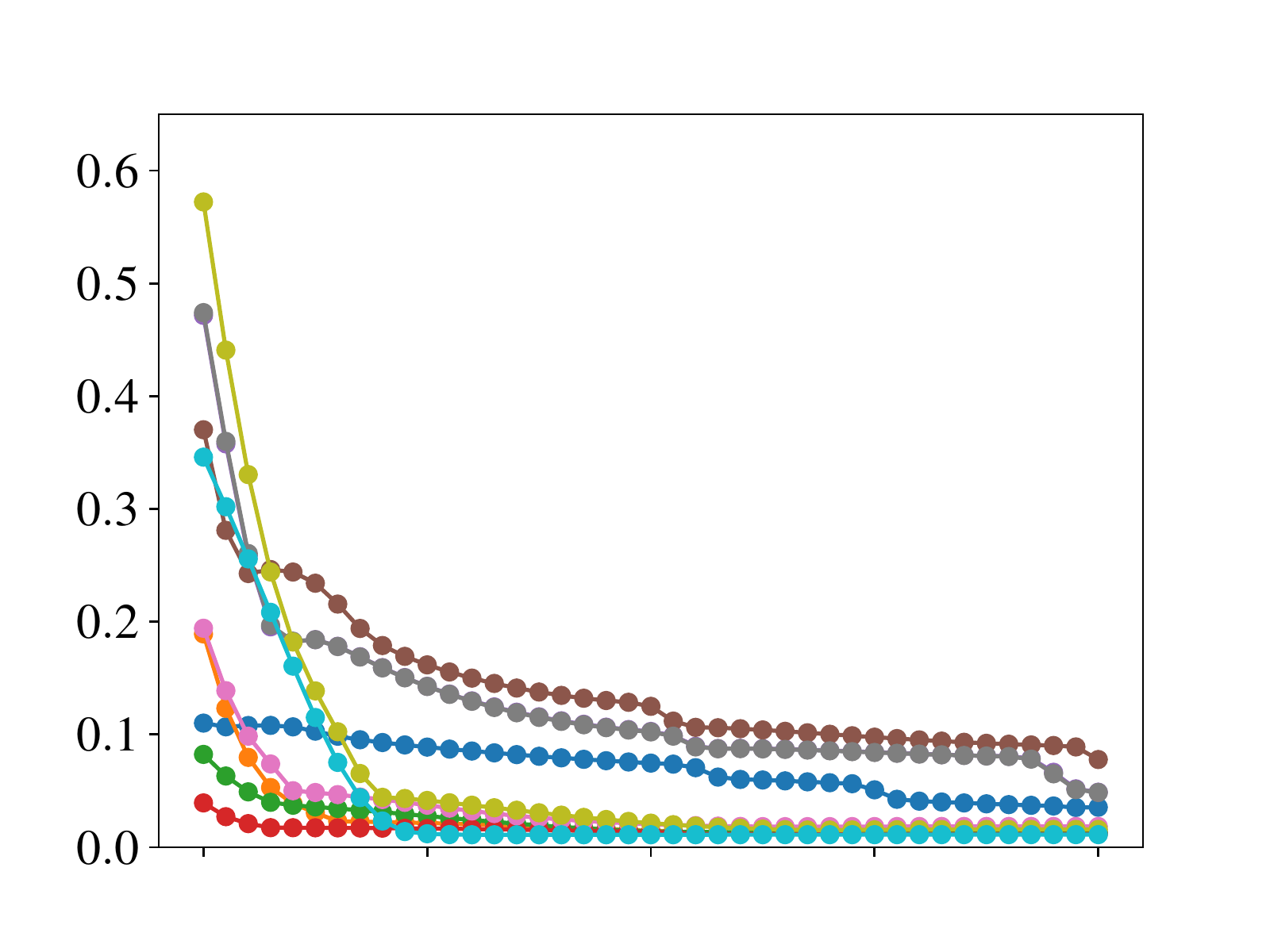}\vspace{-0.1cm} \caption{NLMPC using surrogate {\bf (ours)}}
    \end{subfigure}
    \begin{subfigure}[b]{0.19\textwidth}
    \centering
        \includegraphics[trim={52 25 30 30}, clip, scale=0.19]{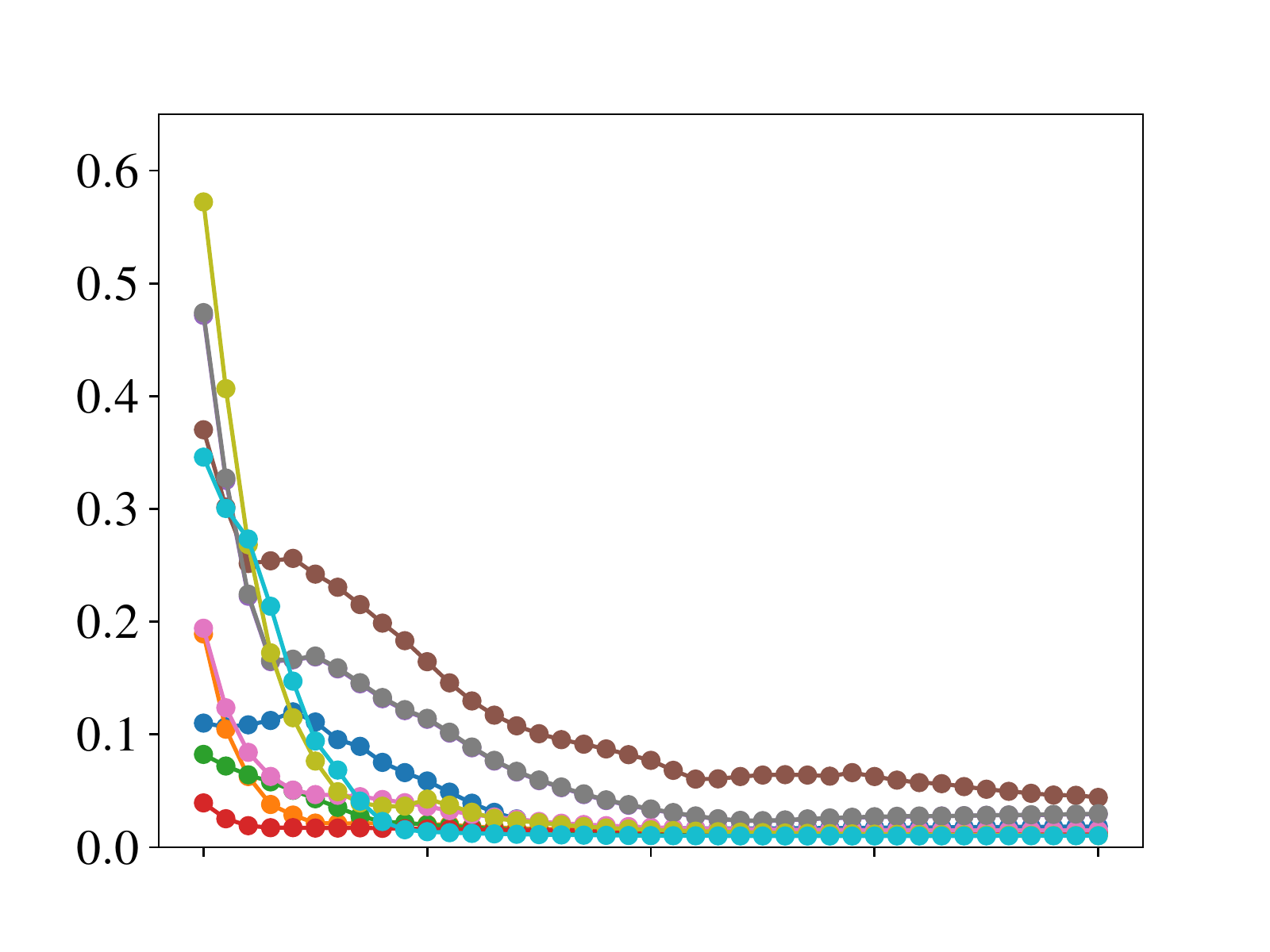}\vspace{-0.1cm}
        \caption{MPC demonstrator using nominal}
    \end{subfigure}
    \begin{subfigure}[b]{0.19\textwidth}
    \centering
        \includegraphics[trim={52 25 0 40}, clip, scale=0.19]{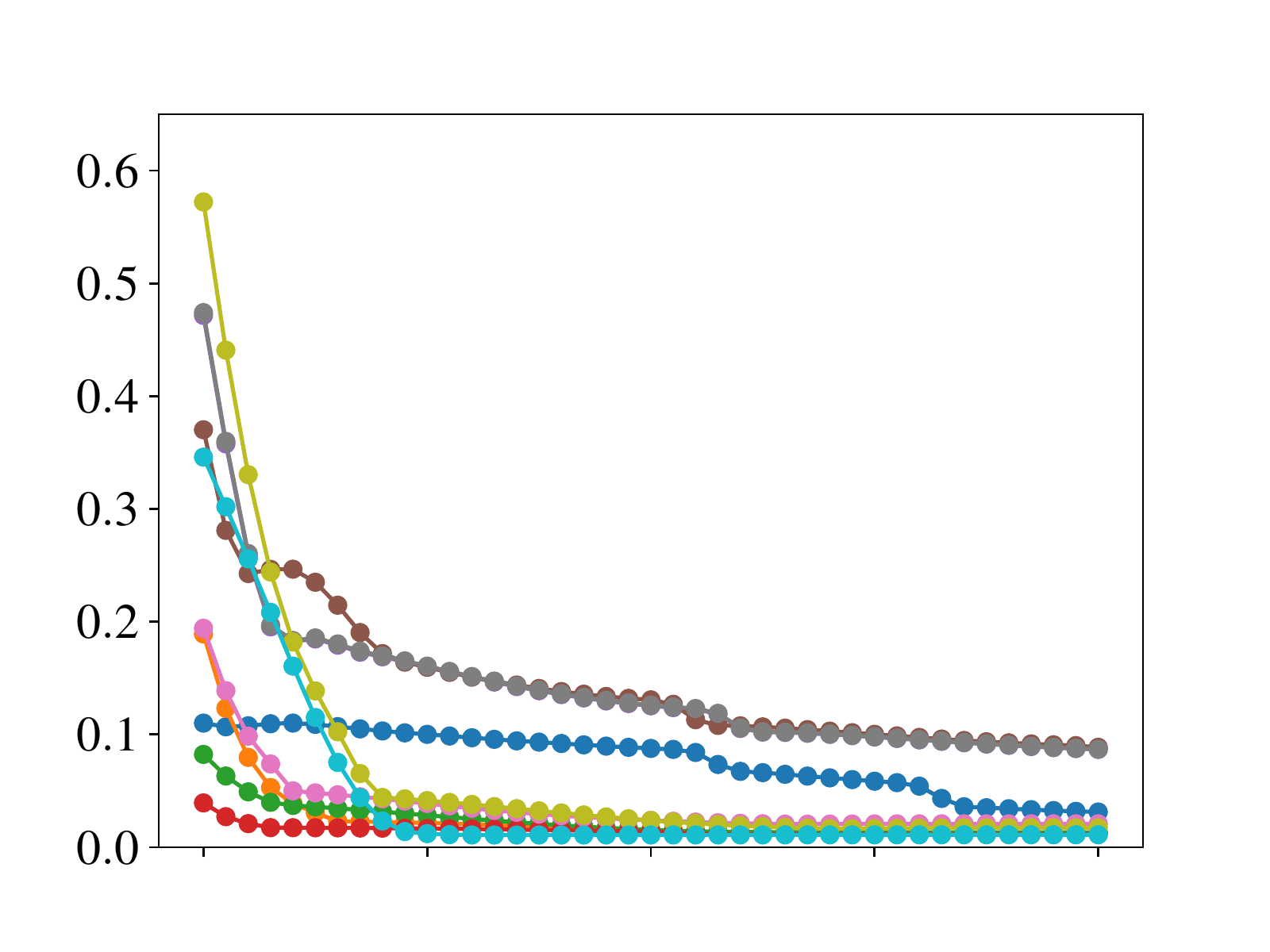}\vspace{-0.1cm}
        \caption{NLMPC using nominal {\bf (ours)}}
    \end{subfigure}
    
    % \medskip
    %  \begin{subfigure}[b]{0.19\textwidth}
    %     \centering
    %     \includegraphics[trim={15 25 30 30}, clip, scale=0.21]{}\vspace{-0.1cm}
    %     \caption{Short-horizon MPC using surrogate}
    % \end{subfigure}
    %  \begin{subfigure}[b]{0.19\textwidth}
    %     \centering
    %     \includegraphics[trim={52 25 30 30}, clip, scale=0.21]{}\vspace{-0.1cm}
    %     \caption{MPC demonstrator using surrogate}
    % \end{subfigure}
    %     \begin{subfigure}[b]{0.19\textwidth}
    % \centering
    %     \includegraphics[trim={52 25 30 30}, clip, scale=0.21]{}\vspace{-0.1cm}
    %     \caption{Lyapunov MPC \\using surrogate}
    % \end{subfigure}
    % \begin{subfigure}[b]{0.19\textwidth}
    % \centering
    %     \includegraphics[trim={52 25 30 30}, clip, scale=0.21]{}\vspace{-0.1cm}
    %     \caption{MPC demonstrator using nominal}
    % \end{subfigure}
    % \begin{subfigure}[b]{0.19\textwidth}
    %     \centering
    %     \includegraphics[trim={52 25 0 40}, clip, scale=0.21]{}\vspace{-0.1cm}
    %     \caption{Lyapunov MPC \\using nominal}
    % \end{subfigure}
   
    \caption{{\bf Car kinematics: Transfer from surrogate to a nominal model.} {\bf Top}: Lyapunov function contours at $\phi=0$ with trajectories for $40$ steps. {\bf Bottom}: Lyapunov function evaluated for specific policy on several initial states (decreasing means more stable). }
    %{\bf Bottom}: The Lyapunov function time differences.}
    \label{fig:car_kinematics_lyap_surrogate}
\end{figure*}

\begin{figure*}[t]
    \captionsetup[subfigure]{justification=centering}
    \centering
    
    \begin{subfigure}[b]{0.32\textwidth}
    \centering
        \includegraphics[scale=0.28]{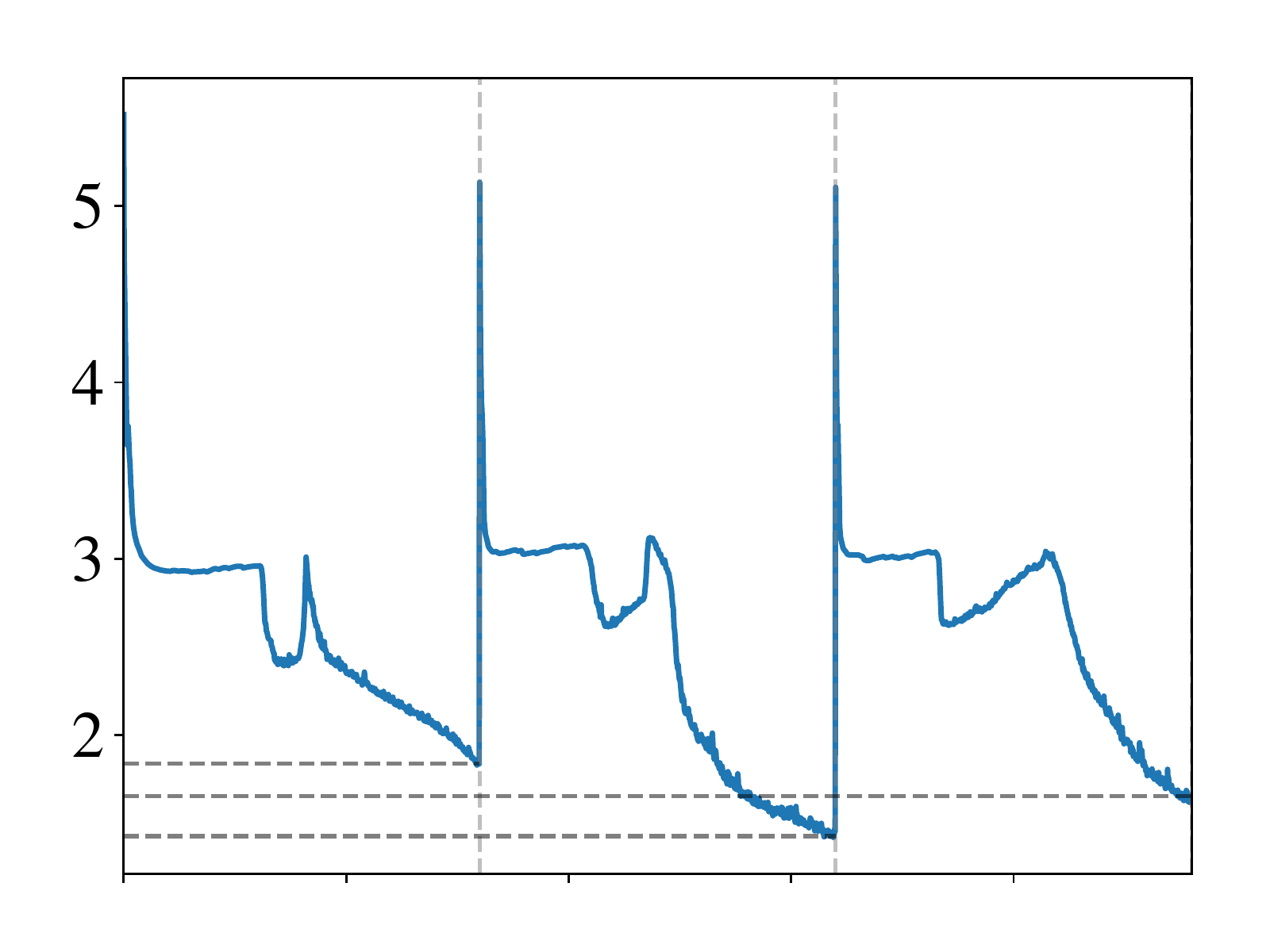}\vspace{-0.1cm}
        \caption{Lyapunov Loss \\($\log(1+x)$)}
    \end{subfigure}
    \begin{subfigure}[b]{0.32\textwidth}
        \centering
        \includegraphics[scale=0.28]{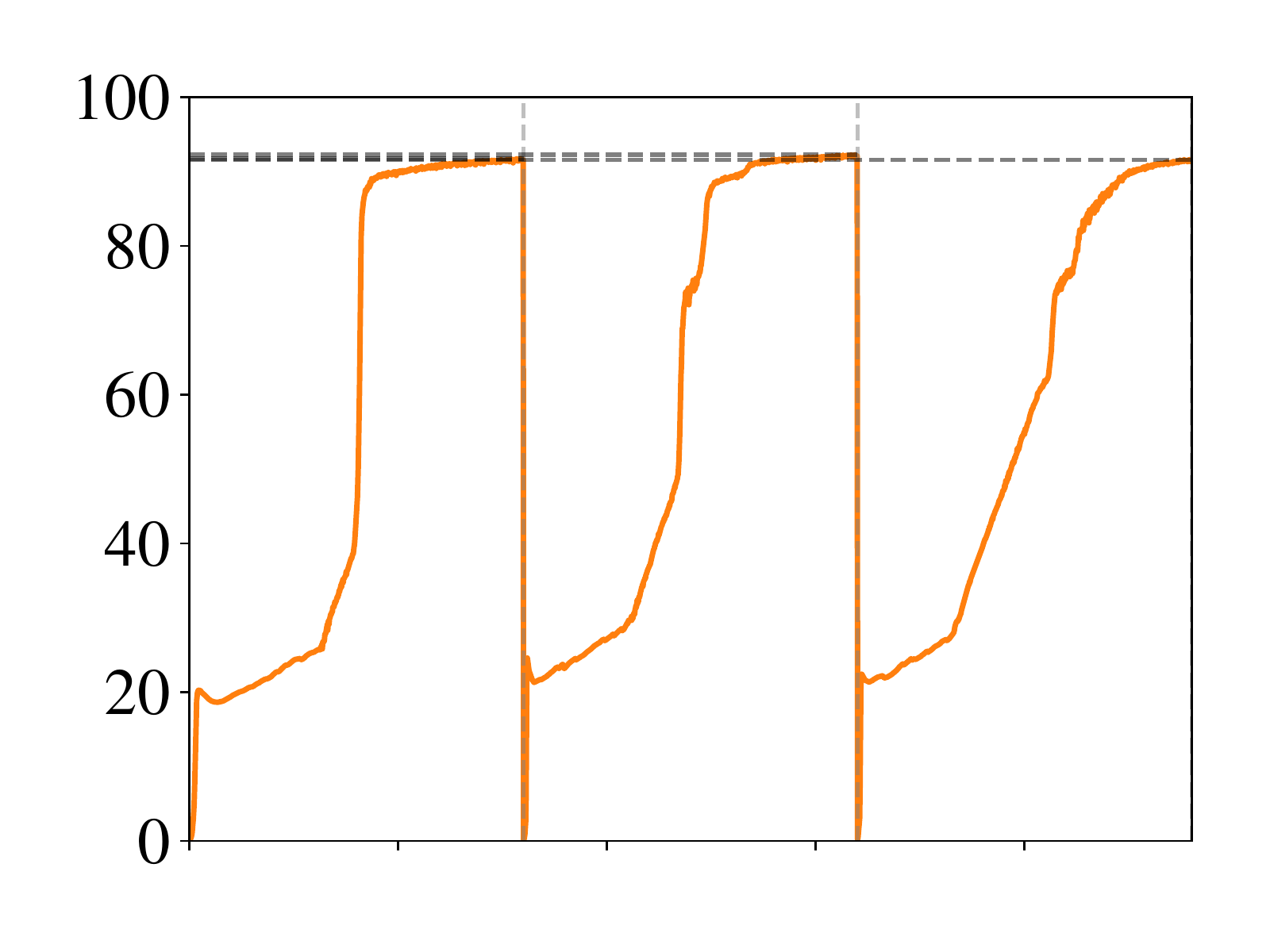}\vspace{-0.1cm}
        \caption{Verified Points \\(\%)}
    \end{subfigure}
    \begin{subfigure}[b]{0.32\textwidth}
        \centering
        \includegraphics[scale=0.28]{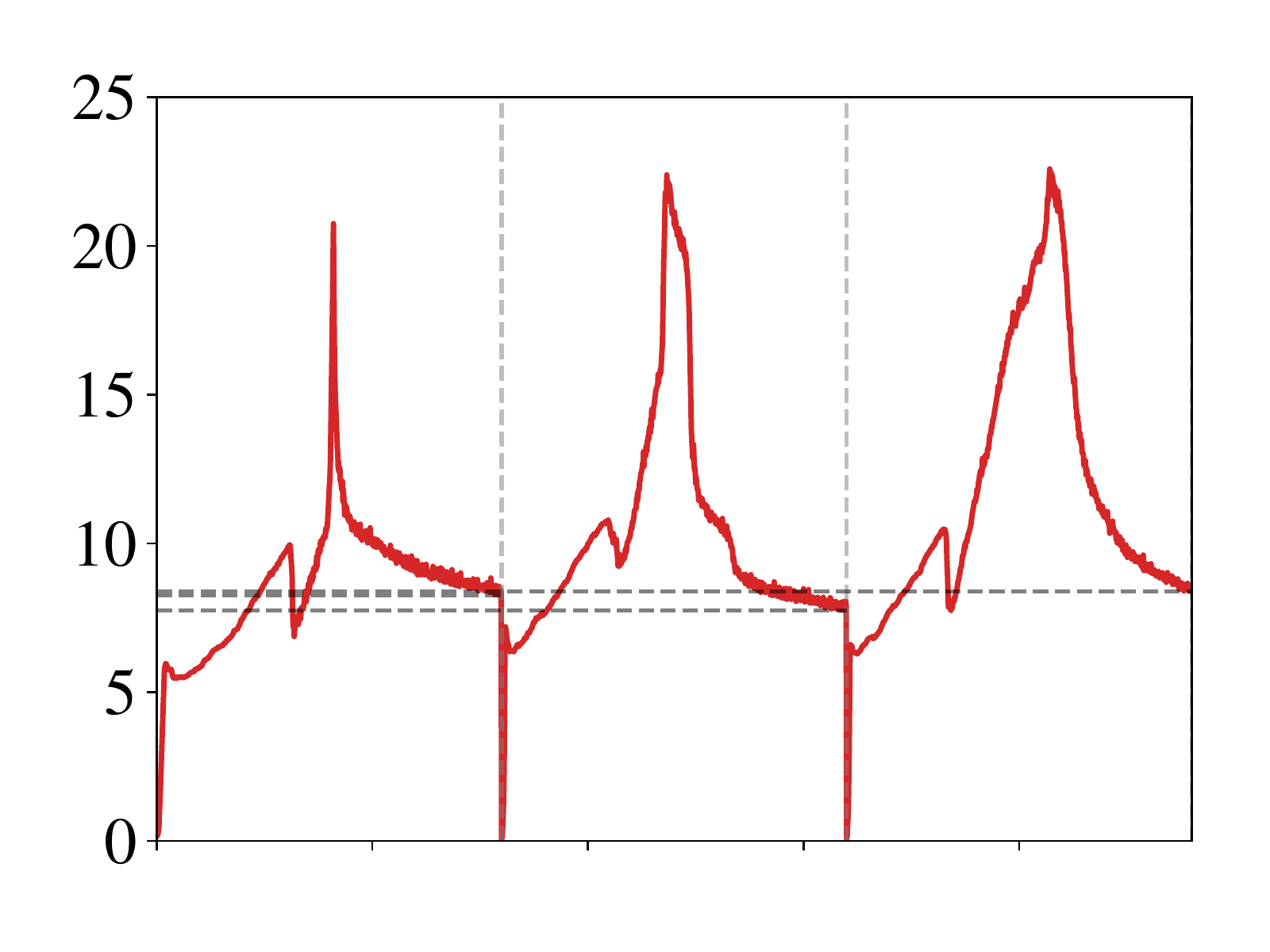}\vspace{-0.1cm}
        \caption{Not Verified Points \\(\%)}
    \end{subfigure}
    
    \medskip \medskip
    
    \begin{subfigure}[b]{0.32\textwidth}
        \centering
        \includegraphics[trim={30 25 45 20}, clip, scale=0.32]{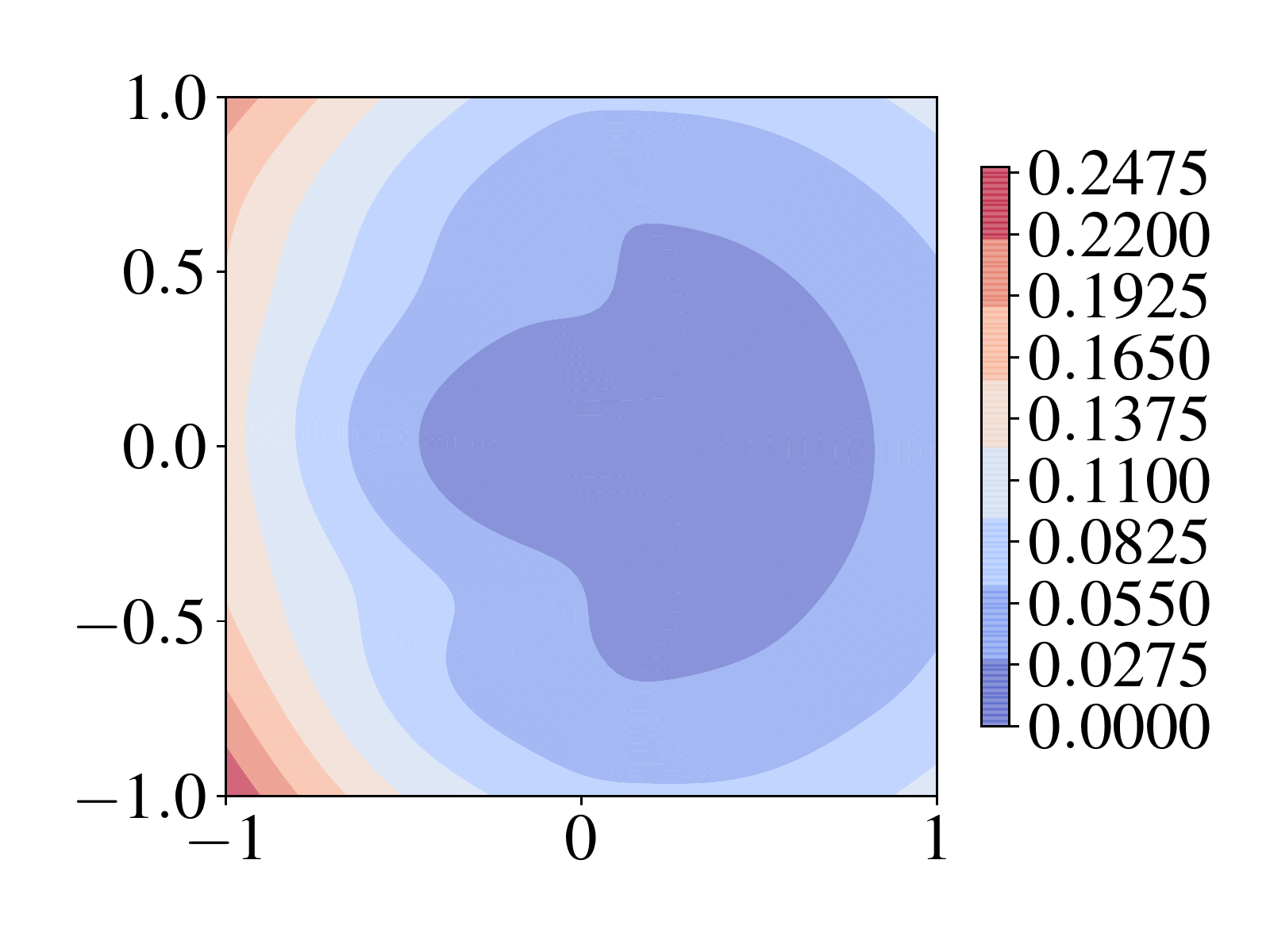}\vspace{-0.1cm}
    \end{subfigure}
    \begin{subfigure}[b]{0.32\textwidth}
        \centering
        \includegraphics[trim={30 25 0 20}, clip, scale=0.32]{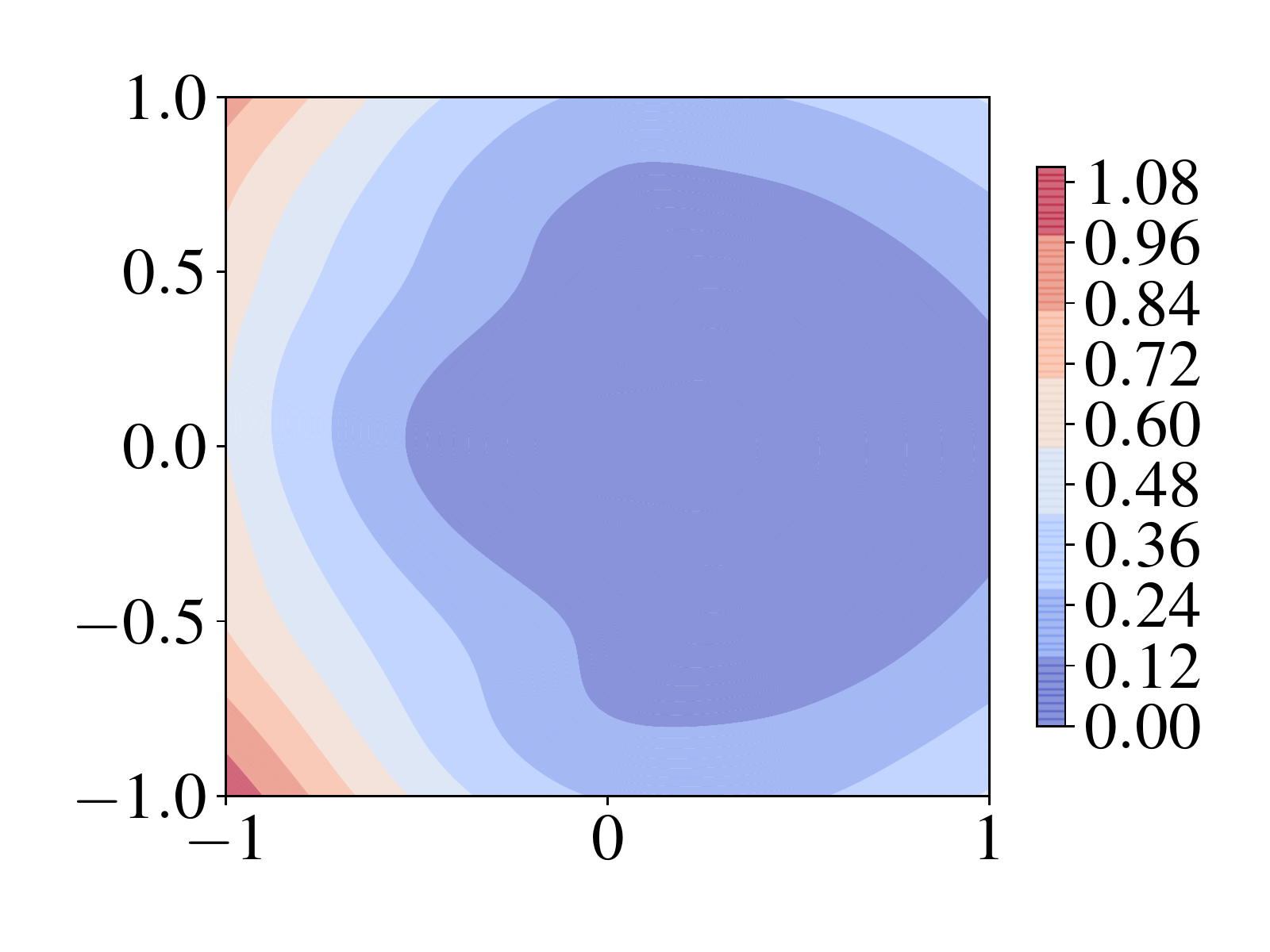}\vspace{-0.1cm}
    \end{subfigure}
    \begin{subfigure}[b]{0.32\textwidth}
        \centering
        \includegraphics[trim={30 25 35 20}, clip, scale=0.32]{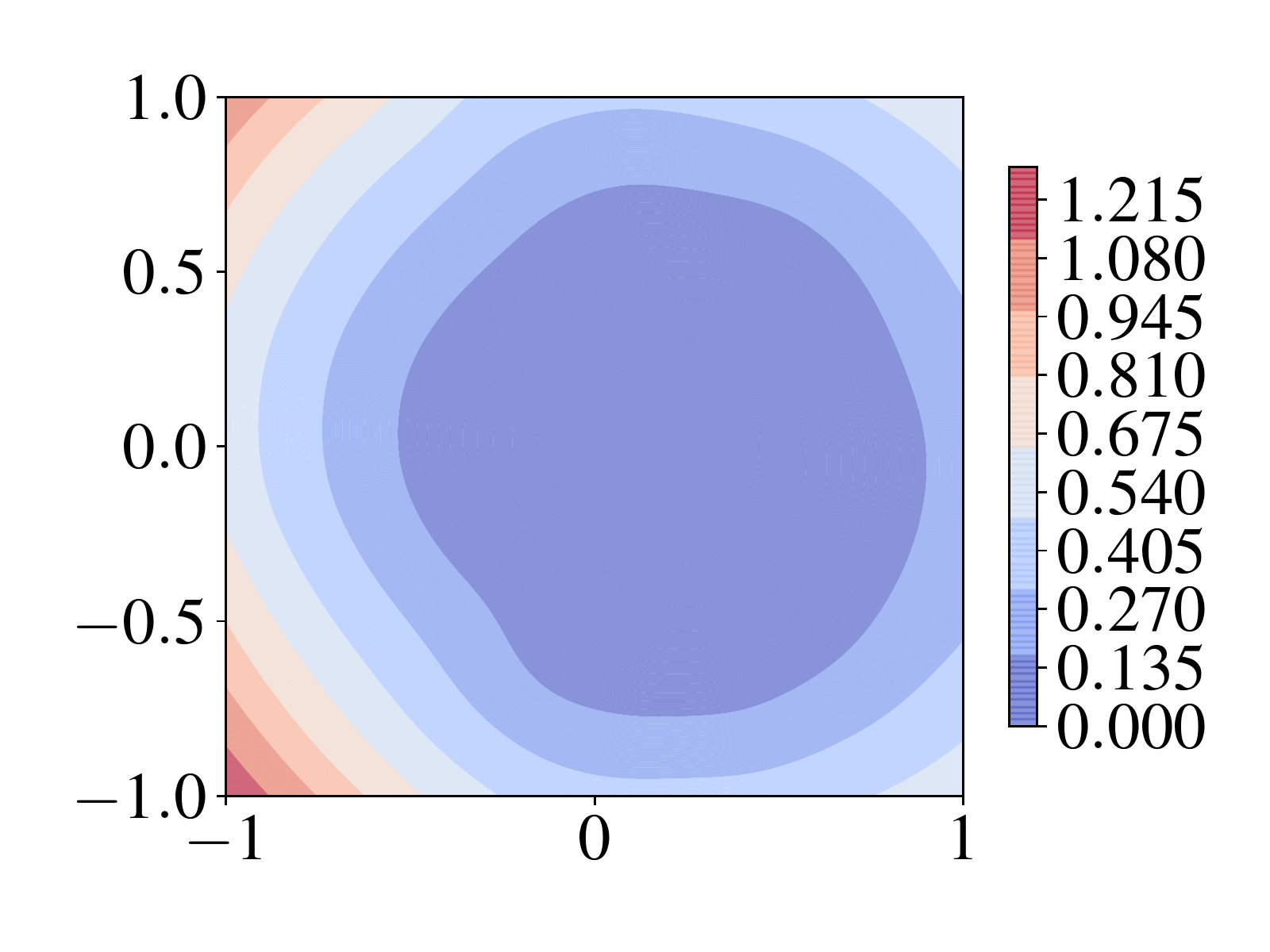}\vspace{-0.1cm}
    \end{subfigure}
    
    \medskip
    
    \begin{subfigure}[b]{0.32\textwidth}
        \centering
        \includegraphics[trim={0 20 0 0}, clip, scale=0.28]{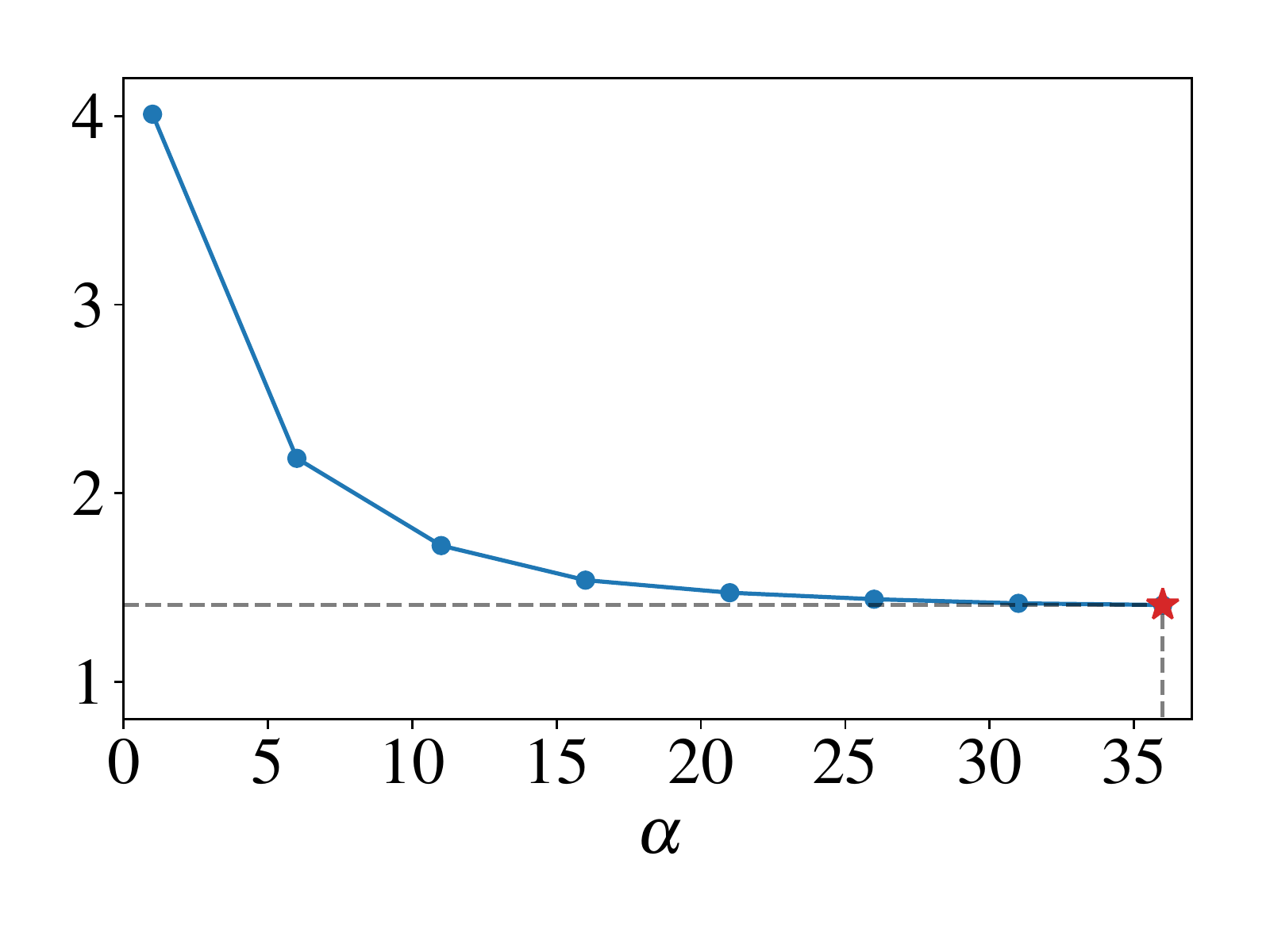}\vspace{-0.1cm}
        \caption*{Iteration 1}
    \end{subfigure}
    \begin{subfigure}[b]{0.32\textwidth}
    \centering
        \includegraphics[trim={37 20 0 0}, clip, scale=0.28]{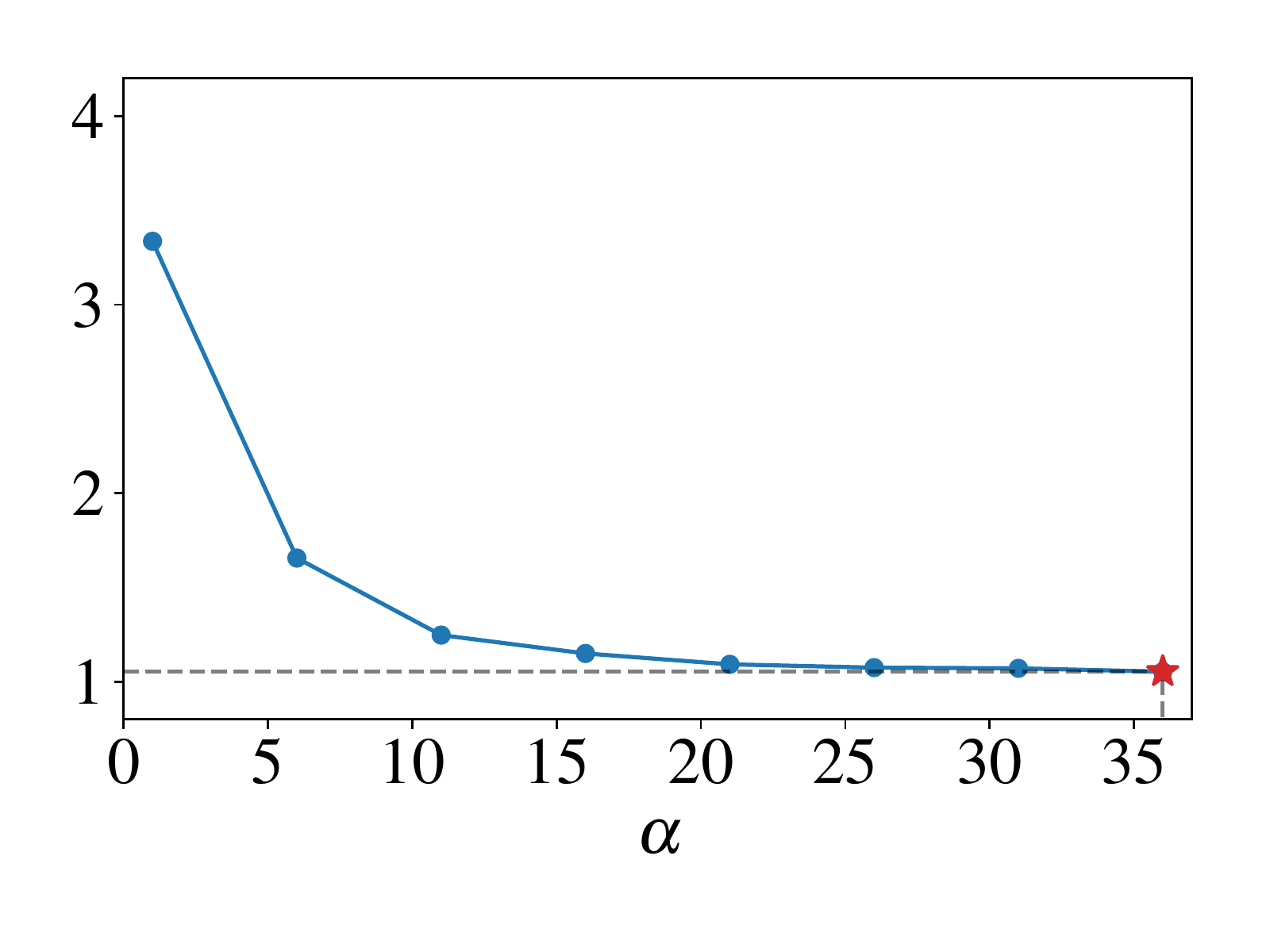}\vspace{-0.1cm}
        \caption*{Iteration 2 (best)}
    \end{subfigure}
    \begin{subfigure}[b]{0.32\textwidth}
    \centering
        \includegraphics[trim={37 20 0 0}, clip, scale=0.28]{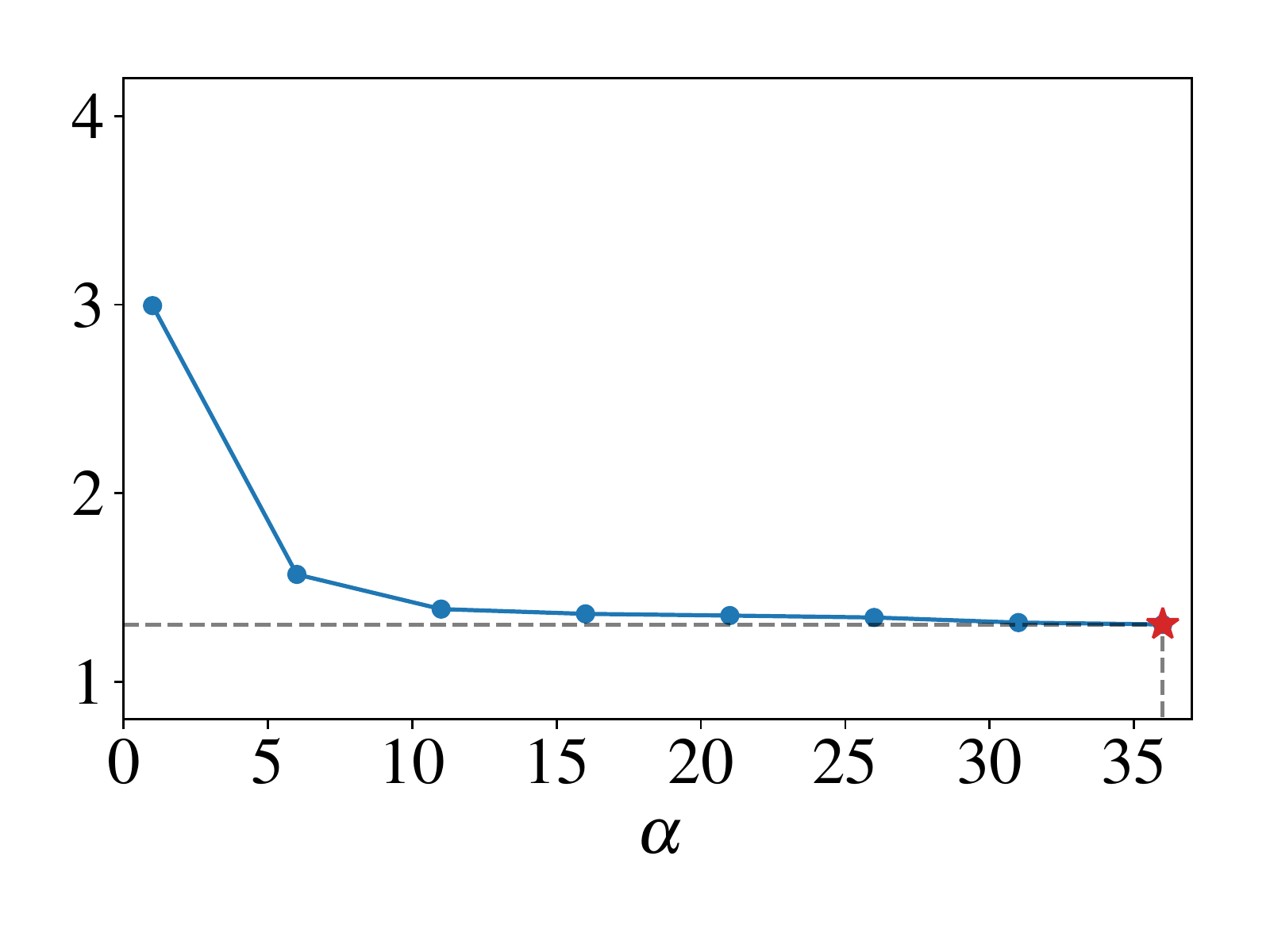}\vspace{-0.1cm}
        \caption*{Iteration 3}
    \end{subfigure}
    
    \caption{{\bf Car kinematics: Alternate learning on surrogate model.} After every $N_V=800$ epochs of Lyapunov learning, the learned Lyapunov function is used to tune the MPC parameters. \textbf{Top:} The training curves for Lyapunov function. Vertical lines separate iterations. \textbf{Middle:}  The resulting Lyapunov function $V$ at $\phi=0$ with the best performance. \textbf{Bottom:} Line-search for the MPC parameter $\alpha$ to minimize the Lyapunov loss (\protect\ref{eq:lyapunov_loss}) with $V$ as terminal cost. The loss is plotted on the y-axis in a $\log(1+x)$ scale. The point marked in red is the parameter which minimizes the loss.}
    \label{fig:car_alternate_learning_surrogate_complete}
    \vspace{-8pt}
\end{figure*}

\paragraph{Constrained inverted pendulum}
\label{subsec:pendulum}
In this task, the pendulum starts near the unstable equilibrium ($\theta=0\si{\degree}$). The goal is to stay upright. We bound the input so that the system cannot be stabilized if $|\theta|>60\si{\degree}$. 
% 
% \paragraph{Setup} 
We use an MPC with horizon $4$ as a demonstrator, with terminal cost, $ 500 x^T P_{\text{LQR}} x$, where $P_{\text{LQR}}$ is the LQR optimal cost matrix. This is evaluated on $10K$ equally spaced initial states to generate the dataset $\mathcal{D}_{\text{demo}}$. We train a grey-box NN model, $\hat{f}$ using $10K$ random transition tuples. More details are in Appendix. 
The learned $V$ and $\alpha$, obtained from~\algref{alg:alternateDescent}, produce a one-step MPC that stabilizes both the surrogate and the actual system. \tabref{tab:lyap_results_pendulum} shows that the loss and percentage of verified points improve across iterations. The final ROA estimate is nearly maximal and is depicted along with the safe trajectories, produced by the MPC while using predictions from the nominal and surrogate models, in~\figref{fig:pendulum_lyap}. The performance matches that of the baseline and the transfer is successful due to the accuracy of the learned model.  A full ablation is in Appendix.  %\ref{appendix:results}. 

%_________________________________________________________________________________________
% \vspace{-0.2cm}
\paragraph{Constrained car kinematics}
\label{subsec:car_kinematics}

The goal is to steer the $(x,y,\theta)$ to $(0,0,0)$ with constraints $[\pm10m,\pm10m,\pm180^\circ]$. This is only possible through non-linear control. 
% 
% \paragraph{Setup}
The vehicle cannot move sideways, hence policies such as LQR are not usable to generate demonstrations. Thus to create $D_{\text{demo}}$, an MPC with horizon $5$ is evaluated over $10K$ random initial states. The surrogate, $\hat{f}$ is a grey-box NN trained using $10K$ random transition tuples. More details are in Appendix.
\figref{fig:car_alternate_learning_surrogate_complete} shows the learning curves, training of the Lyapunov function over iterations and line-search for the MPC auto-tuning. \tabref{tab:vehicle_model_metric} summarises the metrics improvement across the iterations, indicating an increase in the ROA when a perfect model is used. With an imperfect model, the second iteration gives the best results, as shown in~\tabref{tab:vehicle_surrogate_result}. 

We test the transfer capability of the approach in two ways. First, we learn using the nominal model and test using the surrogate model for the MPC predictions. This is reported in Appendix for the sake of space.  
Second, the learning is performed using the surrogate model as in~\algref{alg:alternateDescent}, and the MPC is then tested on the nominal model while still using the surrogate for prediction. This is depicted in~\figref{fig:car_kinematics_lyap_surrogate}. Our MPC works better than the demonstrator when using the incorrect model. The learned MPC transfers successfully and completes the task safely.

%_________________________________________________________________________________________

% %_________________________________________________________________________________________
% \subsection{Planar 2-link robot}

%_________________________________________________________________________________________

\begin{table*}[t]
\centering
\begin{minipage}{\textwidth}
    \centering
    \caption{\addition{{\bf Comparison with baselines.} We compare our NLMPC (with surrogate model for predictions) with baselines.  In the pendulum, our approach is second to the demonstrator for less than $1\%$ margin. In the car task, NLMPC performs better than all baselines and improves convergence from the demonstrator, while it is nearly on par with the latter on constraints.}}
    \label{tab:compare_baselines}
    \begin{center}
    \begin{small}
    \begin{sc}
    \begin{tabular}{l|c|c|c|c}
         \toprule
         \multirow{2}{*}{Algorithm} & \multicolumn{2}{c|}{Constrained Inverted Pendulum} & \multicolumn{2}{c}{Constrained Car Kinematics} \\
         \cline{2-5}
         & Stability (\%) & Safety (\%) & Stability (\%) & Safety (\%) \\
         \midrule
         PPO (v1) & 14.67 & 14.66 & 15.17 & 0.50 \\
         PPO (v2) & 26.33 & 26.33 & 8.16 & 0.83 \\
         SAC (v1) & 29.99 & 29.99 & 12.33 & 0.00\\
         SAC (v2) & 27.17 & 27.17 & 8.00 & 0.00 \\
         MBPO (v1) & 12.67 & 12.67 & 6.00 & 0.00 \\
         MBPO (v2) & 26.00 & 26.00 & 6.00 & 0.00 \\
         MPC (demo) & \bf{36.00} & \bf{36.00} & 81.33 & \bf{73.33} \\
         \bf{NLMPC} & 35.33 & 35.33 & \bf{86.00} & 72.67 \\
         \bottomrule
    \end{tabular}
    \end{sc}
    \end{small}
    \end{center}
    \end{minipage}
    \vspace{-8pt}
\end{table*}

% \vspace{-0.2cm}
\paragraph{Comparison to baselines}

Prior works such as constrained policy optimization (CPO)~\cite{achiam2017constrained} provide safety guarantees in terms of constraint satisfaction that hold in expectation. However, due to unavailability of a working implementation, we are unable to compare our approach against it. Instead to enforce safety constraints during training of the RL algorithms, we use two different strategies: \texttt{v1}) early episode termination; \texttt{v2}) reward shaping with a constraint penalty. The \texttt{v2} formulation is similar to the one used in~\cite{Ray2019}, which demonstrated its practical equivalence to CPO when tuned. We compare our approach against model-free and model-based baseline algorithms. For the model-free baselines, we consider the on-policy algorithm proximal policy optimization (PPO)~\cite{schulman2017proximal} and the off-policy algorithm soft actor-critic (SAC)~\cite{haarnoja2018soft}. For model-based baselines, we consider model-based policy optimization (MBPO)~\cite{janner_when_2019} and the demonstrator MPC. Further details about the reward shaping and learning curves are in Appendix.

We consider the performance of learned controllers in terms stability and safety. Stability performance is analogous to the success rate in performing the set-point tracking task. We consider a task is completed when $||x(T)||_2 < 0.2$ where $T$ is the final time of the trajectory. For the car, we exclude the orientation from this index. The safety performance combines the former with state constraints satisfaction over the entire trajectory. As shown in~\tabref{tab:compare_baselines}, for the inverted pendulum, all the policies lead to some safe trajectories. Note that the demonstrator (which has an LQR terminal cost) is an optimal controller and gives the maximum achievable performance. In terms of stability performance, our approach performs as good as the demonstrator MPC. The RL trained policies give sub-optimal behaviors, i.e. sometimes the system goes to the other equilibria. For the car, the demonstrator MPC is sub-optimal due to non-linearities. NLMPC improves upon it in performance and it is on par with it in terms of safety. NLMPC also significantly outperforms all of the considered RL baselines while using less samples for learning\footnote{For all our experiments, training datapoints: PPO: $4\times 10^6$, SAC: $4\times 10^6$, MBPO: $2.4\times 10^5$, NLMPC: $10^4$ (random) + $10^4$ (demonstrations).}. While the RL baselines have full access to the environments, it appears that our approach is better suited to non-linearities and constraints even by learning solely from offline data.

\section{Related Work} 
% \vspace{-0.2cm}

Stability and robustness of MPC and of discounted optimal control have been studied in several prior works~\cite{rawlings_mayne_paper,rawlingsMPC,Limon2009,Limon2003StableCM,Rakovic2012,Gaitsgory2016StabilizationWD}. Numerical stability verification was studied in \cite{bobiti_samplingdriven_nodate, bobiti_sampling-based_2016} and, using neural network Lyapunov functions in \cite{berkenkamp_safe_2017,gallieri_safe_2019}. Neural Lyapunov controllers were also trained in \cite{chang_neural_2019}.  MPC solvers based on iterative LQR (iLQR) were introduced in \cite{Tassa2012}. Sequential Quadratic Program (SQP) was studied in \cite{nocedal2006numerical}. 
NNs with structural priors have been studied in \cite{quaglino_snode_2020, yildiz_ode2vae_2019, pozzoli_tustin_2019}. 
% In the context of reinforcement learning, \cite{Gu2016} considered the use of local models and quadratic advantage functions to accelerate deep Q-learning.  
Value functions for planning were learned in \cite{lowrey_plan_2018, Learning_from_Value_Function_Intervals, NEURIPS2018_f02208a0}. 
\cite{gallieri_safe_2019} learned a NN Lyapunov function and an NN policy with an alternating descent method, initialized using a known stabilizing policy. We remove this assumption and use MPC.
% To overcome this, we chose to use MPC. 
\addition{
Suboptimality was analysed in~\cite{grune2008infrhc} for MPC and in \cite{janner_when_2019} for policies. AWR \cite{peng_advantage-weighted_2019} seeks positive advantage only during the policy update, not for the critic.  Gaussian processes models have been studied in~\cite{koller2018learningsafeexp, Hewing_safe}.
}
% we present a novel approach where the control policy is a non-linear Model Predictive Controller  (MPC)~\cite{Maciejowski_book,rawlingsMPC,Cannon_book,Rakovic2019}. Our method uses an expert policy to only generate an initial dataset of single-step demonstrations. This is then used to learn a stabilizing controller.
% \vspace{-0.2cm}
\section{Conclusions}
% \vspace{-0.2cm}
We presented Neural Lyapunov MPC, a framework to train a stabilizing non-linear MPC based on learned neural network terminal cost and surrogate model. After extending existing theoretical results for MPC and value-based reinforcement learning, we have demonstrated that the proposed framework can incrementally increase the stability region of the MPC through offline RL and then safely transfer on simulated constrained non-linear control scenarios. Through comparison of our approach with existing RL baselines, we showed how NNs can be leveraged to achieve policies that outperform  these methods on safety and stability. 

Future work could address the reduction of the proposed sub-optimality bound, for instance through the integration of value learning with Lyapunov function learning as well as the optimal selection of the MPC prediction horizon. A broader class of stage costs and rewards could also be investigated.

% \clearpage

% For future work, we plan to test our approach on high-dimensional control problems, where RL has particularly shown to be successful, and also learn a Bayesian forward dynamics models for the MPC.

\bibliographystyle{IEEEtran}
\bibliography{bibtex.bib}
\onecolumn
\newpage
\rule[0pt]{\columnwidth}{3pt}
\begin{center}
\huge{Neural Lyapunov Model Predictive Control  \\
\emph{Supplementary Material}}
\end{center}
\vspace*{3mm}
\rule[0pt]{\columnwidth}{1pt}%\hline
% \vspace*{1in}

We provide proofs of the Theorems 1, 2 and 3, introduced in the paper, in Appendix \ref{sec:proofs}.  In Appendix B, we describe the formulation of the model predictive controller as a sequential quadratic program (SQP). In Appendix C, we discuss the experimental setup that comprises of the implementation specifics, details about baseline controllers, parameters for the experiments, and more description of the control problem setting. In Appendix D, we provide additional plots and results for the inverted pendulum and car kinematics examples. Finally, in Appendix E we discuss an algorithm for probabilistic safety verification. 
% In order to show that our method works on higher dimensional problem, we have also added results for double pendulum.

\section{Proof of the Theorems}\label{sec:proofs}

Here we provide the proofs of Theorems stated in the paper. We write the proof for Theorem~\ref{Theorem:performance} before Theorem~\ref{Theorem:stability} since it is simpler and helps in proving the latter.

\subsection*{Proof of Theorem \ref{lemma:adv}}
\label{appendix:TheoremAdv}

\begin{proof}
We denote the advantage of action $u$ with respect to an initial policy $K_0(x)$, namely  $\mathcal{A}^{K_0}(x,u)$, as  $\mathcal{A}^{\star}(x,u)$. We assume that there is a new policy $K(x)$ such that: $$u=K(x)\Rightarrow\mathcal{A}^{\star}(x,u)\geq0,$$ for all initial states $x$ such that  $x\in\mathbb{X}_s\subseteq\mathbb{X}$. Recall that~\cite{sutton1998a}, $\mathcal{A}^\star(x,u) =\mathcal{Q}^\star(x,u) - \mathcal{V}^\star(x)$, where the right-hand terms are, respectively, the action-value function (where we assume $K_0(x)$ is used for future actions) and the value of policy $K_0(x)$, namely, $\mathcal{V}^\star(x)=\mathcal{V}^{K_0}(x)$. Then, from their definition and by the above assumption:
\begin{equation}
u=K(x)\Rightarrow0\leq \mathcal{A}^\star(x,u) =\mathcal{Q}^\star(x,u) - \mathcal{V}^\star(x)= \mathbf{E}[r(x,u)+\gamma \mathcal{V}^\star(x^{+})] -\mathcal{V}^\star(x),\ \forall x \in\mathbb{X}_s. 
\end{equation}
Since, by assumption, $r(x,u)=-\ell(x,u)$ and $\ell(x,u)\geq l_\ell \|x\|_2^2$, is positive definite ($l_\ell>0$) and zero at the origin (the target), then we have that the value function also satisfies $\mathcal{V}^\star(x)=-V^\star(x)$, where $V^\star$ satisfies a condition similar to (\ref{eq:lyap1}) provided that either $\ell(x,K_0(x))\rightarrow 0, \forall x \in\mathbb{X}_s$ ($K_0$ is stabilizing) or that $\gamma<1$. In other words, under these assumptions there is a positive constant $L_{V^\star} $ such that:
\begin{equation}\label{eq:bound_opt}
    l_\ell \|x\|_2^2  \leq  V^\star(x) \leq L_{V^\star} \|x \|_2^2.  
  \end{equation}
From the above considerations, it follows that $\forall x \in\mathbb{X}_s$: 
 \begin{equation}
u=K(x)\Rightarrow 0\leq \mathcal{A}^\star(x,u) = \mathbf{E}[-\ell(x,u) -\gamma V^\star(x^{+})]+V^\star(x) = -\mathbf{E}[\ell(x,u)] -\gamma\mathbf{E}[ V^\star(x^{+})]+V^\star(x),  
\end{equation} 
in other words:
\begin{equation}
     \gamma\mathbf{E}_{x^{+}}[ V^\star(x^{+})]-V^\star(x)+\mathbf{E}[\ell(x,u)]=\gamma\mathbf{E}[ V^\star(x^{+})|x,u]-V^\star(x)+\ell(x,u)\leq 0, 
\end{equation}
where, in the last step, we have used the fact that the policy $K(x)$ and the loss $\ell$ are  deterministic. 

Following \cite{Qin_2020}, the stability of a stochastic system origin is determined with high probability by means of (\ref{eq:bound_opt}) as well as:
\begin{equation}
    \mathbf{E}[ V^\star(x^{+})|x,u]-V^\star(x)+\ell(x,u)\leq 0, 
\end{equation}
that is, when the discount factor is $\gamma=1$. The same applies to deterministic systems, and with probability 1. Recall that in our setting $\gamma=1$ yields a finite value only when trajectories terminate at $\ell(x,u)=0$ under $K_0$. 

For systems that are deterministic, we generalize this result to show that there exist a discount factor less than 1 for which the set $\mathbb{X}_s$ is invariant, namely, the trajectories starting in $\mathbb{X}_s$ stay in $\mathbb{X}_s$. Recall that we have assumed that $\mathbb{X}_s$ satisfying the non-negative advantage condition is a level set of $V^\star$, namely, $$\mathbb{X}_s=\{x\in\mathbb{R}^{n_x}: V^\star(x)\leq c\},\  c>0.$$
 In order to prove the claim, it is sufficient show that, there exists a $\bar{\gamma}\in(0,1)$ and a positive scalar $c_1<c$ such that trajectories starting in $\mathbb{X}_s$ terminate in  $\mathbb{X}_{\bar\gamma}=\{x\in\mathbb{R}^{n_x}: V^\star(x)\leq c_1\}$. 
 In other words, we wish to show that, given $\bar{\gamma}$, \begin{equation}
     c_1\leq V^\star(x)\leq c \Rightarrow V^\star(x^{+})< V^\star(x). 
 \end{equation} To proceed, notice that in $\mathbb{X}_s$ we have that:

\begin{equation}
    \gamma V^\star(x^{+})\leq V^\star(x)-\ell(x,u)= \gamma V^\star(x) -\ell(x,u) + (1-\gamma) V^\star(x).
\end{equation} 
Rearranging the terms, we wish the following quantity to be \emph{less than zero}:
\begin{equation}\label{eq:inequality1}
    \gamma V^\star(x^{+})-\gamma V^\star(x)= -\ell(x,u) + (1-\gamma) V^\star(x). 
\end{equation}
In other words, we wish that:
\begin{equation}
     (1-\gamma) <\frac{\ell(x,u)}{V^\star(x)},\ \quad \text{or more specifically, } \quad \gamma > 1-\frac{\ell(x,u)}{V^\star(x)}. 
\end{equation}
The final step consists of using the describing function inequalities, leading to the sufficient condition:
\begin{equation}
     \gamma > 1-\frac{l_\ell\|x\|_2^2}{L_V\|x\|_2^2}=1-\frac{l_\ell}{L_V}=1-\epsilon,
\end{equation}
where we defined $\epsilon=\frac{l_\ell}{L_V}$, with $\epsilon\in(0,1)$ and hence the claim follows with $c_1=0$. 

Further lowering the discount factor can be allowed by considering that constraints are bounded and so is $\mathbb{X}_s$. Hence (\ref{eq:inequality1}) satisfies:
\begin{equation}
\gamma V^\star(x^{+})-\gamma V^\star(x)= -\ell(x,u) + (1-\gamma) V^\star(x) < -\ell(x,u) + (1-\gamma)c. 
\end{equation}
For which, since the actions are bounded and hence $\beta V^\star(x)\geq  \ell(x,K(x))$, for some $\beta>0$ ($\beta=1 \Leftarrow K_0=K=K^\star $), then it is sufficient to have: 
\begin{equation} 
V^\star(x) > \frac{(1-\bar{\gamma})}{\beta}c=c_1.
\end{equation}
Hence, the set $\mathbb{X}_{\bar{\gamma}}$ is defined and $\bar{\gamma}$ determines whether it is inside $\mathbb{X}_s$ which would make $\mathbb{X}_s$ invariant. 
\end{proof}

\subsubsection*{Possible extensions of  Theorem \ref{lemma:adv}} 
This paper considered the case of a quadratic (or positive definite) cost with a single minimum at a given target (e.g. the origin). Reinforcement learning and economic MPC often deal with different classes of rewards/costs, possibly leading to complex equilibria. We make some considerations on how future work could aim to address more general cases. 

An extension of the Theorem \ref{lemma:adv} could be considered for the case when the reward is upper bounded by $r_{\max}$, e.g. if we can express the reward as:  
$$
r(x,u)=r_{\max}-\ell(x,u),
$$ 
with $\min_u\ell(x,u)>0$ outside some target set, $x\not\in\mathbb{T}$, and zero at the target set $\mathbb{T}$.   Then the value function satisfies 
$$
\mathcal{V}^\star(x)=-V^\star(x)+\sum_{t=0}^\infty \gamma^t r_{\max},
$$
where $V^\star$ satisfies a condition similar to (\ref{eq:lyap1}). In particular, the constant reward terms in the infinite sums cancel out from the  stability condition:
$$
\mathcal{A}^\star(x,u)=r(x,u) +\gamma\mathcal{V}^\star(x^{+})-\mathcal{V}^\star(x)= -\gamma V^\star(x^{+}) + V^\star(x) - \ell(x,u)>0,\ \forall x\not \in \mathbb{T}.  
$$
This hints to the fact that the advantage function could play a crucial role in stability and convergence under RL policies. 

\subsubsection*{Limitations of  Theorem \ref{lemma:adv}}
 Theorem \ref{lemma:adv} requires the  value function $\mathcal{V}^\star(x)$ to be known exactly, which is generally hard for continuous control problems with function approximation. Our subsequent results relax this assumption to be that the candidate function $V$ is a valid Lyapunov function rather than the exact value function. This for instance means that scaling errors, local sub-optimality and potentially even a large bias in the Bellman error could be tolerated without  affecting the task success.   

\subsection*{Proof of Theorem \ref{Theorem:performance}}
\label{appendix:Theorem2}

\begin{proof}
To prove the result, we first write: 
\begin{align}
    \label{eq:proof2}
    & \mathbf{E}_{x\in\mathcal{D}}[J^\star_{\text{MPC}}(x)] - \mathbf{E}_{x\in\mathcal{D}}[J^\star_{V^\star}(x)] = 
    %\mathbf{E}_{x\in\mathcal{D}}[J^\star_{\text{MPC}}(x)] - \mathbf{E}_{x\in\mathcal{D}}[J^\star_{V^\star}(x)] \nonumber\\ -  \mathbf{E}_{x\in\mathcal{D}}[J^\star_{\text{MPC},f}(x)]  + \mathbf{E}_{x\in\mathcal{D}}[J^\star_{\text{MPC},f}(x)] \nonumber \\
    \underbrace{\mathbf{E}_{x\in\mathcal{D}}[J^\star_{\text{MPC}}(x)]  - \mathbf{E}_{x\in\mathcal{D}}[J^\star_{\text{MPC},f}(x)]}_{I_1} ~+
    \underbrace{\mathbf{E}_{x\in\mathcal{D}}[J^\star_{\text{MPC},f}(x)] - \mathbf{E}_{x\in\mathcal{D}}[J^\star_{V^\star}(x)]}_{I_2} ,
\end{align}
where $\mathbf{E}_{x\in\mathcal{D}}[J^\star_{\text{MPC},f}(x)]$ denotes the MPC performance when a perfect model is used for predictions. 

For the term $I_2$, \cite{lowrey_plan_2018} provide a bound on the performance of the MPC policy. It is important to note that in their problem formulation, the MPC's objective is defined as a maximization over the cumulative discounted reward, while in our formulation~\eqref{problem} we consider a minimization over the cost. Consequently, compared to inequality presented by~\cite{lowrey_plan_2018}, there is a change in sign of the terms in the left-hand side of the inequality. This means: 
\begin{equation}
    \label{eq:lowrey_ineq}
    \mathbf{E}_{x\in\mathcal{D}}[J^\star_{\text{MPC},f}(x)]  - \mathbf{E}_{x\in\mathcal{D}}[J^\star_{V^\star}(x)]\leq \frac{2 \gamma^N\epsilon}{1-\gamma^N}.
\end{equation}

% Since the optimal value function is continuous, its estimation error trough $V$ is bounded by $\epsilon$ and the Lyapunov network $V$ is also bounded on a closed and bounded interval of states, then the optimal value function is also bounded on a closed and bounded set of states, hence locally Lipschitz.
% The MPC optimal cost is also a Lipchitz function, since the MPC is the solution of an SQP (equivalently for iLQR). 

We now focus on the term $I_1$ in~\eqtref{eq:proof2}. Let us denote $x^\star(i)$ and $u^\star(i)$ as the optimal state and action predictions respectively, obtained by using the correct model, $f$, and the MPC policy at time $i$. By the principle of optimality, the optimal sequence for the MPC using the correct model, $\underbar{u}_f$, can be used to upper-bound the optimal cost for the MPC using the surrogate model:
\begin{align}
    &\mathbf{E}_{x\in\mathcal{D}}[J^\star_{\text{MPC}}(x)]  - \mathbf{E}_{x\in\mathcal{D}}[J^\star_{\text{MPC},f}(x)] \leq 
    \mathbf{E}_{x\in\mathcal{D}}[J_{\text{MPC}}(x, \underbar{u}_f)] - \mathbf{E}_{x\in\mathcal{D}}[J^\star_{\text{MPC},f}(x)]. \label{eq:Theorem_I1}
\end{align}

Since the input sequence is now the same for both terms in right-hand side of~\eqtref{eq:Theorem_I1}, the difference in the cost is driven by the different state trajectories cost (the cost on $x$ over the horizon which includes the state-constraint violation penalty, as defined in~\eqtref{eq:state_const_cost_ineq}) as well as the terminal cost. In form of equation, this means:
\begin{align}
    &\mathbf{E}_{x\in\mathcal{D}}[J_{\text{MPC}}(x, \underbar{u}_f)] - \mathbf{E}_{x\in\mathcal{D}}[J^\star_{\text{MPC},f}(x)] \nonumber \\
    & = \mathbf{E}_{x\in\mathcal{D}}[J_{\text{MPC}}(x, \underbar{u}_f) - J^\star_{\text{MPC},f}(x)]  \nonumber \\
    & = \mathbf{E}_{x(0)\in\mathcal{D}}\Bigg[\sum_{j=0}^{N-1}\gamma^j\Big\{\hat{x}(j)^T Q \hat{x}(j) - x^\star(j)^T Q x^\star(j)\Big\}
    +  \gamma^N \alpha \big\{V(\hat{x}(N)) - V(x^\star(N))\big\} \nonumber \\
    & \qquad\qquad\quad +  \sum_{j=0}^{N}\big\{\ell_{\mathbb{X}}(\hat{x}(j))-\ell_{\mathbb{X}}({x}^\star(j))\big\}\Bigg]. \label{eq:mpc_cost_diff_ineq}
\end{align}

Recall that we assume the surrogate model is Lipschitz with constant $L_{\hat{f}x}$. This means that $\forall \tilde{x}, x \in \mathbb{R}^{n_x}$ and the same input $u \in \mathbb{R}^{n_u}$, we have:
\begin{align*}
    & ||\hat{f}(\tilde{x}, u)-\hat{f}(x, u)||_2 \leq  L_{\hat{f}x}     ||\tilde{x} - x||_2,%\\
    % \implies &  ||\hat{f}_u(\tilde{x})-\hat{f}_u(x)||_2^2 \leq  L_{\hat{f}x}     ||\tilde{x} - x||_2^2.
\end{align*}

Further, from~\eqtref{eq:pred_error}, $\forall (x, u) \in \tilde{\mathbb{X}}\times \mathbb{U}$, we have:
\begin{align*}
    ||w(x, u)||_2 = ||f(x, u) - \hat{f}(x, u)||_2 \leq \mu.
\end{align*}

Under the optimal policy for the correct model, let us denote the deviation in the state prediction when the MPC input prediction is applied with a different model, $\hat{f}$, as $\hat{d}(j) := \hat{x}(j)-{x}^\star(j)$.

At step $j=1$:

\begin{align*}
   \|\hat{d}(1)\|_2 &= \|\hat{x}(1)-x^\star(1)\|_2 \\ 
   &= \|\hat{f}(x^\star(0), u^\star(0))- f(x^\star(0), u^\star(0))\|_2 \\
   &= \|w(x^\star(0), u^\star(0))\|_2 \\
   &\leq \mu.
\end{align*}

At step $j=2$:

\begin{align*}
  \|\hat{d}(2)\|_2 &= \|\hat{x}(2) - x^\star(2)\| \\
  &= \|\hat{f}(x^\star(1)+\hat{d}(1), u^\star(1))-f(x^\star(1), u^\star(1))\| \\ 
  & =~\|\hat{f}(x^\star(1) + \hat{d}(1), u^\star(1)) - \hat{f}(x^\star(1), u^\star(1)) +\hat{f}(x^\star(1), u^\star(1))-  f(x^\star(1), u^\star(1))\| \\
  & \leq \underbrace{\|\hat{f}(x^\star(1)+\hat{d}(1),u^\star(1))-\hat{f}(x^\star(1),u^\star(1))\|}_{\leq L_{\hat{f}x} \|\hat{d}(1)\|} + \underbrace{\|\hat{f}(x^\star(1),u^\star(1))- f(x^\star(1),u^\star(1))\|}_{ = \|w(x^\star(1), u^\star(1))\| \leq \mu} \\
  & \leq L_{\hat{f}x}\mu + \mu.
\end{align*}

By induction, it can be shown that:
\begin{align} 
  \|\hat{d}(j)\|_2=\|\hat{x}(j)-{x}^\star(j)\|\leq \sum_{i=0}^{j-1}L_{\hat{f}x}^i \mu. \label{eq:nominal_ineq}
\end{align}

Alternately, if we assume the correct system that is to be controlled is Lipschitz with constant $L_{fx}$, then proceeding as before:

At step $j=1$:
\begin{align*}
   \|\hat{d}(1)\|_2 &= \|\hat{x}(1)-x^\star(1)\|_2 \\ 
   &\leq \mu.
\end{align*}

At step $j=2$:
\begin{align*}
  \|\hat{d}(2)\|_2 &= \|\hat{x}(2) - x^\star(2)\| \\
  &= \|\hat{f}(\hat{x}(1), u^\star(1))-f(\hat{x}(1) - \hat{d}(1), u^\star(1))\| \\ 
  & \leq \underbrace{\|\hat{f}(\hat{x}(1), u^\star(1))-f(\hat{x}(1),u^\star(1))\|}_{= \|w(\hat{x}(1), u^\star(1))\| \leq \mu} + \underbrace{\|f(\hat{x}(1), u^\star(1))- f(\hat{x}(1) - \hat{d}(1), u^\star(1))\|}_{\leq L_{f x} \|\hat{d}(1)\| } \\
  & \leq \mu + L_{f x}\mu
\end{align*}

By induction, again we have:
\begin{align}
  \|\hat{d}(j)\|=\|\hat{x}(j)-{x}^\star(j)\|\leq \sum_{i=0}^{j-1}L_{{f}x}^i\mu. \label{eq:surrogate_ineq}
\end{align}

Combining equations~\eqref{eq:nominal_ineq} and~\eqref{eq:surrogate_ineq} and by letting  $\bar{L}_{{f}}^i = \min (L_{\hat{f}x}^i, L_{{f}x}^i)$, we obtain:
\begin{equation}
 \|\hat{d}(j)\|=\|\hat{x}(j)-{x}^\star(j)\|\leq \sum_{i=0}^{j-1}\bar{L}_{{f}}^i\mu. \label{eq:model_state_ineq}   
\end{equation}

The following identity is used; $\forall \delta>0$:
\begin{equation}
    \|a+b\|_2^2\leq \left(1+\frac{1}{\delta}\right)\|a\|_2^2 + (1+\delta)\|b\|_2^2 \nonumber. 
\end{equation}
Hence, we can write the cost over the predicted state as:
\begin{align}
    &\hat{x}(j)^T Q\hat{x}(j) \nonumber \\
    &= \|Q^{1/2} \hat{x}(j)\|_2^2 = \|Q^{1/2}(x^\star(j)+\hat{d}(j))\|_2^2 \nonumber \\
    & \leq (1+\delta)\|Q^{1/2}x^\star(j)\|_2^2 + \left(1+\frac{1}{\delta}\right)\|Q^{1/2}\hat{d}(j)\|_2^2 \nonumber \\
    & \leq (1+\delta)x^\star(j)^T Q x^\star(j)  + \left(1+\frac{1}{\delta}\right)\|Q\|_2 \|\hat{d}(j)\|_2^2. \label{eq:predicted_cost}
\end{align}

From equations~\eqref{eq:model_state_ineq} and~\eqref{eq:predicted_cost}, $\forall j \in \{0, 1, ..., N-1\}$, we obtain:
\begin{align}
    & \hat{x}(j)^T Q\hat{x}(j) - x^\star(j)^T Q x^\star(j) \nonumber \\ 
    & \leq  \delta \underbrace{x^\star(j)^T Q x^\star(j)}_{=\ell(x^\star(j),0)}  + \left(1+\frac{1}{\delta}\right)\|Q\|_2 \left(\sum_{i=0}^{j-1}\bar{L}_{f}^i\mu\right)^2 \nonumber \\
    & \leq \delta\ \ell(x^\star(i),u^\star(i)) + \left(1+\frac{1}{\delta}\right)\|Q\|_2 \left(\sum_{i=0}^{j-1}\bar{L}_{f}^i\mu\right)^2. \label{eq:x_stage_cost_ineq}
\end{align}

Recall from~\eqtref{eq:lyap1} that $V(x)\leq L_V\|x\|_2^2$. Proceeding as before, we can write the following for the terminal cost:
\begin{align}
    & V(\hat{x}(N))-V(x^\star(N)) \leq \delta~ V(x^\star(N)) + \left(1+\frac{1}{\delta}\right)L_V \left(\sum_{i=0}^{N-1}\bar{L}_{f}^i\mu\right)^2. \label{eq:terminal_cost_ineq}
\end{align}

The final part of the proof concerns the constraints cost term. Let the state constraints be defined as a set of inequalities:
$$ \mathbb{X} = \left\{x\in\mathbb{R}^n: g(x)\leq 1\right\},$$
where $g$ is a convex function. For the optimal solution, $x^\star$, the violation of the constraint is represented through the slack variable:
$$ s^\star = s(x^\star) = \frac{(g(x^\star) - 1) + |g(x^\star) - 1|}{2}.$$
Since the constraints are convex and compact, and they contain the origin, then at the optimal solution,  $x^\star$, we have that there exists a $\mathcal{K}_\infty$-function, $\bar{\eta}(r)$, such that:
\begin{align}
    & \left|\ell_{\mathbb{X}}(s(x^\star+\hat{d}))-\ell_{\mathbb{X}}\left(s\left(x^\star \right)\right)\right|\leq  \bar{\eta}(\|\hat{d}\|). \nonumber
\end{align} 

Using the above inequality and~\eqtref{eq:model_state_ineq}, it follows that, $\forall j \in \{0, 1, ..., N\}$:  
\begin{align}
  \ell_{\mathbb{X}}(\hat{x}(j))-\ell_{\mathbb{X}}({x}^\star(j)) \leq \bar{\eta}\left(\sum_{i=0}^{j-1}\bar{L}_{f}^i \mu\right)=\bar{\eta}_j. \label{eq:state_const_cost_ineq}  
\end{align}

By combining equations~\eqref{eq:proof2},~\eqref{eq:lowrey_ineq},~\eqref{eq:Theorem_I1},~\eqref{eq:mpc_cost_diff_ineq},~\eqref{eq:x_stage_cost_ineq},~\eqref{eq:terminal_cost_ineq} and~\eqref{eq:state_const_cost_ineq}, we obtain the bound stated in the Theorem:
\begin{align*}
    &\mathbf{E}_{x\in\mathcal{D}}[J^\star_{\text{MPC}}(x)] - \mathbf{E}_{x\in\mathcal{D}}[J^\star_{V^\star}(x)]  \nonumber \\
    &\leq  \frac{2 \gamma^N\epsilon}{1-\gamma^N} + \left(1+\frac{1}{\delta}\right)\|Q\|_2\sum_{i=0}^{N-1}\gamma^i \left( \sum_{j=0}^{i-1} \bar{L}_{f}^j\right)^2 \mu^2 \nonumber \\
    &\quad + \left(1+\frac{1}{\delta}\right)\gamma^N \alpha L_V \left(\sum_{i=0}^{N-1}\bar{L}_{f}^i\right)^2 \mu^2 + \underbrace{\sum_{j=0}^{N}\bar{\eta}_j}_{=:\bar{\psi}(\mu)} \nonumber \\
    &\quad +  \delta\ \mathbf{E}_{x\in\mathcal{D}}\left[J_{\text{MPC}}^\star(x; f)\right].\nonumber
\end{align*}

\end{proof}

\subsection*{Proof of Theorem \ref{Theorem:stability}}
\label{appendix:Theorem1}
We prove the following, extended version of the theorem.
\begin{theorem}{\bf Stability and robustness} 
Assume that $V(x)$ satisfies (\ref{eq:lyap2_lqr}), with  $\lambda\in[0,1)$, $\mathbb{X}_T=\{0\}$. Then, for any horizon length $N\geq1$ there exist a constant $\bar{\alpha}\geq0$, a minimum discount factor  $\bar{\gamma}\in(0,1]$, and a model error bound $\bar{\mu}$ such that, if $\alpha\geq\bar{\alpha}$, $\mu\leq\bar{\mu}$ and $\gamma\geq\bar{\gamma}$, then, $\forall x(0)\in\mathcal{C}(\mathbb{X}_s)$
% \footnote{$\mathcal{C}(\mathbb{X}_s)$ is the set of states from which the system can reach $\mathbb{X}_s$ in one step, for some control action.}
 \vspace{-0.cm}
 \begin{enumerate}
    \item If $N=1$,  $\mu=0$, then the system is asymptotically stable for any $\gamma>0$, $\forall x(0)\in\Upsilon_{N,\gamma,\alpha}$.
    \item If $N>1$, $\mu=0$, then the system reaches a set $\mathbb{B}_{\gamma}$ that is included in $\mathbb{X}_s$. This set increases monotonically with decreasing discount factors, $\gamma$, $\forall x(0)\in\Upsilon_{N,\gamma,\alpha}$. $\gamma=1\Rightarrow \mathbb{B}_{\gamma}=\{0\}$. 
    \item If $N>1$, $\mu=0$, and once in $\mathbb{X}_s$ we switch to the expert policy, then the system is asymptotically stable, $\forall x(0)\in\Upsilon_{N,\gamma,\alpha}$.
    \item If $\alpha V(x)$ is the optimal value function for the discounted problem, $\mu=0$, and if $\mathcal{C}(\mathbb{X}_s)=\mathbb{X}_s$, then the system is asymptotically stable, $\forall x(0)\in\Upsilon_{N,\gamma,\alpha}$. 
    \item If $\alpha V(x)$ is the optimal value function in $\mathbb{X}_s$ for the problem, $\mu=0$, and if $\mathcal{C}(\mathbb{X}_s)\neq\mathbb{X}_s$, then the system is asymptotically stable, $\forall x(0)\in\Upsilon_{N,\gamma,\alpha}$. 
    \item The MPC has a stability margin. If the MPC uses a surrogate model satisfying (\ref{eq:pred_error}), with one-step error bound $\|w\|_2^2<\bar{\mu}^2=\frac{1-\lambda}{L_VL_{\hat{f}x}^{2N}}l_s$, then the system is Input-to-State (practically) Stable (ISpS) and there exists a set $\mathbb{B}_{N, {\gamma},{\mu}}:\ x(t)\rightarrow \mathbb{B}_{N, {\gamma}, {\mu}}$, $\forall x(0)\in\beta\Upsilon_{N,\gamma,\alpha}$, $\beta\leq1$.
    \item If $\mu=0$, then $\alpha\geq\bar{\alpha}$ implies that $\alpha V(x)\geq V^\star(x), \forall x\in\mathbb{X}_s$, where $V^\star$ is the optimal value function for the infinite horizon problem with cost (\ref{eq:stage_cost}) and subject to (\ref{eq:constraints}). 
\end{enumerate}
\end{theorem}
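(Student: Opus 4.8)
The plan is to follow the standard MPC Lyapunov argument: use the optimal cost $J_{\text{MPC}}^\star$ as a Lyapunov candidate, construct a feasible (sub-optimal) input sequence at the successor state to upper-bound $J_{\text{MPC}}^\star(x^+)$, and compare against $J_{\text{MPC}}^\star(x)$. The key device, as in~\cite{Limon2003StableCM,Limon2009}, is the \emph{shifted sequence}: given the optimal plan $\underbar{u}^\star=\{u^\star(0),\dots,u^\star(N-1)\}$ at $x$, form a candidate at $x^+=f(x,u^\star(0))$ by dropping the first input, keeping $\{u^\star(1),\dots,u^\star(N-1)\}$, and appending a terminal action. First I would establish the \emph{terminal ingredient}: the contraction property~(\ref{eq:lyap2_lqr}), $V(f(x,K(x)))\le\lambda V(x)$, plays the role of a terminal control law on $\mathbb{X}_s$, so appending $u=K(\hat{x}(N))$ keeps the plan feasible (the new terminal state stays in $\mathbb{X}_s$ since $\mathbb{X}_s$ is invariant under $K$) and lets me bound the appended stage and terminal costs.

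For part~(1), $N=1$, $\mu=0$, I would write $J_{\text{MPC}}^\star(x)=\gamma\alpha V(\hat{x}(1))+\ell(x,u^\star(0))$ and compute the one-step decrease directly: feasibility of $x^+$ follows from $\hat{x}(1)\in\mathbb{X}_s$, and choosing $\alpha\ge\bar\alpha$ large enough makes the terminal cost dominate so that $J^\star_{\text{MPC}}(x^+)-J^\star_{\text{MPC}}(x)\le-\ell(x,u^\star(0))<0$ away from the origin, using~(\ref{eq:lyap1}) to sandwich $J^\star_{\text{MPC}}$ between $\mathcal{K}_\infty$ bounds and conclude asymptotic stability on $\Upsilon_{N,\gamma,\alpha}$. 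For part~(2), $N>1$, the shifted-sequence argument produces a decrease of the form $J^\star_{\text{MPC}}(x^+)-J^\star_{\text{MPC}}(x)\le-\ell(x,u^\star(0))+(1-\gamma)(\text{bounded terminal/accumulated terms})$; the residual $(1-\gamma)$ term is exactly what prevents convergence to the origin and instead gives convergence to a set $\mathbb{B}_\gamma$ that shrinks to $\{0\}$ as $\gamma\to1$. This mirrors the discount bookkeeping already carried out in the proof of Theorem~\ref{lemma:adv} (cf.~\eqref{eq:inequality1}), which I would reuse. Parts~(3)--(5) and~(7) are then refinements: switching to the expert in $\mathbb{X}_s$ removes the residual term; if $\alpha V$ is itself the optimal value function, Bellman optimality gives the exact decrease; and the value-function domination $\alpha V\ge V^\star$ in~(7) follows from $\alpha\ge\bar\alpha$ together with the sandwich bounds~(\ref{eq:lyap1}) and~(\ref{eq:bound_opt}).

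For the robustness part~(6), the plan is to treat the model error $w$ as a disturbance and establish Input-to-State practical Stability. I would propagate the one-step error bound~(\ref{eq:pred_error}) through the horizon using the Lipschitz constant $L_{\hat{f}x}$ exactly as in the proof of Theorem~\ref{Theorem:performance} (the induction yielding~\eqref{eq:model_state_ineq}), giving a terminal state deviation of order $L_{\hat f x}^{N}\mu$; feeding this through the $L_V$-bound on $V$ and requiring the perturbation of the terminal cost to stay below the safe level $l_s$ yields precisely the threshold $\bar\mu^2=\frac{1-\lambda}{L_VL_{\hat{f}x}^{2N}}l_s$, and the residual becomes an ISpS gain producing the attracting set $\mathbb{B}_{N,\gamma,\mu}$ on a shrunk region $\beta\Upsilon_{N,\gamma,\alpha}$. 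The main obstacle I anticipate is the careful accounting that makes the shifted-sequence candidate simultaneously \emph{feasible} (terminal state in $\mathbb{X}_s$, state-soft-constraints handled by the slack penalty $\ell_{\mathbb X}$) and \emph{cost-decreasing} under the combined effects of discounting and model error; in particular, showing that $\alpha\ge\bar\alpha$ can be chosen to absorb the appended terminal term while the $(1-\gamma)$ and $\mu$ residuals remain controllable requires keeping the bounds on the terminal cost, the contraction $\lambda$, and the disturbance propagation mutually consistent, which is where the interplay among $\bar\alpha$, $\bar\gamma$, and $\bar\mu$ must be pinned down rather than treated independently.
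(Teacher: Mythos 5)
Your plan follows essentially the same route as the paper's proof: the shifted-sequence argument with the $\lambda$-contraction of $V$ serving as the terminal ingredient (scaled by $\alpha\geq\bar{\alpha}_1=L_\ell/(l_\ell(1-\lambda))$ so that $\alpha V$ decreases by at least the stage cost), the $(1-\gamma)\bar{L}_{N-1}$ residual that vanishes for $N=1$ but only yields convergence to $\mathbb{B}_\gamma\subseteq\mathbb{X}_s$ for $N>1$, the telescoping bound for point~(7), and the Lipschitz propagation $L_{\hat{f}x}^{2N}$ of the one-step error into the terminal set to obtain exactly $\bar{\mu}^2=\frac{1-\lambda}{L_V L_{\hat{f}x}^{2N}}l_s$ and ISpS. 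The only step you gloss over is that the MPC has no hard terminal constraint, so $\hat{x}(N)\in\mathbb{X}_s$ must itself be induced by taking $\alpha$ above the additional Limon-type bounds ($\bar{\alpha}_2$, $\bar{\alpha}_3$ in the paper), and the ISpS conclusion additionally invokes uniform continuity of $J^\star_{\text{MPC}}$; both are consistent with the interplay you flag at the end.
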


\begin{proof}

In order to prove point 1 in the theorem, we first use the standard arguments for the MPC without terminal constraint \cite{rawlings_mayne_paper,Limon2003StableCM} in the undiscounted case. We then extend the results to the discounted case.

\paragraph{Nominal stability} First, when an invariant set terminal constraint is used, which in our case corresponds to the condition $V(x(N))\leq l_s$ with $\mathbb{X}_s\subseteq\mathbb{X}$, then \cite{rawlings_mayne_paper} have provided conditions to prove stability by demonstrating that $J^\star_{\text{MPC}}(x)$ is a Lyapunov function. These require the terminal cost to be a Lyapunov function that satisfies~\eqtref{eq:lyap2_lqr}. Hence, we start by looking for values of $\alpha$ such that $\alpha V(x)$ satisfies~\eqtref{eq:lyap2_lqr}. In other words, we wish to find an $\bar{\alpha}_1\geq 1$ such that, for all $\alpha \geq \bar{\alpha}_1$ and for some policy $K_0$ (in our case, the demonstrator for $V$), the following condition holds:
\begin{equation}\label{eq:lqr_decrease_poor}
    \alpha V(f(x,K_0(x)) - \alpha V(x) \leq -\ell(x, K_0(x)).
\end{equation}
Let us denote $x^{+}=f(x,K_0(x))$ for brevity. We have, by assumption, that:
\begin{equation}
\alpha (V(x^{+}) - \lambda V(x)) \leq 0.
\end{equation}
This implies that: 
\begin{align}
    &\alpha V(x^{+}) -\alpha V(x) + \alpha V(x) - \alpha \lambda V(x) \leq 0, \nonumber\\
    \Rightarrow~ &\alpha V(x^{+}) -\alpha V(x) \leq -\alpha(1-\lambda)V(x). \label{eq:alpha_ineq_terminal}
\end{align}
Recall that the loss function satisfies $l_\ell\|x\|_2^2\leq\ell(x,u)$. Since the MPC is solved using a sequence of convex quadratic programs, it is also Lipschitz~\cite{Bemporad2000}. Similarly, if $K_0$ is Lipschitz or (uniformly) continuous over the closed and bounded set $\mathbb{U}$, then since $\mathbb{X}$ is also closed and bounded, there also exists a local upper bound for the loss function on this policy, namely, $\ell(x,K_0(x))\leq L_\ell\|x\|_2^2$. 

Further, recall from~\eqtref{eq:lyap1} that $l_\ell\|x\|_2^2\leq V(x)$. Using the above notions, we have:
\begin{align}
    \alpha V(x^{+}) -\alpha V(x) &\leq -\alpha(1-\lambda)l_\ell\|x\|_2^2, \nonumber \\
    &\quad = -\alpha(1-\lambda)l_\ell\frac{L_\ell}{L_\ell}\|x\|_2^2 \nonumber \\
    &\leq 
    -\frac{\alpha (1-\lambda)l_\ell}{L_\ell}\ell(x, K_0(x))\nonumber \\
    &\quad=-\beta \ell(x, K_0(x)). \label{eq:beta_ineq_terminal}
\end{align}
To satisfy the above condition, solving for a $\beta\geq 1$ is sufficient. From Equations~\eqref{eq:alpha_ineq_terminal} and~\eqref{eq:beta_ineq_terminal}, it implies that:
\begin{equation}\label{eq:bound_alpha1}
    \alpha\geq \frac{L_\ell}{l_\ell(1-\lambda)}=\bar{\alpha}_1\geq 1.
\end{equation}

Now, the function $\alpha V(x)$ satisfies all the sufficient conditions stated by~\cite{rawlings_mayne_paper} for the stability of an MPC under the terminal constraint $\hat{x}(N)\in\mathbb{X}_s$ which is equivalent to $V(\hat{x}(N))\leq l_s$, without discount (with $\gamma=1$).

Since we do not wish to have such a terminal constraint, we wish for another lower bound $\hat{\alpha}_2\geq 1$ such that, if $\alpha\geq \bar{\alpha}_2$, then $V(\hat{x}(N))\leq l_s$ at the optimal solution. The computation of this $\bar{\alpha}_2$ has been outlined by~\cite{Limon2009} for the undiscounted case. Since our constraints are closed, bounded and they contain the origin, our model and the MPC control law are both Lipschitz, we directly use the result from \cite{Limon2009} to compute $\bar{\alpha}_2$:
 \begin{equation}
      \bar{\alpha}_2 = \frac{\sum_{i=0}^{N-1}\ell(\tilde{x}(i),\tilde{u}(i)) - N\ d}{(1-\rho)l_s} 
 \end{equation}
 where $\tilde{x}(i),\tilde{u}(i)$ represent a sub-optimal state-action sequence for which $V(\tilde{x}(N))\leq \rho l_s$ with $\rho\in[0,1)$, and $d$ is a lower bound for the stage loss $\ell$ for all x outside $\mathbb{X}_s$ and all $u$ in $\mathbb{U}$.  
 
 Then, one can take: 
 \begin{equation}\label{eq:alpha_bound}
    \alpha\geq \max\left(\bar{\alpha}_1,\ \bar{\alpha}_2\right) = \bar{\alpha} 
 \end{equation}
to guarantee stability when $\gamma=1$. 

When the discount factor ($\gamma<1$) is used, condition~\eqref{eq:lqr_decrease_poor} is still respected by the same range of $\alpha$ since 
\begin{equation}
    \gamma V(x^{+}) - V(x) \leq  V(x^{+}) - V(x).
\end{equation}
However, from the discussion in \cite{Gaitsgory2016StabilizationWD}, for infinite horizon optimal control, it appears that~\eqtref{eq:lqr_decrease_poor} is not sufficient for $J^\star_{\text{MPC}}(x)$ to be a Lyapunov function, even when a terminal constraint is used. 

We wish to find a lower-bound $\bar{\gamma}$ such that, given $\alpha$ satisfying  \eqtref{eq:alpha_bound}, the MPC is stable for $\gamma\geq\bar{\gamma}$. 
For infinite-horizon optimal control, this was done by \cite{Gaitsgory2016StabilizationWD}. First, recall that:
\begin{align}
    \alpha V(x)\leq \alpha L_V\|x\|_2^2 &\leq \alpha \frac{L_V}{l_\ell}\ell(x,0)\nonumber \\ 
    &= C \inf_{u\in \mathbb{U}}\ell(x,u). 
\end{align}
In \cite{Gaitsgory2016StabilizationWD}, it shown that $1\leq C<1/(1-\gamma)$ is sufficient for stability of an infinite-horizon discounted optimal control problem, when $\alpha V(x)$ is its value function. This means that:
\begin{equation}
    \frac{\alpha L_V}{l_\ell}<\frac{1}{1-\gamma},
\end{equation}
which implies that:
\begin{equation}
    \gamma>1-\frac{l_\ell}{\alpha L_V}=\bar{\gamma}_1\in[0,1).
\end{equation}
For MPC, we will instead present an additional condition to the above one that leads to at least convergence to the safe set. This results in a bounded and safe solution. Exact convergence to the origin will be then confirmed when $V$ is the actual value function, as in \cite{Gaitsgory2016StabilizationWD}, or if we switch to the demonstrating policy, $K_0$, once in the terminal set. Finally, we will remove the terminal constraint as done for the undiscounted case with a final bound on $\alpha$ and $\gamma$. 

Recall that condition (\ref{eq:lqr_decrease_poor}) applies. 
If the terminal constraint was met at time $t$, namely, if $V(x^\star(N))\leq l_s$,  then at the next time step, $t+1$ we have that $u(N+1)=K_0(x^\star(N))$ is feasible. Hence, the optimal MPC solution can be upper-bounded by the shifted solution at the previous time $t$, with the $K_0$ policy appended at the end of the horizon \cite{rawlings_mayne_paper}. Denote this policy as $\tilde{\underbar{u}}$ and $\tilde{x}$ as the predictions. We have that:
\begin{align}
    \Delta J^\star_{MPC}(x) & = J^\star_{MPC} (x^{+})  - J^\star_{MPC}(x) \nonumber \\ & \leq J_{MPC}(x^{+},\tilde{\underbar{u}}) - J^\star_{MPC}(x). \nonumber
\end{align}
Hence,
\begin{align}
   \Delta J^\star_{MPC}(x) &\leq ~ \sum_{i=1}^N \gamma^{i-1} \ell(\tilde{x}(i),\tilde{u}(i))  + \gamma^{N} \alpha V(\tilde{x}^{+}(N)) -\ell(x,\tilde{u}(0)) - \sum_{i=1}^{N-1} \gamma^i \ell(\tilde{x}(i),\tilde{u}(i)) -\gamma^{N}\alpha V(\tilde{x}(N))  \nonumber \\
    &\quad = (1-\gamma) L_{N-1}(x) -\ell(x,\tilde{u}(0)) + \underbrace{\gamma^{N-1}\left(\gamma\alpha V(\tilde{x}^{+}(N))  - \gamma\alpha V(\tilde{x}(N)) 
    + \ell(\tilde{x}(N), K_0(x))) \right)}_{\leq 0\ \Leftarrow\ \gamma\alpha\geq\bar{\alpha}_1}  \nonumber \\
    &\leq (1-\gamma) {L}_{N-1}(x) -\ell(x,\tilde{u}(0)), \nonumber 
\end{align}
where 
$L_{N-1}(x) = \sum_{i=1}^{N-1} \gamma^{i-1} \ell(\tilde{x}(i),\tilde{u}(i))$ and we have taken $\alpha$ such that $\gamma\alpha\geq\bar{\alpha}_1$.  

Now, for $\gamma=1$, the effect of $L_{N-1}$ disappears and the MPC optimal cost is a Lyapunov function as in the standard MPC stability result from \cite{rawlings_mayne_paper}. By inspection of $L_{N-1}$, since the cost is bounded over bounded sets, also a small enough $\gamma$ could be found such that $L_{N-1}(x)<\ell(x,\tilde{u}(0))$. This $\gamma$, however, depends on $x$. Consider $x\not\in\mathbb{X}_s$, for which there exist a feasible solution, namely a solution providing $\tilde{x}(N)\in\mathbb{X}_s$. Then, since $\ell$ is strictly increasing, $\ell(0,0)=0$, $\mathbb{X}_s$ contains the origin and the constraints are bounded, we have that there exist a $\upsilon\geq 1$ such that for any feasbile $x$:
$$\bar{L}_{N-1}=\upsilon (N-1)\inf_{x \not\in\mathbb{X}_s} \ell({x},0),$$ 
is an upper bound for $L_{N-1}(x)$. For instance,
$$\upsilon = \frac{\sup_{(x,u) \in \epsilon\mathbb{X}\times\mathbb{U}} \ell({x},{u})}{\inf_{x \not\in\mathbb{X}_s} \ell({x},0)},$$ is sufficient for any closed set of initial conditions $x(0)\in\epsilon\mathbb{X}\supset\mathbb{X}_s$, with $\epsilon>0$. 
% $$\bar{L}_{N-1}=\frac{{N-1}}{1-\gamma}\sup_{(x,u) \in\mathbb{X}\times\mathbb{U}} \ell({x},{u}).$$
% In the case of $\gamma=1$ or $N=1$ this becomes
 In order to have stability, it suffices to have $(1-\gamma) \bar{L}_{N-1} -\ell(x,\tilde{u}(0))\leq 0$ which requires:
\begin{eqnarray}
    \gamma \geq 1 -\frac{\ell(x,\tilde{u}(0))}{\bar{L}_{N-1}}=\bar{\gamma}(x).
\end{eqnarray}
In the above condition $\bar{\gamma}(x)$ can be less than $1$ only outside a neighborhood of origin. Consider again 
\begin{equation}
    d = \inf_{x\not\in\mathbb{X}_s}{\ell(x,0)}.
\end{equation}
Then taking 
\begin{eqnarray}
    \gamma \geq 1 -\frac{d}{\bar{L}_{N-1}}=\bar{\gamma}_2\in(0,1), 
\end{eqnarray}
provides that the system trajectory will enter the safe set $\mathbb{X}_s$, hence $\mathbb{B}_\gamma\subseteq \mathbb{X}_s$. 
Finally, once $x\in\mathbb{X}_s$, we that
the policy $K_0(x)$ is feasible and:
\begin{eqnarray}
    \ell(x,\ K_0(x)) \leq \alpha V(x) - \alpha V(x^{+})\leq \alpha V(x). \nonumber 
\end{eqnarray}
Hence, we can use this policy to upper bound the MPC cost: 
\begin{eqnarray}
    J^\star_{MPC}(x) \leq 
    \sum_{i=0}^{N-1} \gamma^{i} \alpha V(\tilde{x}(i))  + \gamma^{N} \alpha V(\tilde{x}(N)). \nonumber 
\end{eqnarray}
If the above is true with equality, then we can proceed as in Theorem 3.1 of \cite{Gaitsgory2016StabilizationWD}, with $\gamma>\bar{\gamma}_1$. This would require $\alpha V(x)$ to be also a value function for the discounted problem. 

From the above considerations, we can conclude that that:
\begin{enumerate}
    \item If $N=1$, then $\bar{L}_{N-1}=0$ and the system is asymptotically stable for any $\gamma>0$.
    \item If $N>1$, $\gamma \geq \bar{\gamma_2}$, then the system reaches an bound $\mathbb{B}_\gamma$ that is included in $\mathbb{X}_s$. 
    \item If $N>1$  $\gamma \geq \bar{\gamma_2}$ and once in $\mathbb{X}_s$ we switch to the policy $K_0(x)$ then the system is asymptotically stable.
    \item If $\alpha V(x)$ is the global value function for the discounted problem and if $\mathcal{R}(\mathbb{X}_s)=\mathbb{X}_s$, then $\gamma>\bar{\gamma}_1$ provides that the system is Asymptotically stable. 
    \item If $\alpha V(x)$ is only the value function in $\mathbb{X}_s$ for the discounted problem, and if $\mathcal{R}(\mathbb{X}_s)\neq\mathbb{X}_s$, then $\gamma>\max(\bar{\gamma}_1,\bar{\gamma_2})$ provides that the system is Asymptotically stable. 
\end{enumerate}
Finally, following Theorem 3 from \cite{Limon2003StableCM}, the terminal constraint can be removed for all points $x\in\mathcal{R}^N(\rho\mathbb{X}_s)$, with $\rho\in[0,1)$, by setting:
 \begin{align}\label{eq:alpha_terminal_cost}
      \alpha\geq \bar{\alpha} = \max\left(\bar{\alpha}_1/\gamma,\bar{\alpha}_3\right), \\
      \bar{\alpha_3}=\frac{\bar{L}_{N} - \frac{1-\gamma^{N}}{1-\gamma}\ d}{(1-\rho)\gamma^N l_s}>0.  
 \end{align}
In fact, by the same argument of \cite{Limon2003StableCM}, for any states for which it exists a feasible sequence $\tilde{\underbar{u}}$ taking $\hat{x}(N)$ to $\rho\mathbb{X}_s$ we have that:
\begin{align}
    J_{\text{MPC}}^\star(x)\leq J_{\text{MPC}}(x,\tilde{\underbar{u}})=L_N(x)+\rho \ \alpha \gamma^N\ 
    l_s. 
\end{align}
If $\alpha$ satisfies (\ref{eq:alpha_terminal_cost}), then we also have that:
\begin{align}
    (1-\rho)\alpha\ \gamma^N\ 
    l_s\geq {L}_{N}(x) - \frac{1-\gamma^{N}}{1-\gamma}\ d,
\end{align}
from which we can directly verify that the set defined in (\ref{eq:ROA}) is a ROA (for either asymptotic or practical stability): 
\begin{equation}\label{eq:ROA2}
     \Upsilon_{N,\gamma,\alpha} = \left\{x\in\mathbb{R}^{n_x}: J_{\text{MPC}}^\star(x)\leq \frac{1-\gamma^{N}}{1-\gamma}\ d +\gamma^N \alpha l_s\right\}.   \nonumber
 \end{equation}

\paragraph{Robustness} For point 6, the stability margins of nominal MPC have been studied in \cite{Limon2009}. In particular, in a setup without terminal constraint, under nominal stabilising conditions, with a uniformly continuous model (in our case even Lipschitz), cost functions being also uniformly continuous, then the optimal MPC cost is also uniformly continuous \cite[Proposition 1]{Limon2009}. In other words, from \cite[Theorem 1]{Limon2009}, there is a $\mathcal{K}_\infty$-function, $\sigma$, such that at the optimal solution, $\underbar{u}^\star$, we have:
\begin{equation}
    |J^\star_{MPC}(x+w)- J^\star_{MPC}(x)|\leq \sigma(\|w\|). 
\end{equation}
Using the above identity, one wish to bound the increase in the MPC cost due to uncertainty. At the same time, we wish the MPC to remain feasible and perform a contraction, namely, to have a stability margin.  Since we are using soft constraints, then the MPC remains always feasible, however, we need the predictions at the end of the horizon to be in an invariant set $\mathbb{X}_s$ even under the effect of uncertainty. In particular, we will use $V(x)$ and its contraction factor $\lambda$ to compute a smaller level set $\zeta\mathbb{X}_s$, for some $\zeta\in(0,1)$ which is invariant under the uncertainty. Once this is found then we can compute a new $\alpha$ for this set according to (\ref{eq:alpha_terminal_cost}). 
In particular, under the policy $K_0$, we have that:
\begin{align}
    V(x^{+}+w)- V(x) &\leq V(x^{+}) -V(x) + L_V\|w\|^2_2 \nonumber \\ 
    & \leq  (\lambda-1) V(x) + L_V\|w\|^2_2 \nonumber.
\end{align}
We wish this quantity to be non-positive for $x\not\in\zeta \mathbb{X}_s$, which means that $V(x)\geq \zeta l_s$. For this it is sufficient to have:
\begin{align}
     \|w\|^2_2\leq \frac{1-\lambda}{L_V}  \zeta l_s\leq \frac{1-\lambda}{L_V} V(x) 
\end{align}
Since we apply the function $V$ to the prediction at time $N$, and at the next step the measured state (prediction at index $0$) differs from the previous MPC  prediction of a quantity $w$, for the MPC this condition translates into:
\begin{align}
\|w\|_2^2\leq\frac{1-\lambda}{L_V L_{\hat{f}x}^{2N}}\zeta l_s <\bar{\mu}^2 = \frac{1-\lambda}{L_V L_{\hat{f}x}^{2N}} l_s , 
\end{align}

Therefore, given the model error $w$, if the MPC remains feasible and if $\alpha$ and $\gamma$ exceed their lower bounds given the restricted set $\zeta\mathbb{X}_s$, we have that  $\|w\|_2<\bar{\mu}$ implies that: 
\begin{align}
    \Delta J^\star_{MPC}(x) & = J^\star_{MPC}(x^+ +w) - J^\star_{MPC}(x)  + J^\star_{MPC}(x^+) - J^\star_{MPC}(x^+) \nonumber\\
    & \leq  J_{MPC}(x^{+},\tilde{\underbar{u}}) - J^\star_{MPC}(x) + \sigma(\|w\|) \nonumber \\
     & \leq ~ (1-\gamma) \bar{L}_{N-1} -\ell(x,\tilde{u}(0)) + \sigma(\|w\|) \nonumber  \\
    & \leq  -l_\ell\|x\|_2^2 + \sigma(\|w\|) + \bar{d}(N), \nonumber
\end{align}
which is the definition of Input-to-State practical Stability \cite{Khalil_book,Limon2009}, where we have defined $\bar{d}(N)=(1-\gamma)\bar{L}_{N-1}$. The trajectory of the system is therefore bounded by the  level-set of $J^\star_{MPC}(x)$ outside which $ \sigma(\mu) + \bar{d}(N) \leq l_\ell\|x\|_2^2.$
Since $\sigma$ is strictly increasing and  $\bar{d}$ is strictly decreasing, we can also conclude that the size of this bound increases with increasing model error $\mu$ and with the horizon length $N$. Note that the term $\bar{d}$ vanishes if $\gamma=1$ but the term $\sigma$ will also increase with $N$ if $\bar{L}_f>1$. From the restriction of the terminal set, it also follows that the ROA defined in \eqtref{eq:ROA} will also be restricted unless we recompute a larger $\alpha$ for this new set.    
\paragraph{Value function upper bound} Denote $\hat{V}(x)=\alpha V(x)$. Recall that, if $\alpha\geq\alpha_1$, as defined in \eqtref{eq:bound_alpha1}, and if $\|w(t)\|=0\ \forall t$, then we have that, $\forall x(t)\in\mathbb{X}_s$:
\begin{align}
   \hat{V}(x(t))=\alpha V(x(t))\geq \ell(x(t),K_0(t)) + \alpha V(x(t+1)),
\end{align}
which in turn implies that, by means of $\gamma\leq1$ and of induction: 
\begin{align}
   \hat{V}(x(t)) & \geq \ell(x(t),K_0(t)) + \gamma   \hat{V}(x(t+1)) \\
   & \geq \ell(x(t),K_0(t)) + \gamma  \left(\ell(x(t+1),K_0(t+1)) + \gamma \hat{V}(x(t+2))\right) \\
  & \geq \sum^{\infty}_{i=0} \gamma^{i}\ell(x(t+i),K_0(t+i))=V_{K_0}(x(t)), 
\end{align}
which is the value of the policy $K_0$. This is, by definition, an upper bound of the optimal value function, $V^\star(x)$. Hence $\alpha V(x)\geq V^\star(x), \forall x\in\mathbb{X}_s$.

\end{proof}

In practice, the ISS bound $\sigma(\mu)$ from Theorem \ref{Theorem:stability} has a form similar to the one discussed for the constraint penalty in the proof of Theorem \ref{Theorem:performance}, see~\eqtref{eq:state_const_cost_ineq}. Its explicit computation is omitted for brevity; however, in general, we can expect the bound to become worse for systems that are open-loop unstable as the horizon length increases. 

\section{MPC as SQP Formulation}
\label{appendix:mpc_sqp}

We solve the MPC problem through iterative linearisations of the forward model, which gives the state and input matrices: 
\begin{align}\label{eq:linearize}
    A(i)=\left.\frac{\partial \hat{f}}{\partial x}\right|_{\bar{x}(i)},\ B(i)=\left.\frac{\partial \hat{f}}{\partial u}\right|_{\bar{u}(i)}.
\end{align}     
These are evaluated around a reference trajectory:
\begin{align}\label{eq:sequences}
    \underbar{x}^\star&=\{\bar{x}^\star(i),\ i=0,\dots.N\}, \\
    \underbar{u}^\star&=\{\bar{u}^\star(i),\ i=0,\dots.N-1\}.
\end{align} 
This is initialised by running the forward model with zero inputs, and then updated at each iteration by simulating the forward model on the current optimal soulution.   

The Lyapunov function $V$ is expanded to a second order term by using Taylor expansion and is evaluated around the same trajectory. The Jacobian and Hessian matrices, respectively, $\Gamma$ and $H$, are:
\begin{equation}\label{eq:hessians}
    \Gamma=\left.\frac{\partial V}{\partial x}\right|_{\bar{x}(N)},\quad H=\frac{1}{2}\left.\frac{\partial^2 V}{\partial^2 x}\right|_{\bar{x}(N)}. 
\end{equation}

All the quantities in~\eqtref{eq:linearize} and ~\eqref{eq:hessians} are computed using automatic differentiation. Using these matrices, we solve the convex optimization problem:
\begin{align}\label{problem_convex} 
	& \delta \underbar{u}^\star = \argmin \quad   \gamma^N\alpha \left(\|H^{1/2}\ \delta\hat{x}(N)\|_2^2+\Gamma^T\delta\hat{x}(N)\right)  + \sum_{i=0}^{N-1} \gamma^i \ell(\hat{x}(i), \hat{u}(i)) \\ 
	\mathrm{s.t.} &~~ \hat{x}(i+1) = A(i)\delta\hat{x}(i) + B(i) \delta\hat{u}(i)+\hat{f}(\bar{x}(i)), \nonumber\\
	& \qquad \hat{x}(i)-\delta\hat{x}(i)=\bar{x}(i) \nonumber \\ 
	& \qquad \hat{u}(i)-\delta\hat{u}(i)=\bar{u}(i) \nonumber \\ 
	& \qquad \hat{x}(i)\in\mathbb{X},\ \forall i\in[0, N], \nonumber\\
	& \qquad \hat{u}(i)\in\mathbb{U},\ \forall i\in[0, N-1], \nonumber\\
	& \qquad \hat{x}(0) = x(t), \nonumber \\
	& \qquad \|\delta\hat{u}(i)\|_\infty \leq r_{\text{trust}},\ \forall i\in[0, N-1], \nonumber
\end{align}
where the state constraints are again softened and the last inequality constraint is used to impose a trust region with a fixed radius, $r_{\text{trust}}$. This notably improves the search for an optimal solution, as shown for the inverted pendulum case in Figure \ref{fig:trust_region}.

Once problem~\eqref{problem_convex} is solved, we obtain the delta sequence $\delta\underbar{u}^*$. The new optimal solution is then computed according to the update rule:
\begin{align}
    \underbar{u}^\star \leftarrow \underbar{u}^\star + lr\ \delta\underbar{u}^\star, \nonumber
\end{align}
where $lr<1$ is a learning rate, which is annealed after each iteration. 
Finally, the reference trajectories used for the next linearization are $\underbar{u}^\star$ and the state series resulting from simulating the forward model on $\underbar{u}^\star$, namely 
$$\underbar{x}^\star = \hat{f} \circ \underbar{u}^\star.$$ 
This is summarised in Algorithm \ref{alg:mpc}. Interested readers can find more details on SQP in the work by~\cite{nocedal2006numerical, hadfield-menell_sequential_2016}, where adaptive schemes for computing the trust radius are discussed. 

        \begin{algorithm}[H]
            \DontPrintSemicolon
            \KwInput{$x(t)$, $\hat{f}$, $\alpha$, $V$, $N_{\text{steps}}$, $\epsilon_{lr}\in[0,1)$, $r_{\text{trust}}>0$, $lr$}
            %\KwOutput{$(V^*_{net},l^*_{net},K^*)$}
            \KwOutput{$u(t)$}
            \small
            \caption{Neural Lyapunov MPC solver}
            \vspace{-0.1cm} \hrulefill \vspace{0.1cm} \par
            \label{alg:mpc}
            $\underbar{x}^\star \gets \{x(t)\}^{N+1}, \quad \underbar{u}^\star \gets \{0\}^{N}$ \par 
            \For{$j=0,..., N_{\text{steps}}$}{ \;   
            $\{A(i)\},\ \{B(i)\}  \gets$ linearization of $\hat{f}$ using~\eqtref{eq:linearize} \\ \;
            $(\Gamma,\ H)  \gets$ Taylor expansion of $V$ using~\eqtref{eq:hessians} \\ \;
            $\delta\underbar{u}^\star  \gets$ solution of optimization problem~\eqref{problem_convex}\\ \;
            $\underbar{u}^\star \leftarrow \underbar{u}^\star + lr~\delta\underbar{u}^\star$ \\\;
            $\underbar{x}^\star \leftarrow \hat{f} \circ \underbar{u}^\star$ \\ \;
            $lr \gets (1-\epsilon_{lr})\ lr$ \;
            }
        $u(t) \gets \underbar{u}^\star(0)$ \;     
        \end{algorithm}

\section{Experimental Setup}
\label{appendix:experiments}

We demonstrate our approach on an inverted pendulum and vehicle kinematics control problem. We also provide additional results and plots in Appendix D. 
% Further, to show our proposed algorithm for a more challenging setting, we have added an additional example for double pendulum.
\footnote{Our code, including the world models, is implemented using PyTorch~\cite{paszke2019pytorch}. This allows us to exploit automatic differentiation to linearise the dynamics for the SQP-based MPC.}

\begin{table*}[t]
\centering
\caption{\textbf{Configuration for experiments in main paper.} We specify the parameters used for the simulation of the system dynamics, the demonstrator, the Neural Lyapunov MPC as well as for the alternate learning algorithm.}
\label{tab:environment_config}
% \vskip 0.15in
\begin{center}
\begin{small}
\begin{sc}
\begin{tabular}{l | c | c | c }
\toprule
\textbf{Parameter} & \textbf{Symbol} & \multicolumn{2}{c}{\textbf{Value}} \\
& & Car Kinematics & Pendulum \\
\midrule

\multicolumn{3}{l}{\textbf{General}}\\
\quad mass    & $m$ & - & 0.2 \si{kg}\\
\quad length  & $l$ & - & 0.5 \si{m} \\
\quad rotational friction & $\lambda_F$ & - & 0.1 \si{N.m.s.rad^{-1}}\\
\quad gravity & $g$ & - & 9.81 \si{m.s^{-2}}  \\
\quad sampling time & $dt$ & 0.2 \si{s} & 0.01 \si{s} \\
\quad state constraint & $\mathbb{X}$ & $[-1, 1]^2 \times [-\pi, \pi]$ & $[-\pi, \pi] \times [-2\pi, 2\pi]$   \\
\quad input constraint & $\mathbb{U}$ & $[-10, 10] \times [-2\pi, 2\pi]$ & $[-0.64, 0.64]$\\
\quad state penalty & $Q$ & $\textrm{diag}(1, 1, 0.001\pi)$ & $\textrm{diag}(0.1, 0.1)$\\
\quad input penalty & $R$ & $\textrm{diag}(100, 20\pi)$ & $0.1$ \\
\quad discount factor & $\gamma$ & $1$ & $1$ \\
\midrule

\multicolumn{3}{l}{\textbf{Demonstrator}}\\
\quad Type & $K_0$ & MPC & MPC  \\
\quad Horizon & $N$ & 5 & 5 \\
\quad \# of linearisations & $N_{steps}$ & 3 & 10  \\
\quad trust region & $r_{trust}$ & $\infty$ & 2.5 \\
\quad learning rate & $lr$ & 0.9 & 0.9 \\
\quad decay rate & $\epsilon_{lr}$ & 0.2 & 0.2 \\
\quad terminal penalty & $P$ & $400\ Q$ & $500 P_{\text{LQR}}$ \\
\midrule

\multicolumn{3}{l}{\textbf{Neural Lyapunov MPC}}\\
\quad Horizon & $N$ & 1 & 1 \\
\quad \# of linearisations & $N_{steps}$ & 18 & 18 \\
\quad trust region & $r_{trust}$ & 0.005 & 0.5 \\
\quad learning rate & $lr$ & 0.9 & 0.9\\
\quad scaling of $V$ & $\alpha$ & \tabref{tab:alpha_car_nominal}  & \tabref{tab:alpha_pendulum}\\
\quad decay rate & $\epsilon_{lr}$ & 0.02 & 0.02 \\
\quad $V_{{net}}$ architecture & MLP & $(128, 128, 128)$ & $(64, 64, 64)$ \\
\quad $V_{{net}}$ output & $n_V \times n_x$ & $400 \times 3$ & $100 \times 2$\\

\midrule
\multicolumn{3}{l}{\textbf{Alternate learning}}\\
\quad Outer iterations & $N_{ext}$ & 3 & 2 \\
% \quad \# of datapoints in $\mathcal{D}_{\text{demo}}$ & $M$ & 10000 & 10000 \\
\quad Enlargement factor & $\epsilon_{ext}$ & 0.1 & 0.1 \\
\quad MPC line search & $\alpha_{list}$ & $\{1, 6, ..., 36\}$ & $\{0.2, 0.4, ..., 2\}$\\
\quad Lyapunov epochs & $N_V$ & 500 & 200 \\
\quad Loss~\eqtref{eq:lyapunov_loss} & $\rho$ & 0.0001 & 0.0001 \\
\quad contraction factor & $\lambda$ & 0.99 & 0.99 \\
\quad Lyapunov learning rate & $lr$ & 0.001 & 0.001 \\
\quad Lyapunov weight decay & $wd$ & 0 & 0.002 \\
\bottomrule
\end{tabular}
\end{sc}
\end{small}
\end{center}
\vskip -0.1in
\end{table*}

\subsection{Implementation Specifics}

A practical consideration for implementing~\algref{alg:alternateDescent} is tuning the MPC terminal cost scaling, $\alpha$, via grid-search. The MPC needs to run over the entire dataset of initial points, $\mathcal{X}_0 = \{x_m(0)\}_{m=1}^M$, with different configurations. In order to speed up the search for $\alpha$, we run the MPC only on a sample of $20\%$ of the initial dataset. Once the optimal $\alpha$ is found, only then we run the MPC over the entire dataset and use this data to compute the next $V(x)$.

Additionally to what presented in the main text, we parameterize $V(x)$ with a trainable scaling factor, $\beta$, as:
\begin{equation}\label{eq:lyap_definition2}
  V(x) = \text{softplus}(\beta) x^T\left(l_\ell I + V_{net}(x)^T V_{net}(x)\right)x, 
\end{equation}
where $\text{softplus}(x) = \log(1+\exp(x))$. The parameter $\beta$ is initialized to $1$ in all experiments except for the inverted pendulum without LQR loss, i.e. for results in Figure \ref{fig:contraction_factor}.

The full set of parameters for the experiments can be found in Table~\ref{tab:environment_config}.

\subsection{Baseline Controllers}

Our Neural Lyapunov MPC has a single-step horizon and uses the learned Lyapunov function as the terminal cost. To compare its performance, we consider two other MPCs and RL baselines:
\begin{itemize}[noitemsep,topsep=0pt]
    \item \textbf{Long-horizon MPC (demonstrator/demo)}: An MPC with a longer horizon and a quadratic terminal cost $x^TPx$. This MPC is also used to generate the initial demonstrations for alternate learning.
    \item \textbf{Short-horizon MPC}: An MPC with a single-step horizon and the same terminal cost as the long-horizon MPC. All other parameters are the same as the Neural Lyapunov MPC except $\alpha$, which is tuned manually.
    \item \textbf{Model-free RL}: Neural-network parameterized policy trained using on-policy algorithm, PPO~\cite{schulman2017proximal}, and off-policy algorithm, SAC~\cite{haarnoja2018soft}.
    \item \textbf{Model-based RL}: Neural-network parameterized policy and ensemble of dynamic models trained using the algorithm, MBPO~\cite{janner_when_2019}.
\end{itemize}

In order to train the RL baselines, we consider two variations of our proposed models. In the first version (marked \texttt{v1}), the environment considers the stage cost used in demonstrator MDP as the penalty and terminates the episodes whenever the state constraints are violated. In the second version (marked \texttt{v2}), there is no environment termination. Instead an additional penalty term is added for violating the constraint. This is further described in~\tabref{tab:rewards}. We show the results of training in these variants of the environments for SAC and PPO in~\figref{fig:learning_curves}.

\begin{table}[H]
    \centering
    \begin{sc}
    \caption{\textbf{Reward and termination conditions for RL MDPs.} We train RL baselines on two variants of the inverted pendulum and car kinematics models. Note, for inverted pendulum: $\beta_1 = 10, \beta_2 = 200$, while car kinematics: $\beta_1 = 1, \beta_2 = 200$. The symbol $\mathbb{I}_{(.)}$ is the indicator function.}
    \label{tab:rewards}
    \begin{tabular}{c|c|c}
        \toprule
         Version & Reward Term & Termination Condition \\
         \midrule
          \texttt{v1} & $-\eta^T Q \eta - u^T R u$ & $\eta \notin \mathbb{X}$ \\
          \texttt{v2} & $-\beta_1(\eta^T Q \eta - u^T R u) - \beta_2 \mathbb{I}_{\eta \notin \mathbb{X}}$ & - \\
          \bottomrule
    \end{tabular}
    \end{sc}
\end{table}

\begin{figure*}[!ht]
    \captionsetup[subfigure]{justification=centering}
    \centering
    \begin{subfigure}[b]{0.48\textwidth}
        \centering
        \includegraphics[width=\textwidth]{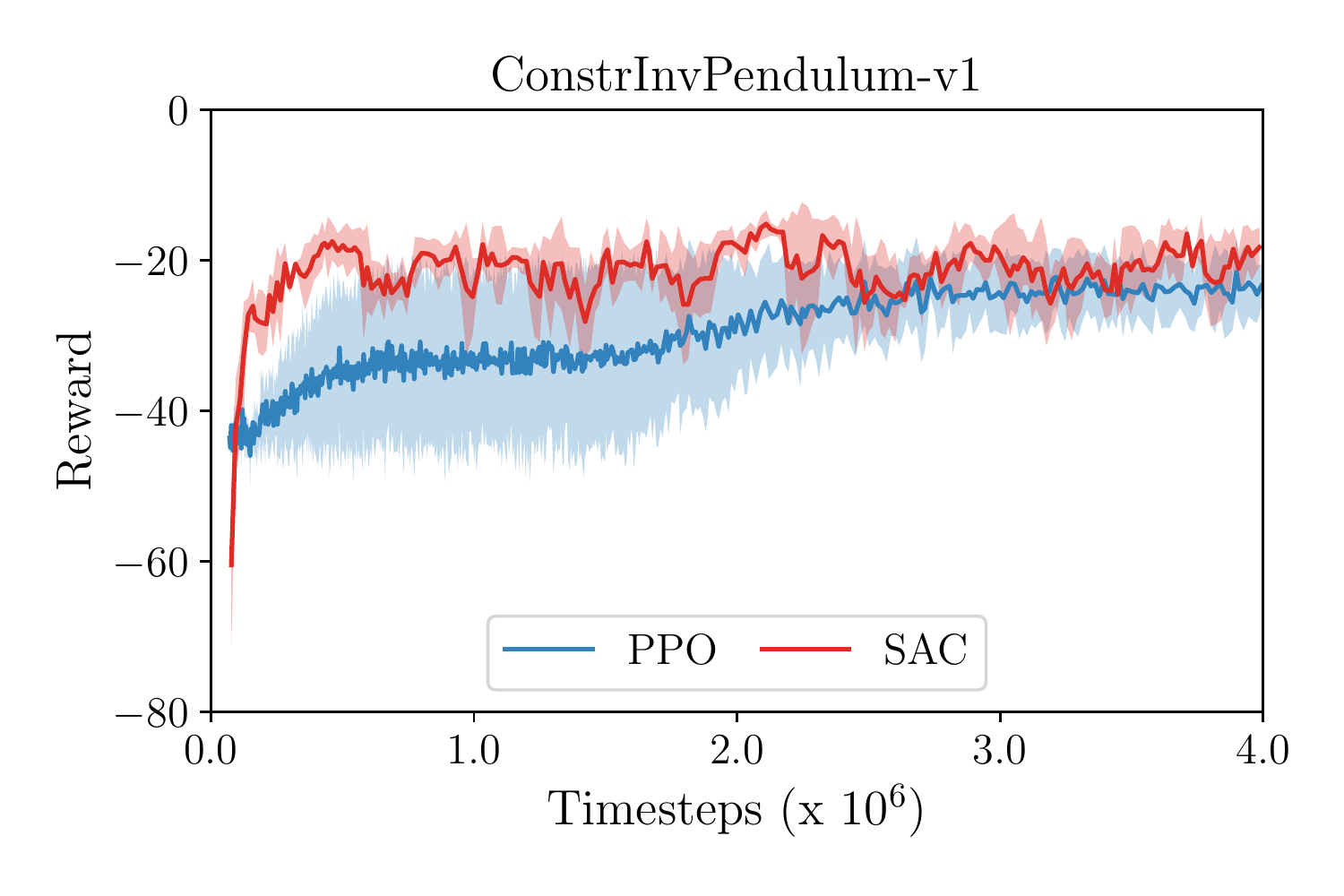}
        \vspace{-0.1cm}
        % \caption{Test on Inverted Pendulum}
    \end{subfigure}
    \begin{subfigure}[b]{0.48\textwidth}
        \centering
        \includegraphics[width=\textwidth]{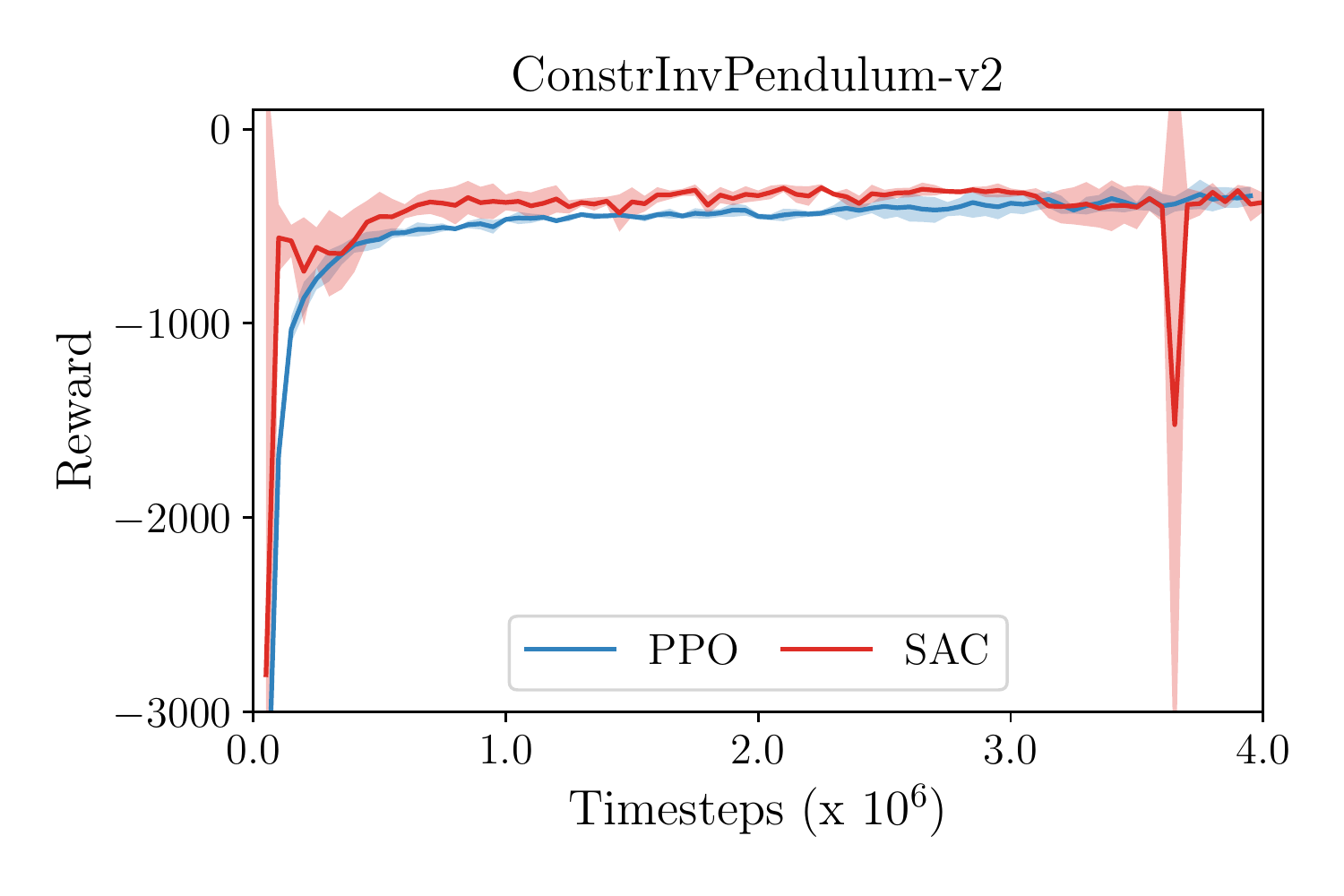}
        \vspace{-0.1cm}
        % \caption{Test on Inverted Pendulum}
    \end{subfigure}
    \begin{subfigure}[b]{0.48\textwidth}
        \centering
        \includegraphics[width=\textwidth]{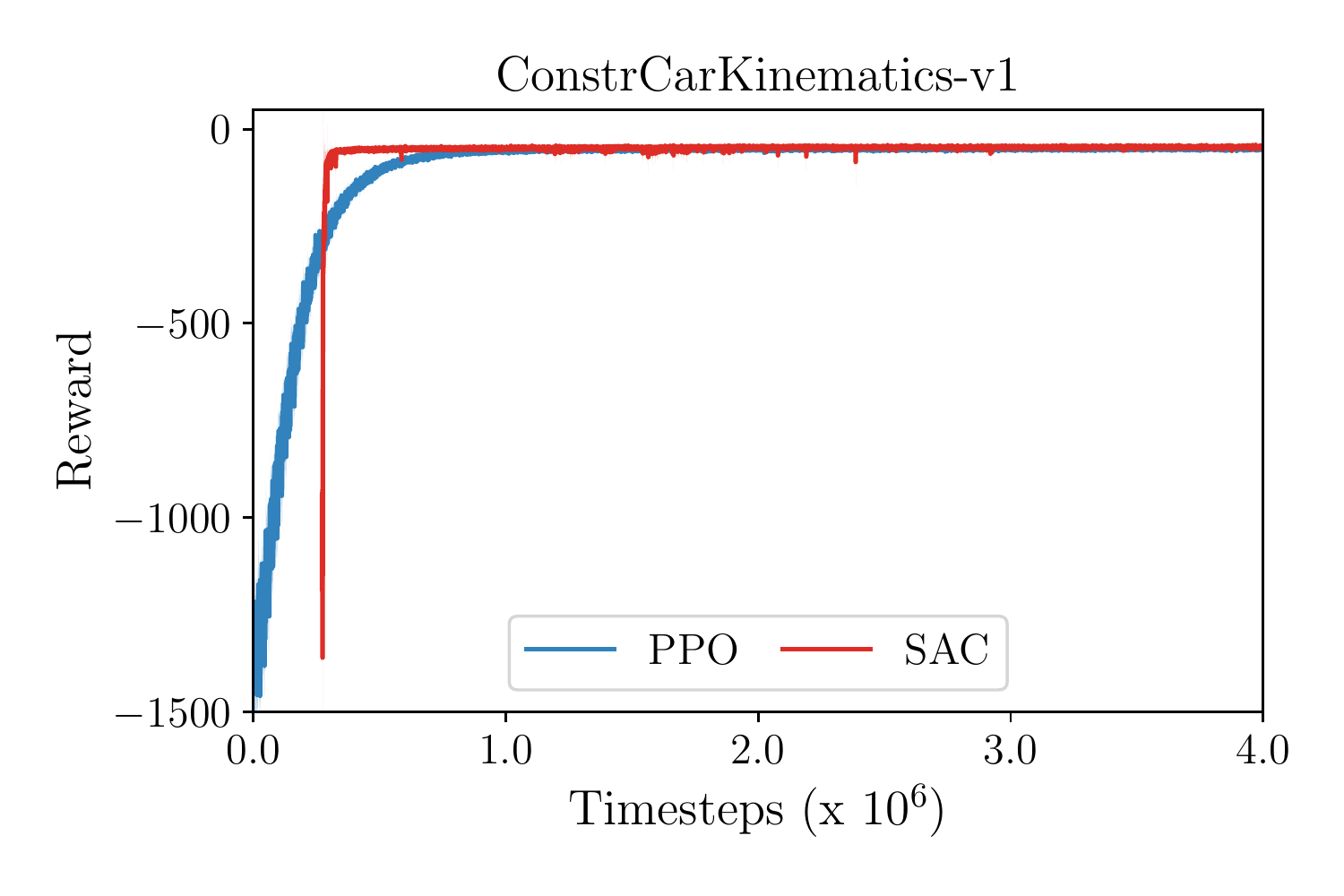}
        \vspace{-0.1cm}
        % \caption{Test on Inverted Pendulum}
    \end{subfigure}
    \begin{subfigure}[b]{0.48\textwidth}
        \centering
        \includegraphics[width=\textwidth]{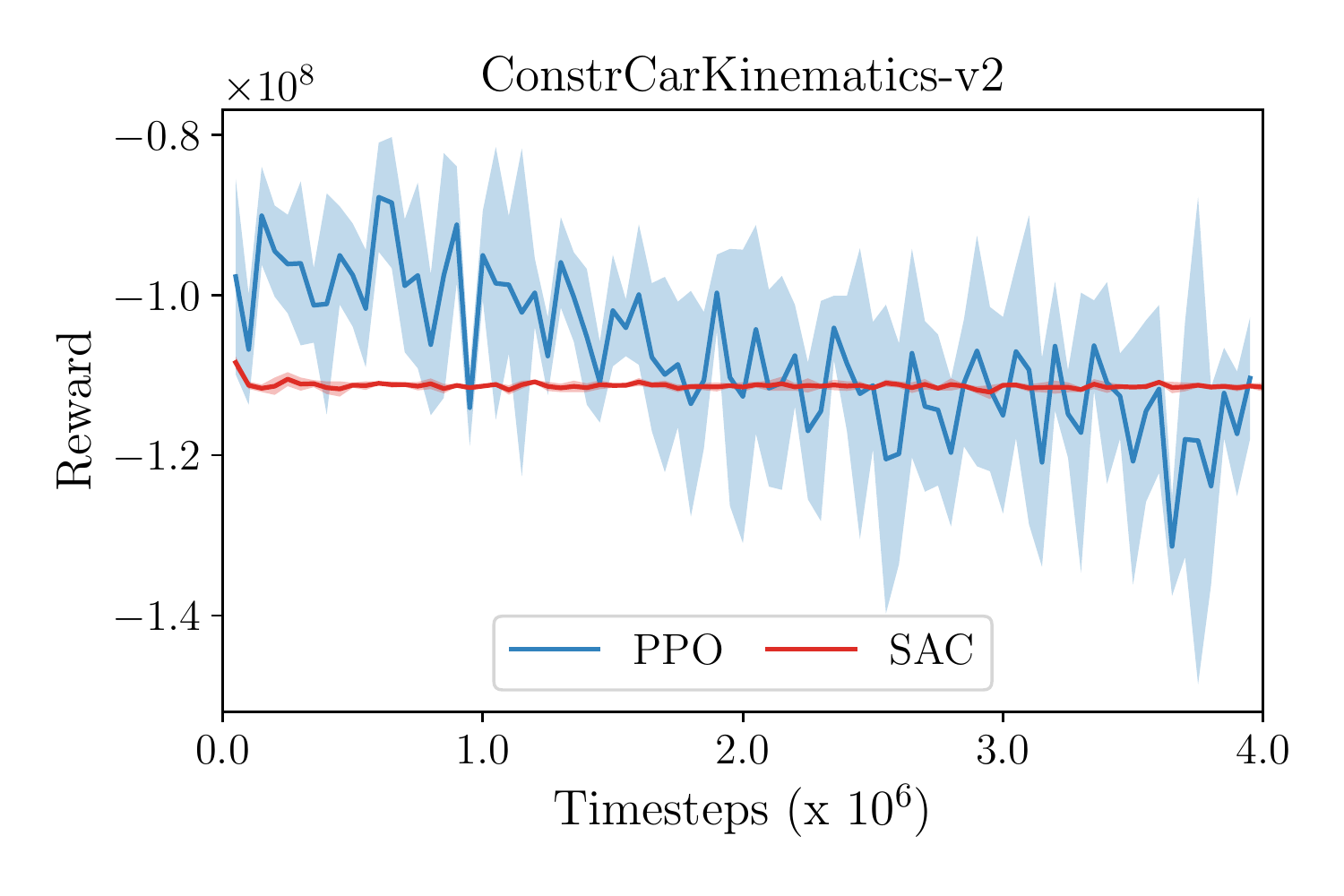}
        \vspace{-0.1cm}
        % \caption{Test on Inverted Pendulum}
    \end{subfigure}
    
    \caption{\textbf{Training of RL agents.} The plots show the learning curves for PPO and SAC in the two variants each of inverted pendulum and car kinematics environments. We train the agent on 4 different seeds for $4 \times 10^6$ timesteps. The solid line and filled region indicates the mean and one standard deviation reward across the seeds.}
    \label{fig:learning_curves}
\end{figure*}

\begin{figure*}[!ht]
    \captionsetup[subfigure]{justification=centering}
    \centering
    \begin{subfigure}[b]{0.48\textwidth}
        \centering
        \includegraphics[scale=0.55]{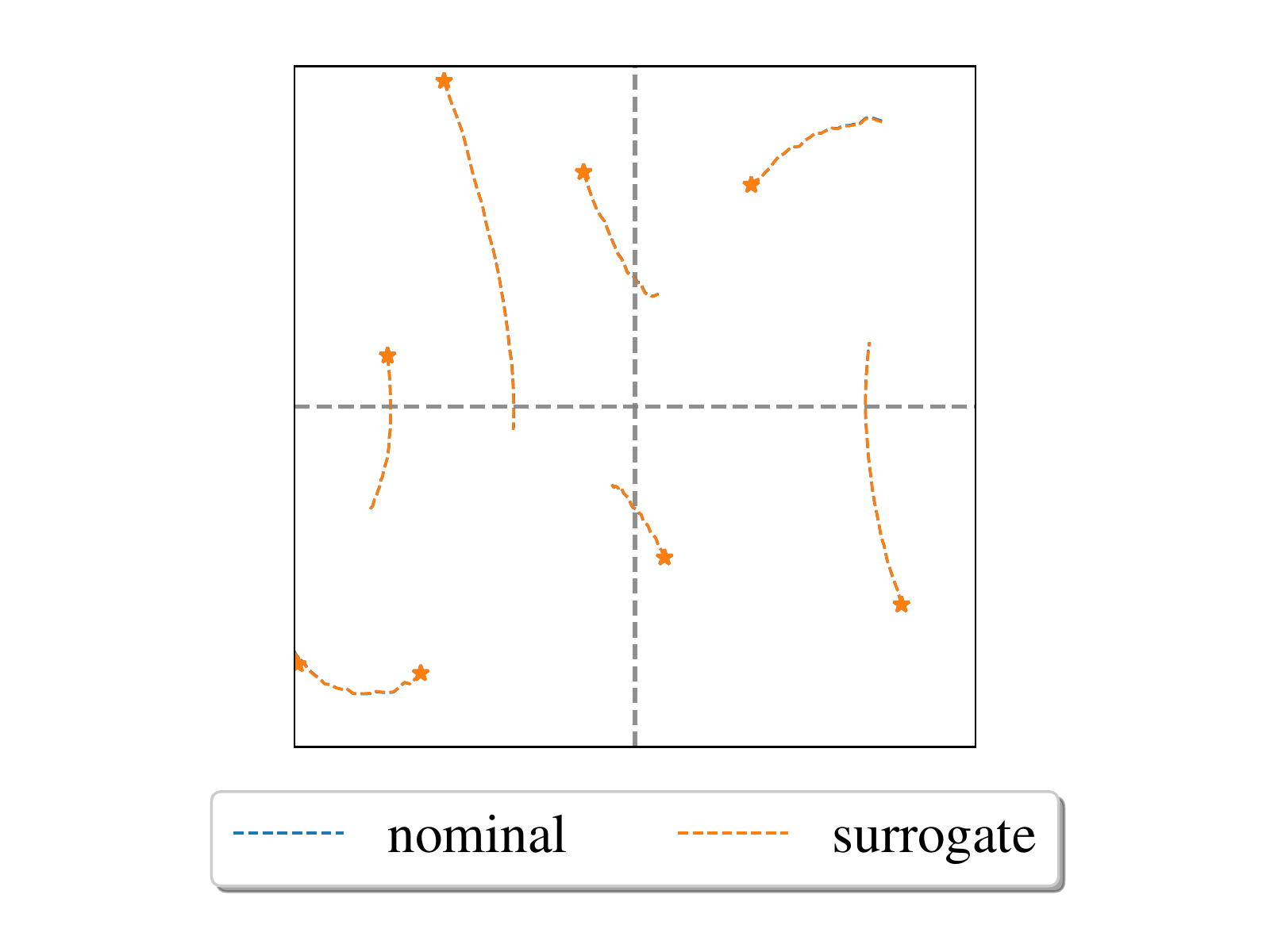}\vspace{-0.1cm}
        \caption{Test on Inverted Pendulum}
    \end{subfigure}
    \begin{subfigure}[b]{0.48\textwidth}
    \centering
        \includegraphics[scale=0.55]{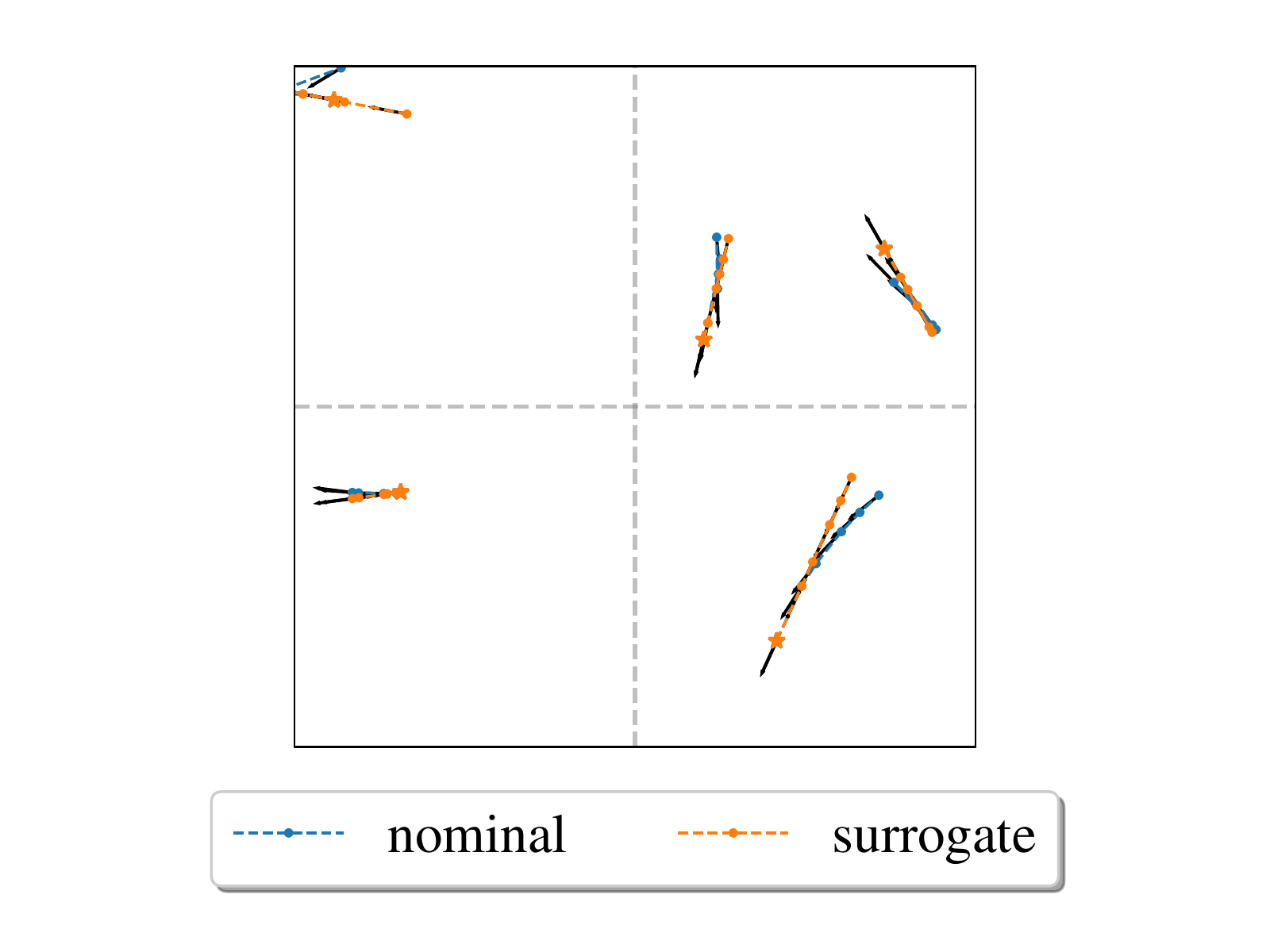}\vspace{-0.1cm}
        \caption{Test on Vehicle Kinematics}
    \end{subfigure}
    
    \caption{\textbf{Testing the surrogate models.} We generate each trajectory by propagating an initial state $\eta(0)$ with input sequence $\{u(t)\}_{t=0}^{T-1}$ through the nominal system and the learned surrogate model. (a) For the inverted pendulum: $u(t) \sim \mathcal{U}([-0.4, 0.4])$ and simulation is performed for $T=25$. (b) For the vehicle kinematics: $u(t) \sim \mathcal{U}([-0.1, 0.1] \times [-0.1, 0.1])$ and simulation is performed for $T=5$.}
    \label{fig:surrogate_test}
\end{figure*}

\subsection{Forward models}

We use an Euler forward model for the environments. Consider $dt$ as the sampling time, then the state transition is:
\begin{align}
	\eta(t+1) &= \eta(t) + dt\ f_u(\eta(t), u(t)),  \label{eq:3dof_model}
\end{align}
where $\eta(t)$ is the state, $u(t)$ is the input and $f_u(\eta(t), u(t))$ is the time-invariant, deterministic dynamical system. 

\subsubsection{Vehicle Kinematics}

\paragraph{World model} For the non-holonomic vehicle, $\eta=(x,y,\phi) \in \mathbb{R}^3$ is the state, respectively, the coordinates in the plane and the vehicle orientation, and $u = (v, \omega)\in \mathbb{R}^2$  is the control input, respectively, the linear and angular velocity in the body frame. $f_u(\eta, u)$ encodes the coordinate transformation from the body to the world frame \cite{fossen2011handbook}:
\begin{align}
     f_u (\eta(t), u(t)) &= \begin{pmatrix} \dot{x} \\ \dot{y} \\ \dot{\phi}\end{pmatrix} = \begin{pmatrix} v(t) \cos{\phi(t)} \\ v(t) \sin{\phi(t)} \\ \omega(t) \end{pmatrix} = \underbrace{\begin{pmatrix} \cos{\phi(t)}  & 0 \\ \sin{\phi(t)} &  0 \\ 0 & 1 \end{pmatrix}}_{J(\eta)} \begin{pmatrix} v(t) \\ \omega(t) \end{pmatrix}
\end{align}

\paragraph{Surrogate model} We build a gray-box model using a neural network to model $J(\eta)$, similar to the work by~\cite{quaglino_snode_2020}. The input feature to the network is $\sin{\phi}$ and $\cos{\phi}$, where $\phi$ is the vehicle's orientation. The network consists of a hidden layer with $20$ hidden units and $\tanh$ activation function, and an output layer without any activation function. The weights in the network are initialized using Xavier initialization~\cite{glorot2010initialization} and biases are initialized from a standard normal distribution.

\paragraph{Training the surrogate model} We generate a dataset of $10K$ sequences, each of length $T=1$. For each sequence, the initial state $\eta(0)$ is sampled uniformly from $\mathbb{X}$, while the input $u(0)$ is sampled uniformly from $\mathbb{U}$. We use a training and validation split of $7:3$. Training is performed for $300$ epochs using the Adam optimizer~\cite{kingma_adam:_2014} and the mean-squared error (MSE) loss over the predicted states. The learning rate is $0.01$ and the batch size is $700$.

\subsubsection{Inverted Pendulum}

\paragraph{World model} Inverted pendulum is one of the most standard non-linear systems for testing control methods. We consider the following model \cite{berkenkamp_safe_2017}:
\begin{align}
    ml^2 \Ddot{\theta} = mgl \sin{\theta} - \lambda_F \Dot{\theta} + u
\end{align}
where $\theta \in \mathbb{R}$ is the angle, $m$ and $l$ are the mass and pole length,  respectively, $g$ is gravitational acceleration, $\lambda_F$ is the rotational friction coefficient and $u\in \mathbb{R}$ is the control input.
 We denote the state of the system as $\eta=(\theta, \dot{\theta}) \in \mathbb{X} \subset \mathbb{R}^2$ and input as $u \in \mathbb{U} \subset \mathbb{R}$. We use an Euler discretization, as in \eqtref{eq:3dof_model}, to solve the initial-value problem (IVP) associated with the following equation:
\begin{align}
    f_u (\eta(t), u(t))=\begin{pmatrix} \dot{\theta} \\ \ddot{\theta}\end{pmatrix} = \begin{pmatrix} \dot{\theta}(t) \\ \frac{mgl \sin{\theta(t)} + u(t) - \lambda_F \dot{\theta}(t)}{ml^2} \end{pmatrix}
\end{align}

\paragraph{Surrogate model} We use a neural network to predict the acceleration of the pendulum, $\ddot{\theta}(t)$. The input to the network is the state $\eta(t)$ and action $u(t)$ at the current time-step $t$. The network is a three layer feedforward network with $64$ hidden units and $\tanh$ activation in each hidden layer. The output layer is linear. All the layers have no biases and their weights are initialized as in \cite{glorot2010initialization}.

\paragraph{Training the surrogate model} To train the surrogate model, we generate a dataset of $10K$ sequences, each of length $T=1$. We use MSE loss and Adam optimizer~\cite{kingma_adam:_2014} to train the model. The rest of the parameters are kept the same as those used for the vehicle kinematics model training.

\section{Additional Results}\label{appendix:results}

Here we provide additional results and plots related to the experiments specified in the paper. 
% We also describe an additional example of double pendulum.

\subsection{Vehicle Kinematics}

The vehicle model results are run over different machines. The resulting losses and $\alpha$ are discussed. In all experiments, the parameters of $V$ are reinitialised after every outer epoch. The Lyapunov loss does not make use of the LQR loss $\ell$.

\paragraph{Choosing number of outer iterations $N_{ext}$} We run our algorithm for more iterations on a machine with different operating system. This leads to a slight difference from the results mentioned in the paper. As observed from~\tabref{tab:vehicle_model_metric_new}, the third iteration leads to the best performance in terms of verified and not verified percentages. We set $N_{ext}=3$ in all experiments.

\begin{table}[H]
    \caption{\textbf{Car Kinematics: Choosing number of outer iterations.} Lyapunov Function learning performed on a different machine (results are slightly different from the ones in the paper). }
    \label{tab:vehicle_model_metric_new}
    % \vspace{-0.3cm}
    % \vskip 0.15in
    \begin{center}
    \begin{small}
    \begin{sc}
    \begin{tabular}{P{1cm} P{1.5cm} P{1.5cm} P{2.0cm}}
    \toprule
    Iter. & Loss ($\log(1+x)$) & Verified (\%) & Not Verified (\%) \\
    \midrule
    1 & 1.42 & 92.83 & 3.95 \\
    2 & 0.91 & 93.08 & 4.71 \\
    3 & 0.62 & 94.26 & 3.89 \\
    4 & 0.46 & 93.65 & 4.38 \\
    5 & 0.53 & 92.18 & 5.63 \\
    \bottomrule
    \end{tabular}
    \end{sc}
    \end{small}
    \end{center}
    \vskip -0.1in
\end{table}

\paragraph{Alternate learning on nominal model} 

We train the Neural Lyapunov MPC while using a nominal model for internal dynamics. In~\figref{fig:car_alternate_learning_nominal_complete}, we plot the training curves, the resulting Lyapunov function, and line-search for MPC in each outer epoch. The results are also shown in~\tabref{tab:alpha_car_nominal}. As can be seen, tuning the MPC parameter $\alpha$ helps in minimizing the loss~(\protect\ref{eq:lyapunov_loss}) further. Points near the origin don't always verify. The MPC obtained after the third iteration achieves the best performance. This can further be validated from~\figref{fig:car_alternate_learning_nominal_trajectories}, where we plot trajectories obtained by using the Neural Lyapunov MPC from each iteration.

\paragraph{Alternate learning on surrogate model}

In order to test the transfer capability of the approach, we perform the training of Neural Lyapunov MPC using an inaccurate surrogate model for the internal dynamics. This model is also used for calculating the Lyapunov loss~(\protect\ref{eq:lyapunov_loss}) and evaluating the performance of the Lyapunov function. We plot the training curves, the resulting Lyapunov function, and line-search for MPC in each outer epoch in~\figref{fig:car_alternate_learning_surrogate_complete}. The results of the training procedure are presented in~\tabref{tab:alpha_car_surrogate}. The MPC obtained from the second iteration achieves the best performance. In the third iteration, the Lyapunov loss increases and number of verified and not verified points becomes worse. The poor performance also reflects in the trajectories obtained by using the Lyapunov MPC from the third iteration, as shown in~\figref{fig:car_alternate_learning_surrogate_trajectories}.

\begin{table}[H]
    \captionsetup[subfigure]{justification=centering}
    \centering
    \caption{\textbf{Car Kinematics: Learning on nominal model.} Results for training Neural Lyapunov MPC while using a nominal model for internal dynamics. We use the Lyapunov loss~(\protect\ref{eq:lyapunov_loss}) for both learning the Lyapunov function and tuning the MPC. This is specified in the $\log(1+x)$ scale.}
    
    \label{tab:alpha_car_nominal}
    
    % \vskip 0.15in
    \begin{subtable}{0.5\textwidth}
        \caption{Lyapunov Function Learning} \vspace{-0.3cm}
        \begin{center}
        \begin{small}
        \begin{sc}
        \begin{tabular}{P{1cm} P{1.5cm} P{1.5cm} P{2.0cm}}
            \toprule
            Iter. & Loss ($\log(1+x)$) & Verified (\%) & Not Verified (\%) \\
            \midrule
            1 & 1.55 & 92.20 & 4.42 \\
            2 & 0.87 & 93.17 & 4.89 \\
            3 & 0.48 & 94.87 & 3.89 \\
            \bottomrule
        \end{tabular}
        \end{sc}
        \end{small}
        \end{center}
    \end{subtable}
    
    \medskip \medskip
    
    \begin{subtable}{0.5\textwidth}
        \caption{MPC Parameter Tuning} \vspace{-0.3cm}
        \begin{center}
        \begin{small}
        \begin{sc}
        \begin{tabular}{P{1cm} P{1.5cm} P{1.5cm} P{2.0cm}}
            \toprule
            Iter. & \multicolumn{2}{c}{Loss ($\log(1+x)$)} & Parameter \\
            & before & after &  $\alpha^\star$\\
            \midrule
            1 & 1.55 & 1.07 & 26.00 \\
            2 & 0.87 & 0.71 & 31.00 \\
            3 & 0.48 & 0.52 & 36.00 \\
            \bottomrule
        \end{tabular}
        \end{sc}
        \end{small}
        \end{center}
    \end{subtable}
    
    \vskip -0.1in
\end{table}

\begin{table}[H]
    \captionsetup[subfigure]{justification=centering}
    \centering
    \caption{\textbf{Car Kinematics: Learning on surrogate model.} Results for training Neural Lyapunov MPC while using the surrogate model for internal dynamics as well as in Lyapunov training. We use the Lyapunov loss~(\protect\ref{eq:lyapunov_loss}) for both learning the Lyapunov function and tuning the MPC. This is specified in the $\log(1+x)$ scale.}
    \label{tab:alpha_car_surrogate}
    % \vskip 0.15in
    \begin{subtable}{0.5\textwidth}
        \caption{Lyapunov Function Learning} \vspace{-0.3cm}
        \begin{center}
        \begin{small}
        \begin{sc}
        \begin{tabular}{P{1cm} P{1.5cm} P{1.5cm} P{2.0cm}}
            \toprule
            Iter. & Loss ($\log(1+x)$) & Verified (\%) & Not Verified (\%) \\
            \midrule
            1 & 1.84 & 91.74 & 8.26 \\
            2 & 1.43 & 92.26 & 7.74 \\
            3 & 1.65 & 91.61 & 8.39 \\
            \bottomrule
        \end{tabular}
        \end{sc}
        \end{small}
        \end{center}
    \end{subtable}
    
    \medskip \medskip
    
    \begin{subtable}{0.5\textwidth}
        \caption{MPC Parameter Tuning} \vspace{-0.3cm}
        \begin{center}
        \begin{small}
        \begin{sc}
        \begin{tabular}{P{1cm} P{1.5cm} P{1.5cm} P{2.0cm}}
            \toprule
            Iter. & \multicolumn{2}{c}{Loss ($\log(1+x)$)} & Parameter \\
            & before & after &  $\alpha^\star$\\
            \midrule
            1 & 1.84 & 1.41 & 36.00 \\
            2 & 1.43 & 1.05 & 36.00 \\
            3 & 1.65 & 1.30 & 36.00 \\
            \bottomrule
        \end{tabular}
        \end{sc}
        \end{small}
        \end{center}
    \end{subtable}
    % \vskip -0.1in
\end{table}

\subsection{Inverted Pendulum}

Differently from the car kinematics, for the inverted pendulum task, the parameters of $V$ are not re-initialized after every outer epoch, and the Lyapunov loss makes use of the LQR loss, $\ell$, for all the experiments except for the results in~\figref{fig:contraction_factor}. In this section, we discuss the results obtained from the alternate learning on the nominal model. We also provide an ablation study on: 1) a solution based solely on a contraction factor, and 2) the effect of having an imperfect solver, in particular the instability resulting from wrongly tuning the trust-region radius.

\paragraph{Alternate learning} 
Since the trained surrogate model has a high accuracy, we only consider the scenario for alternate learning with the nominal model. The main results for this scenario are in~\figref{fig:pendulum_alternate_learning} and \tabref{tab:alpha_pendulum}. We notice a slight improvement in the MPC's performance in the second iteration of the training procedure. In~\figref{fig:pendulum_alternate_learning}, it can be noticed that a small $\alpha$ needs to be used, which contradicts the ideal theoretical result. In practice, a very large value of this parameter results in bad conditioning for the QPs used by the MPC and causes the solver to fail.\footnote{When the solver fails, we simply set the solution to zero.} Since the pendulum is open-loop unstable, an increase of the Lyapunov loss can be noticed for larger values of $\alpha$. This demonstrates that it is necessary to perform a search over the parameter and that we cannot simply set it to a very large value. 

In~\figref{fig:pendulum_alternate_learning_nominal_trajectories}, we show the trajectories obtained by running the Neural Lyapunov MPC obtained from the first and second iterations. The initial states are sampled inside the safe level-set by using rejection sampling. The trajectories obtained from both the iterations are similar even though the Lyapunov function is different. The Lyapunov function from second iteration has a larger ROA.

We also compare the Neural Lyapunov MPC from the second iteration with the baseline MPCs in~\figref{fig:pendulum_lyap_big}. The baselines controllers behave quite similarly in this problem, although they have different prediction horizons. This is because, for both of them, the LQR terminal cost is a dominating term in the optimization's objective function. The Neural Lyapunov MPC achieves a slightly slower decrease rate compared to the demonstrator; however, it still stabilizes the system. The transfer from nominal to surrogate model is very successful for all the controllers, though in this case, the surrogate model is particularly accurate. 

It should be kept in mind that in order to produce these results, it was necessary to impose in the Lyapunov loss~(\protect\ref{eq:lyapunov_loss}) that the decrease rate of $V(x)$ needs to be upper bounded by the LQR stage loss, as in Equation \ref{eq:lyap2_lqr}. This resulted in the most effective learning of the function $V(x)$, contrarily to the vehicle kinematics example.   

\begin{table}[t]
    \captionsetup[subfigure]{justification=centering}
    \centering
    \caption{\textbf{Inverted Pendulum: Learning on nominal model.} Results for training Neural Lyapunov MPC while using a nominal model for internal dynamics. We use the Lyapunov loss~(\protect\ref{eq:lyapunov_loss}) for both learning the Lyapunov function and tuning the MPC. This is specified in the $\log(1+x)$ scale.}
    \label{tab:alpha_pendulum}
    % \vskip 0.15in
    \begin{subtable}{0.5\textwidth}
        \caption{Lyapunov Function Learning} \vspace{-0.3cm}
        \begin{center}
        \begin{small}
        \begin{sc}
        \begin{tabular}{P{1cm} P{1.5cm} P{1.5cm} P{2.0cm}}
            \toprule
            Iter. & Loss ($\log(1+x)$) & Verified (\%) & Not Verified (\%) \\
            \midrule
            1 & 3.21 & 13.25 & 0.00 \\
            2 & 1.08 & 13.54 & 0.00 \\
            \bottomrule
        \end{tabular}
        \end{sc}
        \end{small}
        \end{center}
    \end{subtable}
    
    \medskip \medskip
    
    \begin{subtable}{0.5\textwidth}
        \caption{MPC Parameter Tuning} \vspace{-0.3cm}
        \begin{center}
        \begin{small}
        \begin{sc}
        \begin{tabular}{P{1cm} P{1.5cm} P{1.5cm} P{2.0cm}}
            \toprule
            Iter. & \multicolumn{2}{c}{Loss ($\log(1+x)$)} & Parameter \\
            & before & after &  $\alpha^\star$\\
            \midrule
            1 & 3.21 & 2.47 & 1.40 \\
            2 & 1.08 & 1.28 & 1.00 \\
            \bottomrule
        \end{tabular}
        \end{sc}
        \end{small}
        \end{center}
    \end{subtable}
    
    % \vskip -0.1in
\end{table}

\paragraph{Alternate learning without LQR loss}
We now consider the case when the learning is performed while using a contraction factor of $\lambda=0.9$ and without the LQR loss term in the Lyapunov loss (i.e., $v=0$). The results are depicted in~\figref{fig:contraction_factor}. In order to obtain these results, the Lyapunov NN scaling $\beta$, in Equation (\ref{eq:lyap_definition2}), was initialized with:
\begin{equation*}
    \beta_0=\text{softplus}^{-1}(25),
\end{equation*}
according to a rough estimate of the minimum scaling $\alpha$ from~\eqtref{eq:bound_alpha1}. This was able to produce a Lyapunov function that makes the MPC safe with $\alpha=12$. However, the learning becomes more difficult, and it results in a smaller safe region estimate with a slower convergence rate for the system trajectories.  

\paragraph{Effects of the trust region}
In~\figref{fig:trust_region}, we show the result of varying the trust radius of the SQP solver on the inverted pendulum. While a larger value can result in further approximation, given the limited number of iterations, in this case a small value of the radius results in a weaker control signal and local instability near the origin.

\paragraph{Formal verification:} These procedures are expensive but necessary, as pathological cases could result in the training loss (\ref{eq:lyapunov_loss}) for which the safe set could be  converging to a local minima with a very small set.  Results can also vary where different random seeds are used. For these reasons, during training we also an informal verification over a fixed validation set and use this to select the best $V$ from all training epochs. In practical applications we reccommend to use formal methods. 

% \clearpage

\begin{figure*}[t]
    \captionsetup[subfigure]{justification=centering}
    \centering
    
    \begin{subfigure}[b]{0.32\textwidth}
    \centering
        \includegraphics[scale=0.3]{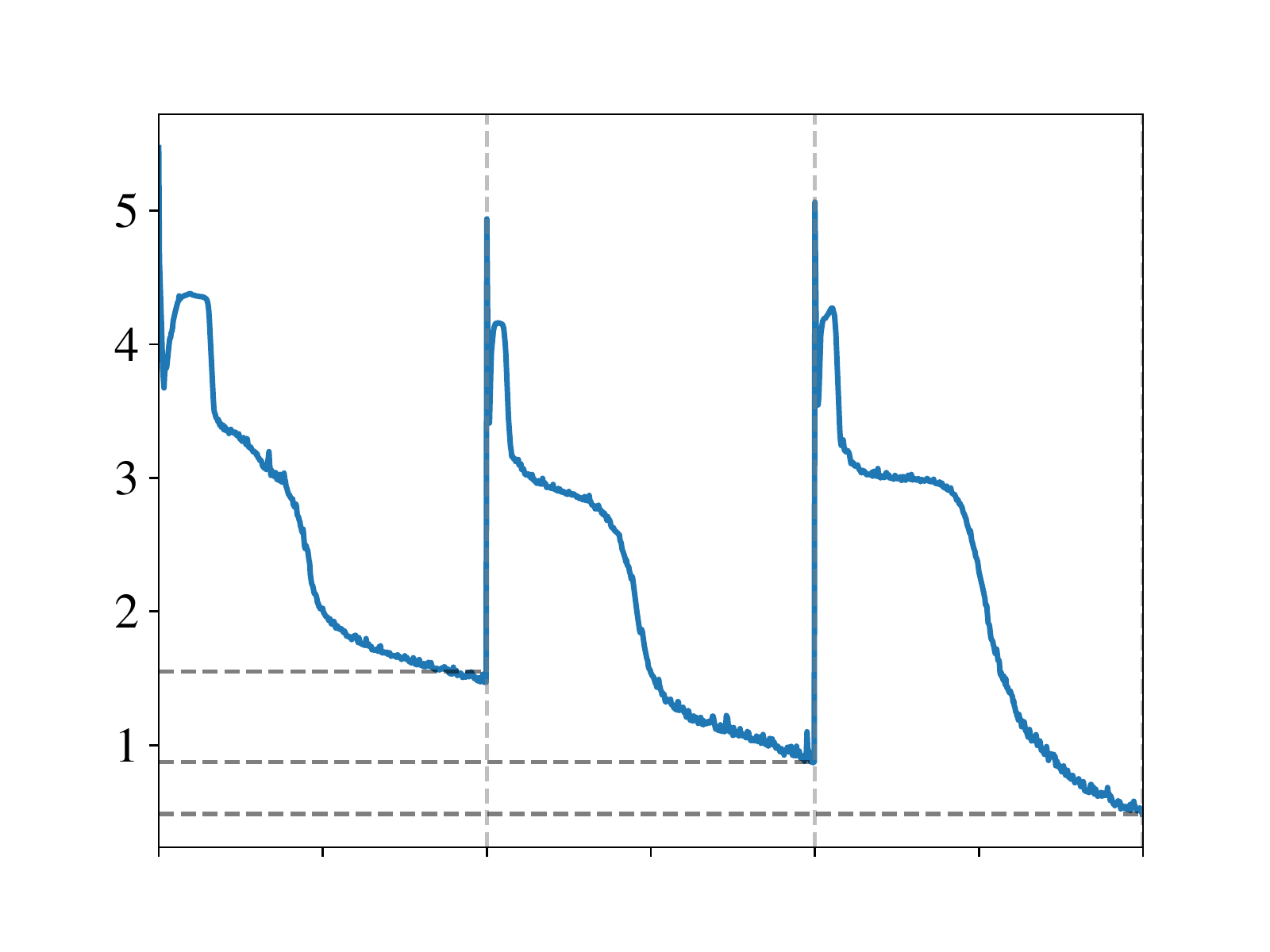}\vspace{-0.1cm}
        \caption{Lyapunov Loss ($\log(1+x)$)}
    \end{subfigure}
    \begin{subfigure}[b]{0.32\textwidth}
        \centering
        \includegraphics[scale=0.3]{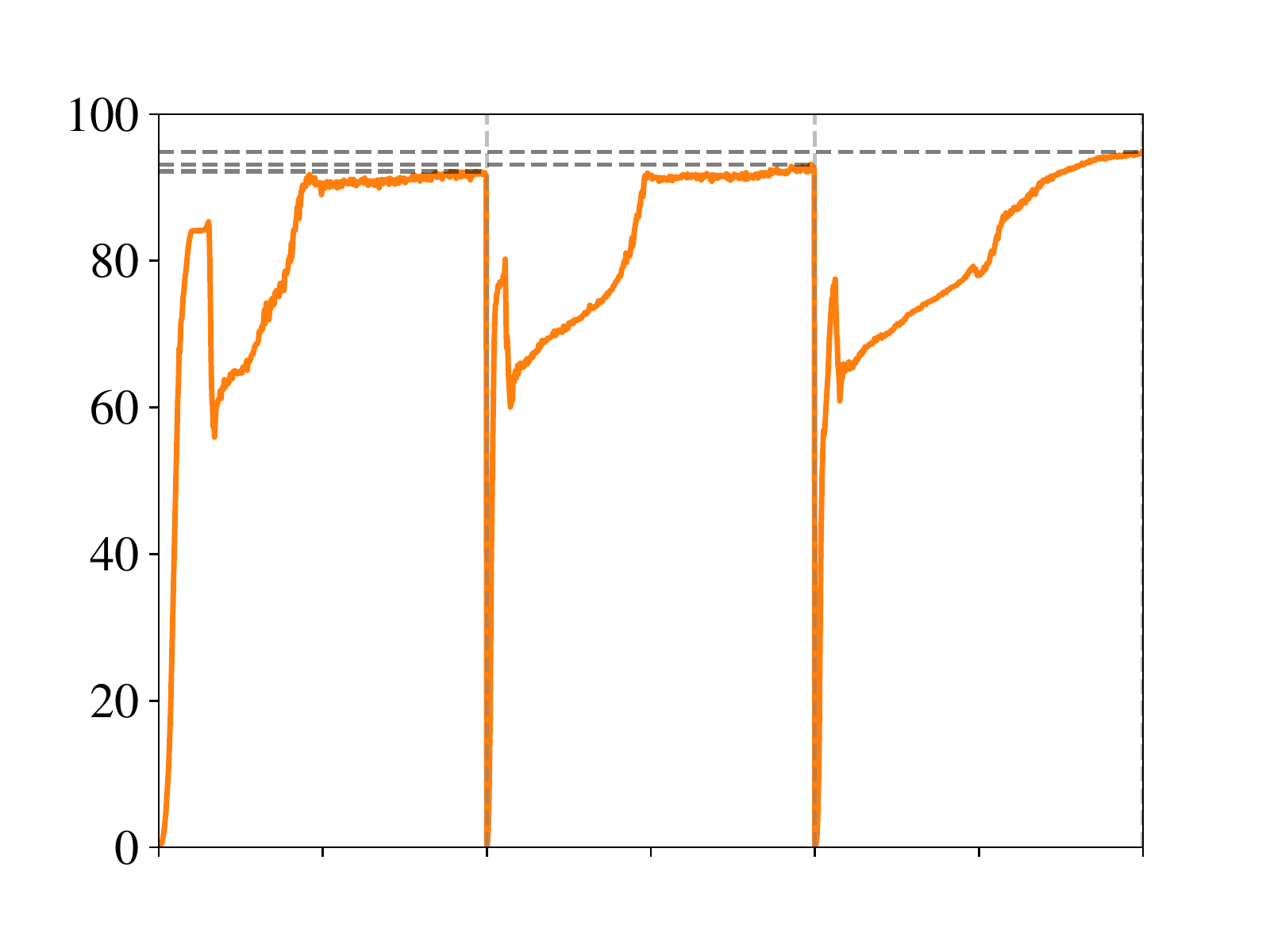}\vspace{-0.1cm}
        \caption{Verified Points (\%)}
    \end{subfigure}
    \begin{subfigure}[b]{0.32\textwidth}
        \centering
        \includegraphics[scale=0.3]{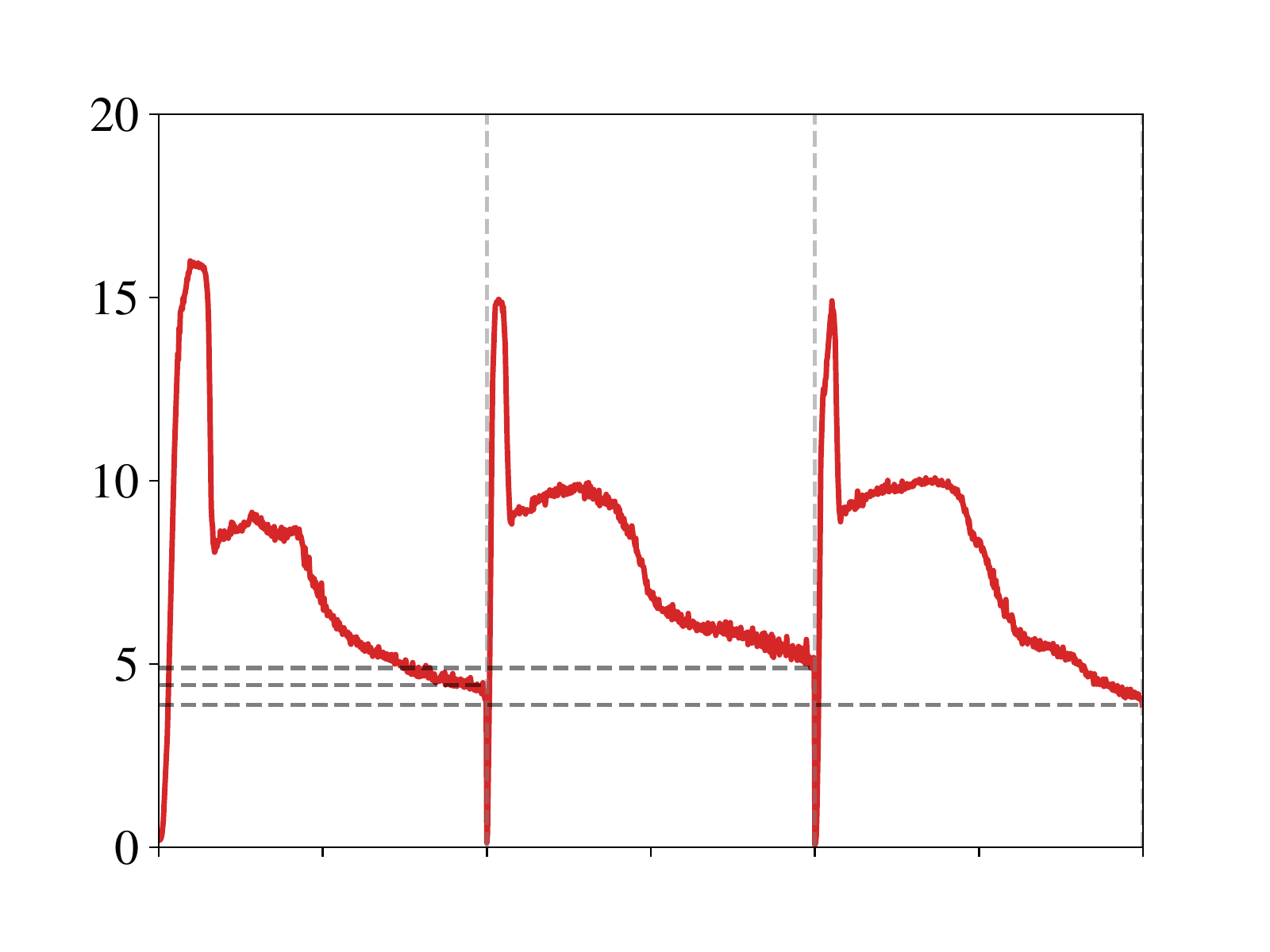}\vspace{-0.1cm}
        \caption{Not Verified Points (\%)}
    \end{subfigure}
    
    \medskip    \medskip \medskip

    \begin{subfigure}[b]{0.32\textwidth}
        \centering
        \includegraphics[trim={52 35 60 40}, clip, scale=0.32]{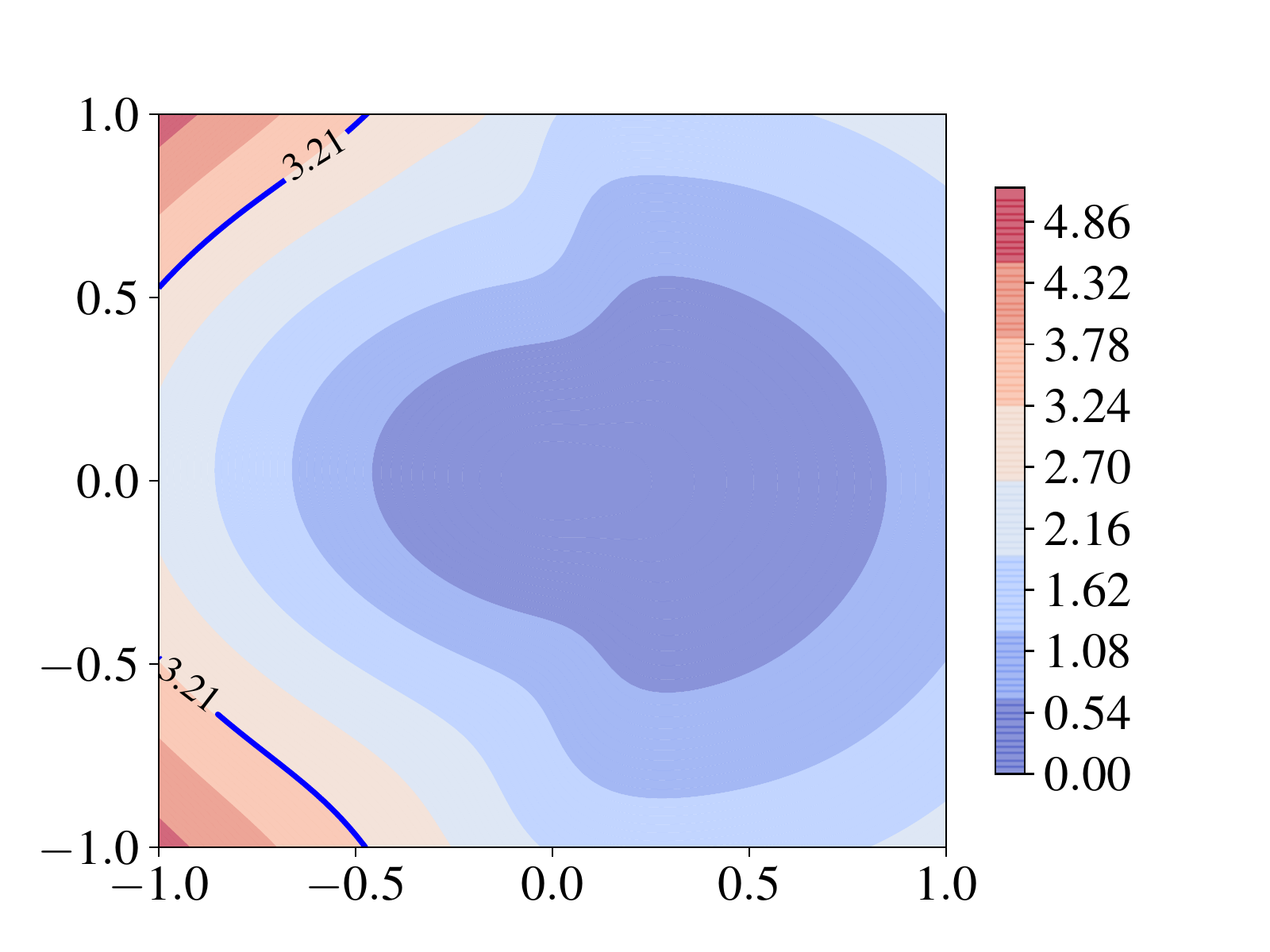}\vspace{-0.1cm}
    \end{subfigure}
    \begin{subfigure}[b]{0.32\textwidth}
    \centering
        \includegraphics[trim={52 35 60 40}, clip, scale=0.32]{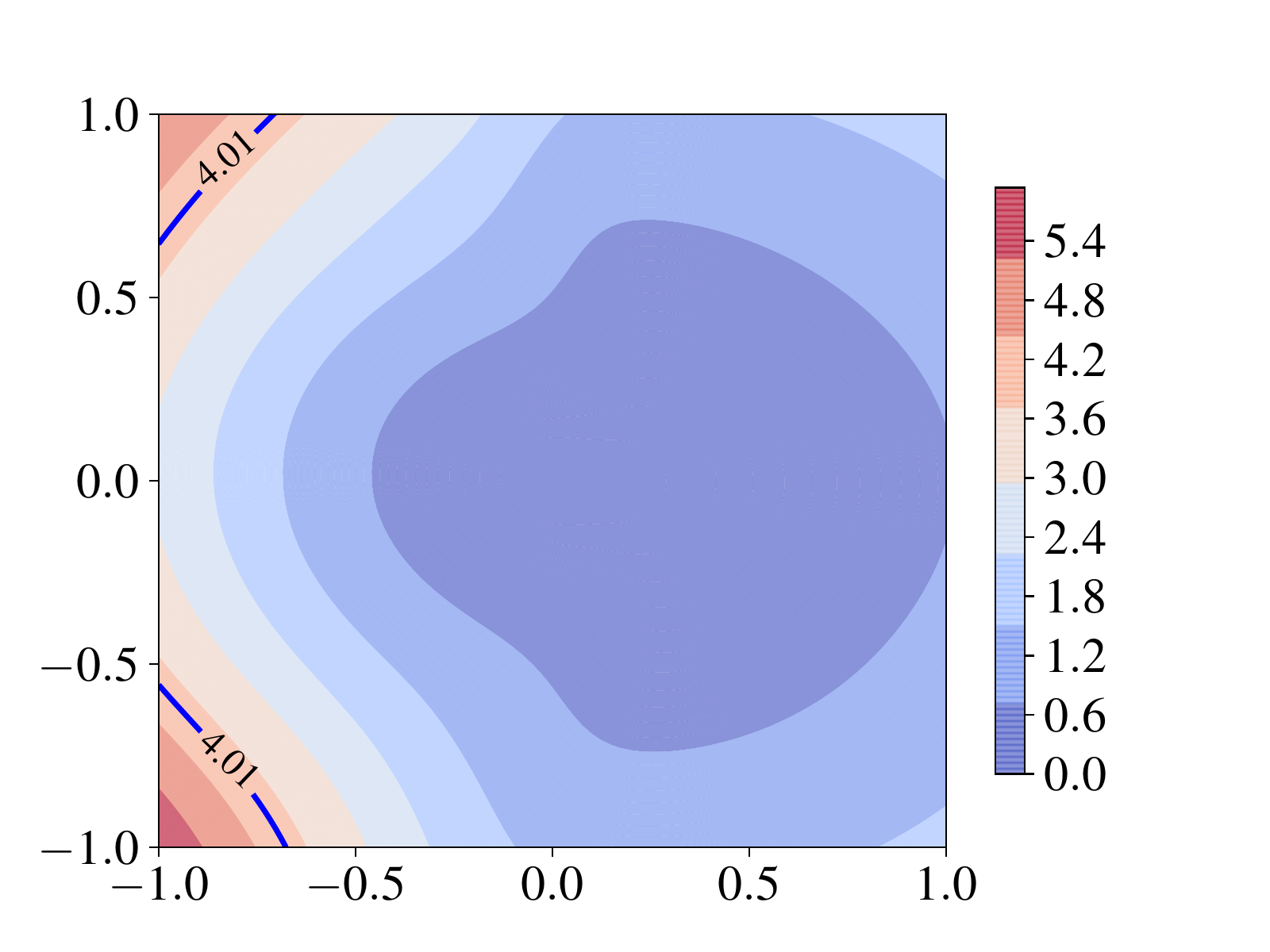}\vspace{-0.1cm}
    \end{subfigure}
    \begin{subfigure}[b]{0.32\textwidth}
    \centering
        \includegraphics[trim={52 35 60 40}, clip, scale=0.32]{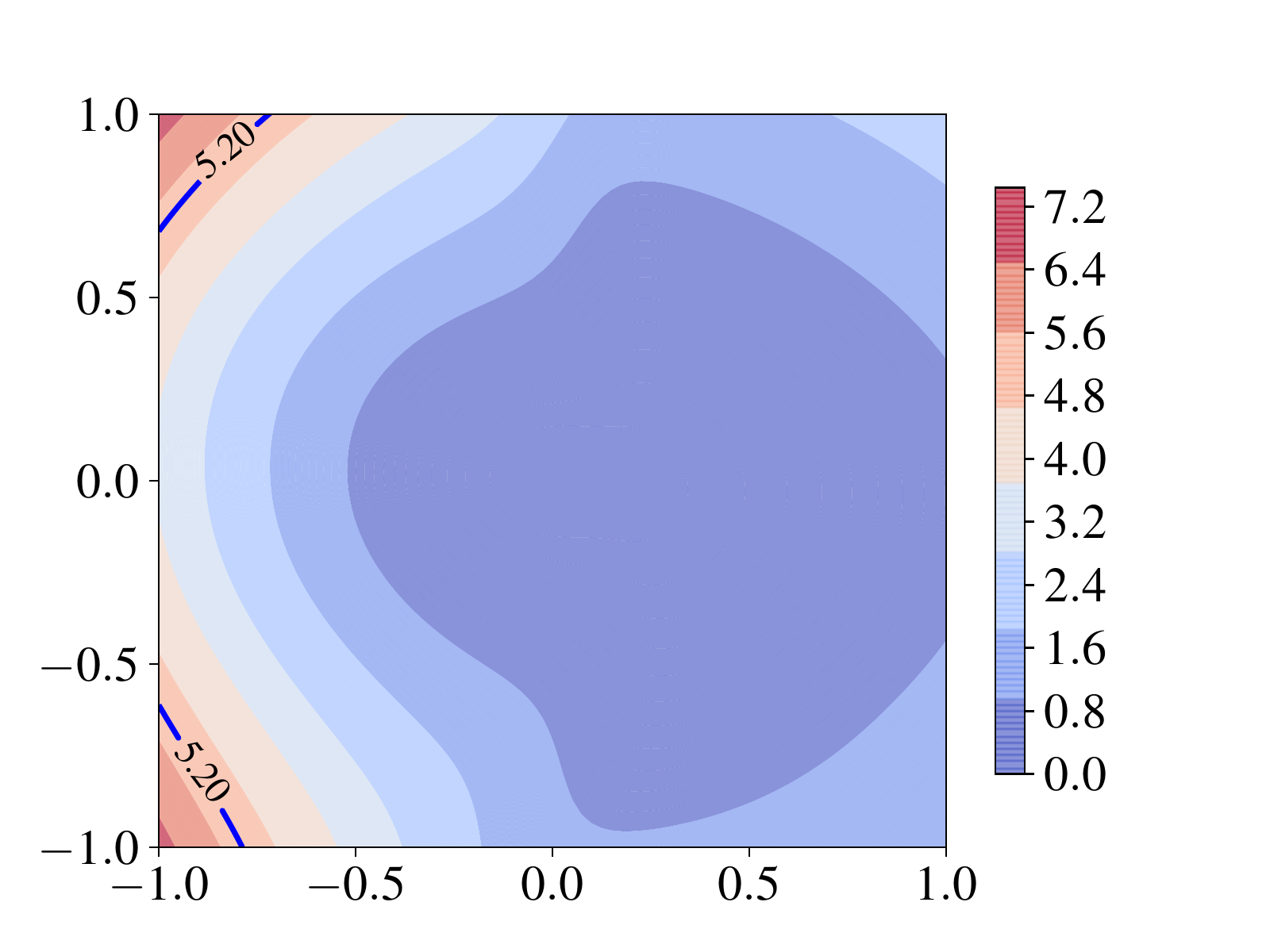}\vspace{-0.1cm}
    \end{subfigure}
    
    \medskip
    
     \begin{subfigure}[b]{0.32\textwidth}
        \centering
        \includegraphics[trim={15 10 10 30}, clip, scale=0.3]{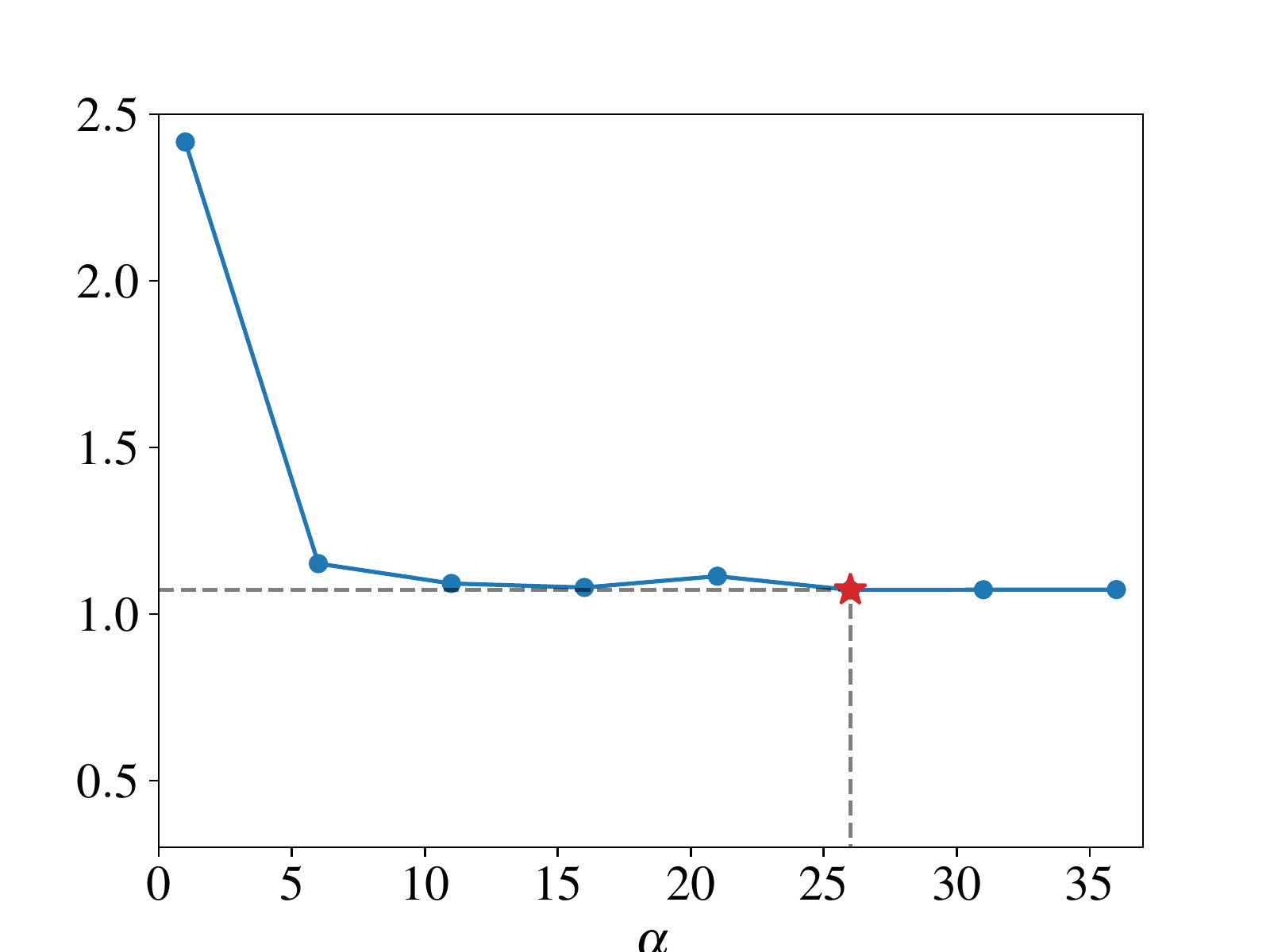}\vspace{-0.1cm}
        \caption*{Iteration 1}
    \end{subfigure}
    \begin{subfigure}[b]{0.32\textwidth}
    \centering
        \includegraphics[trim={52 10 10 30}, clip, scale=0.3]{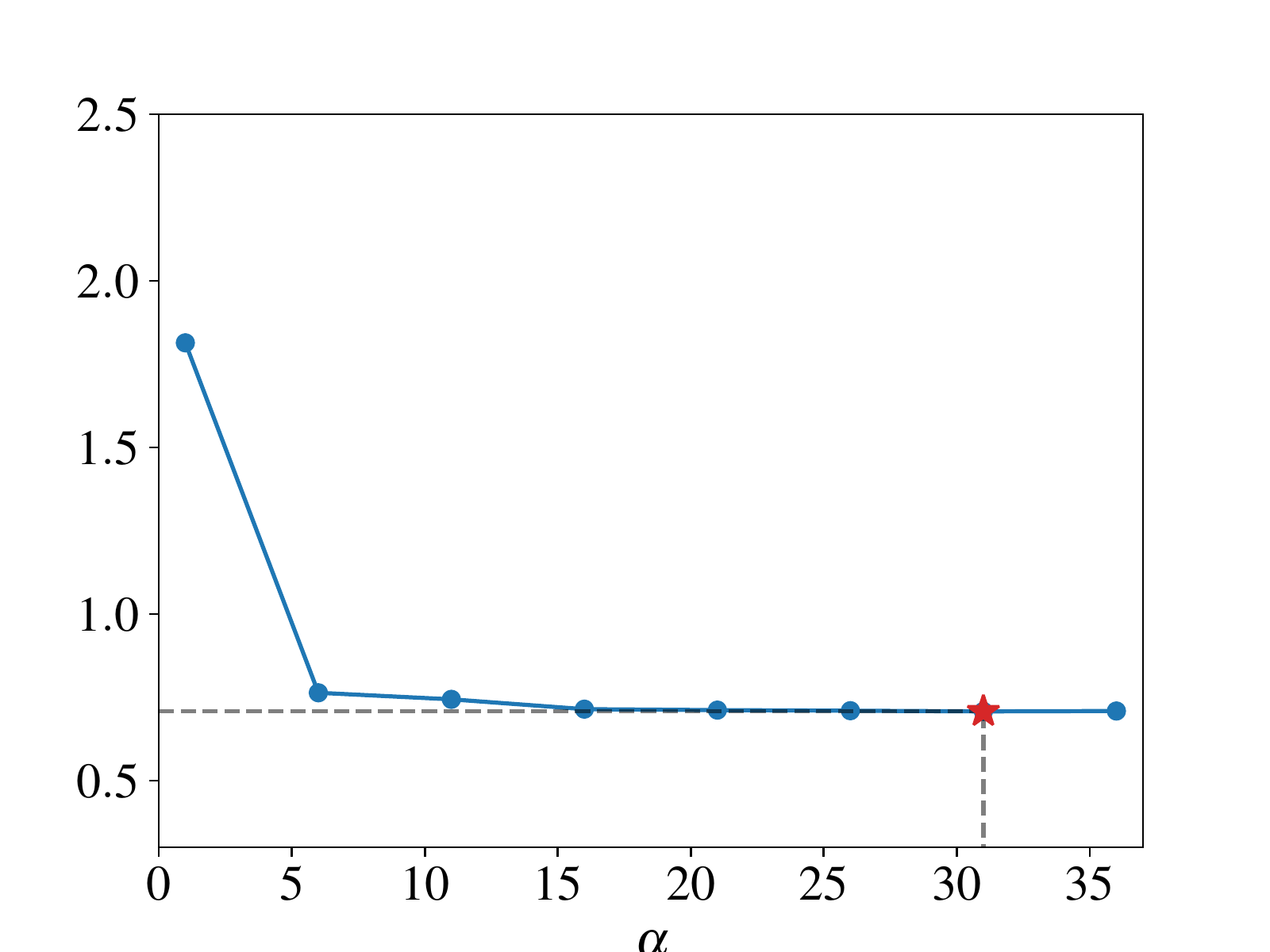}\vspace{-0.1cm}
        \caption*{Iteration 2}
    \end{subfigure}
    \begin{subfigure}[b]{0.32\textwidth}
    \centering
        \includegraphics[trim={52 10 10 30}, clip, scale=0.3]{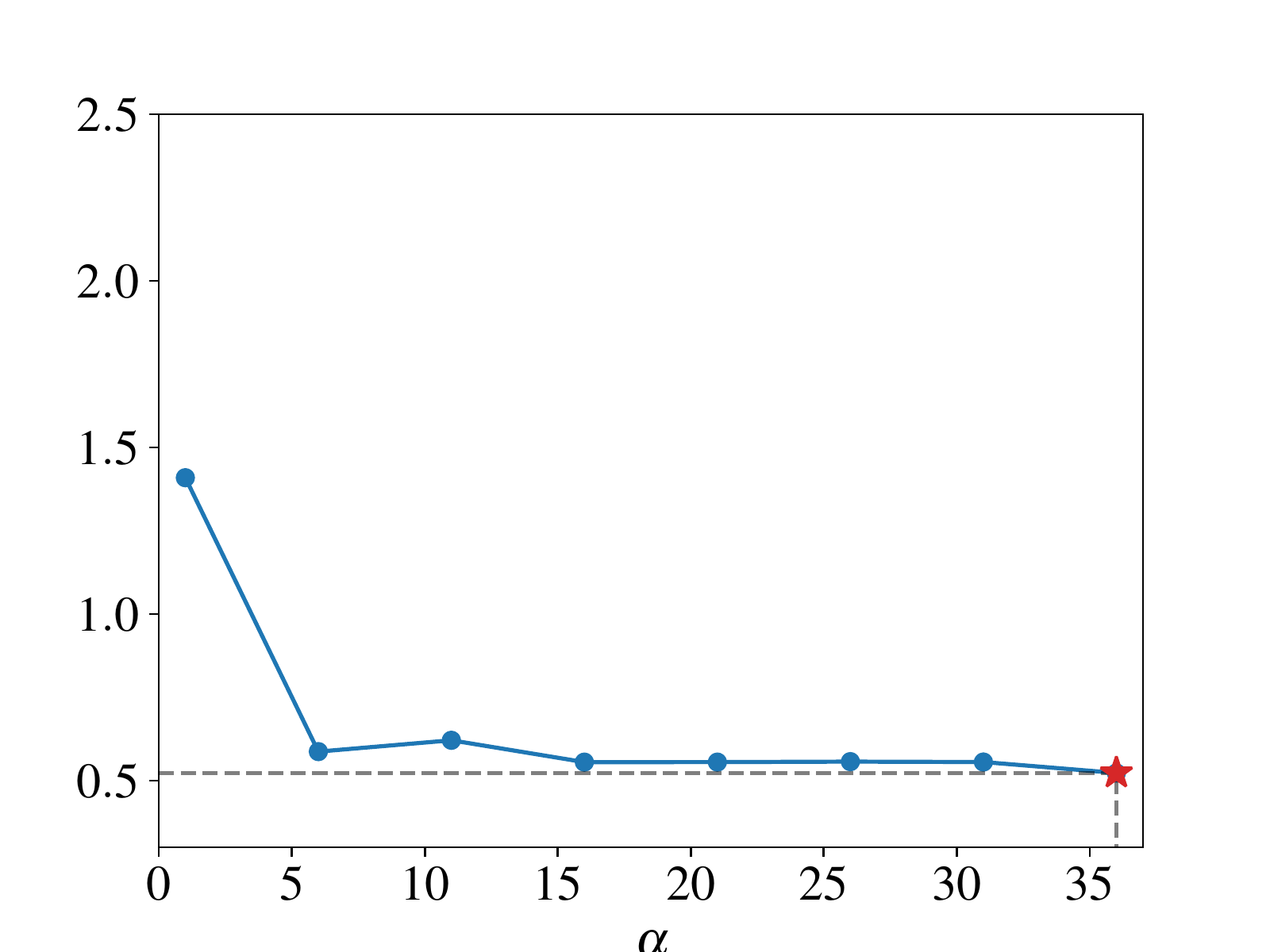}\vspace{-0.1cm}
        \caption*{Iteration 3 (best)}
    \end{subfigure}
    
    \caption{\textbf{Car kinematics: Alternate learning on nominal model.} After every $N_V=500$ epochs of Lyapunov learning, the learned Lyapunov function is used to tune the MPC parameters. \textbf{Top:} The training curves for Lyapunov function. Vertical lines separate iterations.  \textbf{Middle:}  The resulting Lyapunov function $V$ at $\phi=0$ with the best performance. \textbf{Bottom:} Line-search for the MPC parameter $\alpha$ to minimize the Lyapunov loss~(\protect\ref{eq:lyapunov_loss}) with $V$ as terminal cost. The loss is plotted on the y-axis in a $\log(1+x)$ scale. The point marked in red is the parameter which minimizes the loss.}
    \label{fig:car_alternate_learning_nominal_complete}
\end{figure*}

\begin{figure*}[t]
    \captionsetup[subfigure]{justification=centering}
    \centering
    
    \begin{subfigure}[b]{0.32\textwidth}
        \centering
        \includegraphics[trim={15 35 60 40}, clip, scale=0.32]{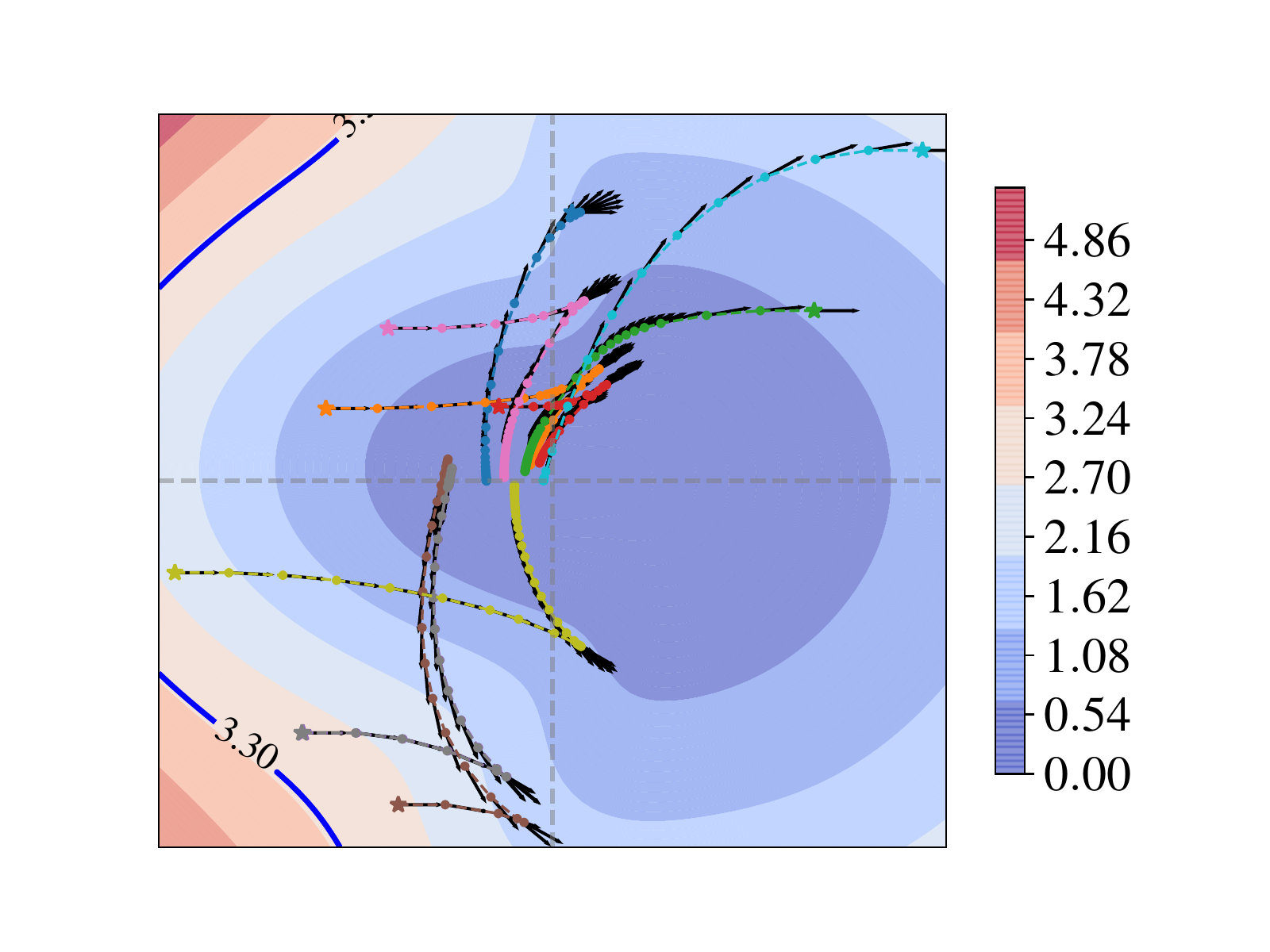}\vspace{-0.1cm}
    \end{subfigure}
    \begin{subfigure}[b]{0.32\textwidth}
    \centering
        \includegraphics[trim={15 35 60 40}, clip, scale=0.32]{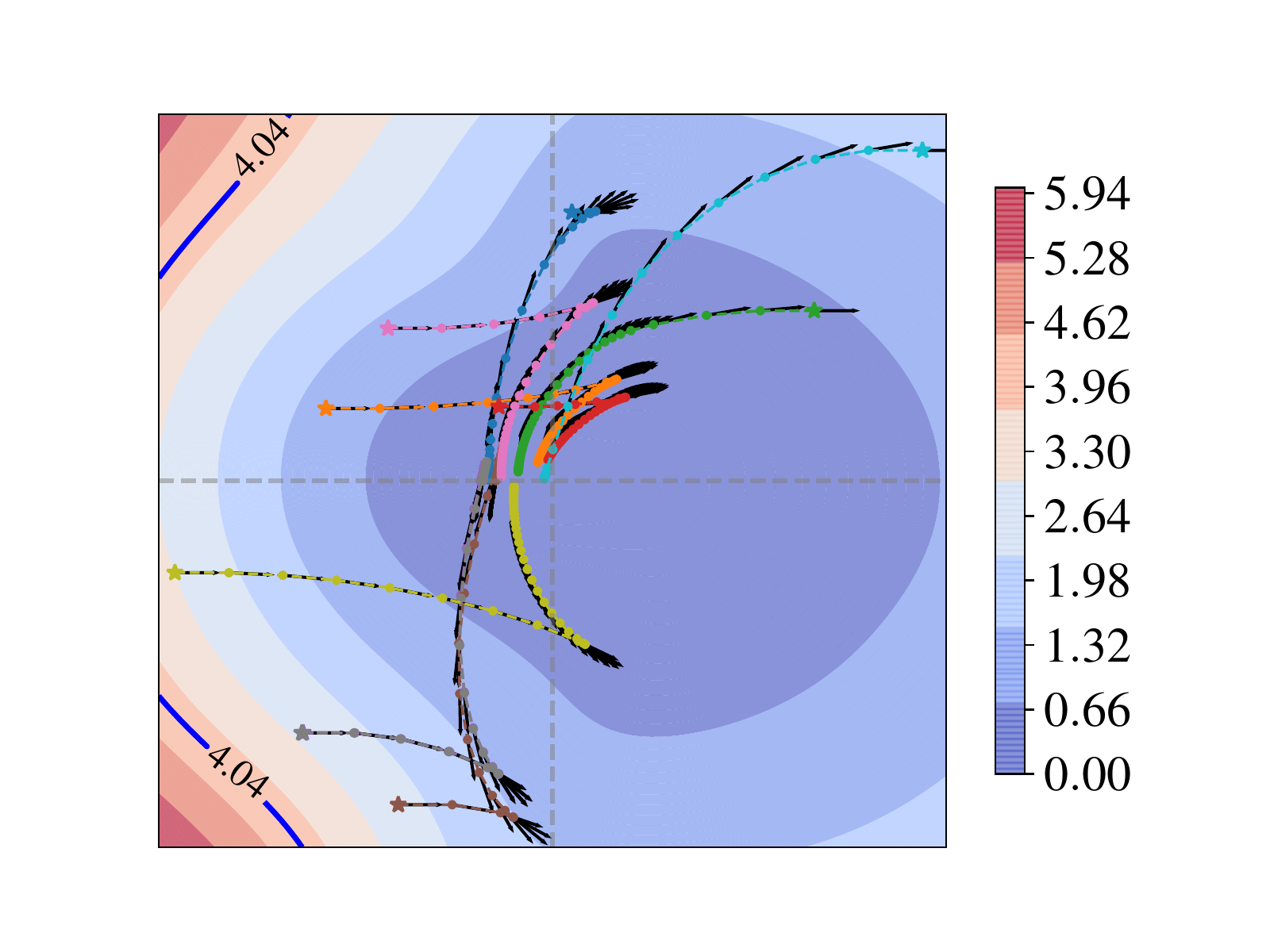}\vspace{-0.1cm}
    \end{subfigure}
    \begin{subfigure}[b]{0.32\textwidth}
    \centering
        \includegraphics[trim={15 35 60 40}, clip, scale=0.32]{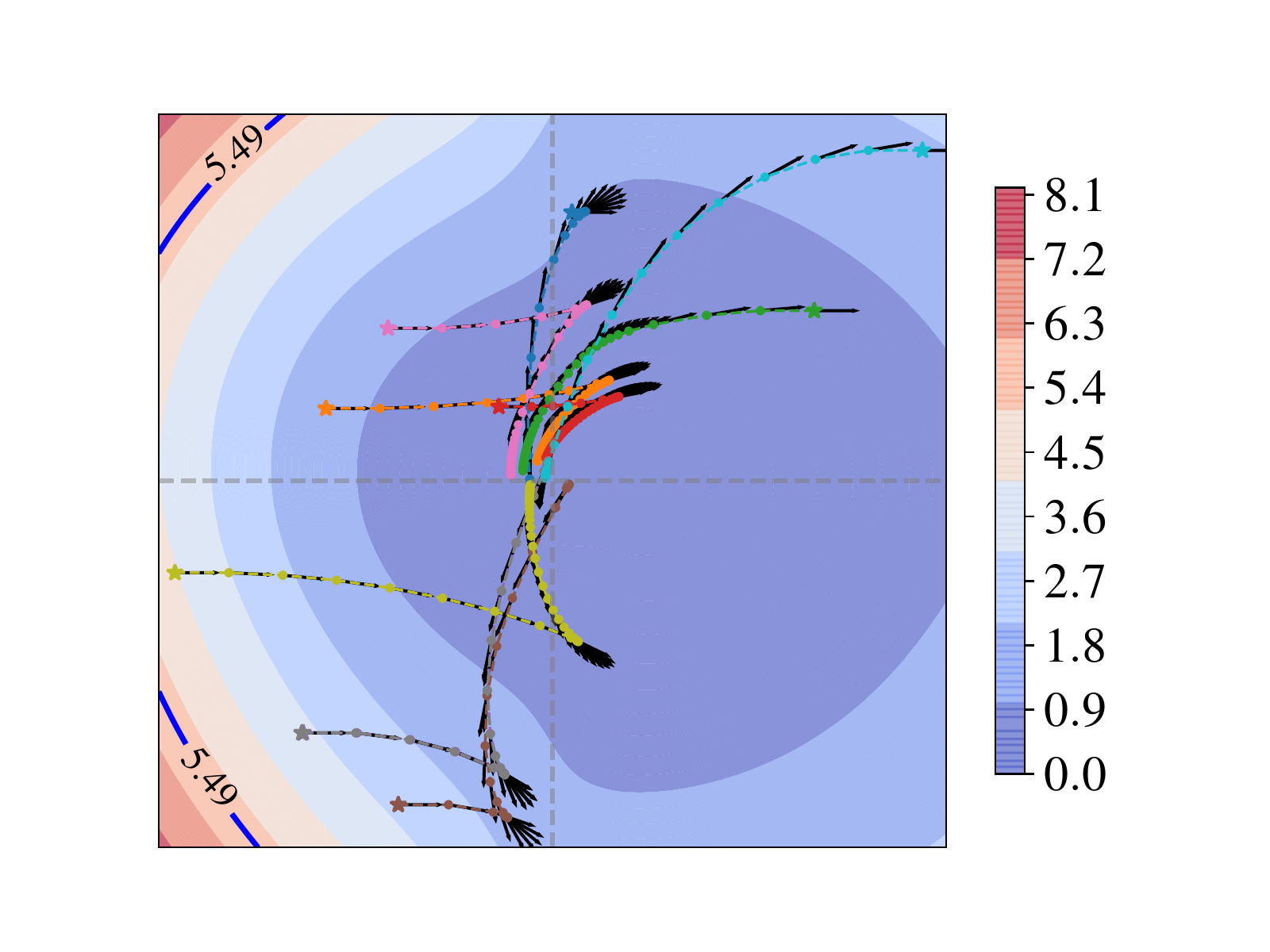}\vspace{-0.1cm}
    \end{subfigure}
    
    \medskip
    
    \begin{subfigure}[b]{0.32\textwidth}
        \centering
        \includegraphics[trim={15 25 30 30}, clip, scale=0.32]{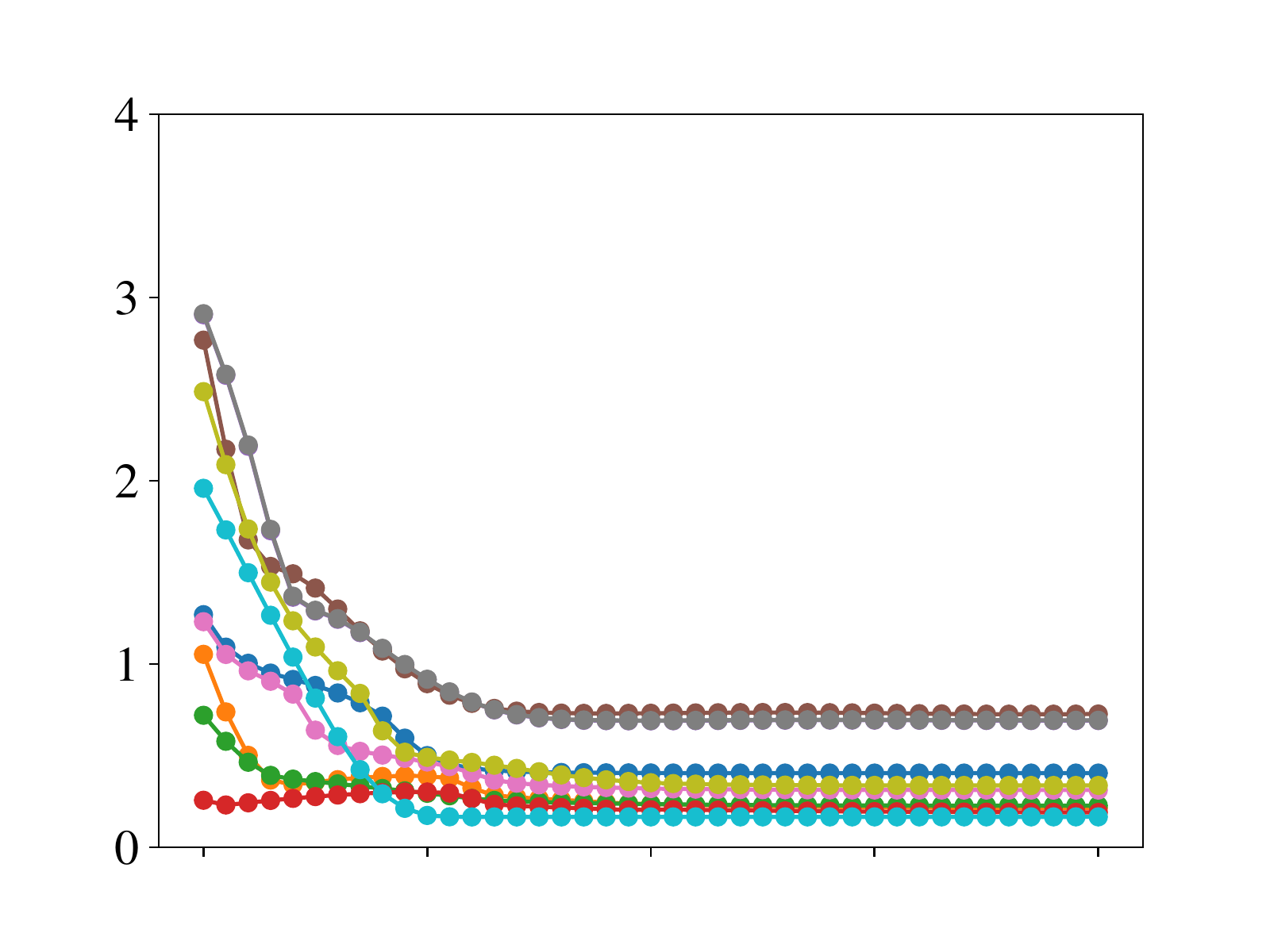}\vspace{-0.1cm}
    \end{subfigure}
    \begin{subfigure}[b]{0.32\textwidth}
    \centering
        \includegraphics[trim={52 25 30 30}, clip, scale=0.32]{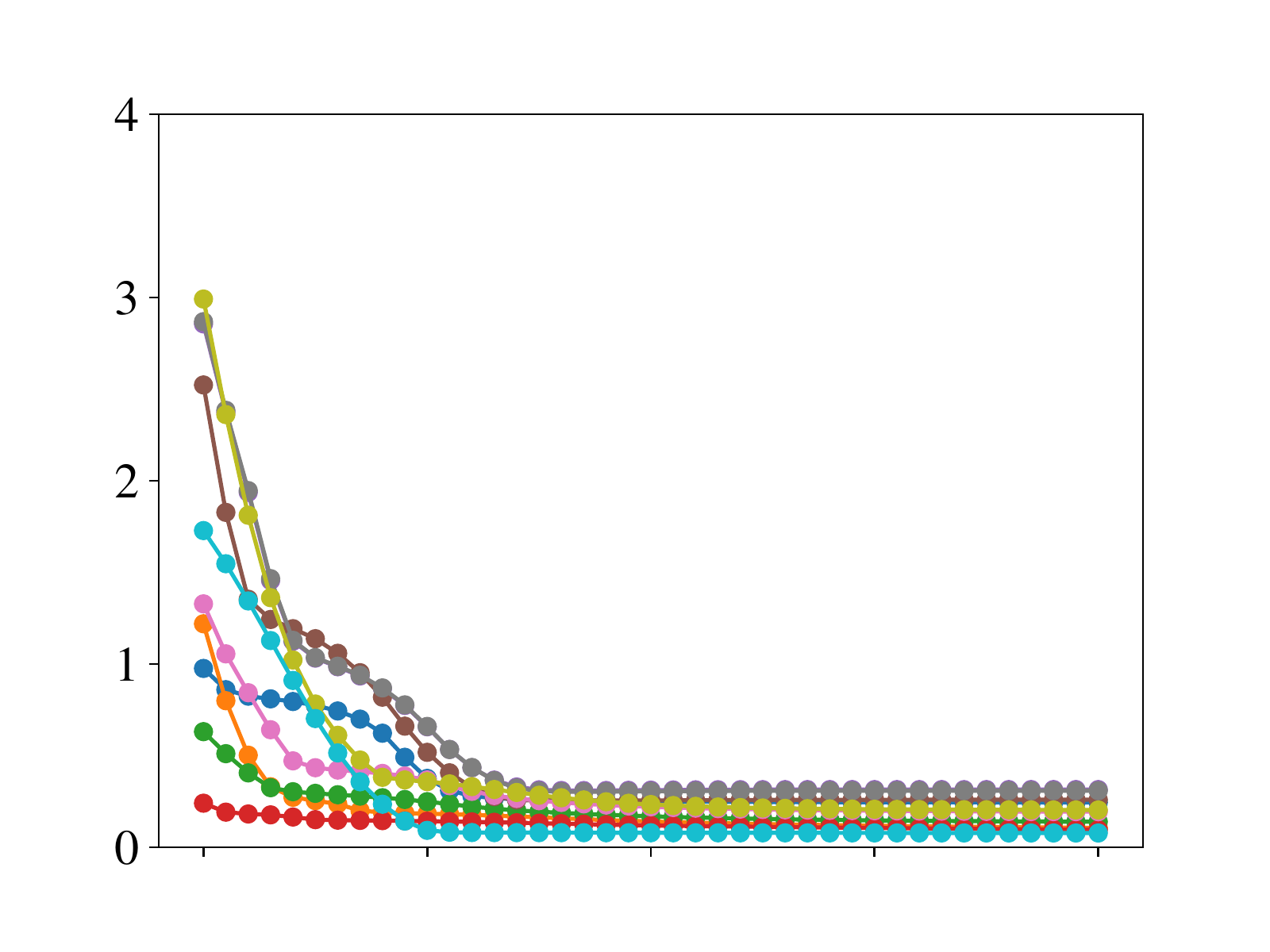}\vspace{-0.1cm}
    \end{subfigure}
    \begin{subfigure}[b]{0.32\textwidth}
    \centering
        \includegraphics[trim={52 25 30 30}, clip, scale=0.32]{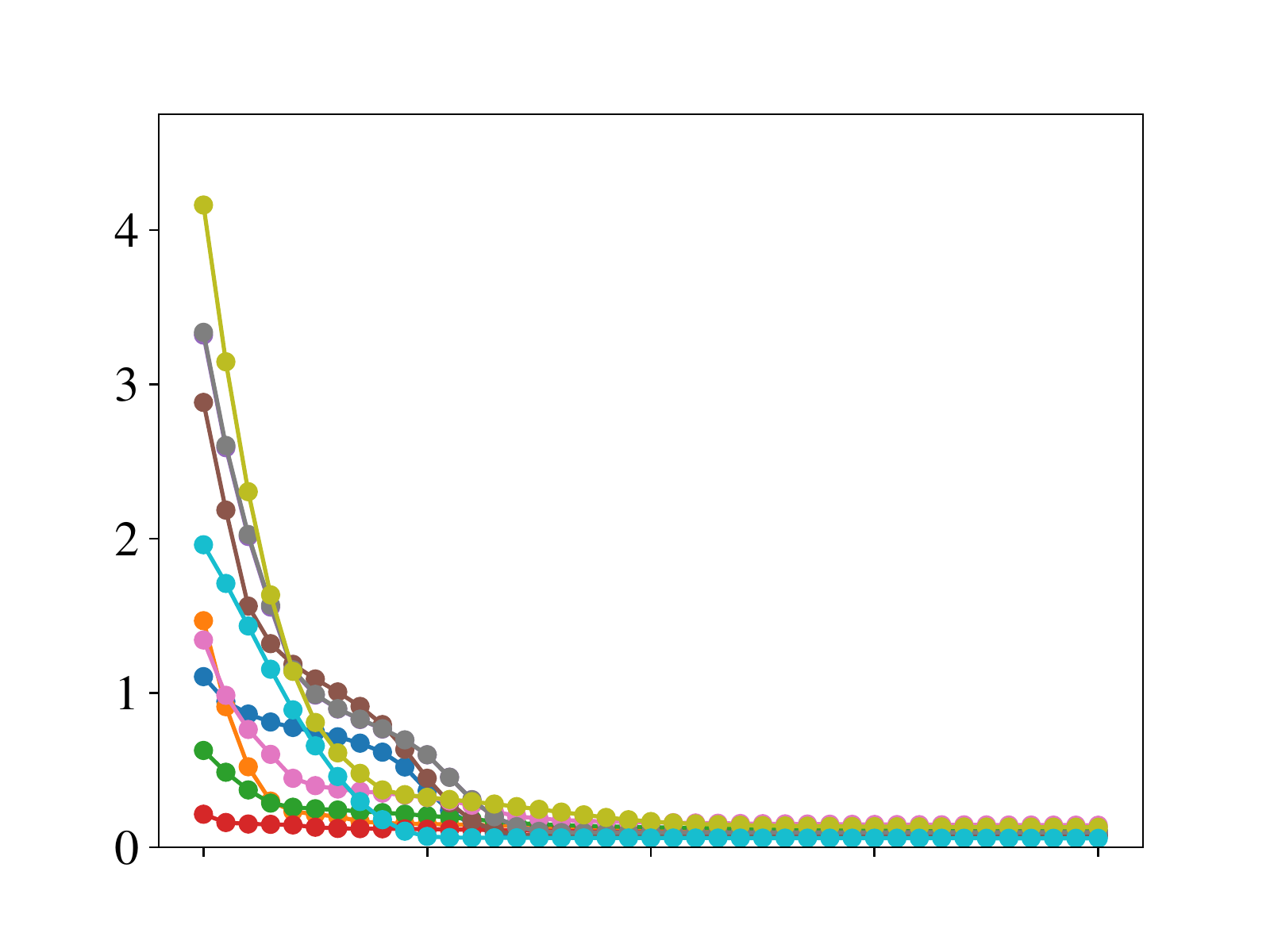}\vspace{-0.1cm}
    \end{subfigure}
    
    \medskip
    
     \begin{subfigure}[b]{0.32\textwidth}
    \centering
        \includegraphics[trim={15 25 30 30}, clip, scale=0.32]{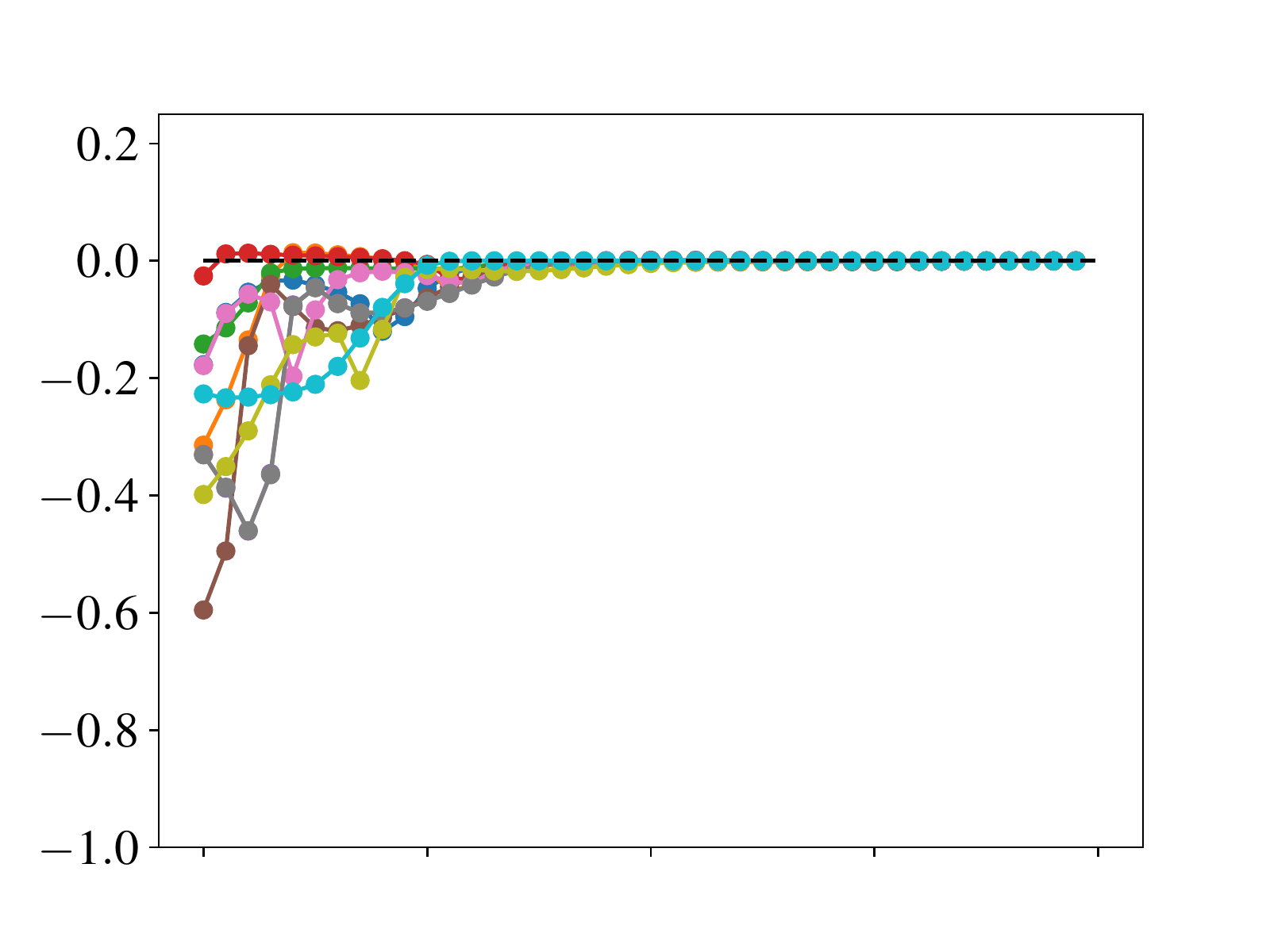}\vspace{-0.1cm}
        \caption*{Iteration 1}
    \end{subfigure}
    \begin{subfigure}[b]{0.32\textwidth}
    \centering
        \includegraphics[trim={52 25 30 30}, clip, scale=0.32]{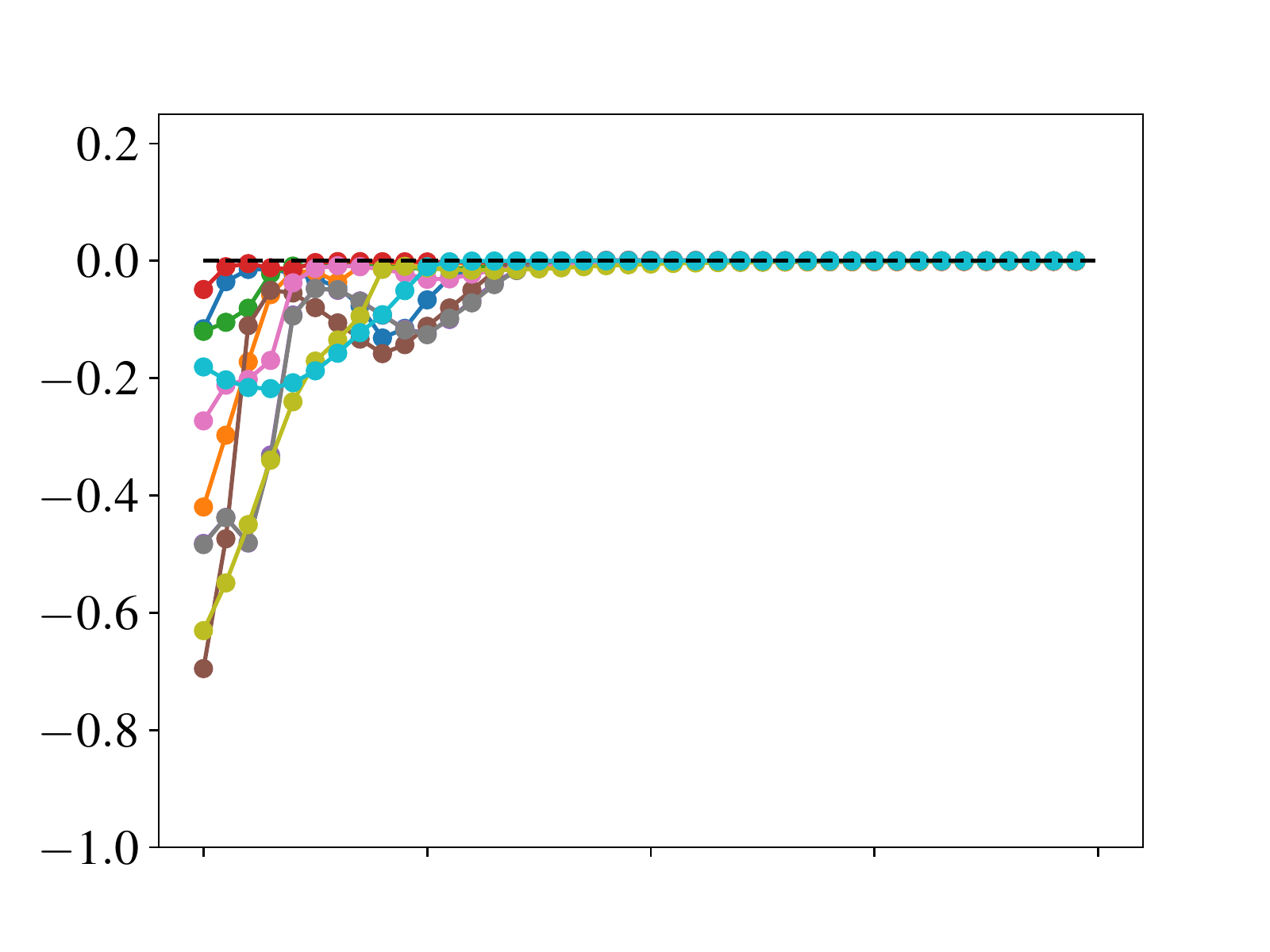}\vspace{-0.1cm}
        \caption*{Iteration 2}
    \end{subfigure}
    \begin{subfigure}[b]{0.32\textwidth}
        \centering
        \includegraphics[trim={52 25 30 30}, clip, scale=0.32]{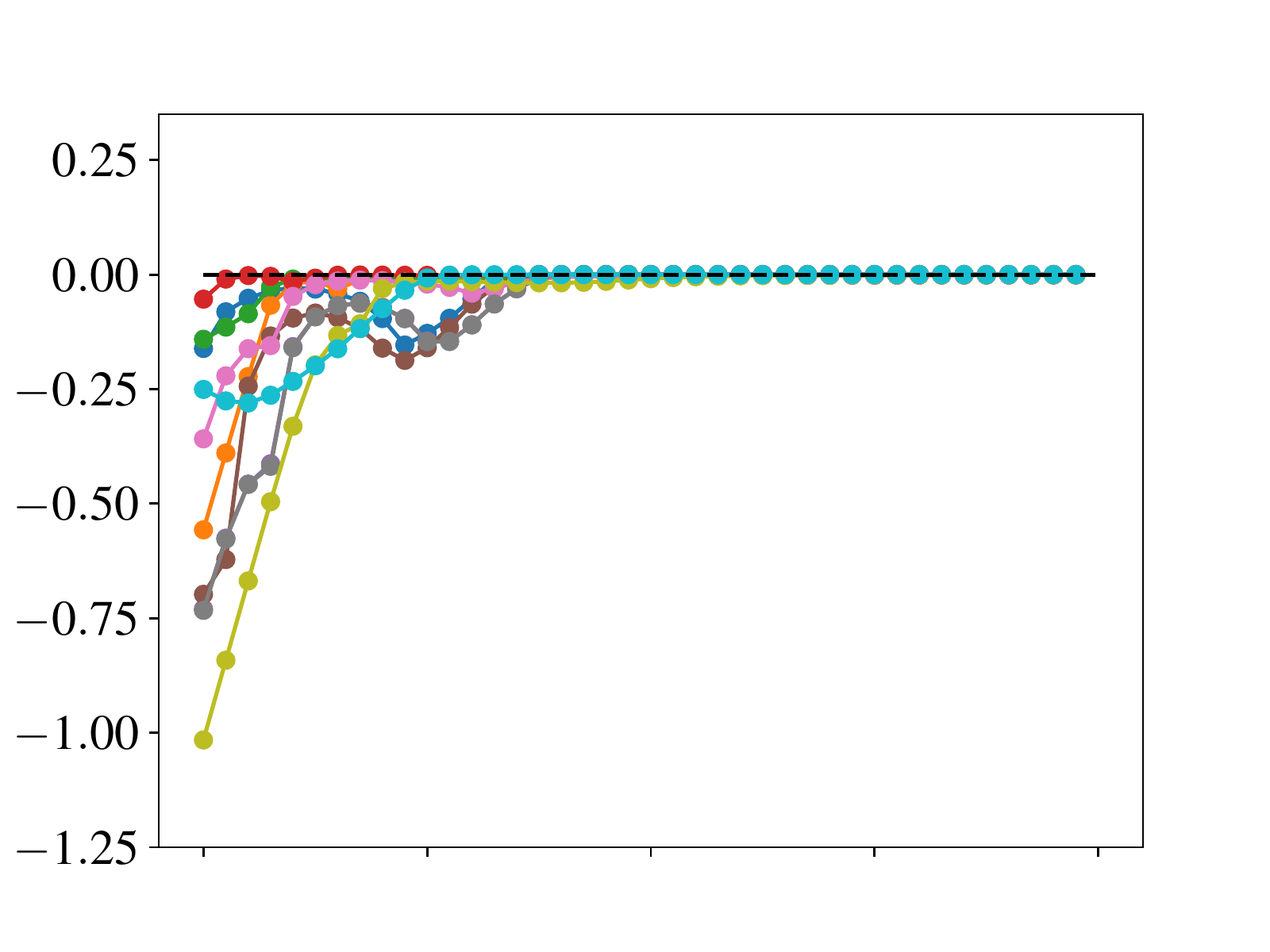}\vspace{-0.1cm}
        \caption*{Iteration 3 (best)}
    \end{subfigure}
    
    \caption{{\bf Car kinematics: Testing Neural Lyapunov MPC obtained from training on nominal model.} For each iteration, we show the trajectories obtained through our Neural Lyapunov MPC while using the resulting Lyapunov function and the MPC parameter selected from the line-search. {\bf Top}: The Lyapunov function at $\phi=0$ with trajectories for $40$ steps at each iteration. {\bf Middle}: The evaluated Lyapunov function. {\bf Bottom}: The Lyapunov function time difference.}
    \label{fig:car_alternate_learning_nominal_trajectories}
\end{figure*}

\begin{figure*}[t]
    \captionsetup[subfigure]{justification=centering}
    \centering
    
    \begin{subfigure}[b]{0.32\textwidth}
        \centering
        \includegraphics[trim={15 35 30 40}, clip, scale=0.32]{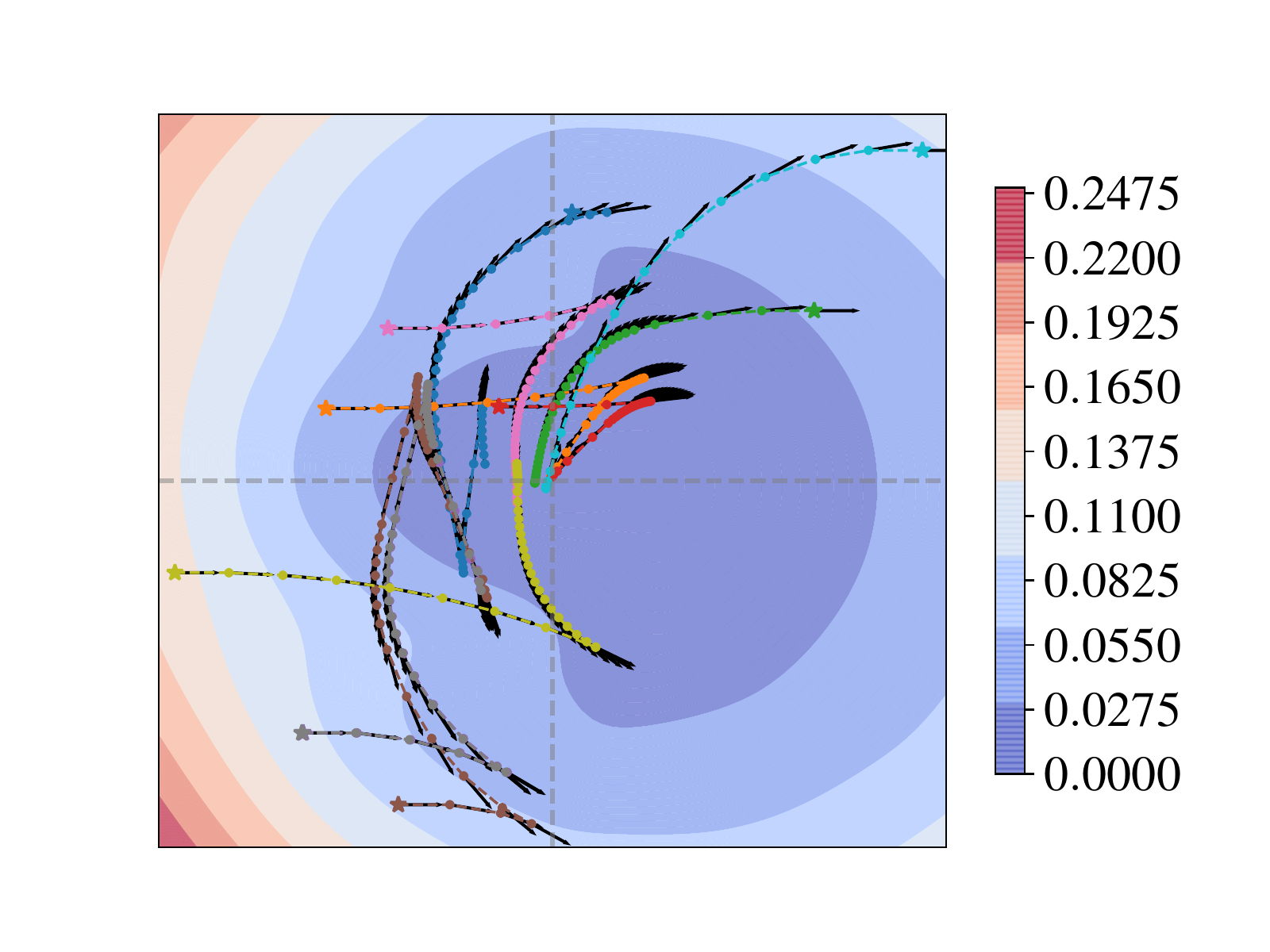}\vspace{-0.1cm}
    \end{subfigure}
    \begin{subfigure}[b]{0.32\textwidth}
    \centering
        \includegraphics[trim={15 35 30 40}, clip, scale=0.32]{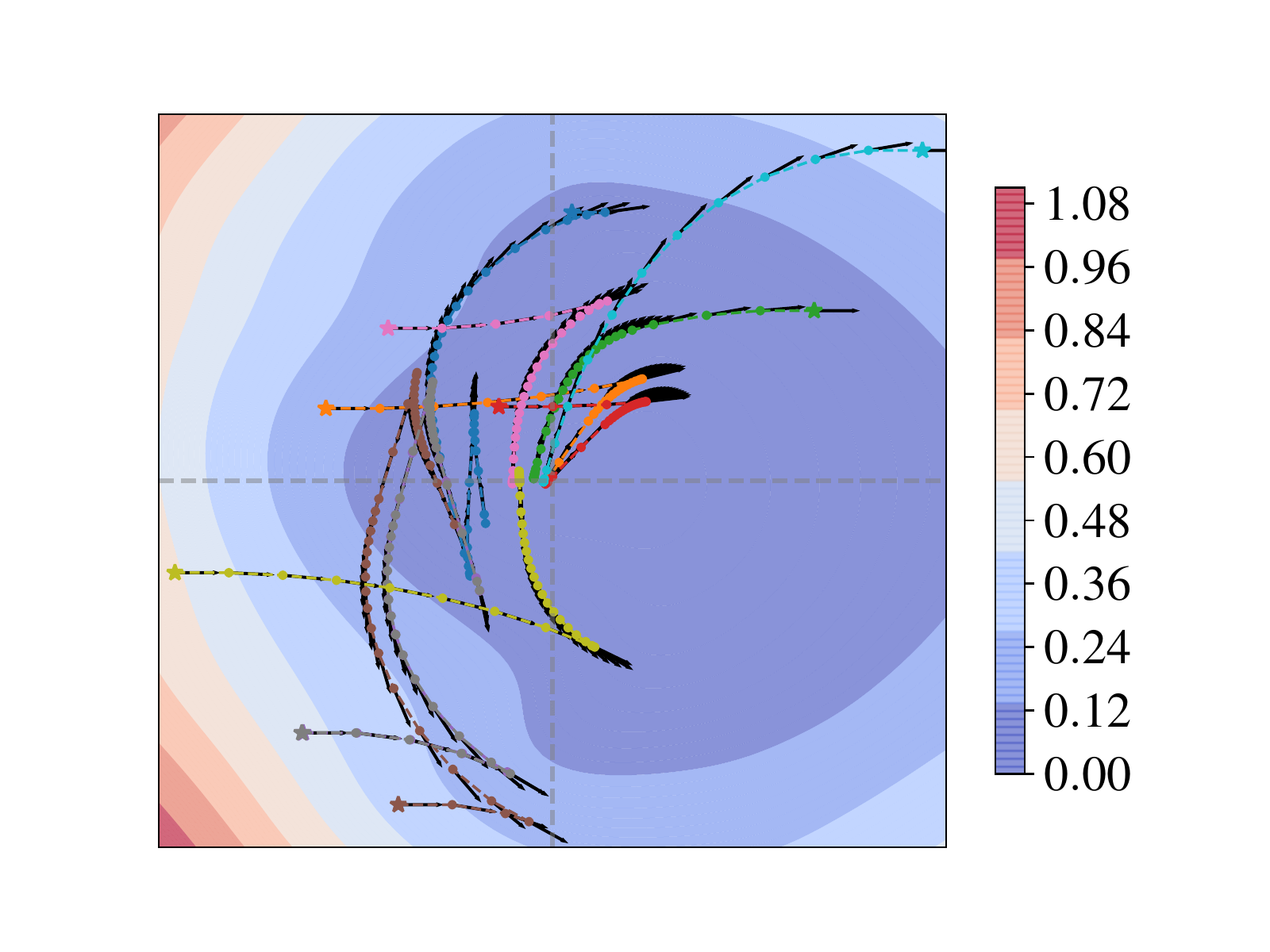}\vspace{-0.1cm}
    \end{subfigure}
    \begin{subfigure}[b]{0.32\textwidth}
    \centering
        \includegraphics[trim={15 35 30 40}, clip, scale=0.32]{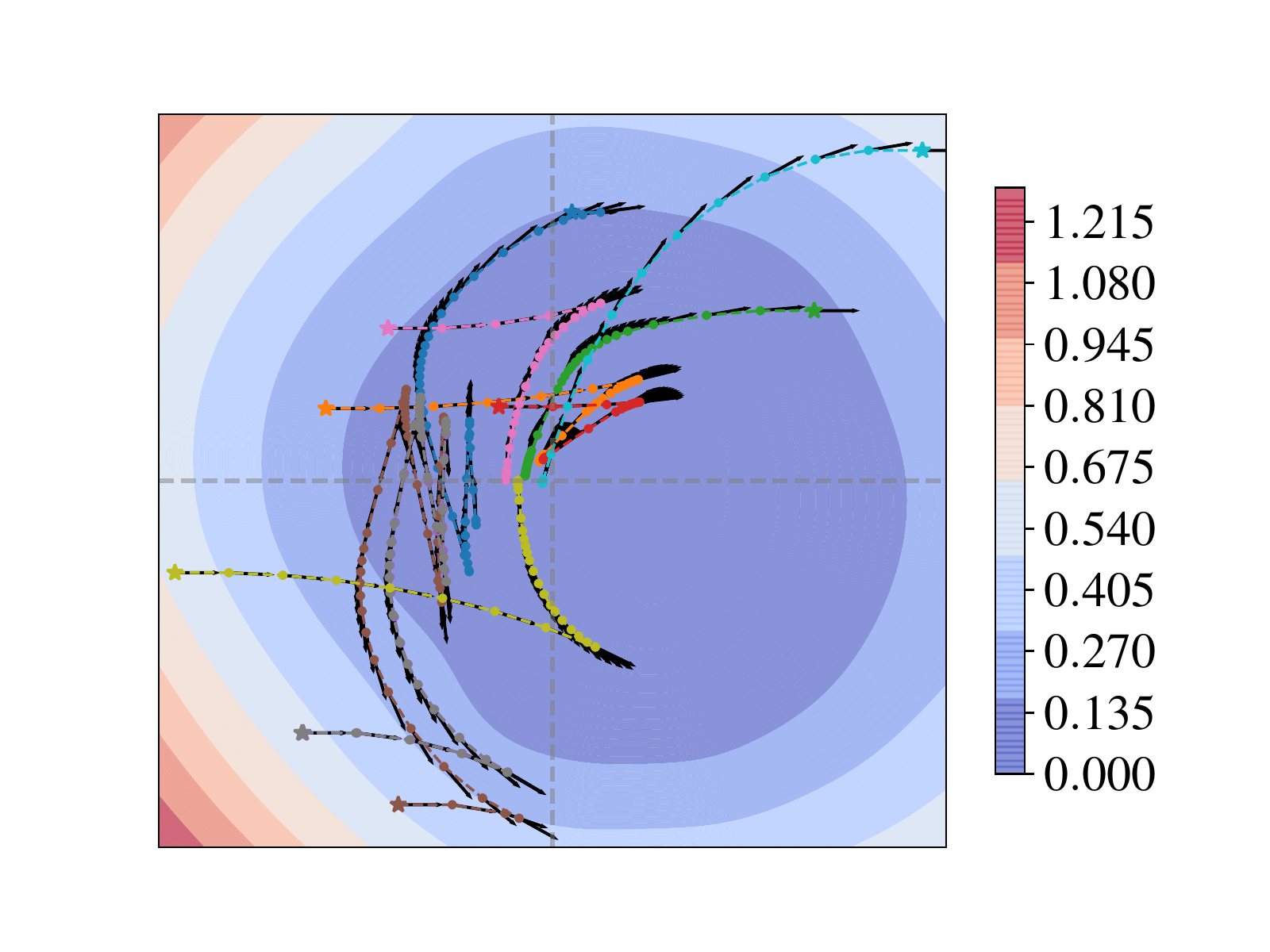}\vspace{-0.1cm}
    \end{subfigure}
    
    \medskip
    
    \begin{subfigure}[b]{0.32\textwidth}
        \centering
        \includegraphics[trim={15 25 30 30}, clip, scale=0.32]{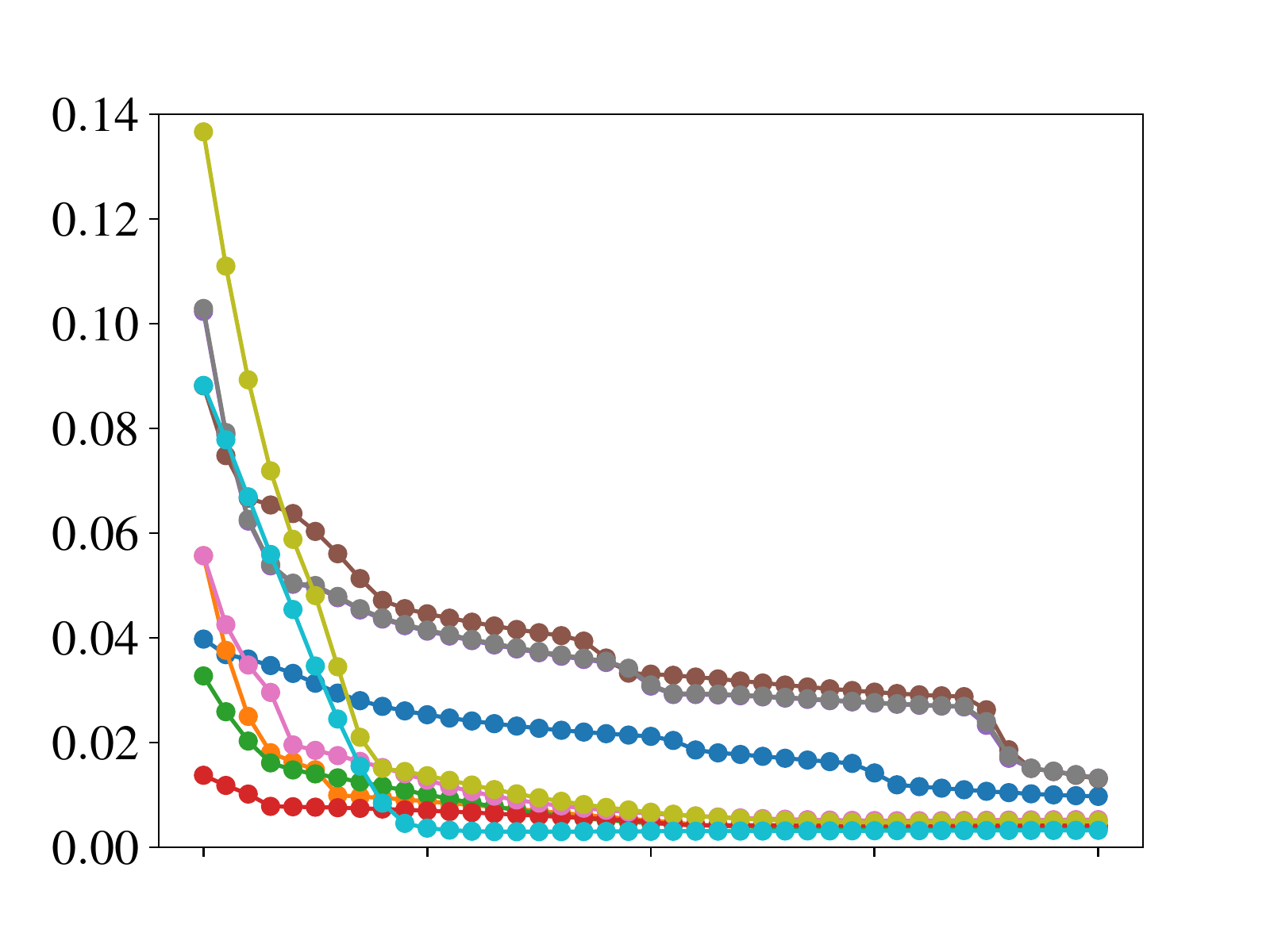}\vspace{-0.1cm}
    \end{subfigure}
    \begin{subfigure}[b]{0.32\textwidth}
    \centering
        \includegraphics[trim={52 25 30 30}, clip, scale=0.32]{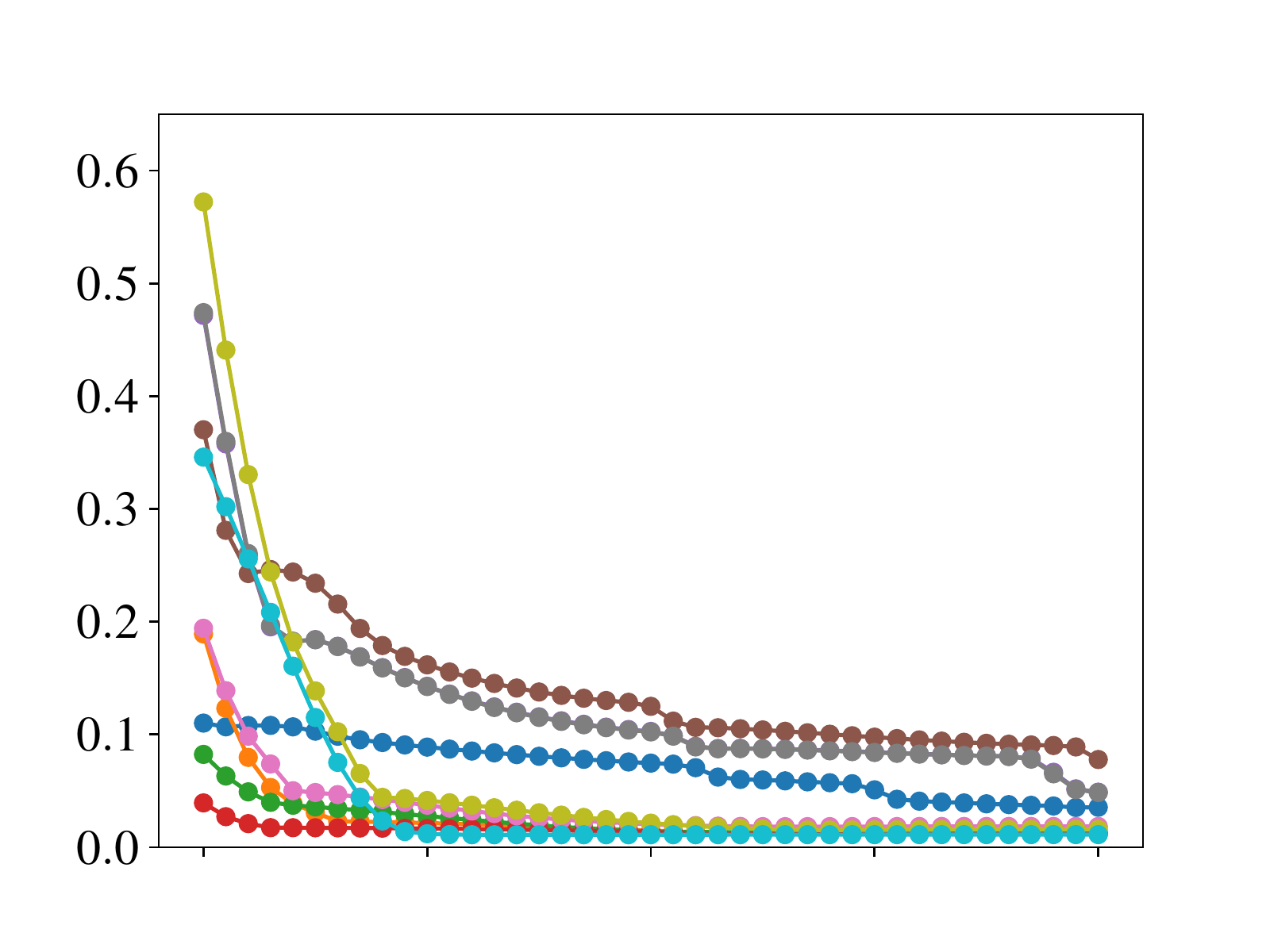}\vspace{-0.1cm}
    \end{subfigure}
    \begin{subfigure}[b]{0.32\textwidth}
    \centering
        \includegraphics[trim={52 25 30 30}, clip, scale=0.32]{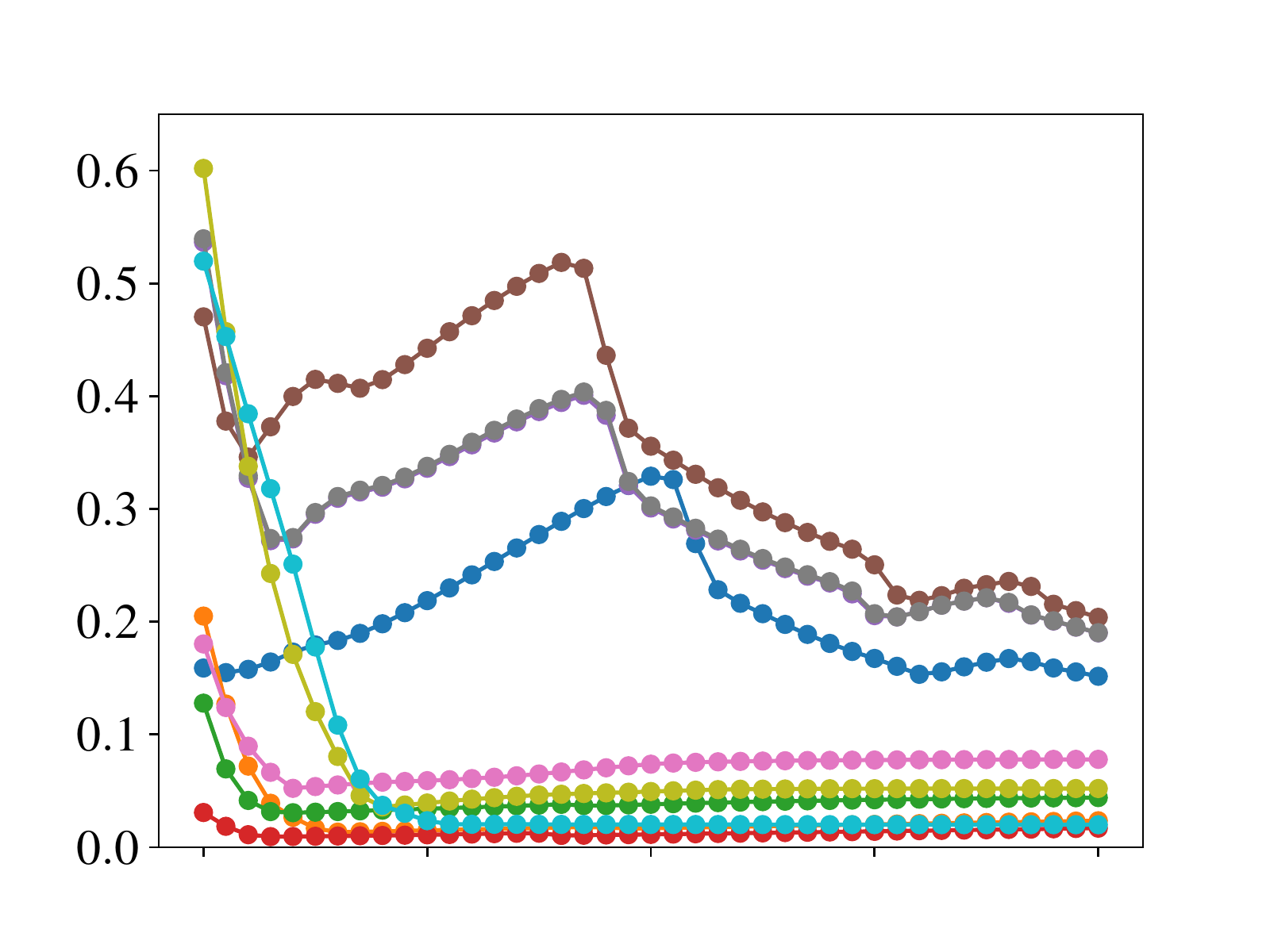}\vspace{-0.1cm}
    \end{subfigure}
    
    \medskip
    
     \begin{subfigure}[b]{0.32\textwidth}
    \centering
        \includegraphics[trim={15 25 30 30}, clip, scale=0.32]{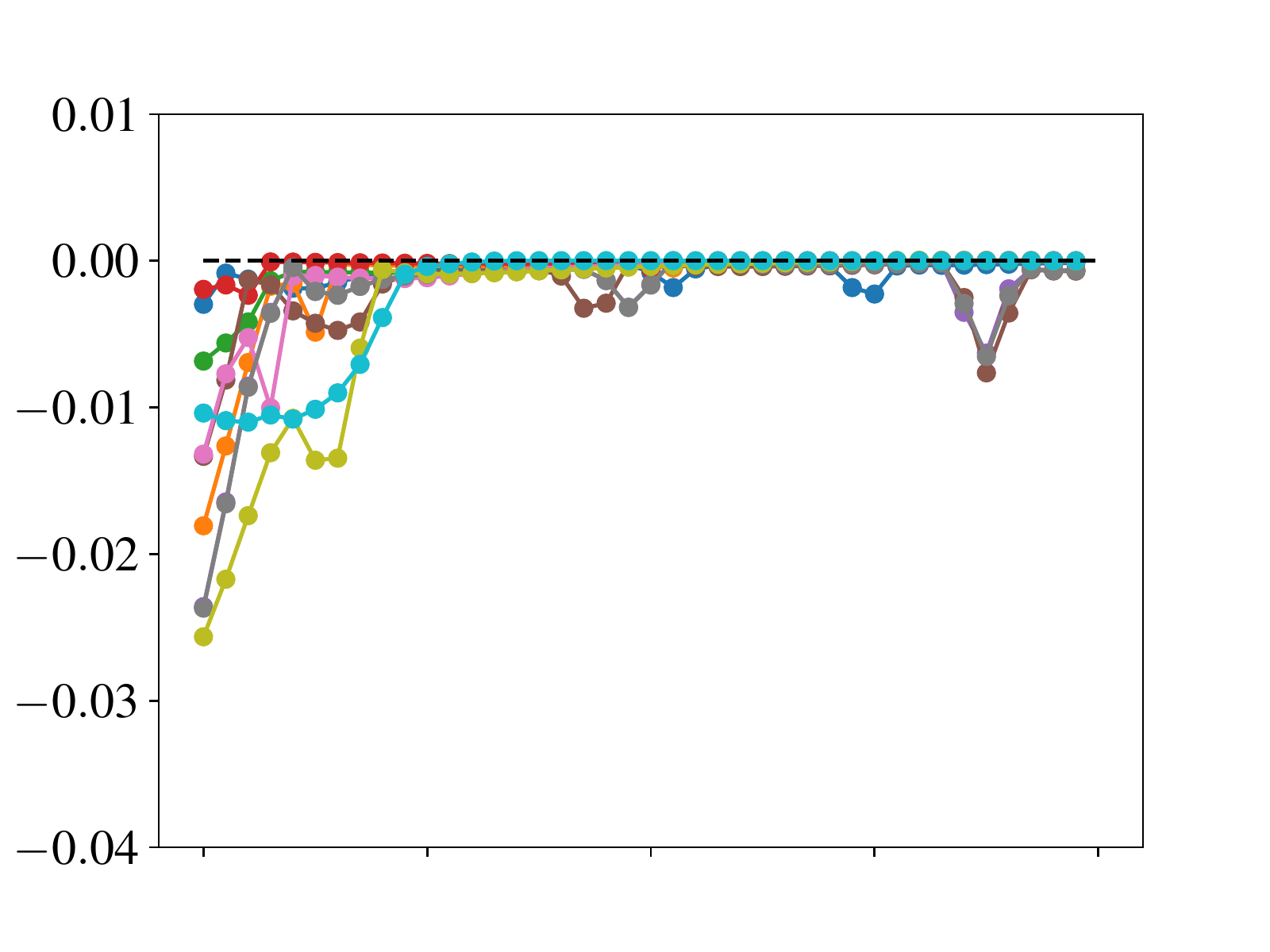}\vspace{-0.1cm}
        \caption*{Iteration 1}
    \end{subfigure}
    \begin{subfigure}[b]{0.32\textwidth}
    \centering
        \includegraphics[trim={52 25 30 30}, clip, scale=0.32]{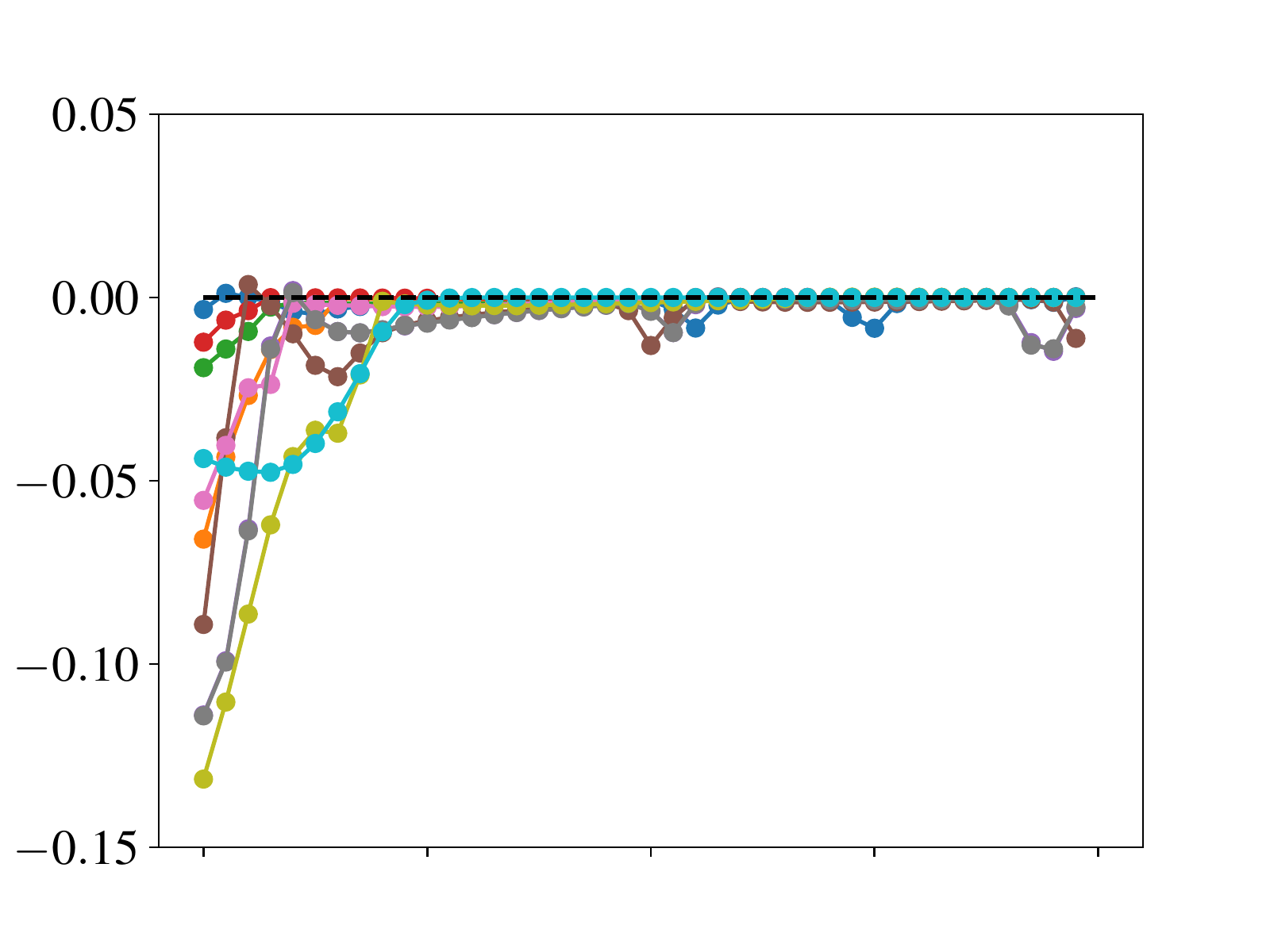}\vspace{-0.1cm}
        \caption*{Iteration 2 (best)}
    \end{subfigure}
    \begin{subfigure}[b]{0.32\textwidth}
        \centering
        \includegraphics[trim={52 25 30 30}, clip, scale=0.32]{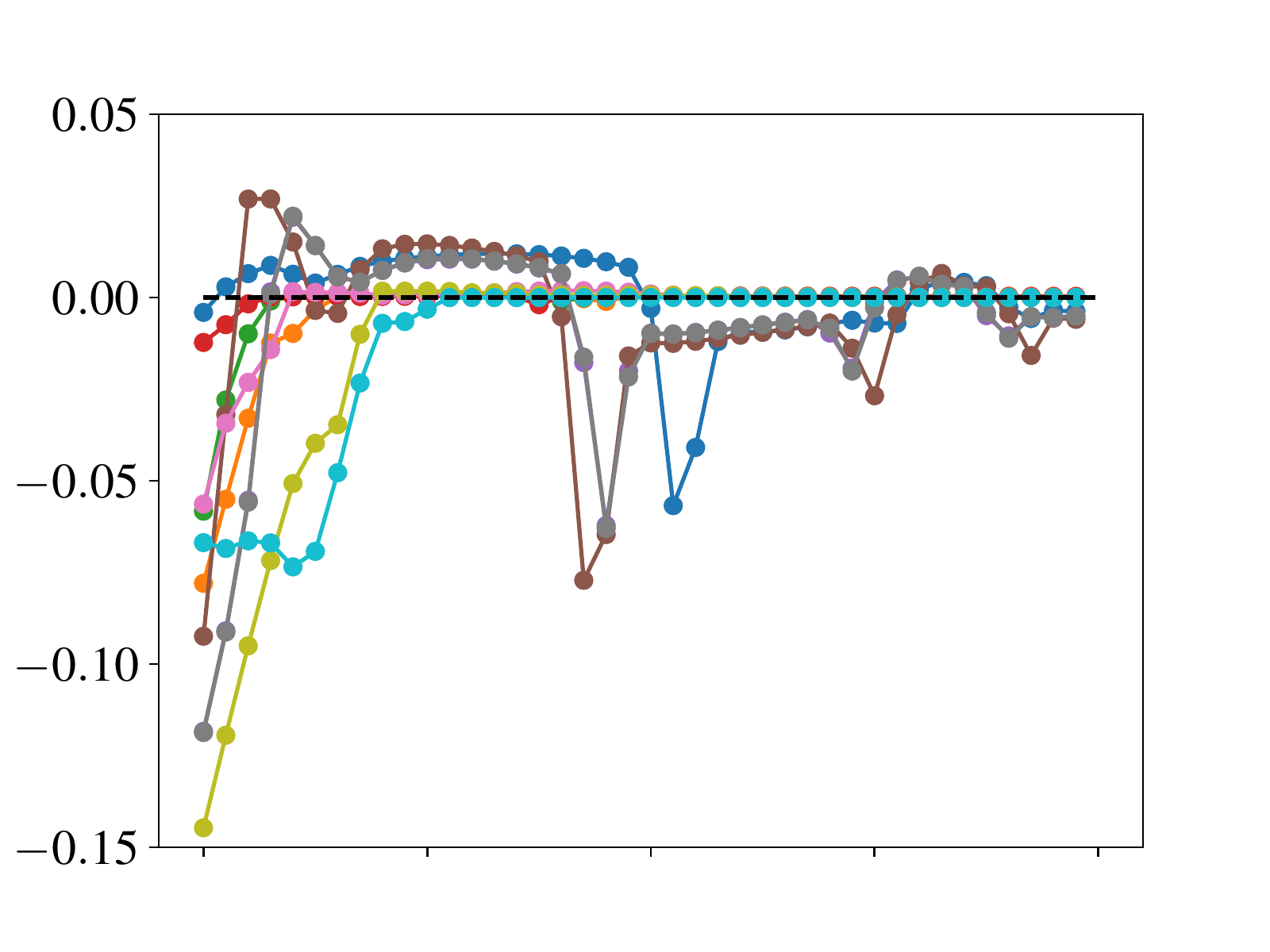}\vspace{-0.1cm}
        \caption*{Iteration 3}
    \end{subfigure}
    
    \caption{{\bf Car kinematics: Testing Neural Lyapunov MPC obtained from training on surrogate model.} For each iteration, we show the trajectories obtained through our Neural Lyapunov MPC while using the resulting Lyapunov function and the MPC parameter selected from the line-search. {\bf Top}: The Lyapunov function at $\phi=0$ with trajectories for $40$ steps at each iteration. {\bf Middle}: The evaluated Lyapunov function. {\bf Bottom}: The Lyapunov function time difference.}
    \label{fig:car_alternate_learning_surrogate_trajectories}
\end{figure*}

\begin{figure*}
    \captionsetup[subfigure]{justification=centering}
    \centering
    
    \begin{subfigure}[b]{0.32\textwidth}
    \centering
        \includegraphics[scale=0.3]{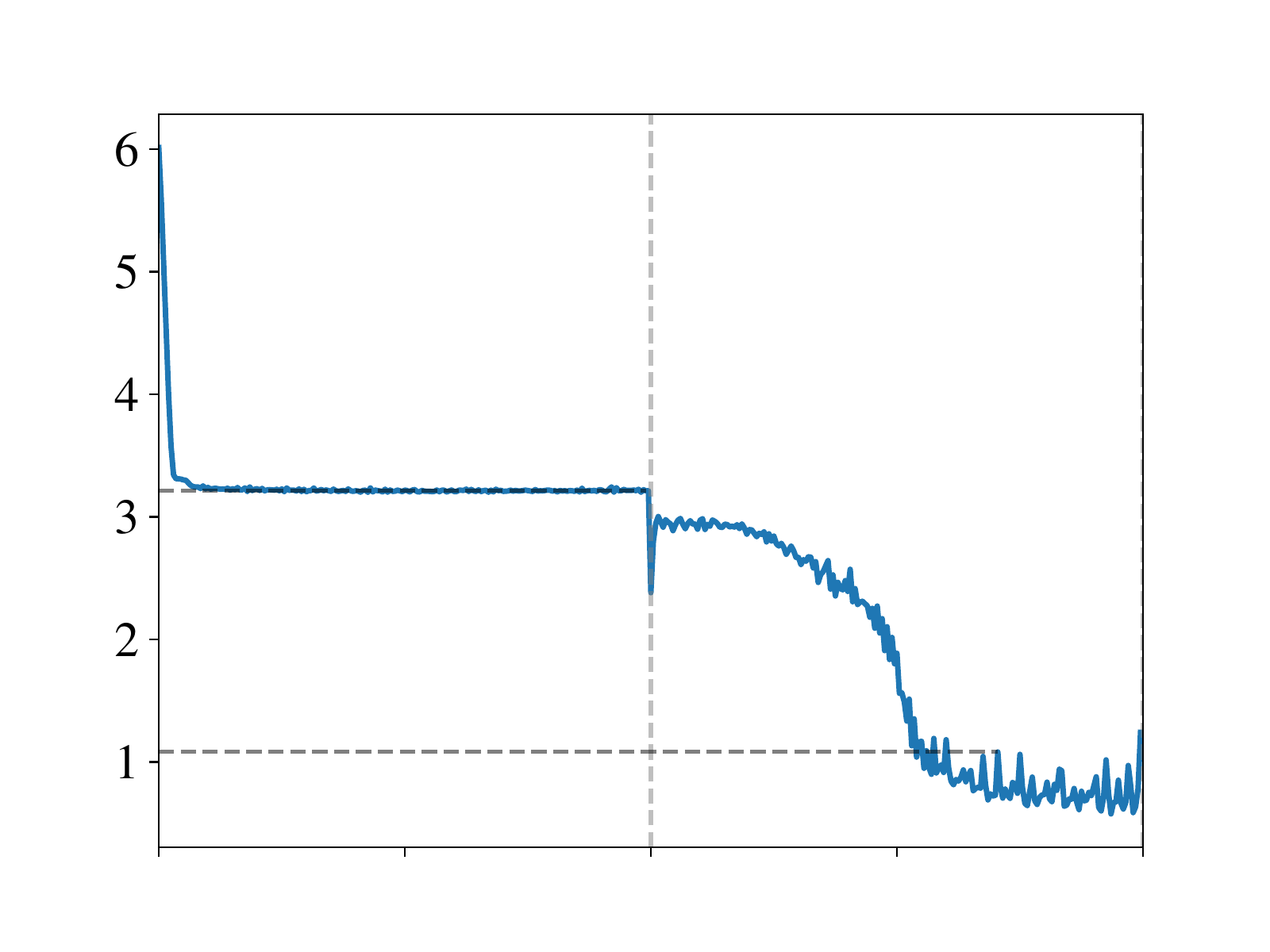}\vspace{-0.1cm}
        \caption{Lyapunov Loss \\($\log(1+x)$)}
    \end{subfigure}
    \begin{subfigure}[b]{0.32\textwidth}
        \centering
        \includegraphics[scale=0.3]{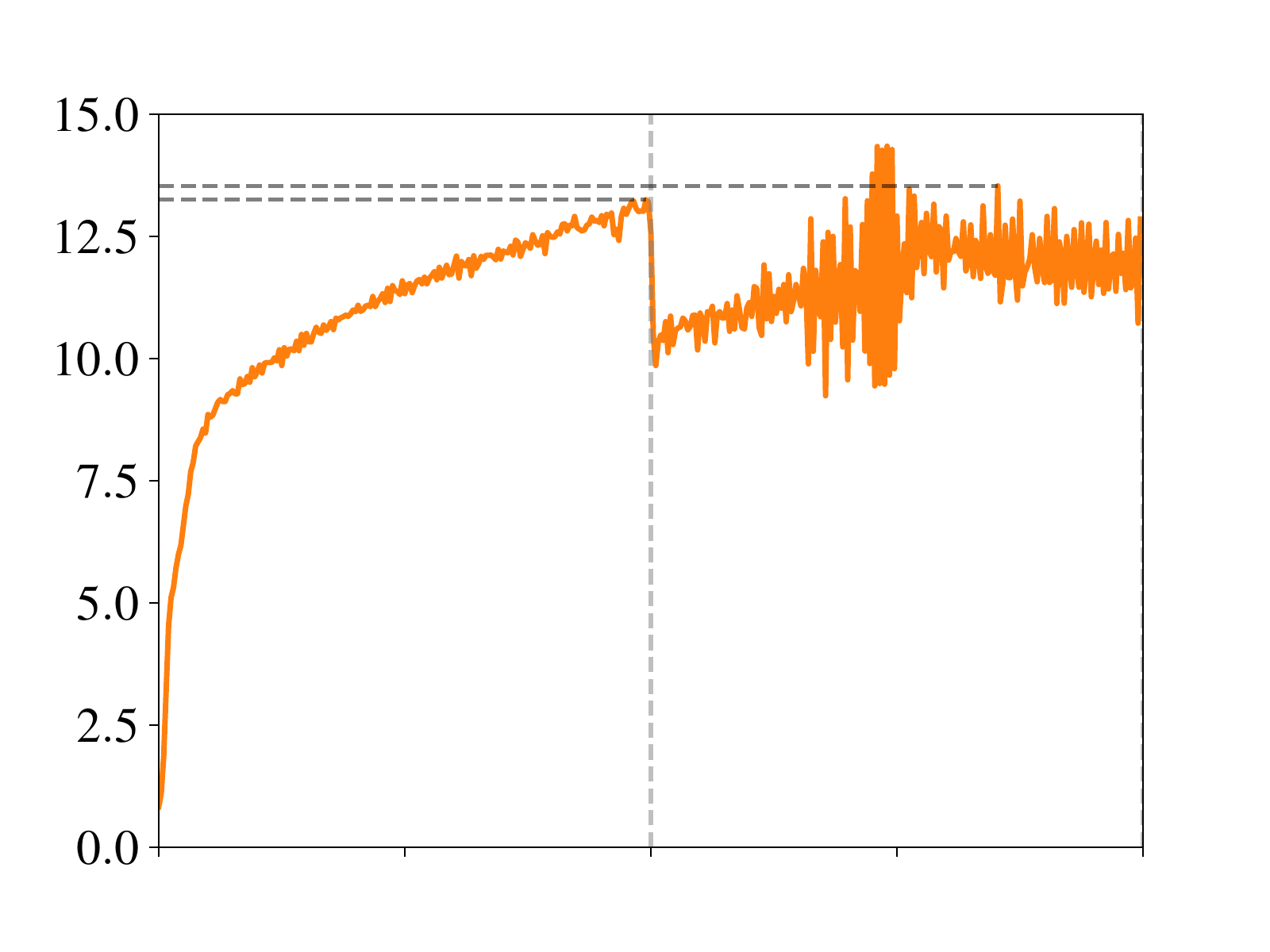}\vspace{-0.1cm}
        \caption{Verified Points \\(\%)}
    \end{subfigure}
    \begin{subfigure}[b]{0.32\textwidth}
        \centering
        \includegraphics[scale=0.3]{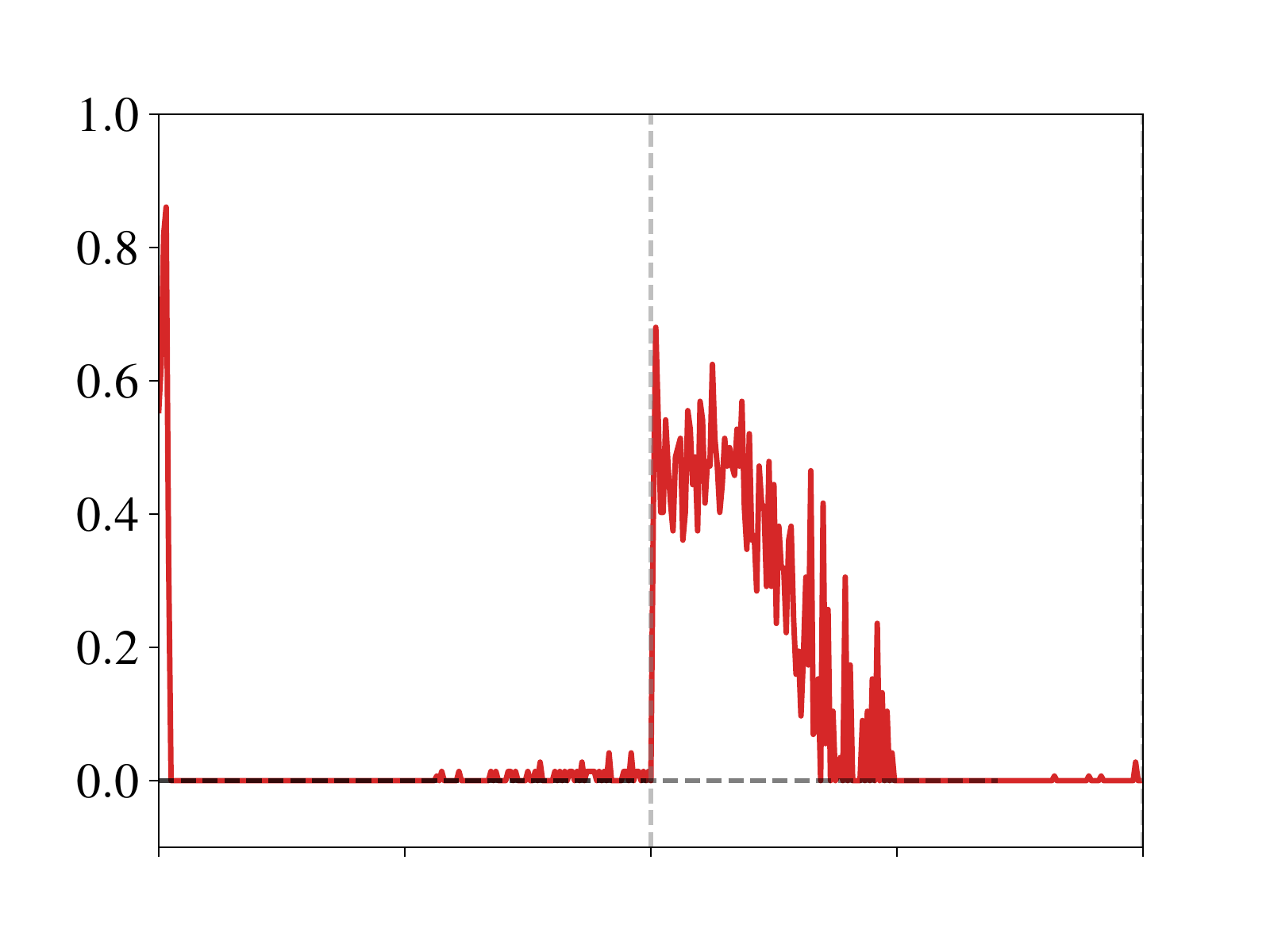}\vspace{-0.1cm}
        \caption{Not Verified Points \\(\%)}
    \end{subfigure}
    
    \medskip \medskip \medskip
    
    \begin{subfigure}[b]{0.45\textwidth}
        \centering
        \includegraphics[trim={52 35 40 40}, clip, scale=0.35]{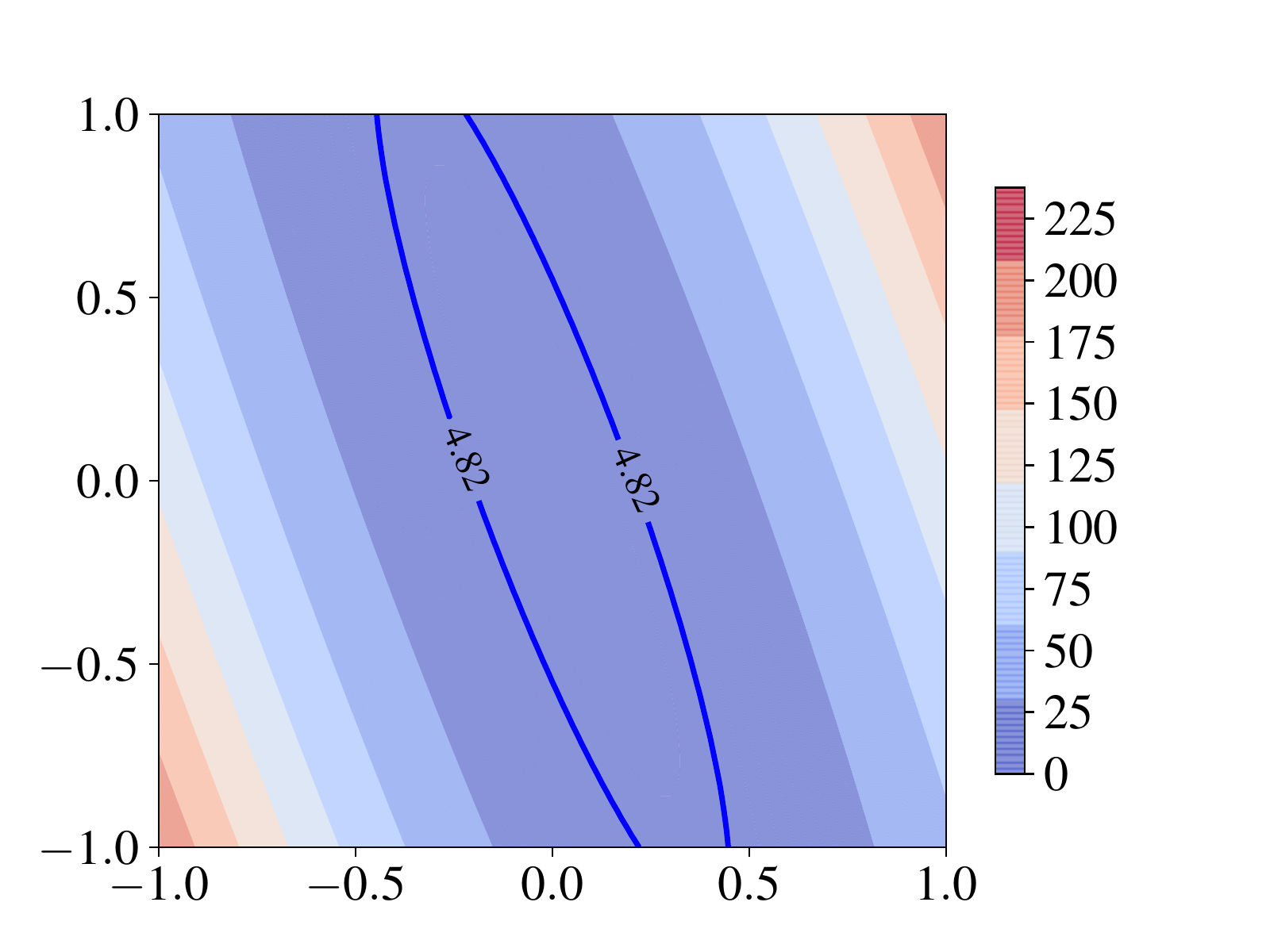}\vspace{-0.1cm}
    \end{subfigure}
    \hspace{-0.1cm}
    \begin{subfigure}[b]{0.35\textwidth}
    \centering
        \includegraphics[trim={52 35 40 40}, clip, scale=0.35]{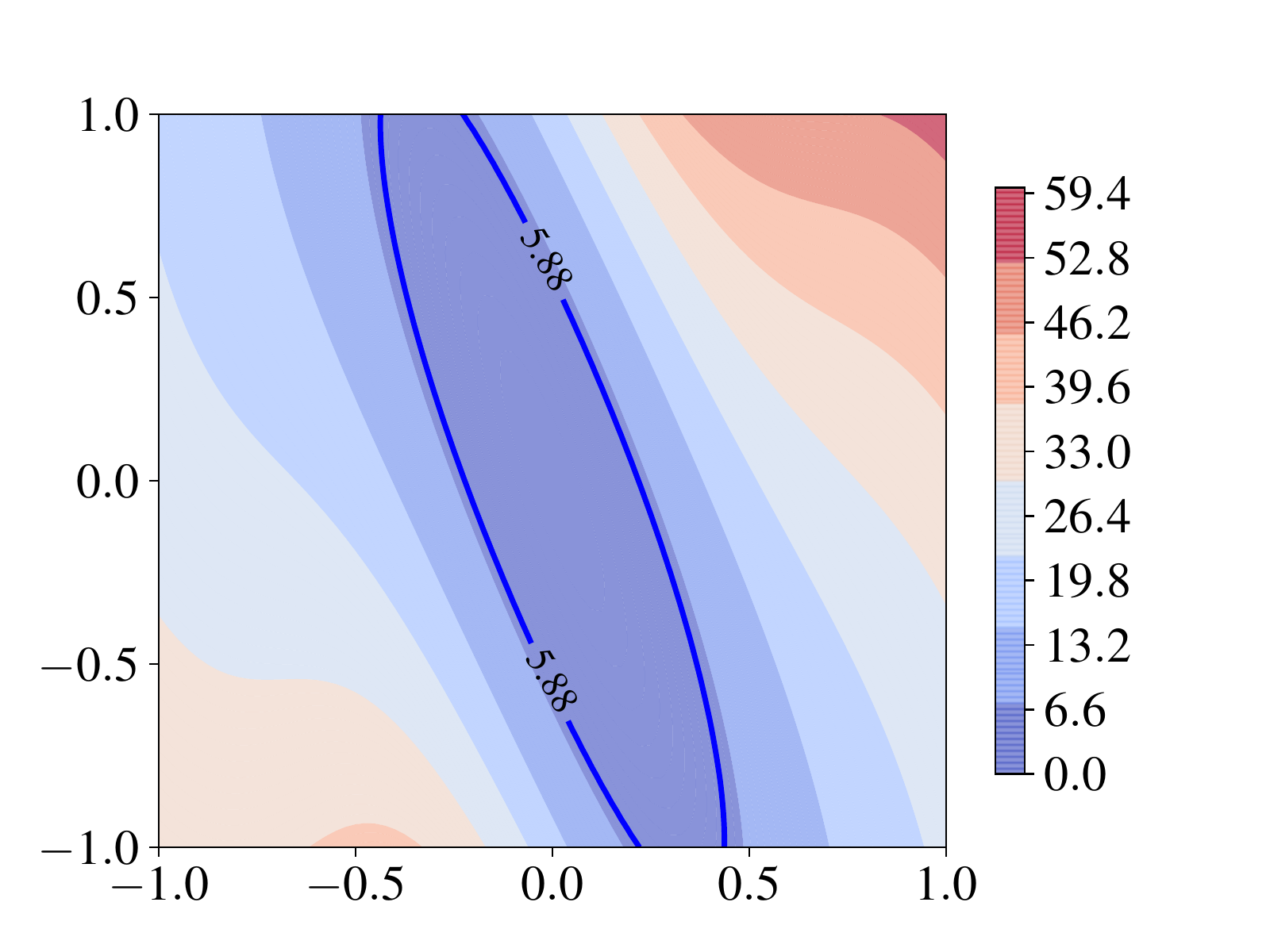}\vspace{-0.1cm}
    \end{subfigure}
    
    \medskip
    
    \begin{subfigure}[b]{0.45\textwidth}
        \centering
        \includegraphics[trim={0 40 0 0}, clip, scale=0.35]{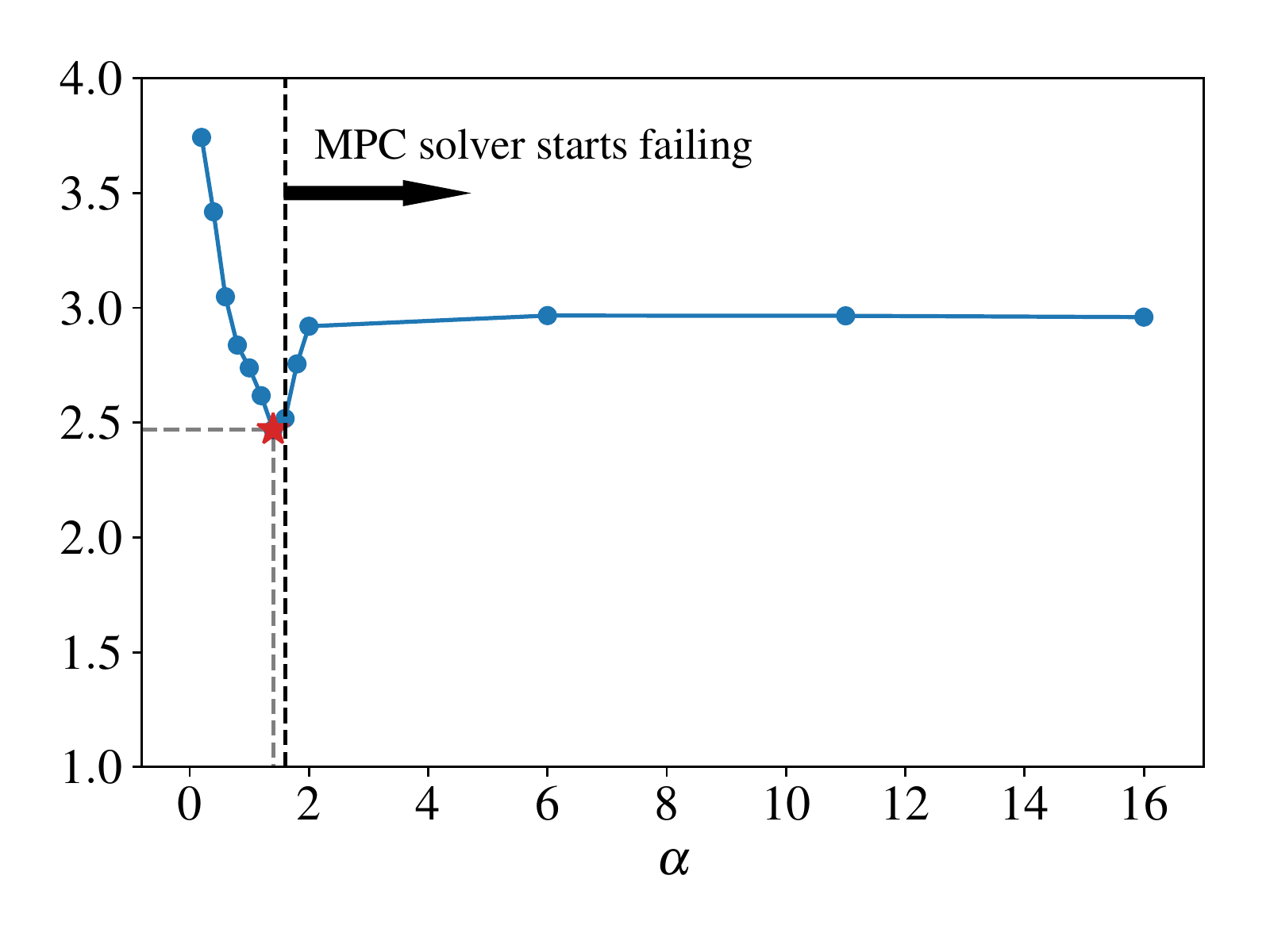}\vspace{-0.1cm}
        \caption*{Iteration 1}
    \end{subfigure}
     \begin{subfigure}[b]{0.35\textwidth}
        \centering
        \includegraphics[trim={45 40 0 0}, clip, scale=0.35]{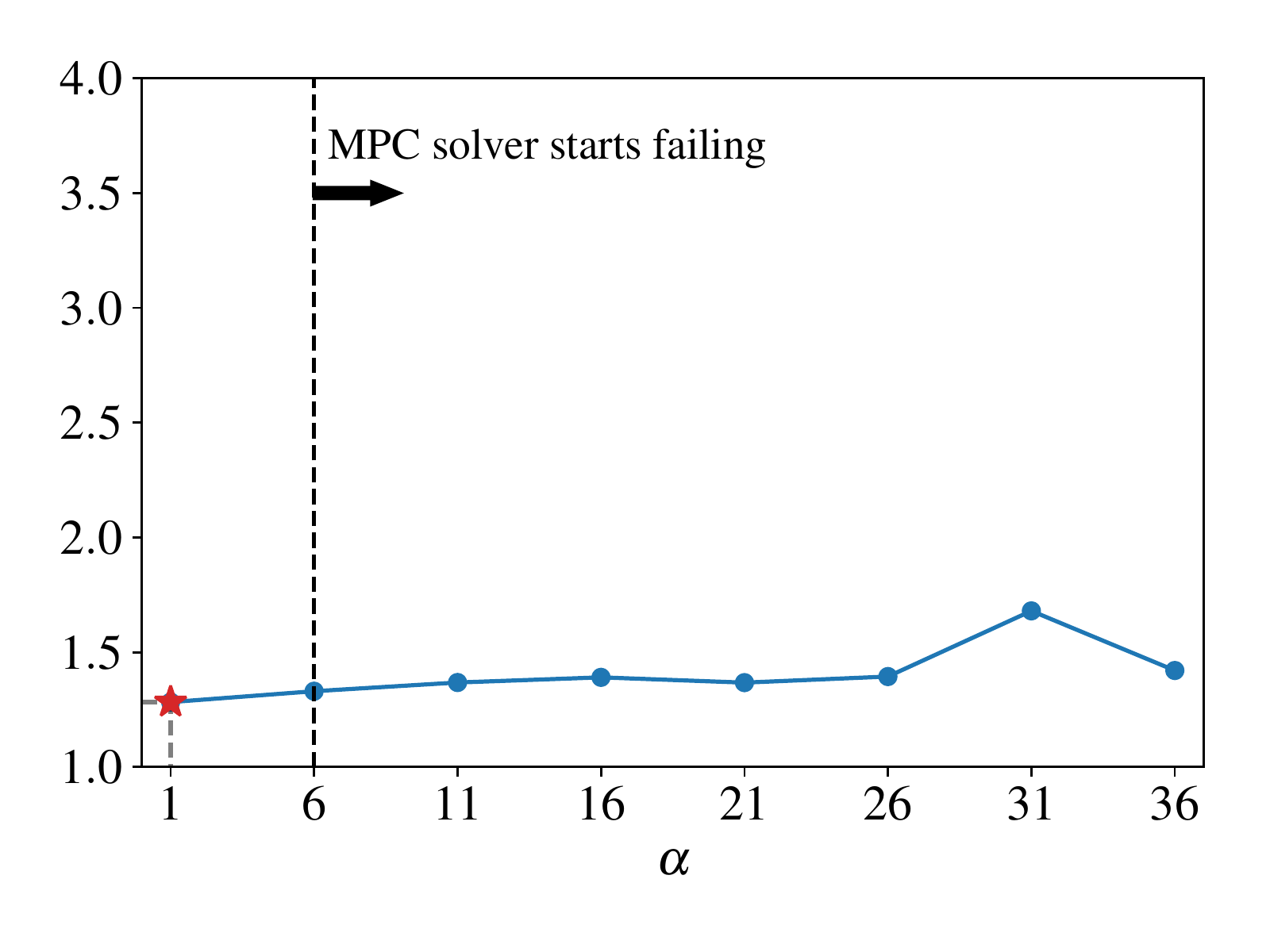}\vspace{-0.1cm}
        \caption*{Iteration 2 (best)}
    \end{subfigure}
    
    \caption{{\bf Inverted Pendulum: Alternate learning on surrogate model.} After every $N_V=200$ epochs of Lyapunov learning, the learned Lyapunov function is used to tune the MPC parameters. Unlike the vehicle kinematics example, we do not reinitialize $V$ between the iterations. \textbf{Top:} The training curves for Lyapunov function. Vertical lines separate iterations.  \textbf{Middle:}  The resulting Lyapunov function $V$ with the best performance. \textbf{Bottom:} Line-search for the MPC parameter $\alpha$ to minimize the Lyapunov loss~(\protect\ref{eq:lyapunov_loss}) with $V$ as terminal cost. The loss is plotted on the y-axis in a $\log(1+x)$ scale. The point marked in red is the parameter which minimizes the loss.}
    \label{fig:pendulum_alternate_learning}
\end{figure*}

\begin{figure*}[t]
    \captionsetup[subfigure]{justification=centering}
    \centering
    
    \begin{subfigure}[b]{0.4\textwidth}
        \centering
        \includegraphics[trim={15 35 30 40}, clip, scale=0.4]{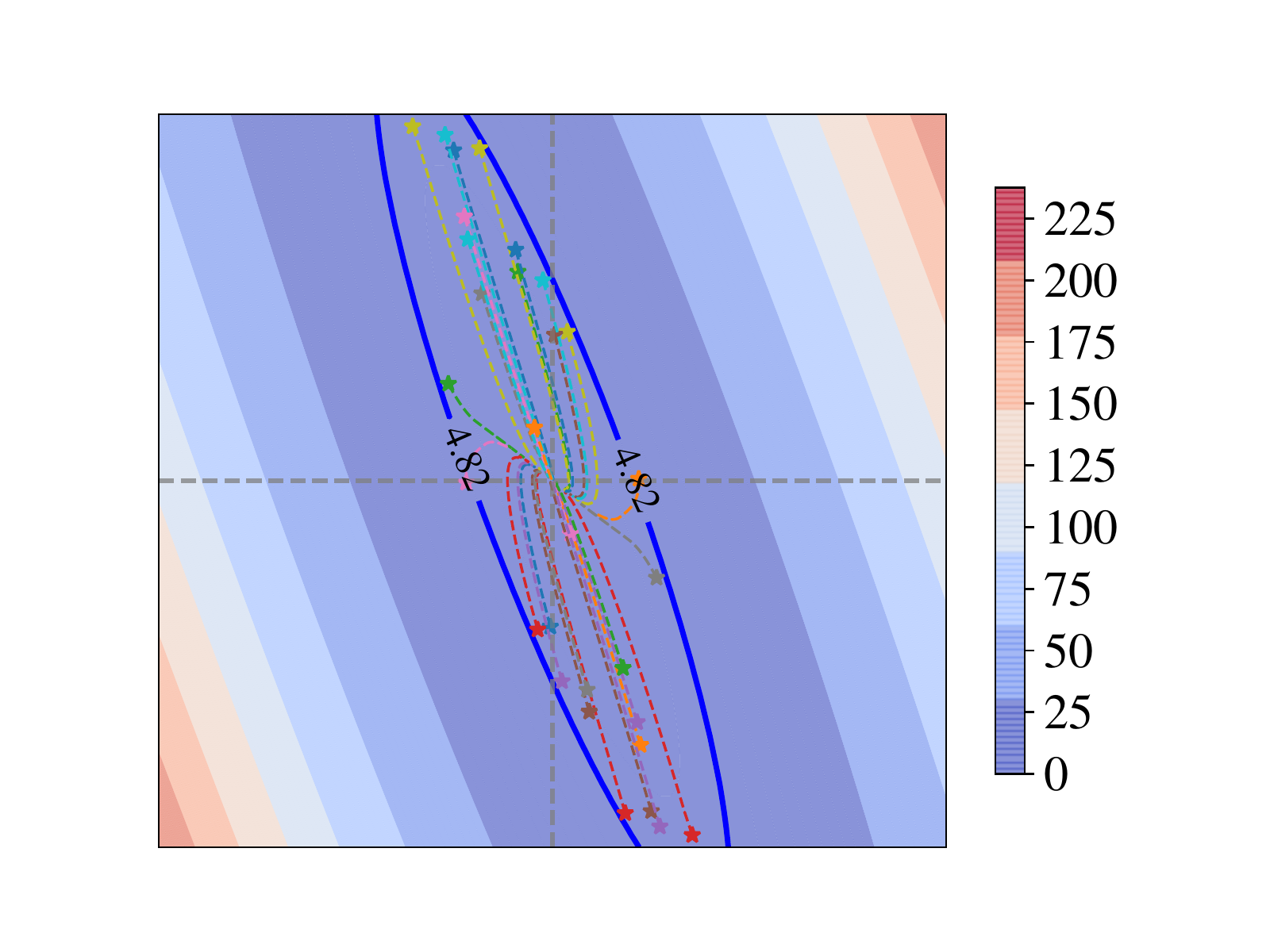}\vspace{-0.1cm}
    \end{subfigure}
    \begin{subfigure}[b]{0.4\textwidth}
    \centering
        \includegraphics[trim={52 35 30 40}, clip, scale=0.4]{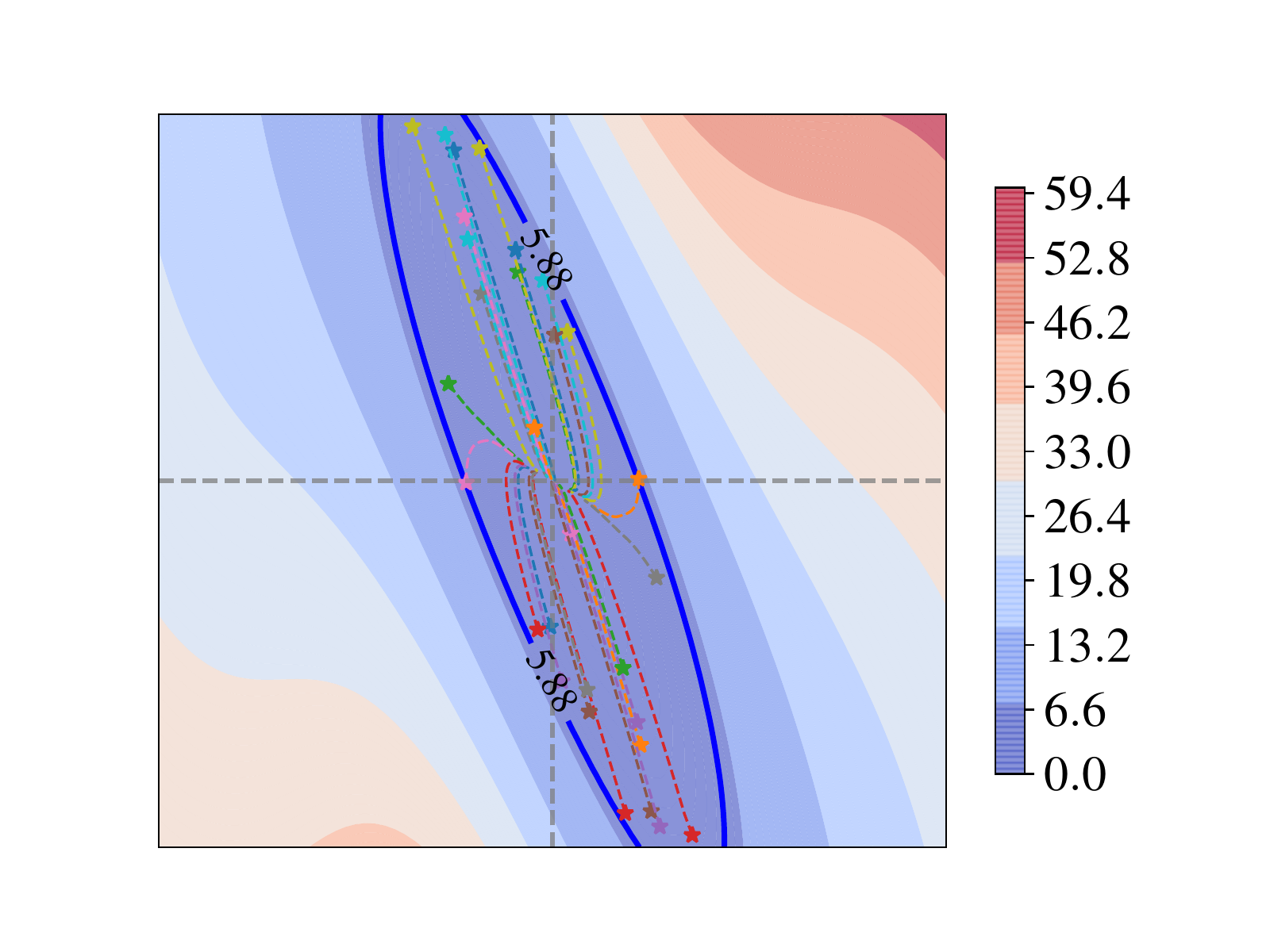}\vspace{-0.1cm}
    \end{subfigure}
    
    \medskip
    
    \begin{subfigure}[b]{0.4\textwidth}
        \centering
        \includegraphics[trim={15 25 30 30}, clip, scale=0.4]{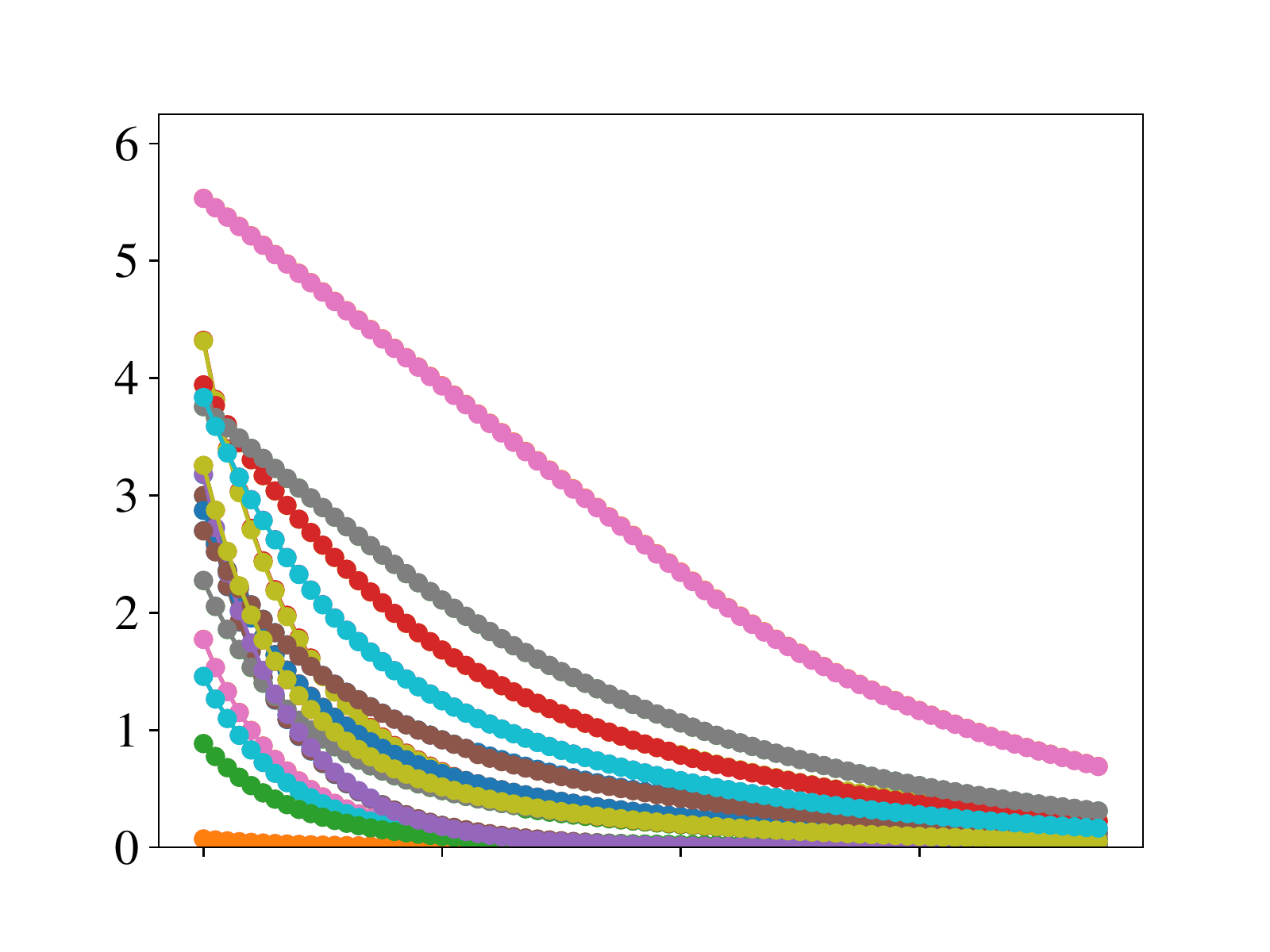}\vspace{-0.1cm}
    \end{subfigure}
    \begin{subfigure}[b]{0.4\textwidth}
    \centering
        \includegraphics[trim={52 25 30 30}, clip, scale=0.4]{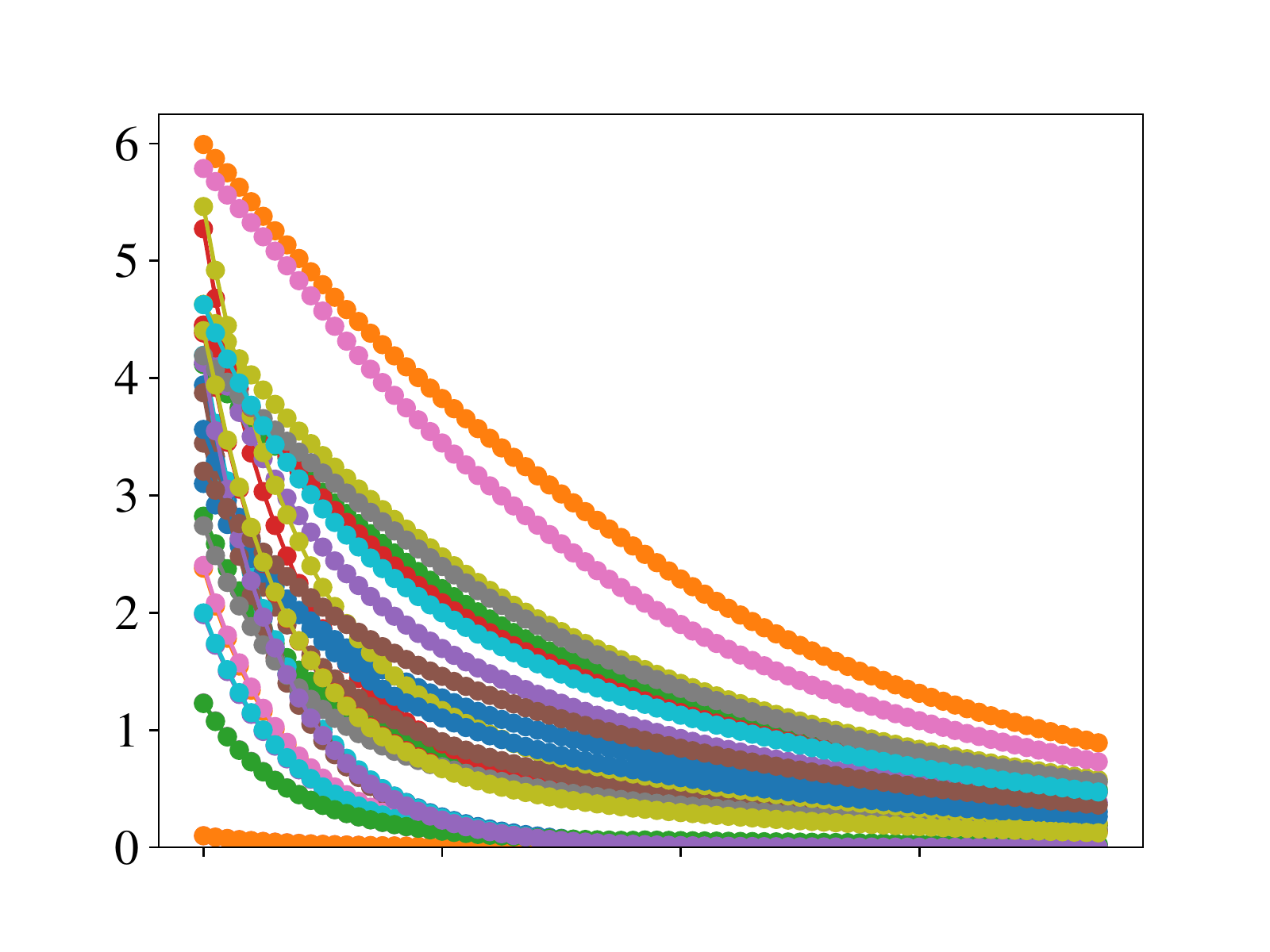}\vspace{-0.1cm}
    \end{subfigure}
    
    \medskip
    
     \begin{subfigure}[b]{0.4\textwidth}
    \centering
        \includegraphics[trim={15 25 30 30}, clip, scale=0.4]{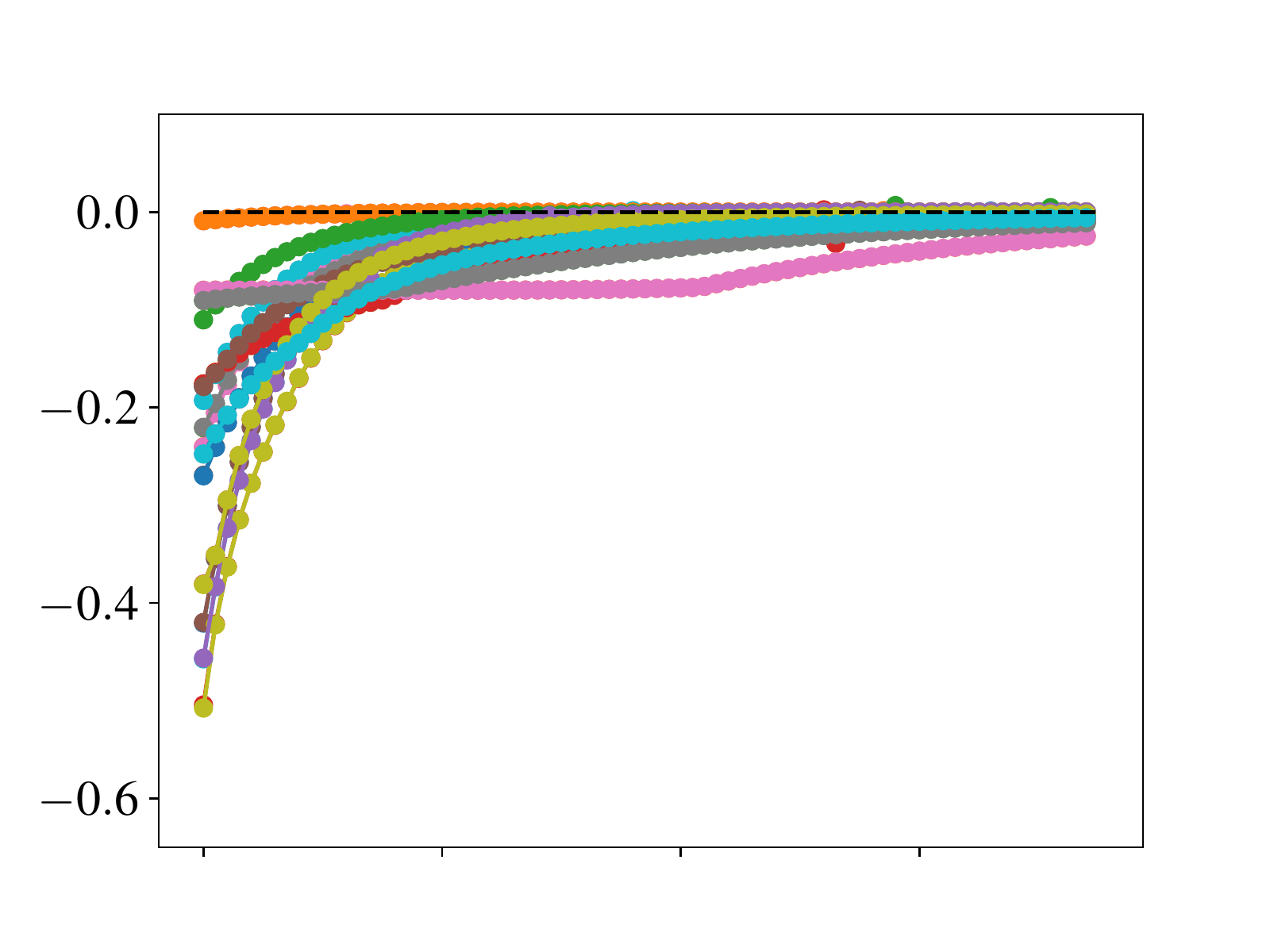}\vspace{-0.1cm}
        \caption*{Iteration 1}
    \end{subfigure}
    \begin{subfigure}[b]{0.4\textwidth}
    \centering
        \includegraphics[trim={52 25 30 30}, clip, scale=0.4]{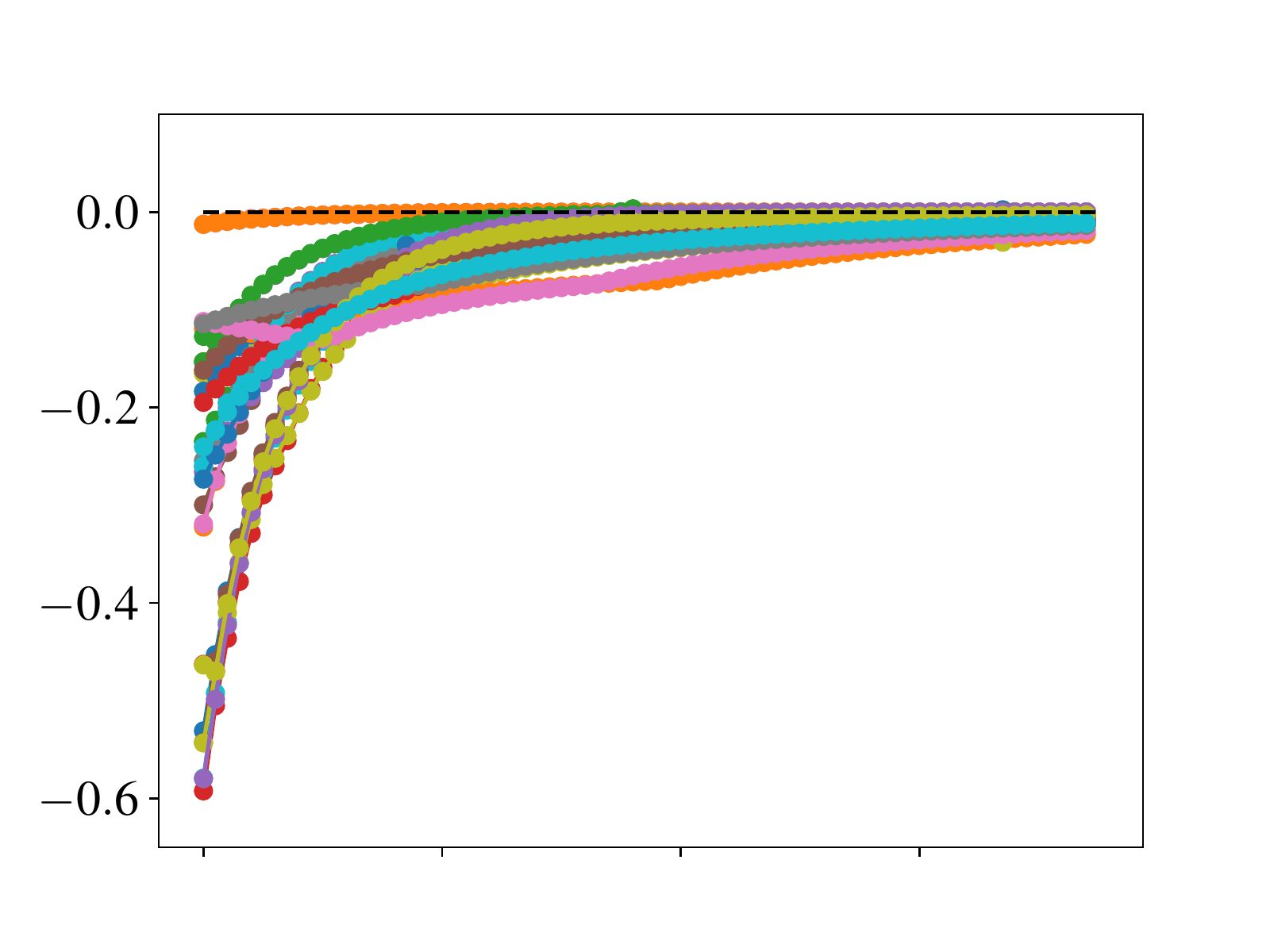}\vspace{-0.1cm}
        \caption*{Iteration 2 (best)}
    \end{subfigure}
    
    \caption{{\bf Inverted Pendulum: Testing Neural Lyapunov MPC obtained from training on nominal model over iterations.} For each iteration, we show the trajectories obtained through our Neural Lyapunov MPC while using the resulting Lyapunov function and the MPC parameter selected from the line-search. The initial states are sampled inside the safe level-set using rejection sampling. {\bf Top}: The Lyapunov function with trajectories for $80$ steps at each iteration. {\bf Middle}: The evaluated Lyapunov function. {\bf Bottom}: The Lyapunov function time difference.}
    \label{fig:pendulum_alternate_learning_nominal_trajectories}
\end{figure*}

\begin{figure*}[h!t]
    \captionsetup[subfigure]{justification=centering}
    \centering
    % Instead of zero, make it 45 for removing title
    \begin{subfigure}[b]{0.19\textwidth}
        \centering
        \includegraphics[trim={10 35 105 40}, clip, scale=0.22]{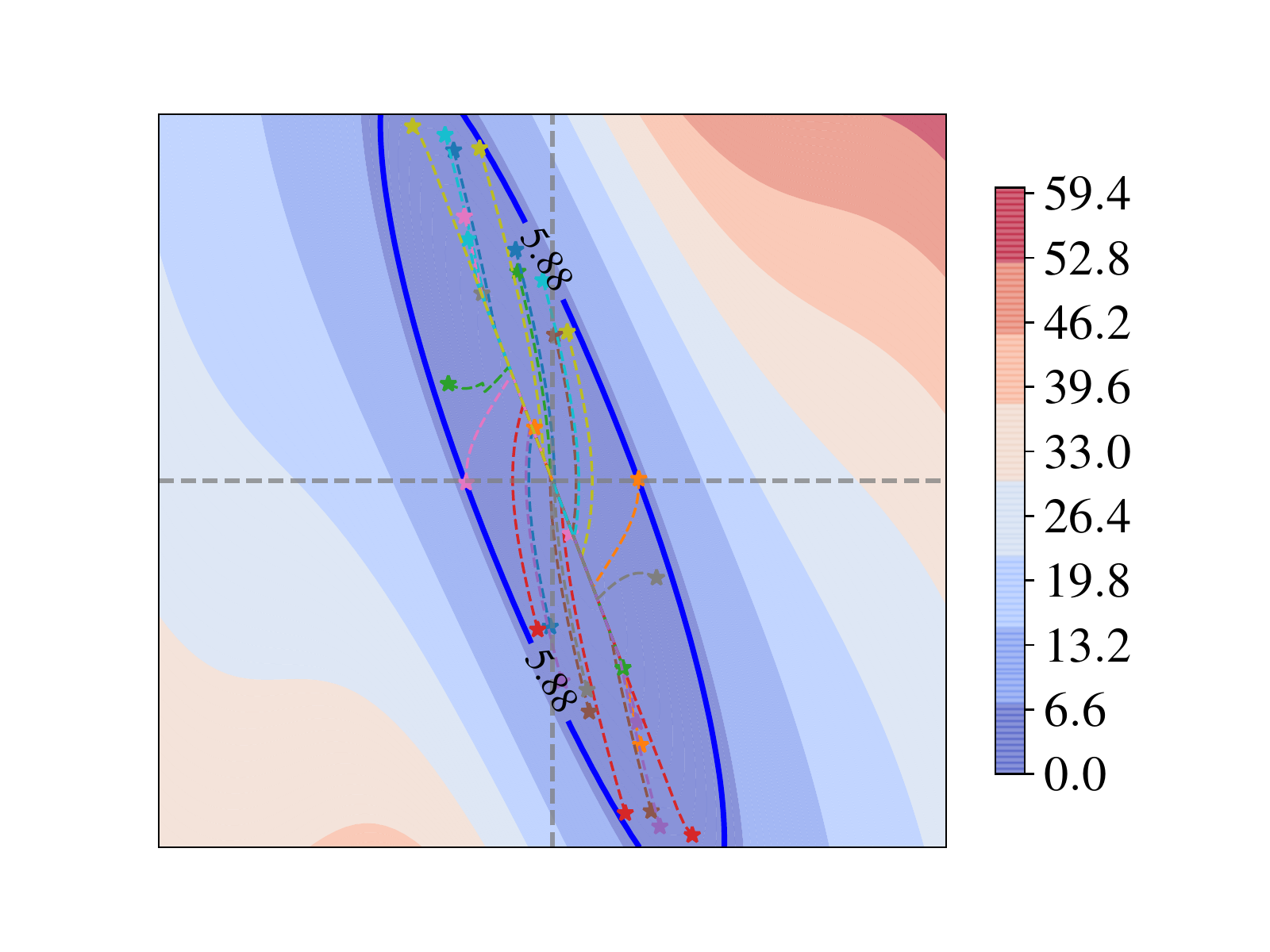}\vspace{-0.1cm}
    \end{subfigure}
    \begin{subfigure}[b]{0.19\textwidth}
        \centering
        \includegraphics[trim={52 35 105 40}, clip, scale=0.22]{img/inv_pendulum/model_nominal_mpc_long_horizon/lyapunov_function.pdf}\vspace{-0.1cm}
    \end{subfigure}
    \begin{subfigure}[b]{0.19\textwidth}
        \centering
        \includegraphics[trim={52 35 105 40}, clip, scale=0.22]{img/inv_pendulum/model_nominal_mpc_short_horizon_lyap/lyapunov_function.pdf}\vspace{-0.1cm}
    \end{subfigure}
    \begin{subfigure}[b]{0.19\textwidth}
        \centering
        \includegraphics[trim={52 35 105  40}, clip, scale=0.22]{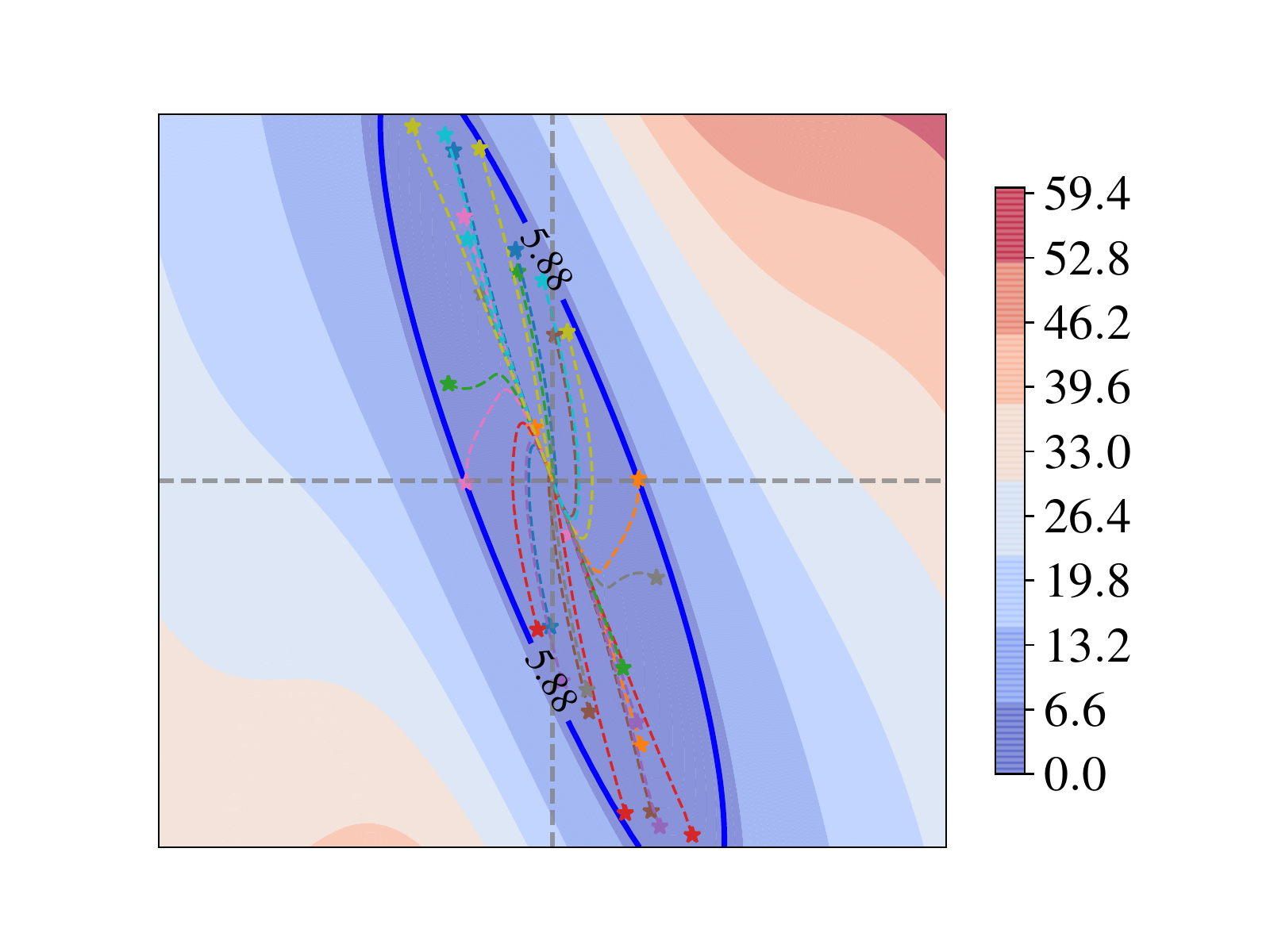}\vspace{-0.1cm}
    \end{subfigure}
        \begin{subfigure}[b]{0.19\textwidth}
        \centering
        \includegraphics[trim={52 35 50 40}, clip, scale=0.22]{img/inv_pendulum/model_surrogate_mpc_short_horizon_lyap/lyapunov_function.pdf}\vspace{-0.1cm}
    \end{subfigure}

    \medskip
    
    \begin{subfigure}[b]{0.2\textwidth}
    \centering
        \includegraphics[trim={15 25 30 40}, clip, scale=0.2]{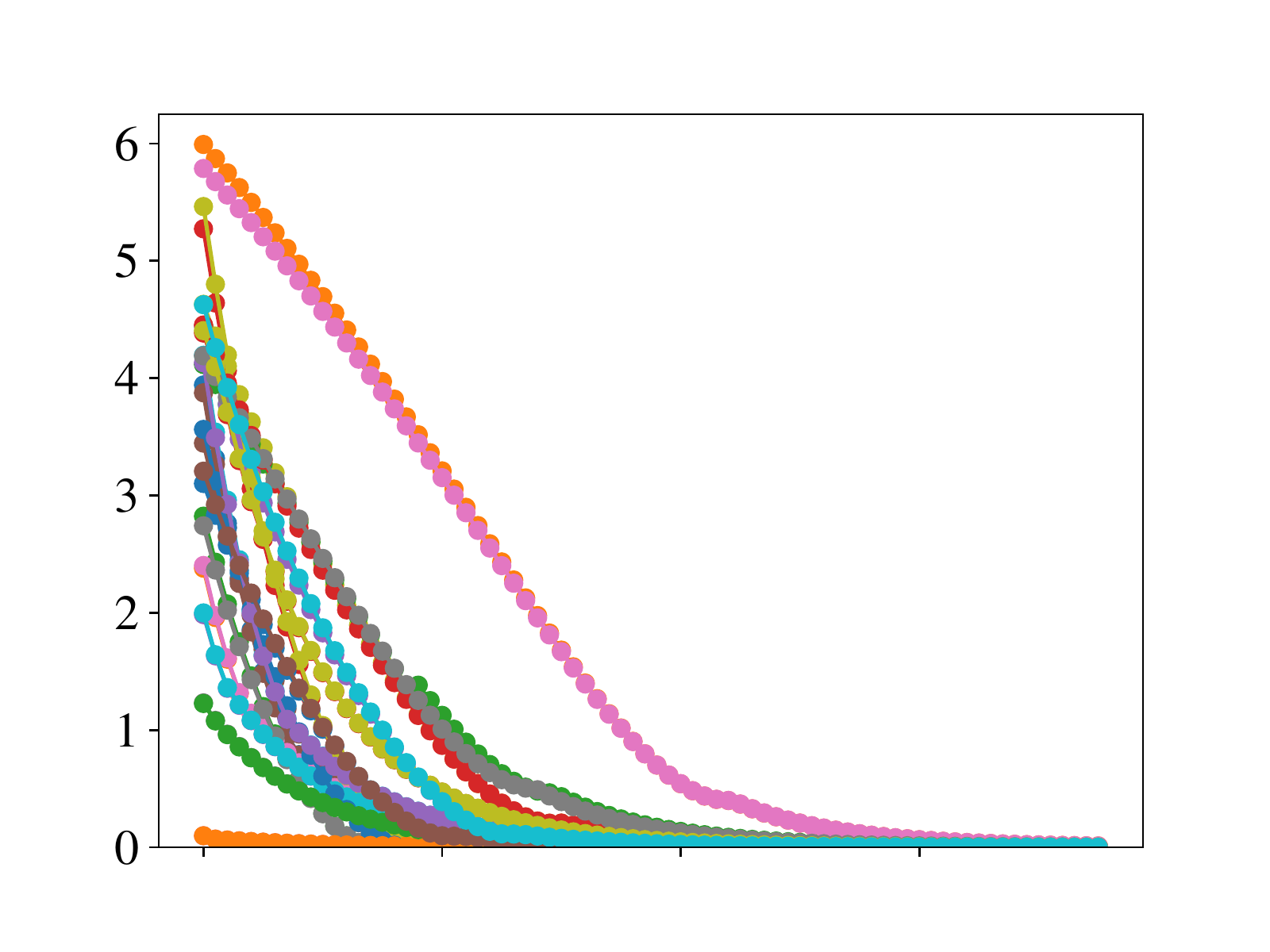}\vspace{-0.1cm}
    \end{subfigure}
    \begin{subfigure}[b]{0.19\textwidth}
    \centering
        \includegraphics[trim={52 25 30 40}, clip, scale=0.2]{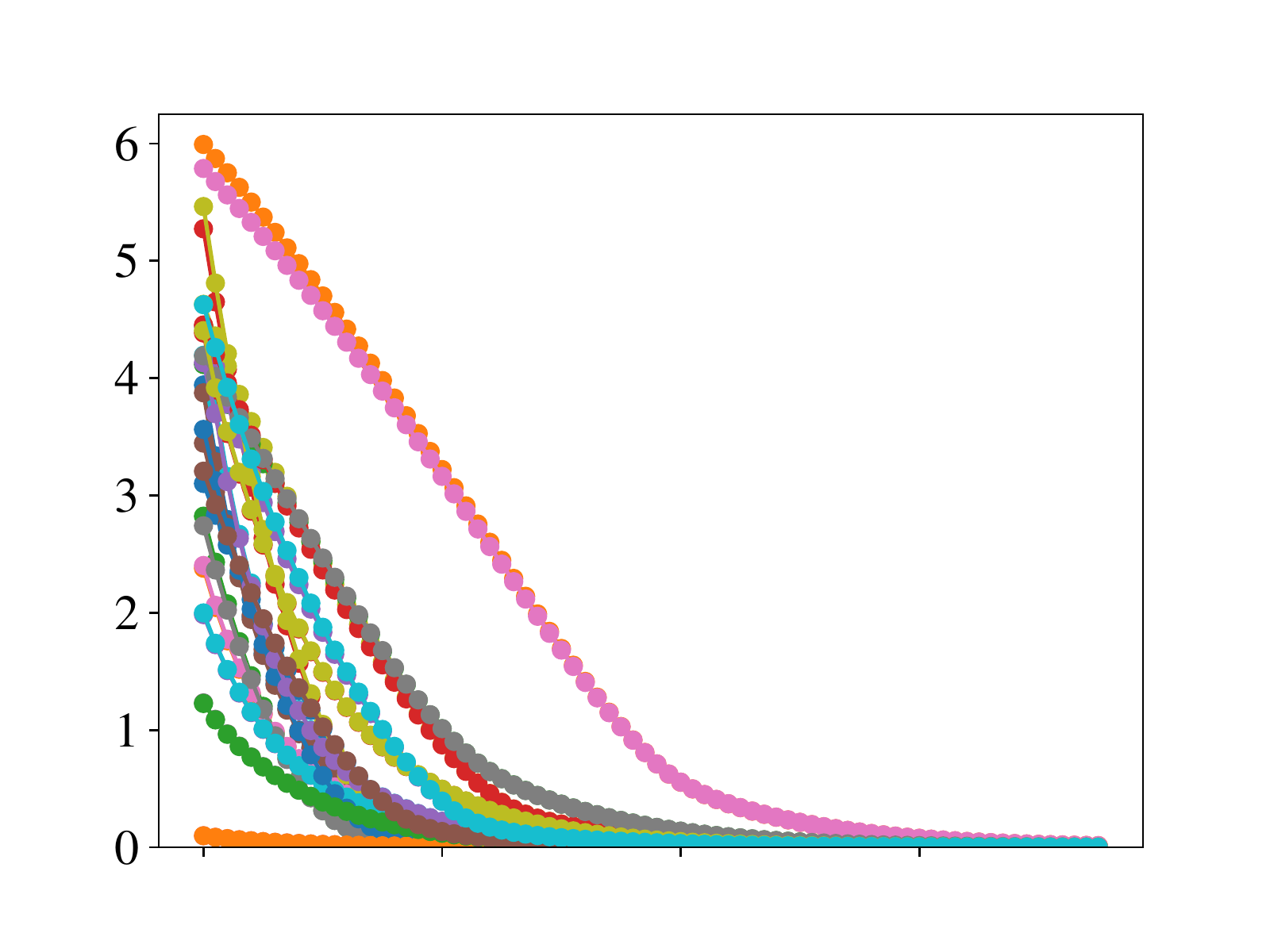}\vspace{-0.1cm}
    \end{subfigure}
    \begin{subfigure}[b]{0.19\textwidth}
    \centering
        \includegraphics[trim={52 25 30 40}, clip, scale=0.2]{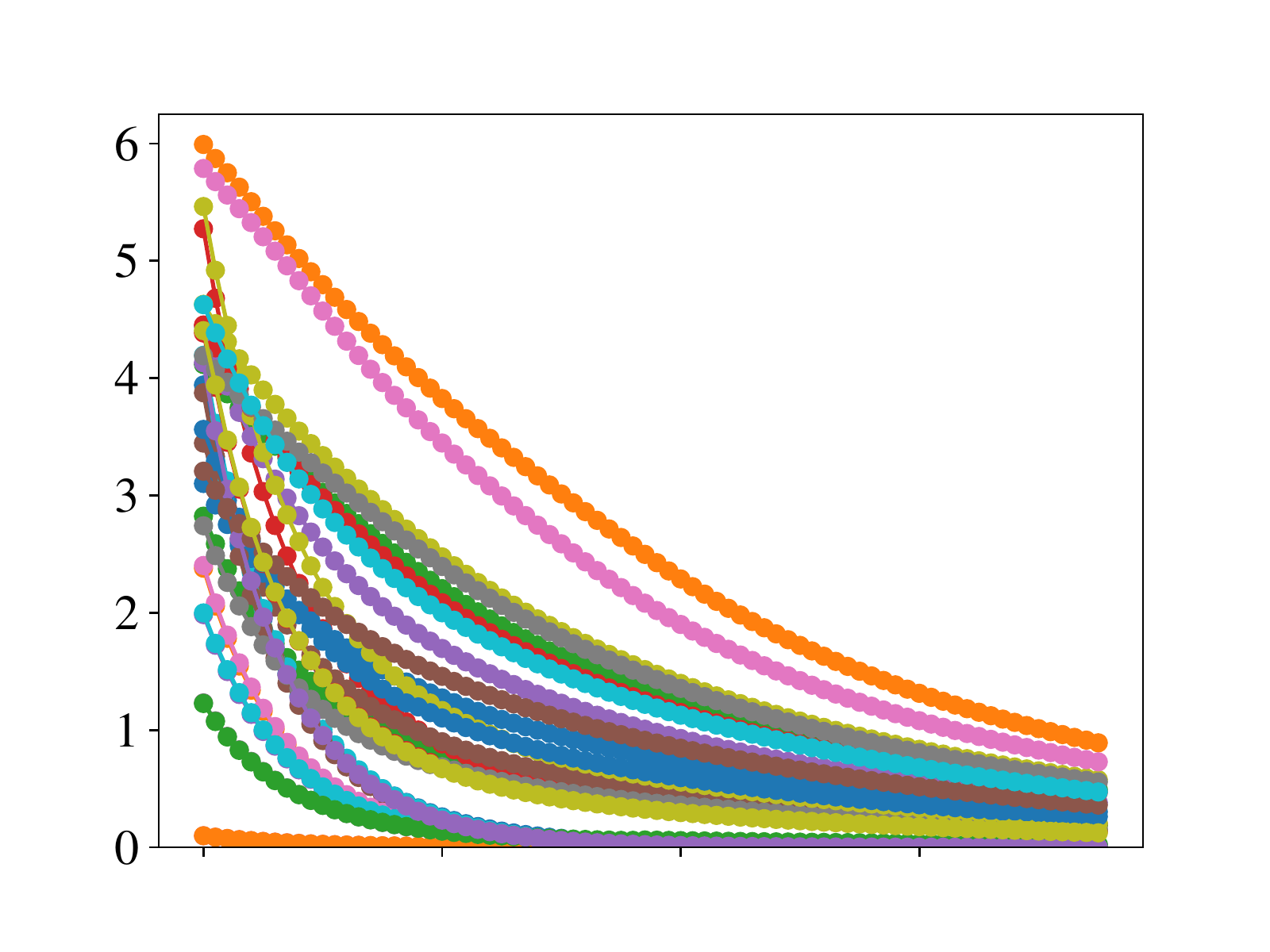}\vspace{-0.1cm}
    \end{subfigure}
    \begin{subfigure}[b]{0.19\textwidth}
    \centering
        \includegraphics[trim={52 25 30 40}, clip, scale=0.2]{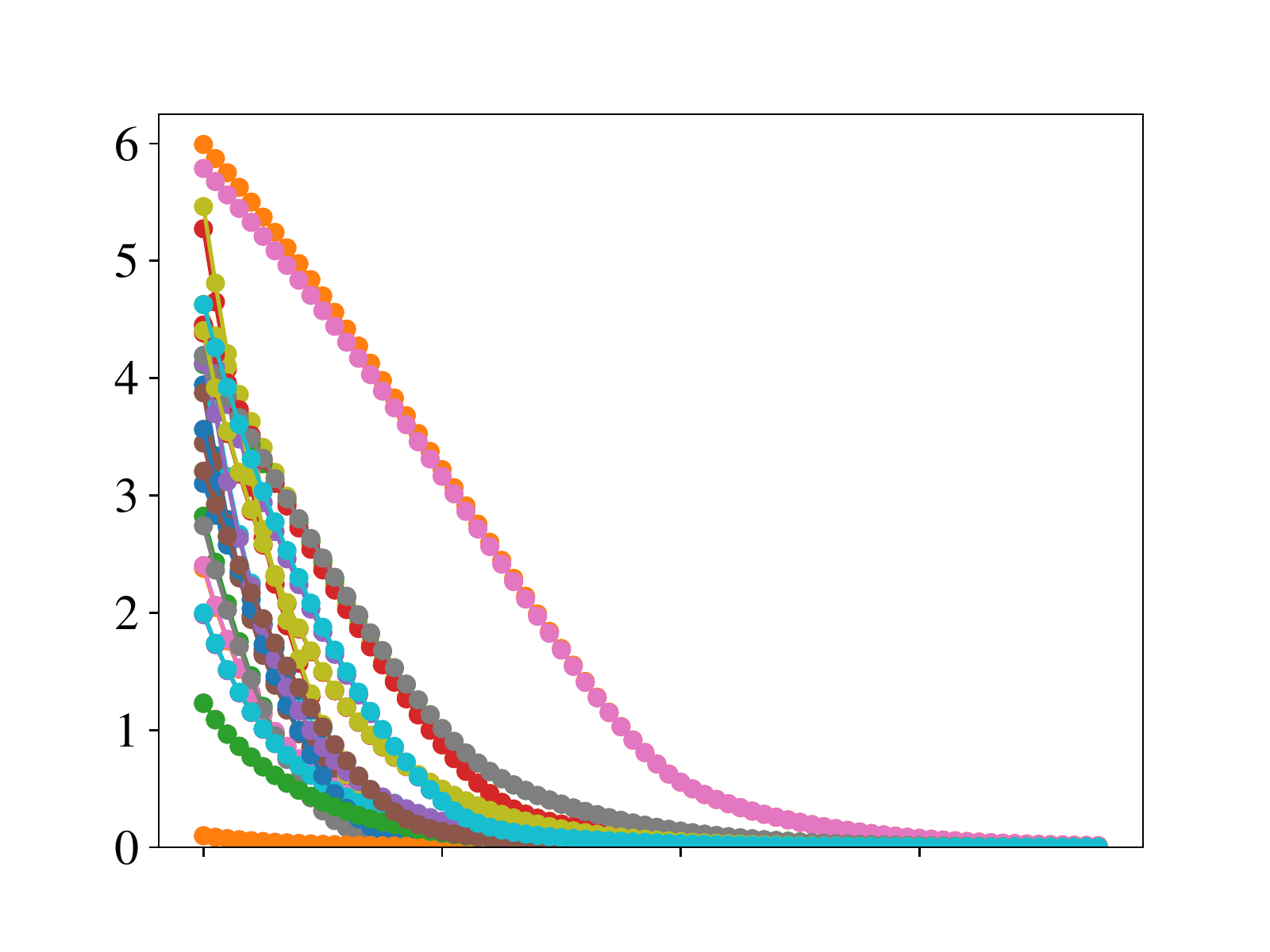}\vspace{-0.1cm}
    \end{subfigure}
        \begin{subfigure}[b]{0.19\textwidth}
    \centering
        \includegraphics[trim={52 25 0 40}, clip, scale=0.2]{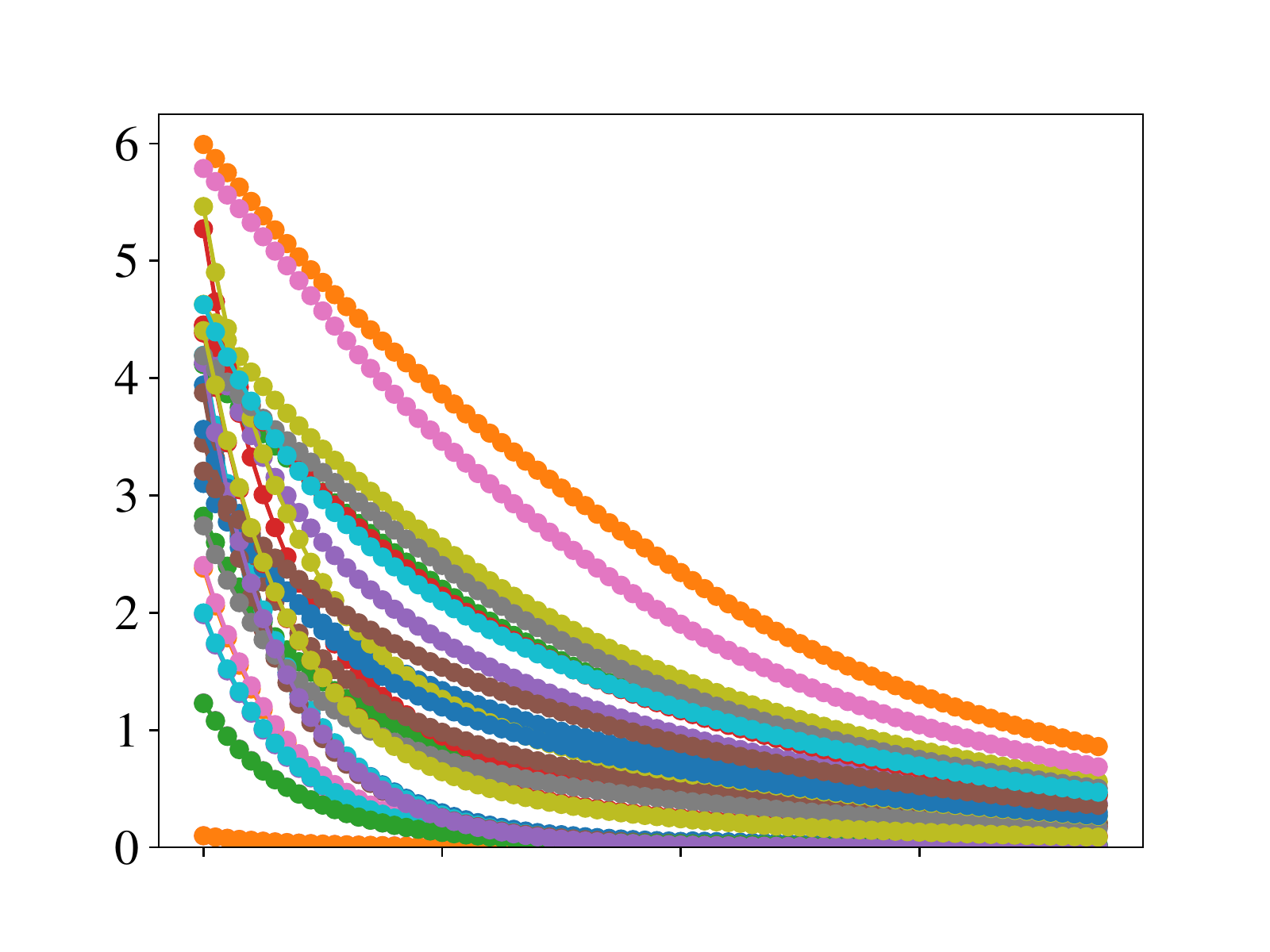}\vspace{-0.1cm}
    \end{subfigure}
    
    \medskip
    
    \begin{subfigure}[b]{0.2\textwidth}
    \centering
        \includegraphics[trim={15 25 30 30}, clip, scale=0.2]{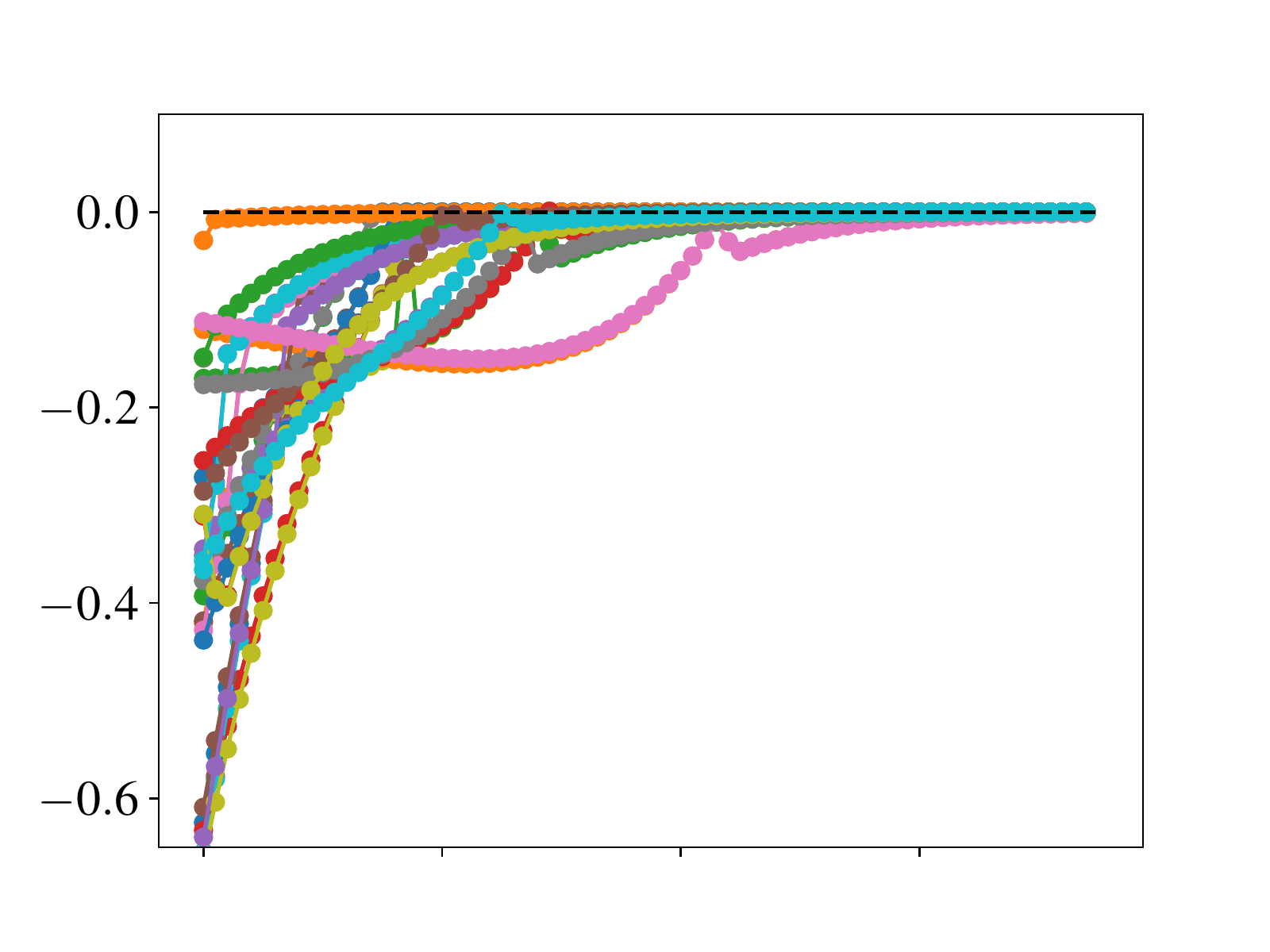}\vspace{-0.1cm}
        \caption{Short-horizon MPC with nominal system}
    \end{subfigure}
    \begin{subfigure}[b]{0.19\textwidth}
    \centering
        \includegraphics[trim={52 25 30 30}, clip, scale=0.2]{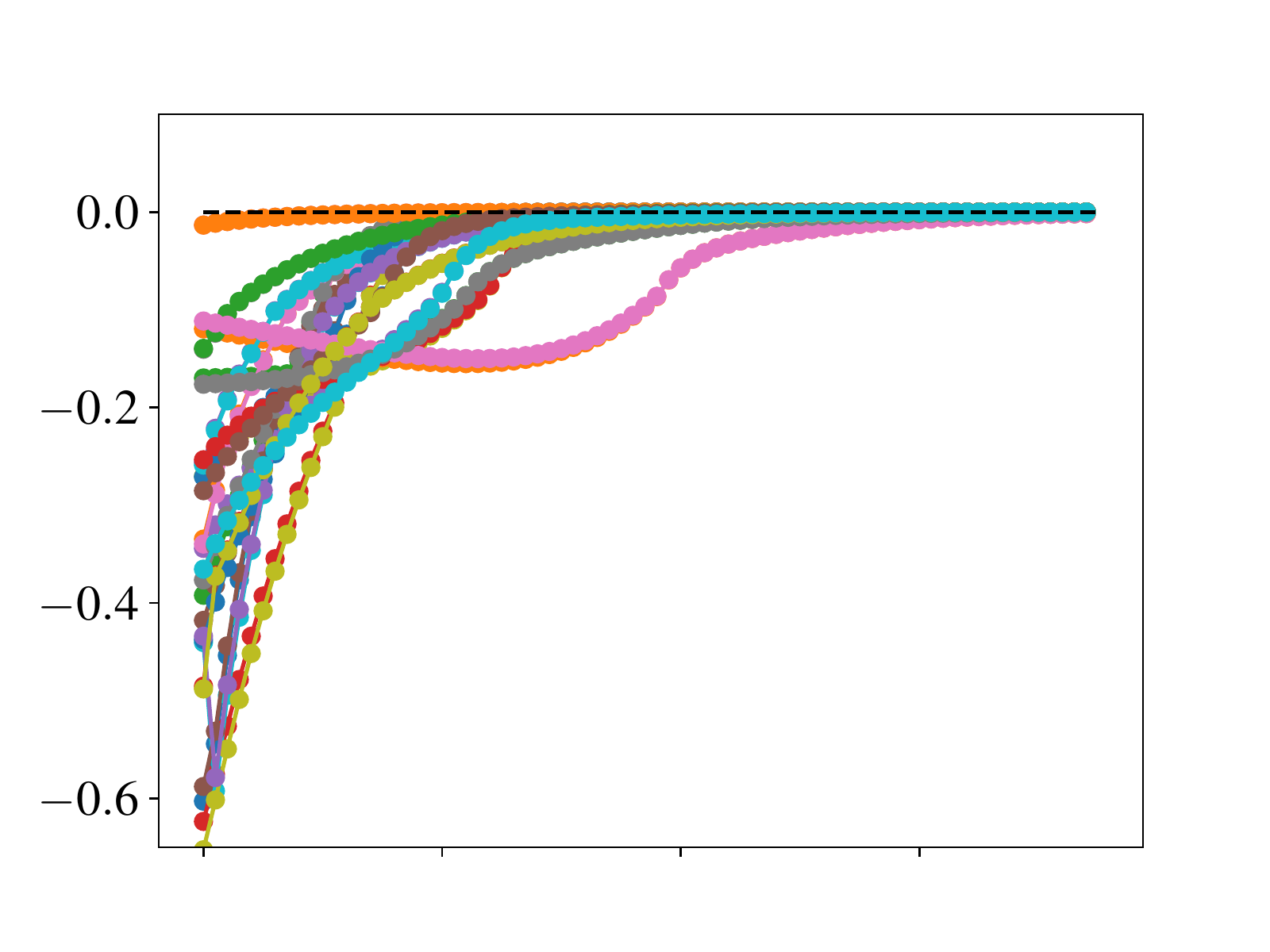}\vspace{-0.1cm}
        \caption{Long-horizon MPC with nominal model}
    \end{subfigure}
    \begin{subfigure}[b]{0.19\textwidth}
        \centering
        \includegraphics[trim={52 25 30 30}, clip, scale=0.2]{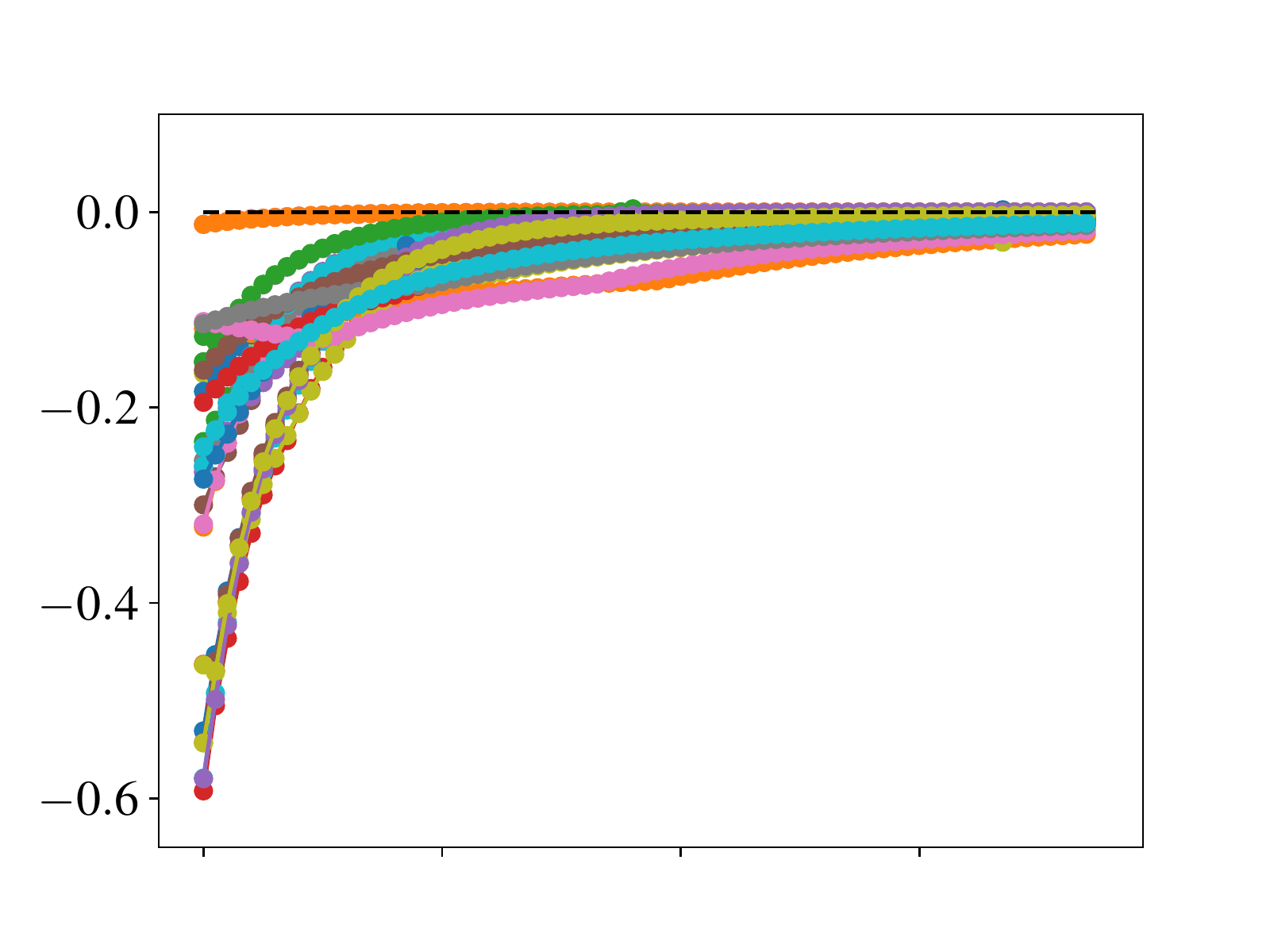}\vspace{-0.1cm}
        \caption{Lyapunov MPC with nominal model \\ \bf{(Ours)}}
    \end{subfigure}
    \begin{subfigure}[b]{0.19\textwidth}
        \centering
        \includegraphics[trim={52 25 30 30}, clip, scale=0.2]{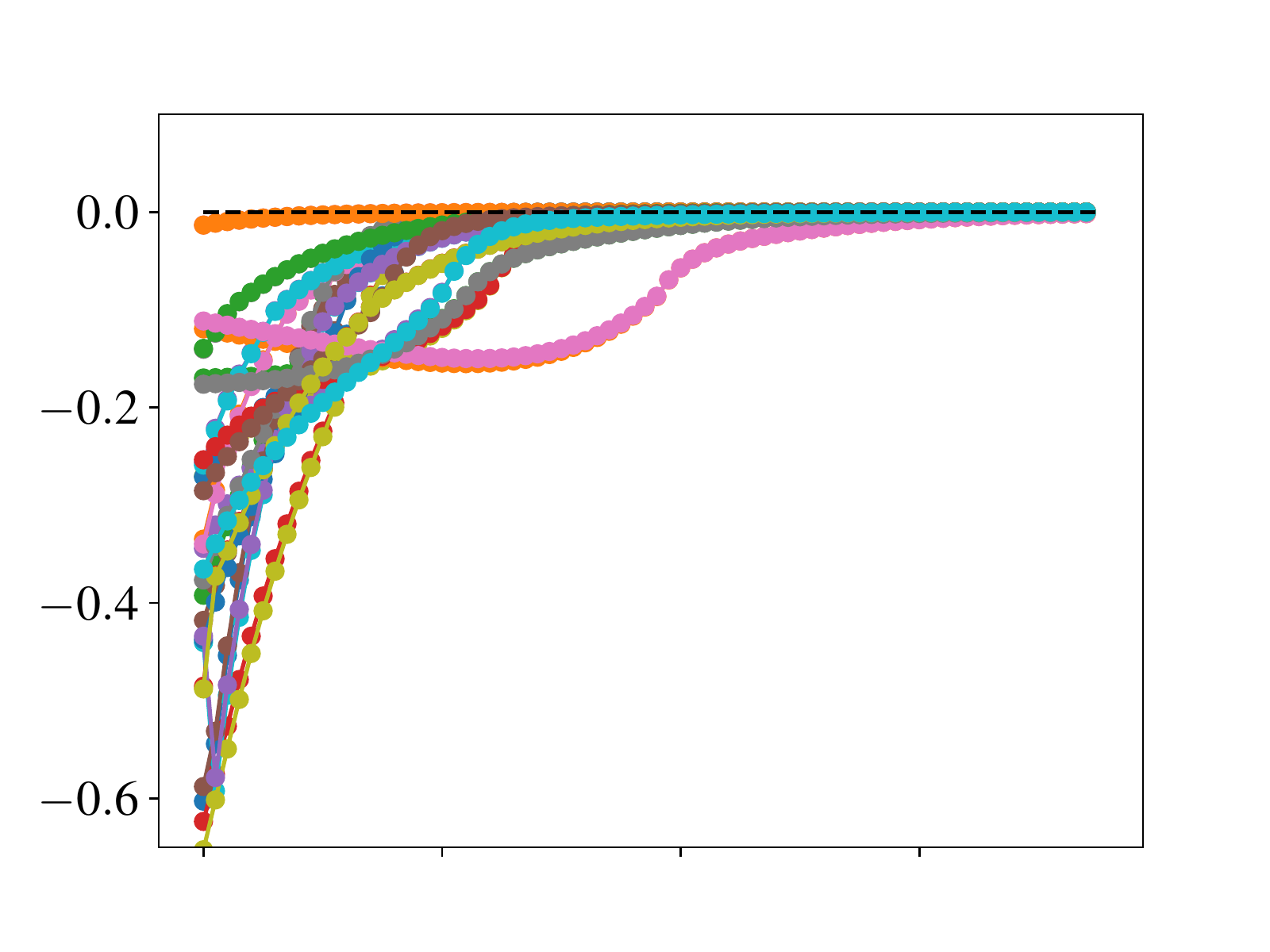}\vspace{-0.1cm}
        \caption{Long-horizon MPC with surrogate model}
    \end{subfigure}
    \begin{subfigure}[b]{0.19\textwidth}
        \centering
        \includegraphics[trim={52 25 0 40}, clip, scale=0.2]{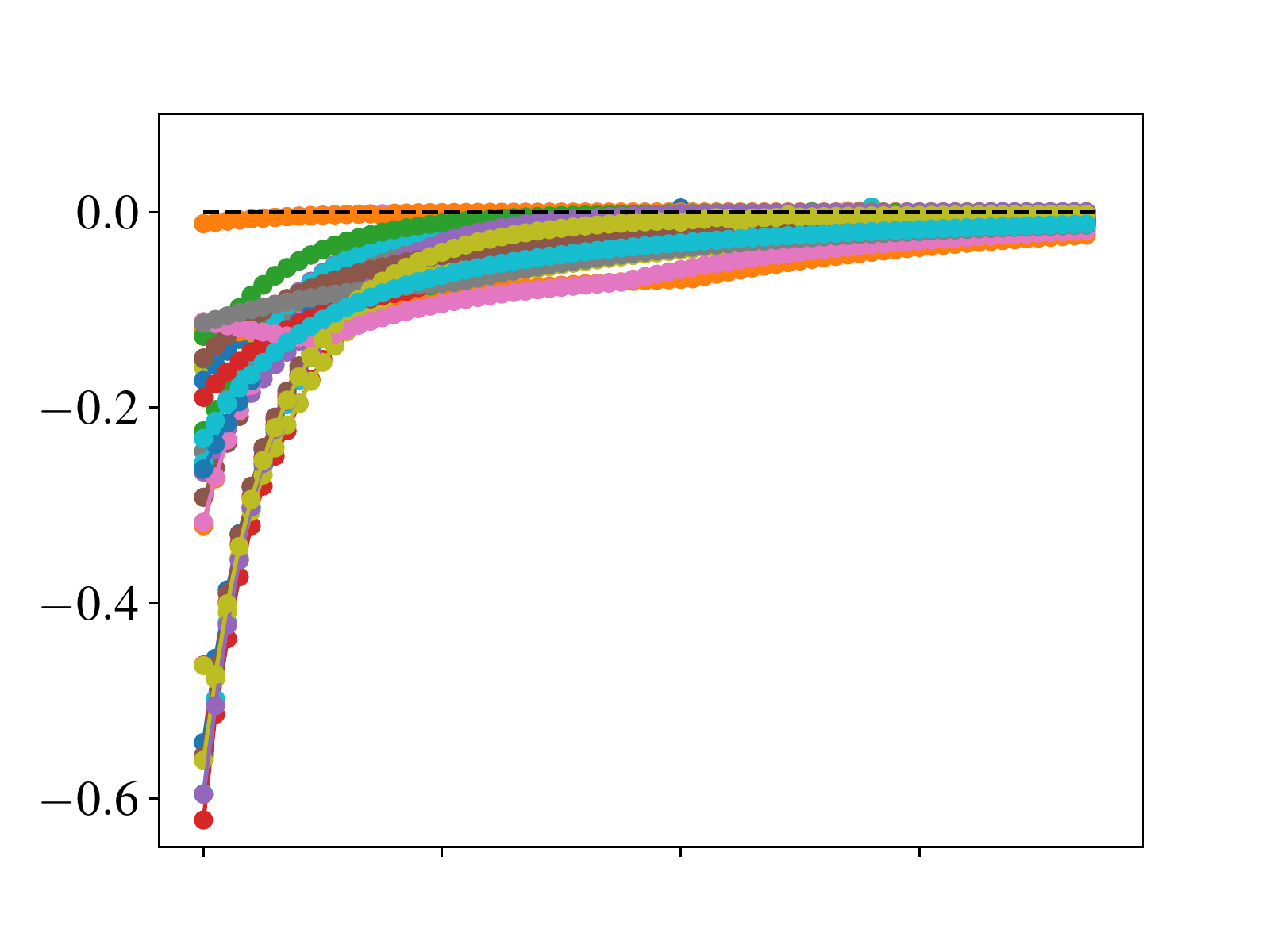}\vspace{-0.1cm}
        \caption{Lyapunov MPC with surrogate model \\ \bf{(Ours)}}
    \end{subfigure}
    \caption{{\bf Inverted Pendulum: Transfer from nominal to surrogate model.} {\bf Top}: The Lyapunov function with overlaid trajectories for $80$ timesteps. {\bf Middle}: The Lyapunov function evaluated along trajectories. {\bf Bottom}: The Lyapunov decrease evaluated along trajectories.}
    \label{fig:pendulum_lyap_big}
\end{figure*}

\begin{figure*}[t]
    \captionsetup[subfigure]{justification=centering}
    \centering
    \begin{subfigure}[b]{0.45\textwidth}
        \centering
        \includegraphics[trim={52 35 102 40}, clip, scale=0.35]{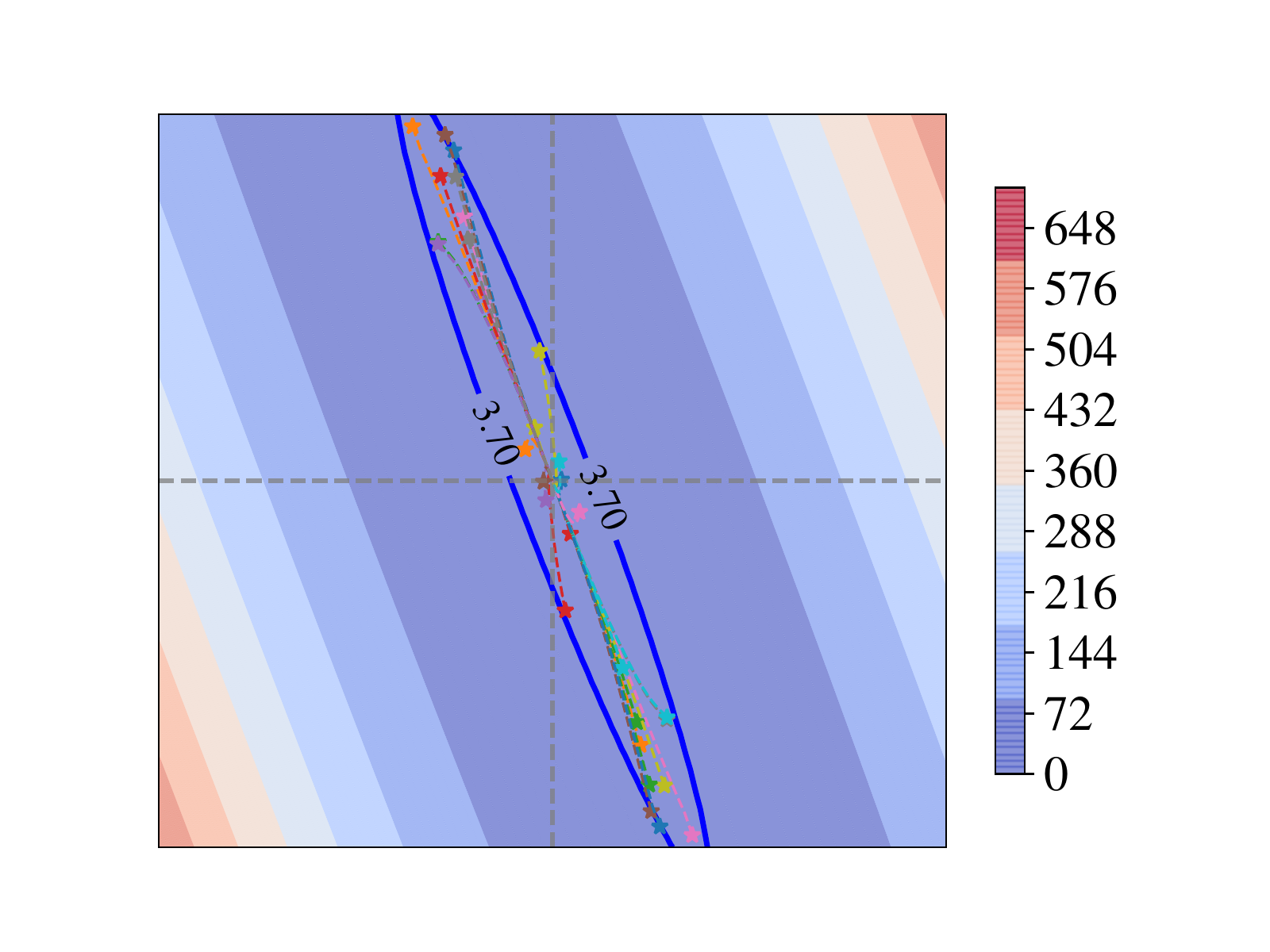}\vspace{-0.1cm}
        \caption{MPC Demonstrator with Nominal Model}
    \end{subfigure}
    \begin{subfigure}[b]{0.45\textwidth}
    \centering
        \includegraphics[trim={0 35 0 40}, clip, scale=0.35]{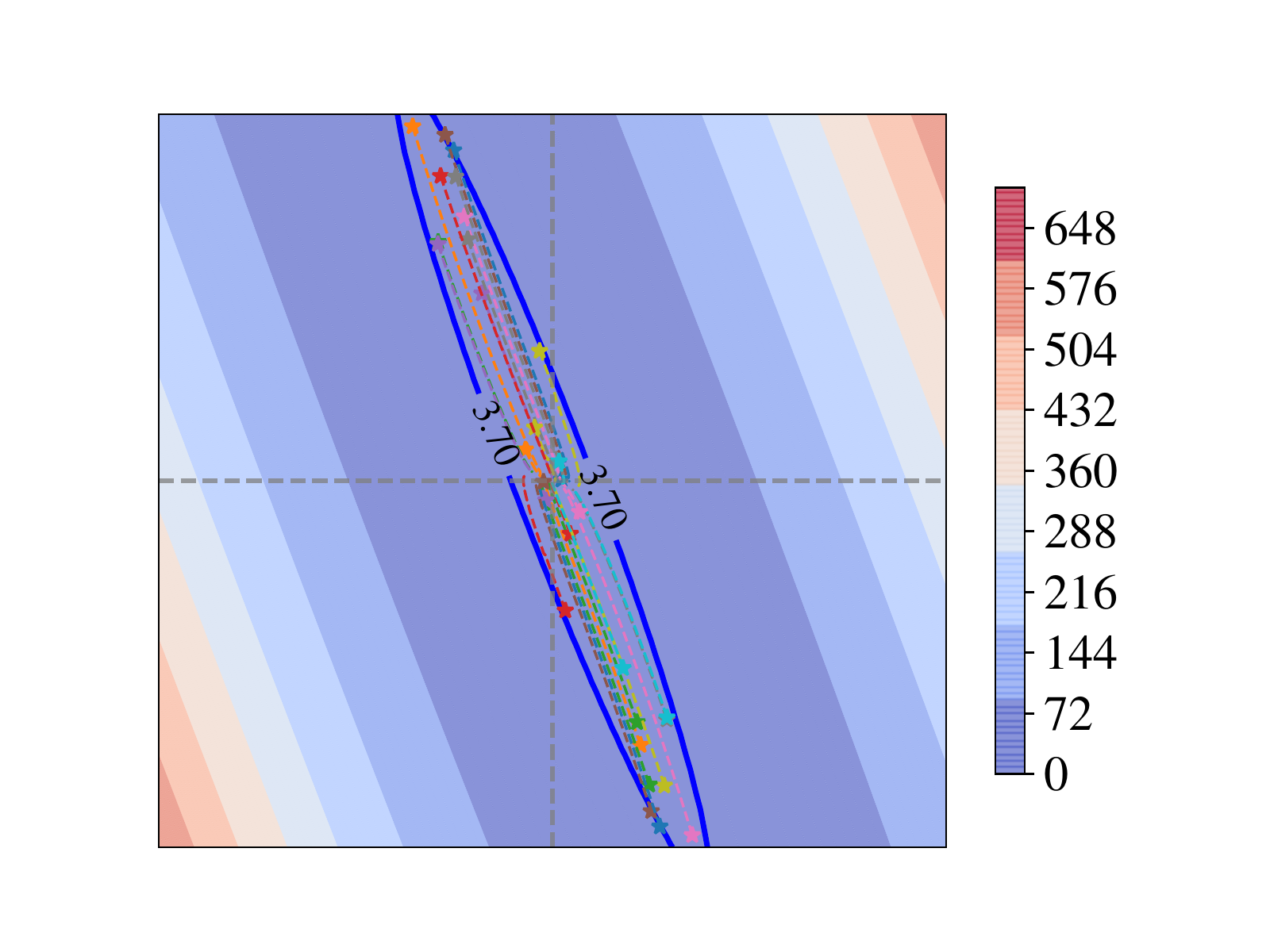}\vspace{-0.1cm}
        \caption{Lyapunov MPC with Nominal Model \\ \bf{(Ours)}}
    \end{subfigure}
    
    \caption{\textbf{Inverted Pendulum: Effect of using Lyapunov MPC with contractor factor and no LQR loss.} The Lyapunov function and safe-level set obtained from the first iteration of alternate learning with $\lambda=0.9, v=0$ in the Lyapunov loss~(\protect\ref{eq:lyapunov_loss}). This results in a smaller safe region estimate and slower closed-loop trajectories compared to the case when $\lambda=0.99, v=1$. Each trajectory is simulated for $T=80$ timesteps.}
    \label{fig:contraction_factor}
\end{figure*}

\begin{figure*}[t]
    \captionsetup[subfigure]{justification=centering}
    \centering
    \begin{subfigure}[b]{0.45\textwidth}
        \centering
        \includegraphics[trim={52 35 102 40}, clip, scale=0.35]{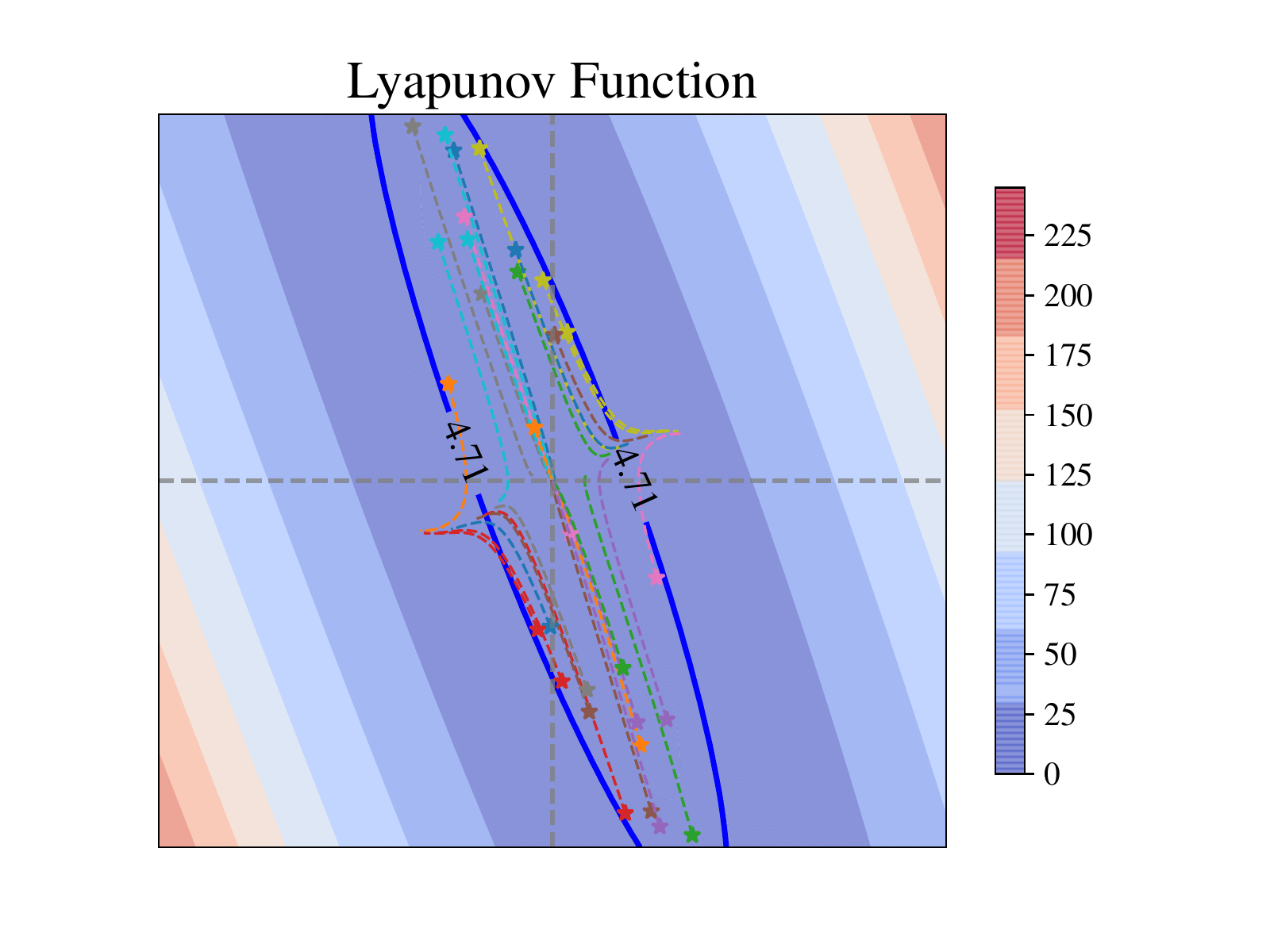}\vspace{-0.1cm}
        \caption{$r_\text{trust}=0.1$}
    \end{subfigure}
    \begin{subfigure}[b]{0.45\textwidth}
    \centering
        \includegraphics[trim={52 35 60 40}, clip, scale=0.35]{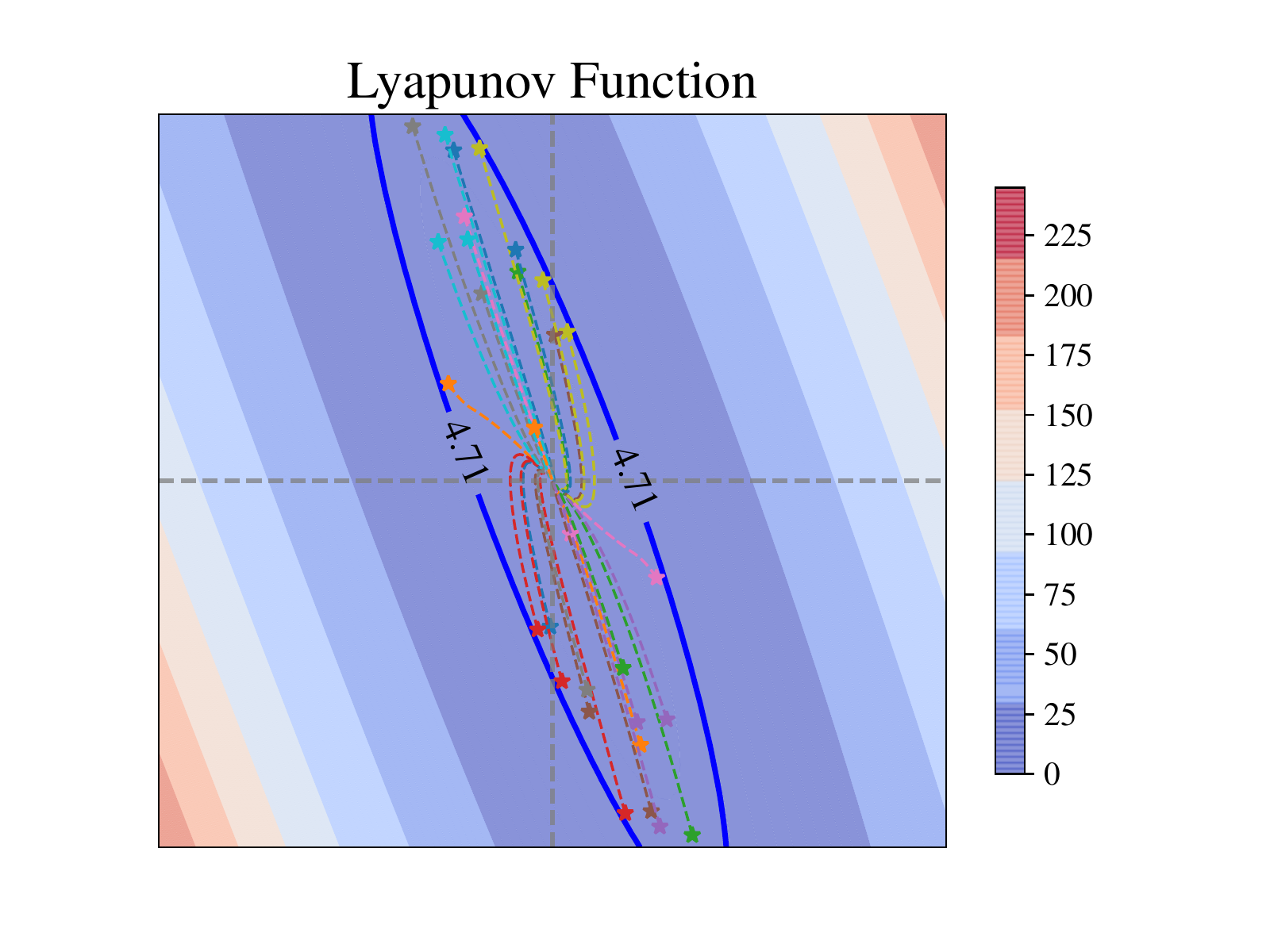}\vspace{-0.1cm}
        \caption{$r_\text{trust}=0.5$}
    \end{subfigure}
    
    \caption{\textbf{Inverted Pendulum: Effect of trust region on MPC.} The solver hyperparameter, $r_{trust}$, can also affect the stability of the MPC and was tuned manually at this stage. Given the limited amount of solver iterations, a small trust region results in weaker control signals and local instability. A larger trust radius can in this case stabilize the system, while being possibly more sub-optimal. The depicted Lyapunov function is obtained from first iteration of alternate learning and is used by the MPC as its terminal cost.}
    \label{fig:trust_region}
\end{figure*}

\end{document}